\pgfplotsset{compat=1.17}
\title{A view of mini-batch SGD via generating functions: conditions of convergence, phase transitions, 
benefit from negative momenta}
\author{Maksim Velikanov \thanks{A significant part of the work was done while at Skoltech.}\\
Technology Innovation Institute, \\
CMAP, Ecole Polytechnique \\
\texttt{maksim.velikanov@tii.ae}
\And
Denis Kuznedelev \\
Skoltech, \\
Yandex Research \\
\texttt{denis.kuznedelev@skoltech.ru}
\And
Dmitry Yarotsky \\
Skoltech \\
\texttt{d.yarotsky@skoltech.ru}
}
\DeclareMathOperator{\Ima}{Im}
\DeclareMathOperator*{\argmin}{arg\,min}
\DeclareMathOperator{\Tr}{Tr}
\newtheorem{lemma}{Lemma}
\newtheorem{prop}{Proposition}
\newtheorem{ther}{Theorem}
\begin{document}

\maketitle

\begin{abstract}
Mini-batch SGD with momentum is a fundamental algorithm for learning large predictive models. In this paper we develop a new analytic framework to analyze noise-averaged properties of mini-batch SGD for linear models at constant learning rates, momenta and sizes of batches. Our key idea is to consider the dynamics of the second moments of model parameters for a special family of "Spectrally Expressible" approximations. This allows to obtain an explicit expression for the generating function of the sequence of loss values. By analyzing this generating function, we find, in particular, that 1) the SGD dynamics exhibits several convergent and divergent regimes depending on the spectral distributions of the problem; 2) the convergent regimes admit explicit stability conditions, and explicit loss asymptotics in the case of power-law spectral distributions; 3) the optimal convergence rate can be achieved at negative momenta. We verify our theoretical predictions by extensive experiments with MNIST, CIFAR10 and synthetic problems, and find a good quantitative agreement.
\end{abstract}

\vspace{-0.5em}
\section{Introduction}
\vspace{-0.5em}

We consider a classical mini-batch Stochastic Gradient Descent (SGD) algorithm \citep{robbins1951stochastic, NIPS2007_0d3180d6} with momentum \citep{polyak1964some}:
\vspace{-0.5em}
\begin{align}\label{eq:sgd}
    \mathbf w_{t+1}=\mathbf w_{t}+\mathbf v_{t+1},\quad
    \mathbf v_{t+1}=-\alpha_t \nabla_{\mathbf w}L_{B_t}(\mathbf w_t)+\beta_t \mathbf v_t.
\end{align}
Here, $L_{B}(\mathbf w)= \tfrac{1}{b}\sum_{i=1}^b l(f(\mathbf x_i,\mathbf w), y_i)$ is the sampled loss of a model $\widehat y=f(\mathbf x,\mathbf w)$, computed using a pointwise loss $l(\widehat y,y)$ on a mini-batch $B=\{(\mathbf x_{i},y_{i})\}_{s=1}^b$ of $b$ data points representing the target function $y=f^*(\mathbf x)$. The \emph{momentum} term $\mathbf v_n$ represents information about gradients from previous iterations and is well-known to significantly improve convergence both generally \citep{Polyak87} and for neural networks \citep{sutskever2013importance}. Re-sampling of the mini-batch $B_t$ at each SGD iteration $t$ creates a specific gradient noise, structured according to both the local geometry of the model $f(\mathbf{x},\mathbf{w})$ and the quality of current approximation $\widehat{y}$. In the context of modern deep learning, $f(\mathbf{x},\mathbf{w})$ is usually very complex, and the quantitative prediction of the SGD behavior becomes a challenging task that is far from being complete at the moment.      

Our goal is to obtain explicit expressions characterizing the average case convergence of mini-batch SGD for the classical least-squares problem of minimizing quadratic objective $L(\mathbf{w})$. This setup is directly related to modern neural networks trained with a quadratic loss function, since networks can often be well described -- e.g., in the large-width limit  \citep{jacot2018neural,lee2019wide} or during the late stage of training \citep{Fort_2020} -- by their linearization w.r.t. parameters $\mathbf{w}$. 

A fundamental way to characterize least squares problems is through their \emph{spectral distributions}: the eigenvalues $\lambda_k$ of the Hessian and the coefficients $c_k$ of the expansion of the optimal solution $\mathbf{w}^*$ over the Hessian eigenvectors. Then, one can estimate certain metrics of the problem through \emph{spectral expressions}, i.e. explicit formulas that operate with spectral distributions $\lambda_k,c_k$ but not with other details of the solution $\mathbf{w}^*$ or the Hessian. A simple example is the standard stability condition for full-batch gradient descent (GD): $\alpha<2/\lambda_\mathrm{max}$. Various exact or approximate spectral expressions are available for full-batch GD-based algorithms \citep{Fischer1996PolynomialBI} and ridge regression \citep{Canatar_2021,Wei_2022}. 
Here, we aim at obtaining spectral expressions and associated results (stability conditions, phase structure, loss asymptotics,...) for average train loss under mini-batch SGD.

An important feature of spectral distributions in deep learning problems is that they often obey macroscopic laws -- quite commonly a power law with a long tail of eigenvalues converging to $0$ (see \cite{cui2021generalization, bahri2021explaining,kopitkov2020neural,velikanov2021explicit,atanasov2021neural,basri2020frequency} and Figs. \ref{fig:comparison_different_regimes}, \ref{fig:linearized_and_se_CIFAR10}). The typically simple form of macroscopic laws allows to theoretically analyze spectral expressions and obtain fine-grained results. 

As an illustration, consider the full-batch GD for least squares regression on a MNIST dataset. Standard optimization results \citep{Polyak87} do not take into account fine spectral details and give either non-strongly convex bound $L_{\text{GD}}(\mathbf{w}_t)=O(t^{-1})$ or strongly-convex bound $L_{\text{GD}}(\mathbf w_t)\le L(\mathbf{w}_0) (\tfrac{\lambda_{\max}-\lambda_{\min}}{\lambda_{\max}+\lambda_{\min}})^{2t}$. Both these bounds are rather crude and poorly agree with the experimentally observed \citep{bordelon2021learning,Velikanov_Yarotsky_opt2022} loss trajectory which can be approximately described as $L(\mathbf{w}_t)\sim Ct^{-\xi},\; \xi\approx0.25$ (cf. our Fig. \ref{fig:comparison_different_regimes}). In contrast, fitting power-laws to both eigenvalues $\lambda_k$ and coefficients $c_k$ and using the spectral expression $L_{\text{GD}}(\mathbf{w}_t)=\sum_k (1-\alpha\lambda_k)^{2t}\lambda_kc_k^2$ allows to accurately predict both exponent $\xi$ and constant $C$.
Accordingly, one of the purposes of the present paper is to investigate whether similar predictions can be made for mini-batch SGD under power-law spectral distributions. 

\paragraph{Outline and main contributions.}
We develop a new, spectrum based analytic approach to the study of mini-batch SGD. The results obtained within this approach and its key steps are naturally divided into three parts:
\begin{enumerate}
    \item We show that in contrast to the full-batch GD, loss trajectories of the mini-batch SGD cannot be determined merely from the spectral properties of the problem. To overcome this difficulty, we propose a natural family of \textbf{Spectrally Expressible (SE)} approximations for SGD dynamics that admit an analytic solution.  We provide multiple justifications for these approximations, including theoretical scenarios where they are exact and empirical evidence of their accuracy for describing optimization of models on MNIST and CIFAR10. 
    \item To characterize SGD dynamics under SE approximation, we derive explicit spectral expressions for the \textbf{generating function} of the sequence of loss values, $\widetilde{L}(z)\equiv \sum_t L(\mathbf{w}_t)z^t$, and show that it decomposes into the ``signal'' $\widetilde{V}(z)$ and ``noise'' $\widetilde{U}(z)$ generating functions. Analyzing $\widetilde{U}(z)$, we derive a novel \textbf{stability condition} of mini-batch SGD in terms of only the problem spectrum $\lambda_k$. In the practically relevant case of large momentum parameter $\beta\approx 1$, stability condition simplifies to the restriction of effective learning rate $\alpha_{\mathrm{eff}}\equiv\tfrac{\alpha}{1-\beta} < \tfrac{2b}{\lambda_\mathrm{crit}}$ with some critical value $\lambda_\mathrm{crit}$ determined by the spectrum. Finally, we find the characteristic \textbf{divergence time} when stability condition is violated.   
    \item By assuming power-law distributions for both eigenvalues $\lambda_k\propto k^{-\nu}$ and coefficient partial sums $S_k=\sum_{l\leq k}\lambda_l c_l^2\propto k^{-\varkappa}$, we show that SGD exhibits distinct \textbf{``signal-dominated''} and \textbf{``noise-dominated''} convergence regimes (previously known for SGD without momenta \citep{varre2021last}) depending on the sign of $\varkappa+1-2\nu$. For both regimes we obtain power-law loss convergence rates and find the explicit constant in the leading term. Using these rates, we demonstrate a \textbf{dynamical phase transition} between the phases and find its characteristic transition time. Finally, we analyze optimal hyper parameters in both phases. In particular, we show that \textbf{negative momenta} can be beneficial in the ``noise-dominated'' phase but not in the ``signal-dominated'' phase.          
\end{enumerate}
\vspace{-0.5em}
We discuss related work in Appendix \ref{sec:literature_rev} and experimental details \footnote{Our  code: \url{https://github.com/Godofnothing/PowerLawOptimization/}}
in Appendix~\ref{sec:experiments}. 
\vspace{-0.5em}
\section{The setting and SGD dynamics}\label{sec:setting_and_sgd}
\vspace{-0.5em}

\begin{figure}[t]
\vspace{-0.0em}
\centering
\includegraphics[width=0.99\textwidth]{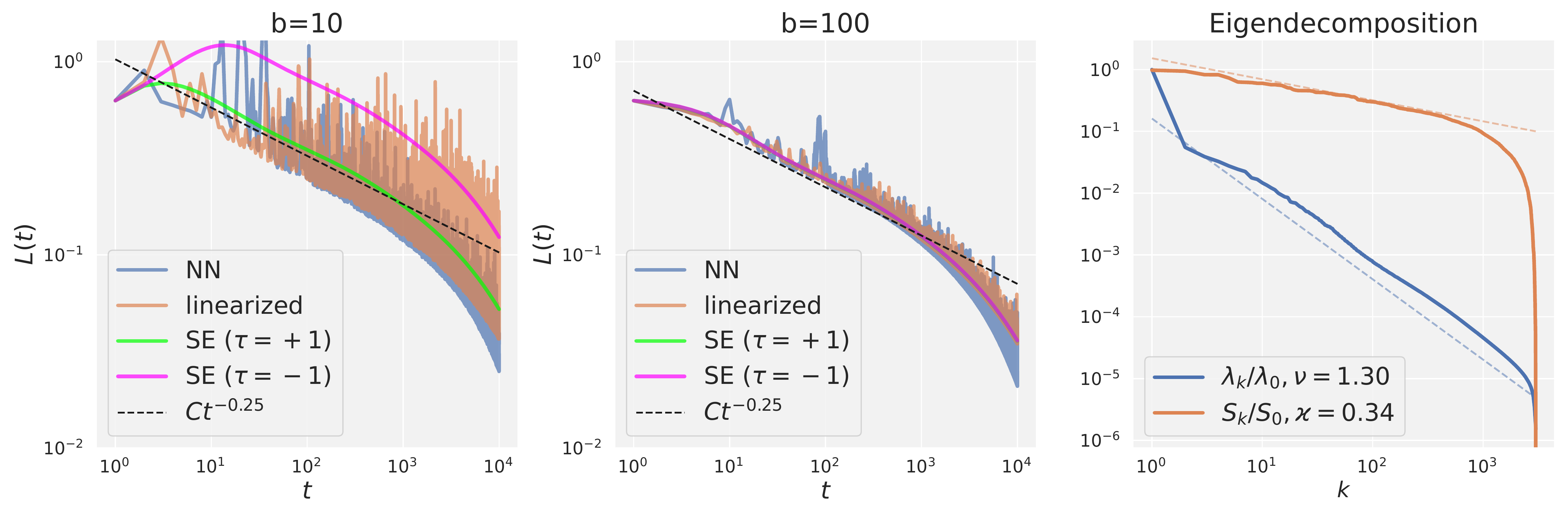}
\vspace{-1em}
\caption{Optimization trajectories and spectral distributions for MNIST. \textbf{Left, center}: Loss trajectories of different training regimes for batch sizes 10 and 100 (see Sec. \ref{sec:se}). We see good agreement of NN with SE approximation at $\tau=1.$ \textbf{Right}: Eigenvalues $\lambda_{k}$ and partial sums $S_{k}= \sum_{l=k}^{N}\lambda_l C_{ll,0}$ characterizing $\mathbf H$ and $\mathbf C_0$. We see good agreement with assumed spectral power laws $\lambda_k\approx \Lambda k^{-\nu}, S_k\approx Kk^{-\varkappa}$ as in Eq. \eqref{eq:power_laws}. We estimate the values $\Lambda, K,\nu,\varkappa$ from the curves in the right figure. The dotted lines in the left and central figures then show $L_{\mathrm{approx}}(t)=Ct^{-\zeta}$, with $C$ computed as $C_{\mathrm{signal}}$ in Eq. \eqref{eq:ltsig} and $\zeta=\tfrac{\varkappa}{\nu}\approx 0.25$. We see  good agreement between Theorem \ref{theor:main} and experiment. Similar experiments for CIFAR10 with ResNet/MobileNet are given in Fig. \ref{fig:linearized_and_se_CIFAR10} and also agree well with theory.  }\label{fig:comparison_different_regimes}
\vspace{-0.5em}
\end{figure}
We consider a linear model $f(\mathbf{x},\mathbf{w})=\langle\mathbf{w},\bm{\psi}(\mathbf{x})\rangle$ with non-linear features $\bm{\psi}(\mathbf{x})$ trained to approximate a target function $y^*(\mathbf{x})$ by minimizing quadratic loss $L(\mathbf{w}) = \frac{1}{2N}\sum_{i=1}^N(\langle\mathbf{w},\bm{\psi}(\mathbf{x}_i)\rangle-y^*(\mathbf{x}_i))^2$ over training dataset $\mathcal{D}=\{\mathbf{x}_i\}_{i=1}^N$. The model's Jacobian is given by $\bm{\Psi}=\big(\bm{\psi}(\mathbf{x}_i)\big)_{i=1}^N$, the Hessian $\mathbf{H}=\frac{1}{N}\sum_{i=1}^N\bm{\psi}(\mathbf{x}_i)\otimes\bm{\psi}(\mathbf{x}_i)=\frac{1}{N}\bm{\Psi}\bm{\Psi}^T$ and the tangent kernel $K(\mathbf{x},\mathbf{x}')=\langle\bm{\psi}(\mathbf{x}),\bm{\psi}(\mathbf{x}')\rangle$. Assuming that the target function is representable as $y^*(\mathbf{x})=\langle\mathbf{w}^*,\bm{\psi}(\mathbf{x})\rangle$, the loss takes the form $L(\mathbf{w})=\frac{1}{2}\langle\mathbf{w}-\mathbf{w}^*,\mathbf{H}(\mathbf{w}-\mathbf{w}^*)\rangle$. We allow model parameters to be either finite dimensional vectors or belong to a Hilbert space $\mathcal{H}$. Similarly, dataset $\mathcal{D}$ can be either finite ($N<\infty$) or infinite, and in the latter case we only require a finite norm of the target function $y^*(\mathbf{x})$ but not of the solution $\mathbf{w}^*$. Note that this setting is quite rich and can accommodate kernel methods, linearization of real neural networks, and infinitely wide neural networks in the NTK regime. See Appendix  \ref{sec:setting} for details. In the experimental results, we refer to this setting as \emph{linearized} (to be distinguished from nonlinear NN's and the SE approximation introduced in the sequel). 

Denoting deviation from the optimum as $\Delta\mathbf{w}\equiv\mathbf{w}-\mathbf{w}^*$, we can write a single SGD step at iteration $t$ with a randomly chosen batch $B_t$ of size $b$ as 
\begin{equation}\label{eq:Langevin_HB}
    \begin{pmatrix}
    \Delta\mathbf{w}_{t+1} \\
    \mathbf{v}_{t+1}
    \end{pmatrix} = 
    \begin{pmatrix}
    \mathbf{I}-\alpha_t \mathbf{H}(B_t) & \beta_t \mathbf{I} \\
    -\alpha_t \mathbf{H}(B_t) & \beta_t \mathbf{I}
    \end{pmatrix}
    \begin{pmatrix}
    \Delta\mathbf{w}_{t} \\
    \mathbf{v}_{t}
    \end{pmatrix}, \quad \mathbf{H}(B_t)\equiv\frac{1}{b}\sum\limits_{i\in B_t}\bm{\psi}(\mathbf{x}_i)\otimes\bm{\psi}(\mathbf{x}_i). 
\end{equation}
An important feature of the multiplicative noise introduced by the random choice of $B_t$ is that the $n$'th moments of parameter and momentum vectors $\Delta\mathbf{w}_{t+1},\mathbf{v}_{t+1}$ in dynamics \eqref{eq:Langevin_HB} are fully determined from the same moments at the previous step $t$. As our loss function is quadratic, we focus on the second moments 
\begin{equation}\label{eq:second_moments}
    \mathbf{C}\equiv\mathbb{E}[\Delta\mathbf{w}\otimes\Delta\mathbf{w}],\quad \mathbf{J}\equiv\mathbb{E}[\Delta\mathbf{w}\otimes\mathbf{v}], \quad \mathbf{V}\equiv\mathbb{E}[\mathbf{v}\otimes\mathbf{v}], \quad \mathbf{M}\equiv
    \big(\begin{smallmatrix}
    \mathbf{C} & \mathbf{J} \\
    \mathbf{J}^\dagger & \mathbf{V}
    \end{smallmatrix}\big).
\end{equation}
Then the average loss is $\mathbb{E}[L(\mathbf{w})]=\frac{1}{2}\Tr(\mathbf{H}\mathbf{C})$ and for combined second moment matrix we have
\begin{restatable}{prop}{propSMdyn}\label{lm:sgdmom}
Consider SGD \eqref{eq:sgd} with learning rates $\alpha_t$, momentum $\beta_t$, batch size $b$ and random uniform choice of the batch $B_t$. Then the update of second moments \eqref{eq:second_moments} is
\begin{equation}
    \label{eq:second_moments_dyn}
    \begin{pmatrix}
    \mathbf{C}_{t+1} & \mathbf{J}_{t+1}\\
    \mathbf{J}^\dagger_{t+1} & \mathbf{V}_{t+1}
    \end{pmatrix} = \begin{pmatrix}
    \mathbf{I}-\alpha_t\mathbf{H} & \beta_t\mathbf{I} \\
    -\alpha_t\mathbf{H} & \beta_t\mathbf{I}
    \end{pmatrix} \begin{pmatrix}
    \mathbf{C}_{t} & \mathbf{J}_{t}\\
    \mathbf{J}^\dagger_{t} & \mathbf{V}_{t}
    \end{pmatrix}
     \begin{pmatrix}
    \mathbf{I}-\alpha_t\mathbf{H} & \beta_t\mathbf{I} \\
    -\alpha_t\mathbf{H} & \beta_t\mathbf{I}
    \end{pmatrix}^T + \gamma\alpha_t^2\begin{pmatrix}
    \bm{\Sigma}(\mathbf{C}_t) & \bm{\Sigma}(\mathbf{C}_t)\\
    \bm{\Sigma}(\mathbf{C}_t) & \bm{\Sigma}(\mathbf{C}_t)
    \end{pmatrix} 
\end{equation}
Here $\gamma \bm{\Sigma}(\mathbf{C}_t)$ is the covariance of gradient noise due to mini-batch sampling, with $\gamma=\tfrac{N-b}{(N-1)b}$ and
\begin{equation}
    \label{eq:stochastic_noise_term}
    \bm{\Sigma}(\mathbf{C}) = \frac{1}{N}\sum\limits_{i=1}^N \langle\bm{\psi}(\mathbf{x}_i),\mathbf{C}\bm{\psi}(\mathbf{x}_i)\rangle \bm{\psi}(\mathbf{x}_i)\otimes\bm{\psi}(\mathbf{x}_i) -\mathbf{H}\mathbf{C}\mathbf{H}.  
\end{equation}
\end{restatable}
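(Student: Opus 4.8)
The plan is to track the joint state $\mathbf{z}_t=\big(\begin{smallmatrix}\Delta\mathbf{w}_t\\ \mathbf{v}_t\end{smallmatrix}\big)$ and compute its second-moment matrix $\mathbf{M}_t=\mathbb{E}[\mathbf{z}_t\otimes\mathbf{z}_t]$ directly from the one-step recursion \eqref{eq:Langevin_HB}, written compactly as $\mathbf{z}_{t+1}=\mathbf{A}(B_t)\mathbf{z}_t$ with the random block matrix $\mathbf{A}(B_t)$ containing the mini-batch Hessian $\mathbf{H}(B_t)$. The crucial structural fact is that the fresh batch $B_t$ is drawn independently of the history $\mathcal{F}_t$ that determines $\mathbf{z}_t$. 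Conditioning on $\mathcal{F}_t$ and using the tower property, $\mathbf{M}_{t+1}=\mathbb{E}\big[\mathbb{E}_{B_t}[\mathbf{A}(B_t)(\mathbf{z}_t\otimes\mathbf{z}_t)\mathbf{A}(B_t)^T]\big]$, so the whole computation reduces to a single expectation over $B_t$ with $\mathbf{z}_t$ held fixed.

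First I would split $\mathbf{A}(B_t)=\mathbf{A}_t+\widetilde{\mathbf{A}}(B_t)$ into its mean $\mathbf{A}_t=\mathbb{E}_{B_t}[\mathbf{A}(B_t)]$, which equals the deterministic block matrix of \eqref{eq:second_moments_dyn} because uniform sampling gives $\mathbb{E}_{B_t}[\mathbf{H}(B_t)]=\mathbf{H}$, and a zero-mean fluctuation $\widetilde{\mathbf{A}}(B_t)$. Expanding the conditional expectation, the two cross terms vanish by $\mathbb{E}_{B_t}[\widetilde{\mathbf{A}}(B_t)]=0$, leaving the deterministic part $\mathbf{A}_t\mathbf{M}_t\mathbf{A}_t^T$ plus a pure noise contribution $\mathbb{E}[\widetilde{\mathbf{A}}(B_t)(\mathbf{z}_t\otimes\mathbf{z}_t)\widetilde{\mathbf{A}}(B_t)^T]$. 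Next I would exploit the rank structure of $\widetilde{\mathbf{A}}(B_t)$: only the Hessian fluctuation $\widetilde{\mathbf{H}}(B_t)=\mathbf{H}(B_t)-\mathbf{H}$ appears, occupying both rows of the left block-column, so that $\widetilde{\mathbf{A}}(B_t)\mathbf{z}_t=-\alpha_t\big(\begin{smallmatrix}\widetilde{\mathbf{H}}(B_t)\Delta\mathbf{w}_t\\ \widetilde{\mathbf{H}}(B_t)\Delta\mathbf{w}_t\end{smallmatrix}\big)$. Consequently all four blocks of the noise matrix coincide and each equals $\alpha_t^2\,\mathbb{E}[\widetilde{\mathbf{H}}(B_t)(\Delta\mathbf{w}_t\otimes\Delta\mathbf{w}_t)\widetilde{\mathbf{H}}(B_t)]$; since the inner map is linear in $\Delta\mathbf{w}_t\otimes\Delta\mathbf{w}_t$ and $B_t\perp\mathcal{F}_t$, averaging over the history replaces this by $\alpha_t^2\,\mathbb{E}_{B_t}[\widetilde{\mathbf{H}}(B_t)\mathbf{C}_t\widetilde{\mathbf{H}}(B_t)]$. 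This already explains both the all-ones block pattern and the prefactor $\alpha_t^2$.

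The main work, and the step I expect to be the real obstacle, is the explicit evaluation of the operator $\mathbf{C}\mapsto\mathbb{E}_B[\widetilde{\mathbf{H}}(B)\mathbf{C}\widetilde{\mathbf{H}}(B)]$. Writing $\Pi_i=\bm{\psi}(\mathbf{x}_i)\otimes\bm{\psi}(\mathbf{x}_i)$ and $\mathbf{H}(B)=\tfrac1b\sum_{i\in B}\Pi_i$, I would expand $\mathbb{E}_B[\mathbf{H}(B)\mathbf{C}\mathbf{H}(B)]=\tfrac{1}{b^2}\sum_{i,j}\mathbb{P}(i,j\in B)\,\Pi_i\mathbf{C}\Pi_j$ and separate the diagonal $(i=j)$ from the off-diagonal $(i\neq j)$ contributions. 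The combinatorics of uniform sampling without replacement supplies $\mathbb{P}(i\in B)=b/N$ and $\mathbb{P}(i,j\in B)=\tfrac{b(b-1)}{N(N-1)}$ for $i\neq j$. Using the rank-one identity $\Pi_i\mathbf{C}\Pi_i=\langle\bm{\psi}(\mathbf{x}_i),\mathbf{C}\bm{\psi}(\mathbf{x}_i)\rangle\Pi_i$ and $\sum_{i,j}\Pi_i\mathbf{C}\Pi_j=(\sum_i\Pi_i)\mathbf{C}(\sum_j\Pi_j)=N^2\mathbf{H}\mathbf{C}\mathbf{H}$, the diagonal term produces $\tfrac1N\sum_i\langle\bm{\psi}(\mathbf{x}_i),\mathbf{C}\bm{\psi}(\mathbf{x}_i)\rangle\Pi_i$ and the off-diagonal term produces a mixture of this same sum and $\mathbf{H}\mathbf{C}\mathbf{H}$.

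To finish, I would pass from $\mathbf{H}(B)$ to $\widetilde{\mathbf{H}}(B)$ by subtracting $\mathbf{H}\mathbf{C}\mathbf{H}=\mathbb{E}_B[\mathbf{H}(B)]\,\mathbf{C}\,\mathbb{E}_B[\mathbf{H}(B)]$, which collapses the cross terms and leaves $\mathbb{E}_B[\widetilde{\mathbf{H}}(B)\mathbf{C}\widetilde{\mathbf{H}}(B)]=\mathbb{E}_B[\mathbf{H}(B)\mathbf{C}\mathbf{H}(B)]-\mathbf{H}\mathbf{C}\mathbf{H}$. The only genuine care is the bookkeeping of the $\tfrac{1}{b^2}$, $b/N$ and $\tfrac{b(b-1)}{N(N-1)}$ prefactors: I expect the coefficients of both the $\tfrac1N\sum_i\langle\bm{\psi}(\mathbf{x}_i),\mathbf{C}\bm{\psi}(\mathbf{x}_i)\rangle\Pi_i$ term and the $\mathbf{H}\mathbf{C}\mathbf{H}$ term to reduce, after combining the diagonal and off-diagonal parts, to the single common factor $\gamma=\tfrac{N-b}{(N-1)b}$, so that the operator equals $\gamma\bm{\Sigma}(\mathbf{C})$ exactly as in \eqref{eq:stochastic_noise_term}. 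Substituting back into the noise matrix from the previous step yields the claimed update \eqref{eq:second_moments_dyn}.
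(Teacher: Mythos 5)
Your proposal is correct and follows essentially the same route as the paper's proof: split the batch operator into its mean plus a zero-mean fluctuation, use independence of $B_t$ from the history so cross terms vanish, and evaluate $\mathbb{E}_B[\widetilde{\mathbf H}(B)\mathbf C\widetilde{\mathbf H}(B)]$ via the without-replacement inclusion probabilities $b/N$ and $b(b-1)/(N(N-1))$, which indeed collapse to the single factor $\gamma=\tfrac{N-b}{(N-1)b}$. The only difference is presentational: the paper carries out the identical computation in the output space (with the kernel $\mathbf K$ in place of $\mathbf H$, so as to also cover targets with $\|\mathbf w^*\|=\infty$) and then transfers to parameter space via the eigenbasis relation $M^{\text{out}}_{kl}=\sqrt{\lambda_k\lambda_l}\,M_{kl}$, whereas you work directly in parameter space, which is fully valid under the assumption built into \eqref{eq:Langevin_HB}.
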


\vspace{-1em}
\section{Spectrally Expressible approximation}\label{sec:se}
An important property of non-stochastic GD is that it can be ``solved'' by diagonalizing the Hessian $\mathbf{H}$. Namely, typical quantities of interest in the context of optimization have the form $\Tr[\phi(\mathbf{H})\mathbf{C}]$ with some function $\phi$: e.g., $\phi(x)=\tfrac{x}{2}$ for the loss, $\phi(x)=1$ for the parameter displacement,  $\phi(\mathbf{x})=x^2$ for the squared gradient norm. Given a Hessian $\mathbf H$, by Riesz–Markov theorem we can associate to a positive semi-definite $\mathbf C$ the respective \emph{spectral measure} $\rho_{\mathbf C}(d\lambda)$:
\begin{equation}\label{eq:spectral_measure_def}
    \Tr[\phi(\mathbf{H})\mathbf{C}]=\int\phi(\lambda)\rho_{\mathbf{C}}(d\lambda), \quad \forall \phi\in C(\operatorname{spec}(\mathbf H)).
\end{equation}
We will assume that $\mathbf H$ has an eigenbasis $\mathbf u_k$ with eigenvalues $\lambda_k$; in this case $\rho_{\mathbf C}=\sum_k C_{kk} \delta_{\lambda_k}$, where $C_{kk}=\langle\mathbf u_k,\mathbf C \mathbf u_k\rangle$. 
For vanilla GD, the spectral measure of the second moment matrix $\mathbf C_t$ at iteration $t$ is given by $\rho_{\mathbf{C}_t}(d\lambda)=p_t^2(\lambda)\rho_{\mathbf{C}_0}(d\lambda)$, where $p_t(x)$ is a polynomial fully determined by the learning algorithm \citep{Fischer1996PolynomialBI}. Thus, the information encoded in the initial spectral measure $\rho_{\mathbf{C}_0}$ is, in principle, sufficient to compute main characteristics of the optimization trajectories. 

This conclusion, however, does not hold for SGD, since the first component of the noise term \eqref{eq:stochastic_noise_term} uses finer, non-spectral details of the problem. For a problem with fixed spectrum $\lambda_k$ and measure $\rho_{\mathbf{C}}$, we show (Sec. \ref{sec:non_spec}) that the spectral components $\Sigma_{kk}=\langle\mathbf u_k,\bm{\Sigma} \mathbf u_k\rangle$ of the noise term \eqref{eq:stochastic_noise_term} can take any value in the wide interval $\big[\lambda_k(\Tr[\mathbf{H}\mathbf{C}] - \lambda_k C_{kk}), (N-1)\lambda_k^2 C_{kk}\big]$, so that non-spectral details can make a strong impact on the optimization trajectory.

Yet, in many theoretical and practical cases (see below), we find that non-spectral details are not significant, and components $\Sigma_{kk}$ can be reduced to a spectral expression. Specifically, for a certain choice of parameters $\tau_1,\tau_2\in\mathbb R,\tau_1\ge \tau_2$, the noise measure $\rho_{\bm{\Sigma}}$ is determined by $\rho_{\mathbf C}$ via
\begin{equation}\label{eq:seapprox}
    \rho_{\bm{\Sigma}}(d\lambda) = \tau_1\Big(\int \lambda'\rho_{\mathbf{C}}(d\lambda')\Big)\rho_{\mathbf{H}}(d\lambda)-\tau_2 \lambda^2 \rho_{\mathbf{C}}(d\lambda), \quad \rho_{\mathbf H}=\sum\nolimits_{k}\lambda_k\delta_{\lambda_k}.
\end{equation}
We refer to this as the \emph{Spectrally Expressible (SE)} approximation. Under this approximation we again can, in principle, fully reconstruct the trajectory of observables $\Tr[\phi(\mathbf H)\mathbf C_t]$ from the initial $\mathbf C_0$. There are multiple reasons justifying this approximation. 

\textbf{1)} There are scenarios where Eq. \eqref{eq:seapprox} holds exactly. First, we prove  this for translation-invariant models, with $\tau_1=1,\tau_2=1$ (see Sec. \ref{sec:translation}): 
\begin{restatable}{prop}{propTranslInvar}\label{prop:trans_inv}
Consider a 
problem with regular grid dataset $\mathcal{D}=\{\mathbf x_{\mathbf i}=(\tfrac{2\pi i_1}{N_1},\ldots,\tfrac{2\pi i_d}{N_d})\}, \quad \mathbf i\in (\mathbb Z/N_1\mathbb Z)\times\cdots\times(\mathbb Z/N_d\mathbb Z)$ on the $d$-dimensional torus $\mathbb{T}^d=(\mathbb R/2\pi\mathbb Z)^d$ and with translation invariant kernel $K(\mathbf{x},\mathbf{x}')=K(\mathbf{x}-\mathbf{x}')$. Then \eqref{eq:seapprox} holds exactly with $\tau_1=1,\tau_2=1$.
\end{restatable}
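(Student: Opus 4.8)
The plan is to diagonalize the Hessian $\mathbf H$ explicitly using the Fourier structure forced by translation invariance, and then to evaluate the two terms of the noise operator $\bm\Sigma(\mathbf C)$ in \eqref{eq:stochastic_noise_term} directly in this eigenbasis. The starting point is that on the regular torus grid the Gram (kernel) matrix $G_{ij}=K(\mathbf x_i-\mathbf x_j)$ is block-circulant, so its eigenvectors are the discrete Fourier modes $(\mathbf f_{\mathbf k})_j=\tfrac{1}{\sqrt N}e^{i\mathbf k\cdot\mathbf x_j}$ indexed by the dual lattice $\mathbf k\in(\mathbb Z/N_1\mathbb Z)\times\cdots\times(\mathbb Z/N_d\mathbb Z)$, with eigenvalues $\mu_{\mathbf k}=\sum_{\mathbf l}K(\mathbf x_{\mathbf l})e^{-i\mathbf k\cdot\mathbf x_{\mathbf l}}$. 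Since $\mathbf H=\tfrac1N\bm\Psi\bm\Psi^T$ and $G=\bm\Psi^T\bm\Psi$ share their nonzero spectra, the matching Hessian eigenvectors are $\mathbf u_{\mathbf k}=\mu_{\mathbf k}^{-1/2}\bm\Psi\mathbf f_{\mathbf k}$ with eigenvalues $\lambda_{\mathbf k}=\mu_{\mathbf k}/N$.

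The key step, and the crux of the whole argument, is to establish the constant-magnitude property
\begin{equation*}
\phi_{\mathbf k}(\mathbf x_j)\equiv\langle\mathbf u_{\mathbf k},\bm\psi(\mathbf x_j)\rangle=\sqrt{\lambda_{\mathbf k}}\,e^{-i\mathbf k\cdot\mathbf x_j},\qquad\text{so that}\qquad |\phi_{\mathbf k}(\mathbf x_j)|^2=\lambda_{\mathbf k}\ \text{ for every }j.
\end{equation*}
I would prove this by expanding $\langle\mathbf u_{\mathbf k},\bm\psi(\mathbf x_j)\rangle=\mu_{\mathbf k}^{-1/2}\sum_i\overline{(\mathbf f_{\mathbf k})_i}\,K(\mathbf x_i-\mathbf x_j)$, substituting $\mathbf x_i\mapsto\mathbf x_j+\mathbf x_l$ (a bijection of the grid under toroidal translation), and recognizing the remaining sum as exactly the circulant eigenvalue $\mu_{\mathbf k}$; the surviving phase $e^{-i\mathbf k\cdot\mathbf x_j}$ then has unit modulus. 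This is where translation invariance is essential and also where frequency aliasing must be treated correctly, but it is handled automatically, since $\mu_{\mathbf k}$ already aggregates all aliased contributions through the kernel values on the grid.

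With the constant-magnitude property in hand, the two terms of $\Sigma_{\mathbf k\mathbf k}=\langle\mathbf u_{\mathbf k},\bm\Sigma(\mathbf C)\mathbf u_{\mathbf k}\rangle$ collapse at once. In the first term $|\phi_{\mathbf k}(\mathbf x_j)|^2=\lambda_{\mathbf k}$ is independent of $j$ and factors out, leaving $\tfrac{\lambda_{\mathbf k}}{N}\sum_j\langle\bm\psi(\mathbf x_j),\mathbf C\bm\psi(\mathbf x_j)\rangle=\lambda_{\mathbf k}\Tr[\mathbf H\mathbf C]=\lambda_{\mathbf k}\int\lambda'\rho_{\mathbf C}(d\lambda')$; note this step requires no assumption on the off-diagonal part of $\mathbf C$. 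The second term is $\langle\mathbf u_{\mathbf k},\mathbf H\mathbf C\mathbf H\,\mathbf u_{\mathbf k}\rangle=\lambda_{\mathbf k}^2 C_{\mathbf k\mathbf k}$ by the eigenvalue equation. Combining, $\Sigma_{\mathbf k\mathbf k}=\lambda_{\mathbf k}\Tr[\mathbf H\mathbf C]-\lambda_{\mathbf k}^2 C_{\mathbf k\mathbf k}$, which is precisely the coefficient of $\delta_{\lambda_{\mathbf k}}$ prescribed by \eqref{eq:seapprox} with $\tau_1=\tau_2=1$; reassembling over all $\mathbf k$ yields the claimed identity of measures.

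The main obstacle I anticipate is the careful setup of the eigenbasis in the (possibly infinite-dimensional) feature space: justifying $\mathbf u_{\mathbf k}=\mu_{\mathbf k}^{-1/2}\bm\Psi\mathbf f_{\mathbf k}$ as a genuine orthonormal eigensystem, pairing $\mathbf k$ with $-\mathbf k$ if one insists on real features, and ensuring that the degeneracies $\lambda_{\mathbf k}=\lambda_{-\mathbf k}$ forced by a real kernel do not spoil the clean per-mode computation. Everything downstream of the constant-magnitude property is routine algebra, so essentially the entire weight of the proof rests on that single Fourier identity.
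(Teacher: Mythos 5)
Your proposal is correct and follows essentially the same route as the paper's proof: both diagonalize the circulant kernel matrix by discrete Fourier modes $\phi_{\mathbf k}$, transport them to Hessian eigenvectors $\mathbf u_{\mathbf k}$, and use the fact that the feature eigencomponents have constant squared magnitude $|\langle\mathbf u_{\mathbf k},\bm\psi(\mathbf x_{\mathbf j})\rangle|^2=\lambda_{\mathbf k}$ across grid points (the paper's $\phi_{\mathbf k\mathbf i}\overline\phi_{\mathbf k\mathbf i}=1/N$ combined with orthogonality of the modes) to collapse the noise term to $\Sigma_{\mathbf k\mathbf k}=\lambda_{\mathbf k}\Tr[\mathbf H\mathbf C]-\lambda_{\mathbf k}^2C_{\mathbf k\mathbf k}$. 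The only point the paper resolves explicitly that you merely flag as an obstacle is the degenerate case $\lambda_{\mathbf k}=0$, where $\mathbf u_{\mathbf k}\perp\bm\psi(\mathbf x_{\mathbf i})$ for all $\mathbf i$ forces $\Sigma_{\mathbf k\mathbf k}=0$, so the identity of measures holds there trivially.
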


\textbf{2)} Also, SE approximation with $\tau_1=\tau_2=1$ generally results if pointwise losses $L(\mathbf{x})$ are statistically  independent of feature eigencomponents  $\psi_k^2(\mathbf{x}) = \langle\mathbf{u}_k,\bm{\psi}(\mathbf{x})\rangle^2$ w.r.t. $\mathbf{x} \sim \mathcal{D}$ (Sec. \ref{sec:SE_for_uncorrelated_losses}).

\textbf{3)} On the other hand, \citet{bordelon2021learning} show that when the features $\bm{\psi}(\mathbf{x})$ are Gaussian w.r.t. $\mathbf{x}\sim\mathcal{D}$, the SE approximation is exact with $\tau_1=1, \tau_2=-1$.

\textbf{4)} Eq. \eqref{eq:seapprox} can be derived by considering a generalization of the map \eqref{eq:stochastic_noise_term} and requiring it to be ``spectrally expressible''. Namely,  replace summation over $i$ in \eqref{eq:stochastic_noise_term} by a general linear combination with coefficients $\widehat{R}=(R_{i_1i_2i_3i_4})$ only assuming the symmetries $R_{i_1i_2i_3i_4}=R_{i_2i_1i_3i_4}=R_{i_1i_2i_4i_3}$:
\begin{equation}
    \label{eq:stochastic_noise_term_R_class}
    \bm{\Sigma}_{\widehat{R}}(\mathbf{C}) = \frac{1}{N^2}\sum\limits_{i_1,i_2,i_3,i_4=1}^N R_{i_1i_2i_3i_4}\langle\bm{\psi}(\mathbf{x}_{i_3}),\mathbf{C}\bm{\psi}(\mathbf{x}_{i_4})\rangle \bm{\psi}(\mathbf{x}_{i_1})\otimes\bm{\psi}(\mathbf{x}_{i_2}) -\mathbf{H}\mathbf{C}\mathbf{H}.
\end{equation}
Assuming the coefficients $\widehat{R}$ to be fixed and independent of $\bm \Psi, \mathbf H,\mathbf C$, we prove 
\begin{ther}\label{theor:SE_family}
For an SGD dynamics \eqref{eq:second_moments_dyn} with noise term \eqref{eq:stochastic_noise_term_R_class}, the following statements are equivalent: \textbf{(1)} Spectral measures $\rho_{\mathbf{C}_t}$ occurring during SGD are uniquely determined by the initial spectral measure $\rho_{\mathbf{C}_0}$ and by the eigenvalues $\lambda_k$ of Hessian $\mathbf{H}$; \textbf{(2)} $R_{i_1i_2i_3i_4} = \tau_1 \delta_{i_1i_2}\delta_{i_3i_4} - \tfrac{\tau_2-1}{2}(\delta_{i_1i_3}\delta_{i_2i_4}+\delta_{i_1i_4}\delta_{i_2i_3})$ for some $\tau_1,\tau_2$, and Eq. \eqref{eq:seapprox} holds; \textbf{(3)} The trajectory $\{\mathbf{C}_t\}_{t=0}^\infty$ is invariant under  transformations $\widetilde{\bm{\Psi}}=\bm{\Psi}\mathbf{U}^T$ of the feature matrix $\bm{\Psi}$ by any orthogonal matrix $\mathbf{U}$. 
\end{ther}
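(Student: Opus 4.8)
The plan is to route the equivalence through condition $(2)$: I would prove the forward implications $(2)\Rightarrow(1)$ and $(2)\Rightarrow(3)$ by direct computation, and the converses $(1)\Rightarrow(2)$ and $(3)\Rightarrow(2)$ via invariant theory; since $(2)$ then both implies and is implied by each of $(1),(3)$, all three become equivalent. The shared computational core is to substitute the $\delta$-tensor of $(2)$ into \eqref{eq:stochastic_noise_term_R_class}. The first term contracts to $\tau_1\big(\tfrac1N\sum_i\langle\bm\psi(\mathbf x_i),\mathbf C\bm\psi(\mathbf x_i)\rangle\big)\big(\tfrac1N\sum_j\bm\psi(\mathbf x_j)\otimes\bm\psi(\mathbf x_j)\big)=\tau_1\Tr[\mathbf H\mathbf C]\,\mathbf H$, and each of the two terms $\delta_{i_1i_3}\delta_{i_2i_4}$, $\delta_{i_1i_4}\delta_{i_2i_3}$ collapses to $\mathbf H\mathbf C\mathbf H$ using $\mathbf H=\tfrac1N\sum_i\bm\psi(\mathbf x_i)\otimes\bm\psi(\mathbf x_i)$ and self-adjointness of $\mathbf C$; after cancelling the subtracted $\mathbf H\mathbf C\mathbf H$ this gives $\bm\Sigma_{\widehat R}(\mathbf C)=\tau_1\Tr[\mathbf H\mathbf C]\,\mathbf H-\tau_2\,\mathbf H\mathbf C\mathbf H$, i.e.\ $\Sigma_{kk}=\tau_1\lambda_k\sum_l\lambda_lC_{ll}-\tau_2\lambda_k^2C_{kk}$ in the Hessian eigenbasis. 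This is exactly the measure \eqref{eq:seapprox}, so the SE assertion inside $(2)$ is automatic once the tensor has the stated form.

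For $(2)\Rightarrow(1)$ I would note that this expression makes $\Sigma_{kk}$ depend on $\mathbf C$ only through the diagonal $\{C_{ll}\}$ and $\{\lambda_l\}$. In the eigenbasis of $\mathbf H$ the deterministic map of \eqref{eq:second_moments_dyn} is block-diagonal across modes, so the diagonal triple $(C_{kk},J_{kk},V_{kk})$ evolves by a closed linear recursion whose coefficients depend only on $\lambda_k$ and which is driven by the common source $\gamma\alpha_t^2\Sigma_{kk}$; with the standard start $\mathbf v_0=0$ (hence $\mathbf J_0=\mathbf V_0=0$) the measures $\rho_{\mathbf C_t}=\sum_kC_{kk}\delta_{\lambda_k}$ depend only on $\rho_{\mathbf C_0}$ and $\{\lambda_k\}$, which is $(1)$. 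For $(2)\Rightarrow(3)$, each $\delta$-pairing is built from the metric and so is $O(N)$-invariant; under $\widetilde{\bm\Psi}=\bm\Psi\mathbf U^T$ one has $\widetilde{\mathbf H}=\mathbf H$ and $\bm\Sigma_{\widehat R}(\mathbf C)[\widetilde{\bm\Psi}]=\bm\Sigma_{\mathbf U\cdot\widehat R}(\mathbf C)[\bm\Psi]$ with $\mathbf U$ acting on all four indices, so $\mathbf U\cdot\widehat R=\widehat R$ leaves the entire update map and $\mathbf C_0$ unchanged and the trajectory is literally invariant.

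The converses are the real content and both rest on the First Fundamental Theorem of invariant theory for $O(N)$: the rank-$4$ invariant tensors are spanned by $\delta_{i_1i_2}\delta_{i_3i_4}$, $\delta_{i_1i_3}\delta_{i_2i_4}$, $\delta_{i_1i_4}\delta_{i_2i_3}$, and the imposed symmetries $R_{i_1i_2i_3i_4}=R_{i_2i_1i_3i_4}=R_{i_1i_2i_4i_3}$ force the last two coefficients to coincide, which is precisely the parametrization in $(2)$. To apply it I read $(1)$ and $(3)$ as holding over all problems of a given size and reduce to generic features: taking $N=\dim\mathcal H$ with invertible $\bm\Psi$ turns $\{\bm\psi(\mathbf x_{i_1})\otimes\bm\psi(\mathbf x_{i_2})\}$ and $\{\langle\bm\psi(\mathbf x_{i_3}),\mathbf C\bm\psi(\mathbf x_{i_4})\rangle\}$ into bases, so that $\widehat R\mapsto\bm\Sigma_{\widehat R}(\cdot)[\bm\Psi]$ is injective on tensors with these symmetries. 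Then $(3)\Rightarrow(2)$ is clean: invariance of the single step $\mathbf C_1$ over all $\mathbf C_0$ and all $\mathbf U$ gives $\bm\Sigma_{\mathbf U\cdot\widehat R}=\bm\Sigma_{\widehat R}$, hence $\mathbf U\cdot\widehat R=\widehat R$, i.e.\ $\widehat R$ is invariant, and the theorem delivers the $\delta$-form.

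I expect $(1)\Rightarrow(2)$ to be the main obstacle. The trouble is that $(1)$ constrains only the spectral measure, i.e.\ the diagonal $\Sigma_{kk}$ as functions of $\{C_{ll}\}$, and one checks that off-diagonal entries of $\mathbf C_t$ never feed back into any $\Sigma_{kk}$; hence the spectral trajectory is blind to the off-diagonal part of the noise and the diagonal data does not by itself expose the full tensor. To close this I would use that all feature matrices with a fixed Hessian form a single $O(N)$-orbit (two full-row-rank matrices with $\bm\Psi\bm\Psi^T=\bm\Psi'\bm\Psi'^T$ differ by a right orthogonal factor), so that ``$\Sigma_{kk}$ is a universal function of $\{\lambda_j\},\{C_{ll}\}$ across problems'' becomes $O(N)$-invariance of the diagonal contractions of $\widehat R$; varying the eigenbasis of $\mathbf H$ across problems then lets these diagonal contractions recover, by polarization, the off-diagonal components $\Sigma_{kl}$. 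The delicate step is to show that this system of invariance and off-diagonal-vanishing conditions forces $\widehat R$ itself — not merely its diagonal contractions — to be $O(N)$-invariant, after which the classification above again yields $(2)$.
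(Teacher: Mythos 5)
Your architecture is essentially the paper's own: everything is routed through condition (2), the converses are reduced to the single-step noise map $\bm{\Sigma}_{\widehat R}$ acting on a full-rank $\bm{\Psi}$ (the paper isolates this as a separate proposition, Prop.~\ref{prop:SE_family}, and lifts it to the dynamics via the first SGD step with $\mathbf v_0=0$ and an induction), and the classification of invariant tensors is by Weyl's theorem, i.e.\ the First Fundamental Theorem for $O(N)$. Your computation $\bm{\Sigma}_{\widehat R}(\mathbf C)=\tau_1\Tr[\mathbf H\mathbf C]\,\mathbf H-\tau_2\,\mathbf H\mathbf C\mathbf H$ is correct, your $(2)\Rightarrow(1)$ and $(2)\Rightarrow(3)$ match the paper's, and your $(3)\Rightarrow(2)$ — injectivity of $\widehat R\mapsto\bm{\Sigma}_{\widehat R}$ for invertible features, hence $\mathbf U\cdot\widehat R=\widehat R$, hence the $\delta$-form — is the paper's argument; your appeal to the $O(N)$ (not $SO(N)$) FFT even subsumes the paper's hand-made exclusion of a Levi-Civita term, since $\epsilon$ is not reflection-invariant.

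The genuine gap is exactly where you flag it: $(1)\Rightarrow(2)$. You correctly diagnose that (1) only constrains spectral (diagonal) data, but your proposed fix — recover off-diagonal components "by polarization" over varying eigenbases, then show the resulting invariance conditions force invariance of $\widehat R$ itself — is left as an unproven "delicate step", and it is not clear it can be made to work as stated: changing the eigenbasis means changing the problem, so the matrix $\bm{\Sigma}$ changes along with the basis, and one cannot polarize as if reading diagonals of one fixed matrix in many frames. (A side remark of yours is also wrong in general: for generic $\widehat R$ the off-diagonal entries of $\mathbf C_t$ \emph{do} feed into $\Sigma_{kk}$ — cf.\ the exact noise \eqref{eq:stochastic_noise_term} and Eq.~\eqref{eq:Sigma_kk}, where cross terms $C_{k_1k_2}$ appear; the diagonal closure you invoke holds only after (2) is established.) The paper closes this direction with a different, concrete mechanism that needs no polarization. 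Comparing the same $(\mathbf C,\widehat R)$ on $\bm{\Psi}$ and on $\bm{\Psi}\mathbf U^T$ (same eigenvalues, same $\rho_{\mathbf C}$), statement (1) forces the two noise spectral measures to coincide; pairing with a test function $\phi$ and using $\bm{\Psi}^T\phi(\mathbf H)\bm{\Psi}=N\phi(\mathbf K)\mathbf K$ turns this into the trace identity $\Tr\big[\phi(\mathbf K)\mathbf K\,(\delta\widehat R\,\mathbf C^{\text{out}})\big]=0$ with $\delta\widehat R=\mathbf U\cdot\widehat R-\widehat R$. Since the quantifier in (1) ranges over all problems, one may vary $\mathbf K$ and $\phi$ freely while holding $\mathbf C^{\text{out}}$ fixed; the matrices $\phi(\mathbf K)\mathbf K$ span the symmetric matrices and $\delta\widehat R\,\mathbf C^{\text{out}}$ is symmetric in $(i_1,i_2)$, so the full matrix equation $\delta\widehat R\,\mathbf C^{\text{out}}=0$ follows, and varying $\mathbf C^{\text{out}}$ gives $\delta\widehat R=0$, i.e.\ genuine $O(N)$-invariance of $\widehat R$ — at which point your FFT classification applies verbatim. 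This trace identity, converting purely spectral information into constraints on the whole tensor by exploiting the freedom in the kernel $\mathbf K$, is the missing ingredient; without it your proof of $(1)\Rightarrow(2)$, and hence of the theorem, is incomplete.
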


\textbf{5)} Finally, we observe empirically that SE approximation works well on realistic problems such as MNIST and CIFAR10 for predicting loss evolution (Figs. \ref{fig:comparison_different_regimes}, \ref{fig:linearized_and_se_CIFAR10}, \ref{fig:bike_sharing_and_sgemm_performance} and Sec. \ref{sec:SE_validity}), determination of convergence conditions (Fig. \ref{fig:losses_2d}) and approximation of original noise term \eqref{eq:stochastic_noise_term} in terms of trace norm (Sec. \ref{sec:SE_validity}). 

In the remainder we focus on analytical treatment of SGD dynamics under SE approximation \eqref{eq:seapprox}. Note that parameters $\gamma,\tau_1,\tau_2$ enter evolution equations in  combinations $\gamma\tau_1,\gamma\tau_2$. This allows to lighten the notation in the sequel by setting $\tau_1=1$ and $\tau_2=\tau$ without restricting generality. 

\vspace{-0.5em}
\section{Reduction to generating functions}\label{sec:gen_func}
\vspace{-0.5em}

\hspace{-0.3em}\textbf{{\fontsize{11}{11} \selectfont  Gas of rank-1 operators.}} Iterations \eqref{eq:second_moments_dyn} are linear and can be compactly written as $\mathbf{M}_{t+1}=F_t\mathbf{M}_t$ with a linear operator $F_t$. Under SE approximation \eqref{eq:seapprox}, for all four blocks of $\mathbf{M}_t$ we only need to consider the diagonal components in the eigenbasis of $\mathbf{H}$, e.g. $C_{kk,t}\equiv\langle\mathbf{u}_k,\mathbf{C}_t\mathbf{u}_k\rangle$. Then 
\begin{equation}\label{eq:lt12tr}
    L(T) = \tfrac{1}{2}\Tr\mathbf{H}\mathbf C_T=\tfrac{1}{2}\langle (\begin{smallmatrix}
    \pmb{\lambda}&0\\0&0
    \end{smallmatrix}), F_T F_{T-1}\cdots F_1(\begin{smallmatrix}
    \mathbf C_0&0\\0&0
    \end{smallmatrix})\rangle.  
\end{equation}
Here  $(\begin{smallmatrix}
    \pmb \lambda&0\\0&0
    \end{smallmatrix})=\big((\begin{smallmatrix}
    \lambda_1&0\\0&0
    \end{smallmatrix}),(\begin{smallmatrix}
    \lambda_2&0\\0&0
    \end{smallmatrix}),\ldots\big), (\begin{smallmatrix}
    \mathbf C_0&0\\0&0
    \end{smallmatrix})=\big((\begin{smallmatrix}
    C_{11,0}&0\\0&0
    \end{smallmatrix}),(\begin{smallmatrix}
    C_{22,0}&0\\0&0
    \end{smallmatrix}),\ldots\big)\in\mathbb R^{\mathbb N}\otimes \mathbb R^{2\times 2}$ are $(2\times 2)$-matrix-valued sequences indexed by the eigenvalues $\lambda_k$, and $\langle\cdot,\cdot\rangle$ is the natural scalar product in $l^2\otimes \mathbb R^{2\times 2}$. The evolution operator $F_t$ can be represented as the sum $F_t=Q_t+A_t$ of a rank-one operator $Q_t$ (noise term) and an operator $A_t$ (main term) that acts independently at each eigenvalue $\lambda_k$ as a $4\times 4$ matrix:
\begin{align}
Q_t={}&\gamma_t\alpha_t^2\pmb\lambda(\begin{smallmatrix}
     1&1\\1&1
    \end{smallmatrix})\langle (\begin{smallmatrix}
    \pmb \lambda&0\\0&0
    \end{smallmatrix}),\cdot\rangle,\\
    A_t ={}& (A_{t,\lambda}),\quad A_{t,\lambda}M=(\begin{smallmatrix}
    1-\alpha_t\lambda & \beta_t  \\
    -\alpha_t\lambda & \beta_t
    \end{smallmatrix})
    M
    (\begin{smallmatrix}
    1-\alpha_t\lambda & \beta_t  \\
    -\alpha_t\lambda & \beta_t
    \end{smallmatrix})^T
    -\tau\gamma_t\alpha_t^2\lambda^2(\begin{smallmatrix}
    1 & 0  \\
    1 & 0
    \end{smallmatrix})
    M
    (\begin{smallmatrix}
    1 & 0  \\
    1 & 0
    \end{smallmatrix})^T,\label{eq:atl}
\end{align}
where $\pmb\lambda=(\lambda_1,\lambda_2,\ldots)^T$. Let us expand the loss by the binomial formula, with $m$ terms $Q_t$ chosen at positions $t_1,\ldots,t_m$ and $T-m$ terms $A_t$ at the remaining positions:
\begin{equation}\label{eq:lbinexp}
    L(T) = \frac{1}{2}\sum_{m=0}^T\sum_{0<t_1<\ldots<t_m<T+1} U_{T+1,t_{m}} U_{t_m,t_{m-1}} \cdots U_{t_{2},t_{1}}V_{t_1},  
\end{equation}
where
\begin{align}
    \label{eq:U_t_def}
    U_{t,s} ={}& \langle (\begin{smallmatrix}
    \pmb \lambda&0\\0&0
    \end{smallmatrix}), A_{t-1}A_{t-2}\cdots A_{s+1} \pmb\lambda(\begin{smallmatrix}
     1&1\\1&1
    \end{smallmatrix})\rangle \gamma_s\alpha_s^2,\\
    V_{t} ={}& \langle (\begin{smallmatrix}
    \pmb \lambda&0\\0&0
    \end{smallmatrix}), A_{t-1}A_{t-2}\cdots A_{1} (\begin{smallmatrix}
    \mathbf C_0&0\\0&0
    \end{smallmatrix})\rangle.\label{eq:vt}
\end{align}
Expansion \eqref{eq:lbinexp} has a suggestive interpretation as a partition function of a gas of rank-1 operators interacting with nearest neighbors (via $U_{t,s}$) and the origin (via $V_t$). Interactions via $U_{t.s}$ contain the factor $\gamma$ so that their strength depends on the sampling noise. We expect the model to be in different phases depending on the amount of noise: in the ``signal-dominated'' regime $L(T)$ is primarily determined by $V_t$ at large $T$, while in the ``noise-dominated'' regime $L(T)$ is primarily determined by $U_{t,s}$. We show later that such a phase transition indeed occurs for non-strongly convex problems.

\vspace{-0.0em}
\hspace{-0.3em}\textbf{{\fontsize{11}{11} \selectfont Generating functions.}} Expansion \eqref{eq:lbinexp} allows to compute $L(T)$ iteratively:
\vspace{-0.5em}
\begin{align}\label{eq:itlt}
    L(T) =\frac{1}{2} V_{T+1}+\sum_{t_m=1}^TU_{T+1,t_m}L(t_m-1). 
\end{align}
For constant learning rates and momenta  $\alpha_t\equiv \alpha,\beta_t\equiv \beta$ we have $F_t\equiv F$, the value $U_{t,s}$ becomes translation invariant, $U_{t,s}=U_{t-s}$. Then, as we show in Sec. \ref{sec:generating functions derivation}, the loss can be conveniently described by generating functions (or equivalently, Laplace transform):
\vspace{-0.5em}
\begin{equation}\label{eq:lvu}
    \widetilde L(z) = \sum_{t=0}^\infty L(t) z^t=\frac{\widetilde{V}(z)/2}{1-z \widetilde U(z)},
\end{equation}
\vspace{-0.5em}
where $\widetilde U$ and $\widetilde V$ are the ``noise'' and ``signal'' generating functions:
\begin{align}
    \widetilde U(z)={}&\sum_{t=0}^\infty  U_{t+1}z^{t}=\gamma\alpha^2 \langle (\begin{smallmatrix}
    \pmb \lambda&0\\0&0
    \end{smallmatrix}), (1-zA)^{-1} \pmb\lambda(\begin{smallmatrix}
     1&1\\1&1
    \end{smallmatrix})\rangle\label{eq:wuexp1}\\
    ={}&\gamma\alpha^2\sum_{k}
    \lambda_k^2 
    (\beta z + 1)/S(\alpha, \beta, \tau\gamma, \lambda_k, z),\label{eq:wuexp2}\\
    \widetilde V(z)={}&\sum_{t=0}^\infty 
    V_{t+1}z^{t}=\langle (\begin{smallmatrix}
    \pmb \lambda&0\\0&0
    \end  {smallmatrix}), (1-zA)^{-1} (\begin{smallmatrix}
    \mathbf C_0&0\\0&0
    \end{smallmatrix})\rangle\label{eq:wvexp1}\\
    ={}&\sum_{k} \lambda_k C_{kk,0}(2 \alpha \beta \lambda_k  z + \beta^{3} z^{2} - \beta^{2} z - \beta z + 1)/S(\alpha, \beta, \tau\gamma, \lambda_k, z)\label{eq:wvexp2}\\
    S(\alpha, \beta, \gamma, \lambda, z)={}&\alpha^{2} \beta \gamma \lambda^{2} z^{2} + \alpha^{2} \beta \lambda^{2} z^{2} + \alpha^{2} \gamma \lambda^{2} z - \alpha^{2} \lambda^{2} z - 2 \alpha \beta^{2} \lambda  z^{2} - 2 \alpha \beta \lambda  z^{2}\nonumber\\ &+ 2 \alpha \beta \lambda  z
    + 2 \alpha \lambda  z - \beta^{3} z^{3} + \beta^{3} z^{2} + \beta^{2} z^{2} - \beta^{2} z + \beta z^{2} - \beta z - z + 1.\nonumber
\end{align}
\vspace{-0.5em}
In the remainder we derive various properties of the loss evolution by analyzing these formulas. 

\vspace{-0.5em}
\section{Stability analysis}\label{sec:stability}
\vspace{-0.5em}

Starting from this section we assume a non-strongly-convex scenario with an infinite-dimensional and compact $\mathbf H$, i.e., $\lambda_k\rightarrow0$. Also, we assume that $0<\tau\leq 1$ in SE approximations: this simplifies some statements and fits the experiment for practical problems (see Figure \ref{fig:comparison_different_regimes} and Appendix~ \ref{sec:experiments}).     

\vspace{-0.0em}
\hspace{-0.3em}\textbf{{\fontsize{11}{11} \selectfont Conditions of loss convergence (stability).}} First note that as $\lambda_k\rightarrow0$ the ``main'' components $A_{\lambda_k}$ of the evolution operator $F$ appearing in Eq. \eqref{eq:atl} have their action in trial matrix $M$ converge to $(\begin{smallmatrix}1 & \beta\\ 0 & \beta \end{smallmatrix})M(\begin{smallmatrix}1 & \beta\\ 0 & \beta \end{smallmatrix})^T$, and from \eqref{eq:U_t_def} we get $U_t=const\sum_{k=1}^\infty\lambda_k^2(1+o(1)),$ for each $t$. This means that if $\sum_k \lambda_k^2=\infty$ then each $U_t=\infty$ and loss diverges already at first step ($L(t=1)=\infty$) -- we call this effect \textit{immediate divergence}. On the other hand, assuming $\sum_k \lambda_k^2<\infty$, loss stability can be related to the first positive singularity of the loss generating function $\widetilde{L}(z)$ given by \eqref{eq:lvu}. Indeed, let $r_L$ be the convergence radius of power series \eqref{eq:lvu}. Since $L(t)\ge 0,$ $\widetilde L(z)$ must be monotone increasing on $[0,r_L)$ and have a singularity at $z=r_L.$ If $r_L<1,$ then $L(t)\not\to 0$ as $t\to\infty,$ while if $r_L>1$, then $L(t)\to 0$ exponentially fast. Thus, large-$t$ loss stability can be characterized by the condition $r_L\geq1$, which can be further related to the generating function $\widetilde{U}(z)$.
\begin{prop}\label{prop:stab}
Let $\beta\in(-1,1)$, $0<\alpha <2(\beta+1)/\lambda_{\max}$ and $\gamma\in [0,1]$. Then $r_L<1$ iff $\widetilde{U}(1)>1$. 
\end{prop}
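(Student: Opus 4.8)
The plan is to locate the smallest positive-real singularity of the loss generating function $\widetilde L(z)=\tfrac{\widetilde V(z)/2}{1-z\widetilde U(z)}$ from \eqref{eq:lvu}. Since $L(t)\ge0$, Pringsheim's theorem guarantees that $z=r_L$ is itself a singularity of $\widetilde L$, so it suffices to understand $\widetilde L$ on the real segment $[0,1)$. There are only three possible sources of a singularity there: singularities of $\widetilde V$, singularities of $\widetilde U$, and zeros of the denominator $D(z)\equiv 1-z\widetilde U(z)$. My first step is to show that $\widetilde U$ and $\widetilde V$ are analytic on the open disk $|z|<1$, which reduces the whole question to the location of real zeros of $D$ in $[0,1)$. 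Throughout I may assume $\sum_k\lambda_k^2<\infty$, since otherwise the immediate-divergence discussion gives $L(1)=\infty$, hence $r_L=0<1$, while simultaneously $\widetilde U(1)=\infty>1$, so the equivalence holds trivially.

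For the analyticity I would note that the $k$-th summand of \eqref{eq:wuexp2}, \eqref{eq:wvexp2} is a rational function of $z$ whose poles are the reciprocals of the eigenvalues of the block $A_{\lambda_k}$ in \eqref{eq:atl}, i.e. the roots of $S(\alpha,\beta,\tau\gamma,\lambda_k,z)$. The hypotheses $\beta\in(-1,1)$ and $0<\alpha<2(\beta+1)/\lambda_{\max}$ are precisely the stability conditions for the heavy-ball matrix $\bigl(\begin{smallmatrix}1-\alpha\lambda&\beta\\-\alpha\lambda&\beta\end{smallmatrix}\bigr)$, whose eigenvalues lie strictly inside the unit disk for every $\lambda\le\lambda_{\max}$; I would upgrade this to the spectral-radius bound $\rho(A_{\lambda_k})<1$, estimating the effect of the noise-correction term using $0<\tau\le1$ and $\gamma\in[0,1]$. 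This places all poles strictly outside the closed unit disk, so $S(\alpha,\beta,\tau\gamma,\lambda_k,\cdot)\ne0$ on $\{|z|\le1\}$. Because $\lambda_k\to0$, these denominators converge to $S(\alpha,\beta,\tau\gamma,0,z)=(1-z)(1-\beta z)(1-\beta^2z)$, which is bounded away from $0$ on compact subsets of $|z|<1$; combined with $\sum_k\lambda_k^2<\infty$ and $\sum_k\lambda_kC_{kk,0}=2L(0)<\infty$, the two series converge locally uniformly on $|z|<1$ and define analytic functions there. Hence $r_L<1$ can only arise from a zero of $D$ in $[0,1)$.

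Next I would record the sign structure on $[0,1)$: each summand of $\widetilde U$ is positive, since $\beta z+1>0$ and $S>0$, and is nondecreasing in $z$, so $\widetilde U$ is positive and nondecreasing and by monotone convergence $\lim_{z\to1^-}\widetilde U(z)=\widetilde U(1)\in(0,+\infty]$. Writing $g(z)=z\widetilde U(z)$, so $D=1-g$, I split into the two directions. If $\widetilde U(1)>1$, then $\lim_{z\to1^-}g(z)=\widetilde U(1)>1$ while $g(0)=0$, so by the intermediate value theorem $g$ hits $1$ at a smallest $z_0\in(0,1)$; there $D(z_0)=0$ while $D>0$ on $[0,z_0)$, and provided $\widetilde V(z_0)\ne0$ this is a genuine pole, giving $r_L=z_0<1$. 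Conversely, if $\widetilde U(1)\le1$, then for $z\in[0,1)$ one has $g(z)=z\widetilde U(z)<\widetilde U(z)\le\widetilde U(1)\le1$, so $D>0$ on $[0,1)$, $\widetilde L$ is analytic there, and $r_L\ge1$ by Pringsheim. This establishes the equivalence modulo the non-vanishing of the numerator.

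The main obstacle is exactly this last point: ruling out a cancellation $\widetilde V(z_0)=0$ at the zero of $D$. I would resolve it by combining the sign method with an explicit positivity estimate. On one hand, from $\widetilde L=\widetilde V/(2D)\ge0$ and $D>0$ on $[0,z_0)$ we get $\widetilde V\ge0$ there, hence $\widetilde V(z_0)\ge0$ by continuity. On the other hand I would show that every summand of \eqref{eq:wvexp2} is strictly positive on $[0,1]$: the denominators $S$ are positive by the previous step, the weights $\lambda_kC_{kk,0}\ge0$ (with at least one positive, else $L(0)=0$ and the statement is vacuous), and the numerator polynomial $P(\lambda,z)=2\alpha\beta\lambda z+\beta^3z^2-\beta^2z-\beta z+1$ is positive on $[0,1]$. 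The endpoints are explicit, $P(\lambda,0)=1$ and $P(\lambda,1)=(1-\beta)^2(1+\beta)+2\alpha\beta\lambda>0$ (using $\alpha\lambda<2(1+\beta)$ when $\beta<0$), while no interior root can occur because the product of the two roots of $P(\lambda,\cdot)$ equals $\beta^{-3}$, whose modulus exceeds $1$, so they cannot both lie in $[0,1]$, and a single interior root would force a sign change at an endpoint. A secondary technical point is the uniform bound $\rho(A_{\lambda_k})<1$ including the noise correction; I expect to reduce it to the sign computation $S(\alpha,\beta,\tau\gamma,\lambda,1)=\alpha\lambda\bigl[2(1-\beta^2)-\alpha\lambda\bigl(2-(1+\beta)(1+\tau\gamma)\bigr)\bigr]>0$, which holds on the full admissible range of $\alpha,\beta,\tau,\gamma$ and also pins down the boundary behaviour at $z=1$.
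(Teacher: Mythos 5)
Your proof shares the paper's skeleton exactly: Pringsheim's theorem to place the singularity at $z=r_L$ on the positive axis, analyticity of $\widetilde U,\widetilde V$ near $(0,1)$ to reduce everything to zeros of $1-z\widetilde U(z)$, monotonicity of $z\widetilde U(z)$ plus the intermediate value theorem for both directions. Two of your supporting computations are correct and one is a genuine improvement over the paper: the identity $S(\alpha,\beta,\tau\gamma,\lambda,1)=\alpha\lambda\bigl[2(1-\beta^2)-\alpha\lambda\bigl(2-(1+\beta)(1+\tau\gamma)\bigr)\bigr]>0$ checks out, and your root-product/concavity argument that the numerator $P(\lambda,z)=2\alpha\beta\lambda z+\beta^3z^2-\beta^2z-\beta z+1$ of the $\widetilde V$-terms is positive on $[0,1]$ (using $P(\lambda,1)=(1-\beta)^2(1+\beta)+2\alpha\beta\lambda>0$) cleanly rules out the cancellation $\widetilde V(z_0)=0$, a point the paper's proof passes over silently.

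However, two load-bearing steps are not merely unproved sketches but false as stated, and both collapse to the same counterexample. Take $\beta=0$, $\tau=\gamma=1$, and an eigenvalue with $a=\alpha\lambda\in(1,2)$, which the hypothesis $\alpha<2/\lambda_{\max}$ permits. Then $S(\alpha,0,1,\lambda,z)=(2a-1)z+1$, so the corresponding block $A_\lambda$ of \eqref{eq:atl} has eigenvalue $-(2a-1)$, of modulus up to $3$: your claimed bound $\rho(A_{\lambda_k})<1$ fails, the pole $z=-1/(2a-1)$ lies strictly inside the unit disk, and $\widetilde U,\widetilde V$ are \emph{not} analytic on $|z|<1$ (only in a neighborhood of $(0,1)$, which is all the paper's lemma claims). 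Since your entire route to ``$S\neq0$ on $[0,1)$'' ran through this disk statement, you are left with no proof of the paper's key fact that $S(\alpha,\beta,\tau\gamma,\lambda,z)>0$ for $z\in(0,1)$; your endpoint checks $S(\cdot,0)=1$ and $S(\cdot,1)>0$ do not exclude a pair of roots of the cubic inside $(0,1)$, and the paper establishes this positivity by computer-assisted quantifier elimination (Tarski--Seidenberg via REDUCE/Redlog), which signals that no soft spectral argument of the kind you propose is available. The same example kills your second assertion: the summand of $\widetilde U$ is proportional to $1/((2a-1)z+1)$, strictly \emph{decreasing} on $(0,1)$ for $a>1/2$, so ``each summand of $\widetilde U$ is nondecreasing'' is wrong (and with it your monotone-convergence argument for $\lim_{z\to1^-}\widetilde U(z)=\widetilde U(1)$). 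The correct statement — and the one the paper actually proves — is that $\tfrac{d}{dz}(z\widetilde U(z))$ is a sum of terms $\gamma\alpha^2\lambda_k^2R(\alpha,\beta,\lambda_k,z)/S^2$ with $R\ge0$ on the admissible domain, a second polynomial inequality again verified by quantifier elimination. Until you supply proofs of these two semialgebraic positivity facts (or an honest substitute), the equivalence $r_L<1\iff\widetilde U(1)>1$ does not follow from your argument.
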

This result is proved by showing that $\widetilde{U}(z),\widetilde{V}(z)$ have no singularities for $z\in(0,1)$ and that $z\widetilde{U}(z)$ is monotone increasing on $(0,1)$. The only source of singularity in $\widetilde{L}(z)$ is then the denominator in \eqref{eq:lvu}. Note that conditions $\beta\in(-1,1), \; 0<\alpha <2(\beta+1)/\lambda_{\max}$ exactly specify the convergence region for the non-stochastic problem with $\gamma=0$ (see \citep{roy1990analysis, tugay1989properties} and Appendix \ref{sec:noiseless_convergence}). This is also a necessary condition in the presence of sampling noise since the second term in \eqref{eq:second_moments_dyn} is a positive semi-definite matrix.

To get a better understanding of convergence condition $\widetilde{U}(1)<1$ we use Eq. \eqref{eq:wuexp2} at $z=1$ and find
\begin{equation}
\label{eq:U1}
    \widetilde U(1) =\sum_k\frac{\lambda_k}{\lambda_k\big(\tau-\frac{1-\beta}{\gamma(1+\beta)}\big)+\frac{2(1-\beta)}{\alpha\gamma}} = \frac{\alpha\gamma}{2(1-\beta)}\sum_k \lambda_k(1+O(\lambda_k)).
\end{equation}
This shows that  $\sum_k\lambda_k=\infty$ iff $\widetilde{U}(1)=\infty>1$ (regardless of values of $\alpha,\beta,\gamma$), in which case $r_L<1$ and $L(t)$ diverges exponentially as $t\to \infty$. We refer to this scenario as \textit{eventual divergence}. 

Next, consider \emph{effective learning rate} $\alpha_{\mathrm{eff}}\equiv\tfrac{\alpha}{1-\beta}=\alpha+\alpha\beta+\alpha\beta^2+\ldots$ which reflects the accumulated effect of previous gradient descent iterations \citep{tugay1989properties, yuan2016influence}. We show that condition $\widetilde{U}(1)< 1$ is closely related to a simple bound on $\alpha_\mathrm{eff}$ in terms of critical regularization $\lambda_{\mathrm{crit}}$, which is defined only through problem's spectrum $\lambda_k$ and SE parameter $\tau$.
\begin{prop}\label{prop:eff_lr_stability_condition} Retain assumptions of Prop. \ref{prop:stab} and define $\lambda_{\mathrm{crit}}$ as the unique positive solution to $\sum_k\tfrac{\lambda_k}{\tau\lambda_k+\lambda_{\mathrm{crit}}}=1$. Then, convergence condition $\widetilde{U}(1)<1$ requires for $\alpha_\mathrm{eff}$ to obey
\begin{equation}\label{eq:aeffgamma}
\alpha_\mathrm{eff}< \tfrac{2}{\gamma\lambda_\mathrm{crit}}.
\end{equation}
Moreover, for fixed $\beta,\gamma$ the convergence condition $\widetilde{U}(1)<1$ is equivalent to the condition $\alpha_\mathrm{eff}<\alpha_\mathrm{eff}^{(c)}(\beta,\gamma)$, where the critical values $\alpha_\mathrm{eff}^{(c)}(\beta,\gamma)$ are upper bounded by $\tfrac{2}{\gamma\lambda_\mathrm{crit}}$ (by Eq. \eqref{eq:aeffgamma}) and 
\begin{equation}\label{eq:aeffgamma_tightness}
\tfrac{2}{\gamma \lambda_\mathrm{crit}}\big/\alpha_\mathrm{eff}^{(c)}(\beta,\gamma)-1\leq\tfrac{\lambda_\mathrm{max}}{\lambda_\mathrm{crit}}\tfrac{1-\beta}{\gamma(1+\beta)} \stackrel{\beta\to1}{=} O(1-\beta).
\end{equation}   
\end{prop}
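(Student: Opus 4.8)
The plan is to treat $\widetilde U(1)$, given by Eq.~\eqref{eq:U1}, as a function of $\alpha_{\mathrm{eff}}$ and to compare it against the relation defining $\lambda_{\mathrm{crit}}$. Substituting $\alpha=\alpha_{\mathrm{eff}}(1-\beta)$ into \eqref{eq:U1} collapses the two $\beta$-dependent prefactors, giving
\[
\widetilde U(1)=\sum_k\frac{\lambda_k}{(\tau-\delta)\lambda_k+\tfrac{2}{\gamma\alpha_{\mathrm{eff}}}},\qquad \delta:=\frac{1-\beta}{\gamma(1+\beta)}\ge 0 .
\]
Writing $\mu:=\tfrac{2}{\gamma\alpha_{\mathrm{eff}}}$, I would first record that each summand is strictly increasing in $\alpha_{\mathrm{eff}}$ (strictly decreasing in $\mu$), so $\widetilde U(1)$ is a strictly increasing, continuous function of $\alpha_{\mathrm{eff}}$ on the admissible window. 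Hence $\{\alpha_{\mathrm{eff}}:\widetilde U(1)<1\}$ is an interval $(0,\alpha_{\mathrm{eff}}^{(c)}(\beta,\gamma))$, with $\alpha_{\mathrm{eff}}^{(c)}$ defined by $\widetilde U(1)=1$; this is exactly the claimed equivalence. Setting $\mu^\ast:=\tfrac{2}{\gamma\alpha_{\mathrm{eff}}^{(c)}}$, the threshold is equivalently the unique root of $g(\mu):=\sum_k\lambda_k/((\tau-\delta)\lambda_k+\mu)=1$.

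For the upper bound \eqref{eq:aeffgamma}, I would compare $g$ with the function defining $\lambda_{\mathrm{crit}}$. Since $\delta\ge 0$ we have $(\tau-\delta)\lambda_k+\mu\le\tau\lambda_k+\mu$, so evaluating at $\mu=\lambda_{\mathrm{crit}}$ gives $g(\lambda_{\mathrm{crit}})\ge\sum_k\lambda_k/(\tau\lambda_k+\lambda_{\mathrm{crit}})=1$. As $g$ is strictly decreasing, its root obeys $\mu^\ast\ge\lambda_{\mathrm{crit}}$, i.e. $\alpha_{\mathrm{eff}}^{(c)}=\tfrac{2}{\gamma\mu^\ast}\le\tfrac{2}{\gamma\lambda_{\mathrm{crit}}}$, which is \eqref{eq:aeffgamma}.

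For the tightness estimate \eqref{eq:aeffgamma_tightness}, I would instead evaluate $g$ at $\mu=\lambda_{\mathrm{crit}}+\delta\lambda_{\max}$, using the identity $(\tau-\delta)\lambda_k+\lambda_{\mathrm{crit}}+\delta\lambda_{\max}=\tau\lambda_k+\lambda_{\mathrm{crit}}+\delta(\lambda_{\max}-\lambda_k)$. Because $\lambda_k\le\lambda_{\max}$ and $\delta\ge 0$, the last term is nonnegative, so $g(\lambda_{\mathrm{crit}}+\delta\lambda_{\max})\le\sum_k\lambda_k/(\tau\lambda_k+\lambda_{\mathrm{crit}})=1$; monotonicity then forces $\mu^\ast\le\lambda_{\mathrm{crit}}+\delta\lambda_{\max}$. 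Dividing by $\lambda_{\mathrm{crit}}$ turns this into $\tfrac{2}{\gamma\lambda_{\mathrm{crit}}}\big/\alpha_{\mathrm{eff}}^{(c)}=\mu^\ast/\lambda_{\mathrm{crit}}\le 1+\tfrac{\lambda_{\max}}{\lambda_{\mathrm{crit}}}\delta$, which is precisely \eqref{eq:aeffgamma_tightness}, with $\delta=\tfrac{1-\beta}{\gamma(1+\beta)}=O(1-\beta)$ as $\beta\to1$.

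The one point requiring care — and the likely main obstacle — is verifying that $g(\mu)$ is finite, positive and continuous throughout the relevant range, so that the root $\mu^\ast$ exists, is unique, and falls inside the deterministic stability window $0<\alpha<2(\beta+1)/\lambda_{\max}$. Finiteness of $g$ needs $\sum_k\lambda_k<\infty$ (otherwise $\widetilde U(1)\equiv\infty$ and $\lambda_{\mathrm{crit}}$ fails to exist — the eventual-divergence case already excluded), and when $\tau-\delta<0$ one must check the denominators remain positive. The latter holds because the admissibility bound $\alpha<2(\beta+1)/\lambda_{\max}$ is precisely equivalent to $\mu>\delta\lambda_{\max}$, whence $(\tau-\delta)\lambda_k+\mu>(\tau-\delta)\lambda_{\max}+\delta\lambda_{\max}=\tau\lambda_{\max}>0$; this same equivalence confirms $\mu^\ast$ corresponds to a genuine admissible $\alpha_{\mathrm{eff}}^{(c)}$. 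Granting these, the monotonicity-plus-comparison argument above closes all three claims.
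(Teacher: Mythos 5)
Your proof is correct, and for the equivalence and the bound \eqref{eq:aeffgamma} it follows essentially the paper's route: the paper likewise rewrites \eqref{eq:U1} as $\widetilde U(1)=f\bigl(\tfrac{2(1-\beta)}{\alpha\gamma},Y\bigr)$ with $f(x,y)=\sum_k\lambda_k/(\lambda_k(\tau-y)+x)$ and $Y=\tfrac{1-\beta}{\gamma(1+\beta)}$ (your $\delta$), uses monotonicity of $\widetilde U(1)$ in $\alpha$ to characterize the critical value as the root $x_c$ (your $\mu^\ast$) of $f(x,Y)=1$, and obtains \eqref{eq:aeffgamma} from $f(x,Y)\geq f(x,0)$ together with the monotone decrease of $f(\cdot,0)$ --- exactly your comparison $g(\lambda_{\mathrm{crit}})\geq 1$. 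Where you genuinely diverge is the tightness estimate \eqref{eq:aeffgamma_tightness}. The paper subtracts the two identities $f(\lambda_{\mathrm{crit}},0)=f(x_c,Y)=1$ to get a series identity equating a sum proportional to $x_c-\lambda_{\mathrm{crit}}$ with $Y\sum_k\lambda_k^2/\bigl((\lambda_k(\tau-Y)+x_c)(\tau\lambda_k+\lambda_{\mathrm{crit}})\bigr)$, and then needs two separate estimates: a lower bound of the left series by $(x_c-\lambda_{\mathrm{crit}})/(\tau\lambda_{\max}+\lambda_{\mathrm{crit}})$ and an upper bound of the right series via a mass-redistribution argument ($\sum_k a_kb_k\leq b_0=\tfrac{\lambda_{\max}}{\tau\lambda_{\max}+\lambda_{\mathrm{crit}}}$ under $\sum_k a_k\leq 1$), arriving at $x_c-\lambda_{\mathrm{crit}}\leq Y\lambda_{\max}$. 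Your single evaluation of $g$ at the shifted test point $\mu=\lambda_{\mathrm{crit}}+\delta\lambda_{\max}$, using $(\tau-\delta)\lambda_k+\lambda_{\mathrm{crit}}+\delta\lambda_{\max}=\tau\lambda_k+\lambda_{\mathrm{crit}}+\delta(\lambda_{\max}-\lambda_k)\geq\tau\lambda_k+\lambda_{\mathrm{crit}}$, yields the identical bound $\mu^\ast\leq\lambda_{\mathrm{crit}}+\delta\lambda_{\max}$ in one monotonicity step; this is shorter, avoids the two-sided series manipulation, and makes transparent where the factor $\lambda_{\max}$ comes from. Your positivity check (that $\mu>\delta\lambda_{\max}$, equivalently $\alpha<2(1+\beta)/\lambda_{\max}$, keeps all denominators above $\tau\lambda_{\max}>0$) matches the paper's remark; the one point you flag but could nail down explicitly is that the root $\mu^\ast$ indeed lies inside the admissible window, which follows since $g(\mu)\to\sum_k\lambda_k/\bigl(\tau\lambda_k+\delta(\lambda_{\max}-\lambda_k)\bigr)\geq 1/\tau\geq 1$ as $\mu\downarrow\delta\lambda_{\max}$ (the term with $\lambda_k=\lambda_{\max}$ alone contributes $1/\tau$, and $0<\tau\leq1$), so by strict monotonicity the unique root sits strictly inside whenever $g$ exceeds $1$ there.
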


The bound \eqref{eq:aeffgamma} is much simpler to use than the exact condition $\widetilde{U}(1)<1$, while \eqref{eq:aeffgamma_tightness} shows that \eqref{eq:aeffgamma} is tight for practically common case of $\beta$ close to 1. Both conditions $\widetilde{U}(1)<1$ and \eqref{eq:aeffgamma} agree very well with experiments on MNIST and synthetic data (Fig. \ref{fig:losses_2d}). Importantly, Eq. \eqref{eq:aeffgamma} implies that, in contrast to the non-stochastic ($\gamma=0$) GD, effective learning rate in SGD must be \emph{bounded} to prevent divergence. In non-strongly convex problems, accelerated convergence of non-stochastic GD is achieved by gradually adjusting $\beta$ to 1 at a fixed $\alpha$ during optimization \citep{nemirovskiy1984iterative1}. We see that this mechanism is not applicable to mini-batch SGD.   

\hspace{-0.3em}\textbf{{\fontsize{11}{11} \selectfont Exponential divergence.}} Our stability analysis above shows that loss diverges at large $t$ when $\widetilde{U}(1)>1$. This divergence is primarily characterized by the convergence radius $r_L$ determined from the equation $r_L \widetilde{U}(r_L)=1$ according to \eqref{eq:lvu}. Indeed, assuming an asymptotically exponential ansatz $L(t)\sim C e^{t/t_{\text{div}}}$, we immediately get the divergence time scale $t_\text{div}=-1/\log r_L$. A closer inspection of $\widetilde{L}(z)$ near $r_L$ also allows to determine the constant $C$ (see appendix \ref{sec:divergence}):
\begin{equation}\label{eq:divergence_asym}
    L(t) \sim \frac{\widetilde V(r_L)}{2(1+r_L^2\widetilde U'(r_L))} r_L^{-t}. 
\end{equation}
Note that $r_L$ can be very close to $1$, which is naturally observed in \textit{eventual divergent} phase by taking sufficiently small $\alpha$ and/or $\gamma$. 
In this case the characteristic divergence time $t_\text{div}$ is large and we expect  divergent behavior to occur only for $t\gtrsim t_\text{blowup}$, while for $t\ll t_\text{blowup}$ the loss converges roughly with the rate of noiseless model $\widetilde{U}(z)\equiv 0$. As $t_\text{blowup}$ should have a meaning of a time moment where loss starts to significantly deviate from noiseless trajectory we define it by equating divergent \eqref{eq:divergence_asym} and noiseless losses.   
We confirm these effects experimentally in Fig, \ref{fig:phase_diag}~(right).

\vspace{-0.5em}
\section{Solutions for power-law spectral distributions}\label{sec:power-law_dist}
\vspace{-0.5em}

To get a more detailed picture of the loss evolution let us assume now that the eigenvalues $\lambda_k$ and the second moments $C_{kk,0}$ of initial ($t=0$) approximation error are subject to large-$k$ power laws
\begin{equation}\label{eq:power_laws}
    \lambda_k = \Lambda k^{-\nu}(1+o(1)),\quad \sum\limits_{l\geq k}\lambda_l C_{ll,0}=K k^{-\varkappa}(1+o(1))
\end{equation}
with some exponents $\nu, \varkappa>0$  (also denote $\zeta=\varkappa/\nu$) and coefficients $\Lambda,K$. Such (or similar) power laws are empirically observed in many high-dimensional problems, can be derived theoretically, and are often assumed for theoretical optimization guarantees (see many references in App. \ref{sec:literature_rev}).  If $\nu \le\tfrac{1}{2},$ then $\sum_k\lambda_k^2=\infty$ and we are in the ``immediate divergence'' regime mentioned earlier, and if $\frac{1}{2}<\nu\le 1,$ then $\sum_k\lambda_k=\infty$ and we are in ``eventual divergence'' regime. Away from these two divergent regimes, we precisely  characterize the late time loss asymptotics:
\begin{restatable}{ther}{lossasym}\label{theor:main}
Assume spectral conditions \eqref{eq:power_laws} with $\nu>1$ and that parameters $\alpha,\beta,\gamma$ are as in Proposition \ref{prop:stab} and such that convergence condition $\widetilde{U}(1)<1$ is satisfied. Then 
\begin{equation}\label{eq:tlttb}
    \sum_{t=1}^TtL(t)=(1+o(1))\begin{cases}
    \tfrac{K\Gamma(\zeta+1)}{2(1-\widetilde U(1))(2-\zeta)}(\tfrac{2\alpha\Lambda}{1-\beta})^{-\zeta} t^{2-\zeta}, & 2-\zeta > 1/\nu,\\
    \tfrac{\gamma \widetilde V(1)\Gamma(2-1/\nu)}{8(1-\widetilde U(1))^2}(\tfrac{2\alpha\Lambda}{1-\beta})^{1/\nu} t^{1/\nu},& 2-\zeta < 1/\nu.
    \end{cases}
\end{equation}
\end{restatable}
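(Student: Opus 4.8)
The plan is to read off the large-$T$ asymptotics of $\sum_{t=1}^T tL(t)$ from the singular behaviour of the loss generating function $\widetilde{L}(z)$ near its dominant singularity $z=1$, using the representation \eqref{eq:lvu}, $\widetilde{L}(z)=\tfrac{\widetilde V(z)/2}{1-z\widetilde U(z)}$, together with a Tauberian theorem of Karamata type. The reason the weighted partial sum $\sum_{t\le T}tL(t)$ (rather than $L(T)$ itself) appears is precisely to make this transfer rigorous: the coefficients $tL(t)$ are nonnegative, so if the generating function $\sum_t tL(t)z^t = z\widetilde L'(z)$ is asymptotic to $C(1-z)^{-\rho}$ as $z\to 1^-$, Karamata's theorem yields $\sum_{t\le T}tL(t)\sim \tfrac{C}{\Gamma(\rho+1)}T^{\rho}$ without needing regularity of $L(t)$ itself. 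Thus the whole theorem reduces to locating the leading singular term of $\widetilde L(z)$ at $z=1$ and its exact coefficient.

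The core analytic step is to extract the singular parts of $\widetilde U(z)$ and $\widetilde V(z)$ from the spectral sums \eqref{eq:wuexp2}, \eqref{eq:wvexp2}. For each $k$, the factor $1/S(\alpha,\beta,\tau\gamma,\lambda_k,z)$ is controlled by the slowest pole of the resolvent $(1-zA_{\lambda_k})^{-1}$, whose relevant eigenvalue is $\mu(\lambda)=1-\tfrac{2\alpha\lambda}{1-\beta}+O(\lambda^2)=1-2\alpha_{\mathrm{eff}}\lambda+O(\lambda^2)$, giving a pole at $z_\ast(\lambda)=1+2\alpha_{\mathrm{eff}}\lambda+O(\lambda^2)$. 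Writing $\epsilon=1-z$, each summand therefore behaves, near $z=1$ and for small $\lambda_k$, like a constant residue times $\tfrac{1}{\epsilon+2\alpha_{\mathrm{eff}}\lambda_k}$. Inserting the power laws \eqref{eq:power_laws} (so that $\lambda_k\sim\Lambda k^{-\nu}$ and, by differencing the partial sums, $\lambda_kC_{kk,0}\sim K\varkappa k^{-\varkappa-1}$) and replacing the sum over $k$ by an integral, I would use the scaling substitution $t=2\alpha_{\mathrm{eff}}\Lambda k^{-\nu}/\epsilon$ to reduce each contribution to a Beta integral $\int_0^\infty \tfrac{t^{s-1}}{1+t}\,dt=\Gamma(s)\Gamma(1-s)$. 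This yields $\widetilde V(z)$ with a singular part $\propto (1-z)^{\zeta-1}$ (from $a=\varkappa+1$, so $s=\zeta$) and $\widetilde U(1)-\widetilde U(z)\propto (1-z)^{1-1/\nu}$ (from $a=\nu$, so $s=1-1/\nu$), the convergence of $\widetilde U(1),\widetilde V(1)$ being guaranteed by $\nu>1$ and $\widetilde U(1)<1$.

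Combining these in \eqref{eq:lvu}, the denominator stays bounded away from zero, $1-z\widetilde U(z)=(1-\widetilde U(1))+C_U(1-z)^{1-1/\nu}+\cdots$, so the competition is between the numerator singularity $(1-z)^{\zeta-1}$ and the denominator singularity $(1-z)^{1-1/\nu}$. Comparing exponents, $\zeta-1<1-1/\nu \iff 2-\zeta>1/\nu$, which is exactly the dichotomy in \eqref{eq:tlttb}: in the signal-dominated case the numerator wins and $\widetilde L(z)\sim \tfrac{1}{2(1-\widetilde U(1))}(1-z)^{\zeta-1}$ (up to the constant part), while in the noise-dominated case the denominator wins and $\widetilde L(z)\sim \text{const}-\tfrac{\widetilde V(1)C_U}{2(1-\widetilde U(1))^2}(1-z)^{1-1/\nu}$. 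Differentiating, $z\widetilde L'(z)\sim (1-z)^{\zeta-2}$ respectively $(1-z)^{-1/\nu}$, and Karamata with $\rho=2-\zeta$ respectively $\rho=1/\nu$ gives the powers $T^{2-\zeta}$ and $T^{1/\nu}$. The explicit constants $\tfrac{K\Gamma(\zeta+1)}{2(1-\widetilde U(1))(2-\zeta)}(\tfrac{2\alpha\Lambda}{1-\beta})^{-\zeta}$ and $\tfrac{\gamma\widetilde V(1)\Gamma(2-1/\nu)}{8(1-\widetilde U(1))^2}(\tfrac{2\alpha\Lambda}{1-\beta})^{1/\nu}$ then emerge by assembling the residue prefactors, the Jacobian factor $1/\nu$, the Beta-integral values, and the Tauberian $1/\Gamma(\rho+1)$, simplified through the reflection formula $\Gamma(s)\Gamma(1-s)=\pi/\sin(\pi s)$ and the identities $\zeta\Gamma(\zeta)=\Gamma(\zeta+1)$, $(1-1/\nu)\Gamma(1-1/\nu)=\Gamma(2-1/\nu)$.

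I expect the main obstacle to be the rigorous sum-to-integral passage with the \emph{exact} leading constant rather than just the exponent. Two points are delicate. First, the $(1+o(1))$ slack in the power laws \eqref{eq:power_laws} must be shown not to contaminate the leading coefficient: I would split each sum into a bulk region $k\lesssim \epsilon^{-1/\nu}$ and a tail, control the bulk by dominated convergence after the scaling substitution, and absorb the remainder into the $o(1)$. Second, for $\zeta>1$ (still within the signal phase when $\zeta<2-1/\nu$) the naive scaling integral for $\widetilde V$ diverges at large $t$, because $\widetilde V(1)$ is now finite; one must first subtract the convergent constant part and analyze $\widetilde V(z)-\widetilde V(1)$, whose scaling integral is the analytic continuation of $\Gamma(\zeta)\Gamma(1-\zeta)$ (with $\Gamma(1-\zeta)<0$), which is exactly what restores the correct positive sign of the coefficient. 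Verifying that this subtraction, the residue computation from the factorization of $S$, and the Tauberian hypotheses all hold uniformly is the technical heart of the argument; the phase-transition structure itself follows cleanly once the two singular exponents $\zeta-1$ and $1-1/\nu$ are established.
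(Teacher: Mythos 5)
Your proposal is correct and follows essentially the same route as the paper: both pass to the exact identity $\sum_t tL(t)z^t = z\widetilde L'(z)$ with decomposition \eqref{eq:dzlz}, extract the competing singularities $\varepsilon^{\zeta-2}$ (signal) and $\varepsilon^{-1/\nu}$ (noise) from the spectral sums via the same scaling/Beta-integral computation (the paper's Lemma \ref{lemma:asymsumcalc}), and conclude with the Tauberian theorem, arriving at the identical dichotomy $\zeta \lessgtr 2-1/\nu$ and the same constants. The only material difference is order of operations: the paper differentiates term-by-term first and applies its lemma with $n=2$ directly to the sums defining $\widetilde U'(1-\varepsilon)$ and $\widetilde V'(1-\varepsilon)$, which sidesteps both your $\zeta>1$ subtraction/analytic-continuation step and the otherwise-unjustified differentiation of an asymptotic expansion of $\widetilde U,\widetilde V$ --- a simplification you should adopt, since your bulk/tail scaling argument applies verbatim to the derivative sums.
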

The idea of the proof is to observe that the generating function for the sequence $tL(t)$ is $\sum_{t=1}^\infty tL(t)z^{t}=z\tfrac{d}{dz}\widetilde{L}(z)$, where
\begin{equation}\label{eq:dzlz}
    \frac{d}{dz}\widetilde L(z) = \frac{\frac{d}{dz} \widetilde{V}(z)}{2(1-z\widetilde U(z))}+\frac{\widetilde V(z)\frac{d}{dz}(z\widetilde U(z))}{2(1-z\widetilde U(z))^2}.
\end{equation} 
We argue that $r_L=1$ and apply the Tauberian theorem, relating the asymptotics of the cumulative sums $\sum_{t=1}^TtL(t)$ to the singularity of the generating function at $z=r_L=1$. One can then show using \eqref{eq:wuexp2}, \eqref{eq:wvexp1} that $\widetilde{U}'(z)$ diverges as $\widetilde{U}'(1-\varepsilon)\propto\varepsilon^{-\frac{1}{\nu}}$ and $\widetilde{V}'(z)$ diverges as $\widetilde{V}'(1-\varepsilon)\propto\varepsilon^{\zeta-2}$ for $\zeta<2$. Depending on which of the two terms in Eq. \eqref{eq:dzlz} has a stronger singularity, we observe two qualitatively different phases (previously known in SGD without momenta \citep{varre2021last}): the ``signal-dominated'' phase at $\zeta-2 < -1/\nu$ and the ``noise-dominated'' phase at $\zeta-2 > -1/\nu$. See Figure \ref{fig:phase_diag} (left) for the resulting full phase diagram.

In the sequel we assume that not only cumulative sums $\sum_{1}^TtL(t)$, but also individual terms $tL(t)$ obey power-law asymptotics. Then $L(t)\approx L_{\mathrm{approx}}(t)=Ct^{-\xi}$, where, by Eq. \eqref{eq:tlttb},  
\begin{subnumcases}{\label{eq:ltapprox} L_{\mathrm{approx}}(t)=}
C_{\mathrm{signal}}t^{-\zeta}=\tfrac{K\Gamma(\zeta+1)}{2(1-\widetilde U(1))}(\tfrac{2\alpha\Lambda}{1-\beta})^{-\zeta} t^{-\zeta}, & $\zeta < 2- 1/\nu$,\label{eq:ltsig}\\
C_{\mathrm{noise}}t^{1/\nu-2}=\tfrac{\gamma \widetilde V(1)\Gamma(2-1/\nu)}{8\nu(1-\widetilde U(1))^2}(\tfrac{2\alpha\Lambda}{1-\beta})^{1/\nu} t^{1/\nu-2},& $\zeta > 2-1/\nu$.\label{eq:ltnoise}
\end{subnumcases}

Note that the asymptotics \eqref{eq:ltapprox} depend on SGD hyperparameters $(\alpha,\beta,\gamma)$ only through the constants $C_\mathrm{signal}(\alpha,\beta,\gamma)$ and $C_\mathrm{noise}(\alpha,\beta,\gamma)$, which makes them essential for describing finer details of the loss trajectories in both phases. In the following sections we give a few examples of such analysis.
\begin{figure}[t]
\vspace{-1.0em}
\centering
\includegraphics[width=1.1\textwidth]{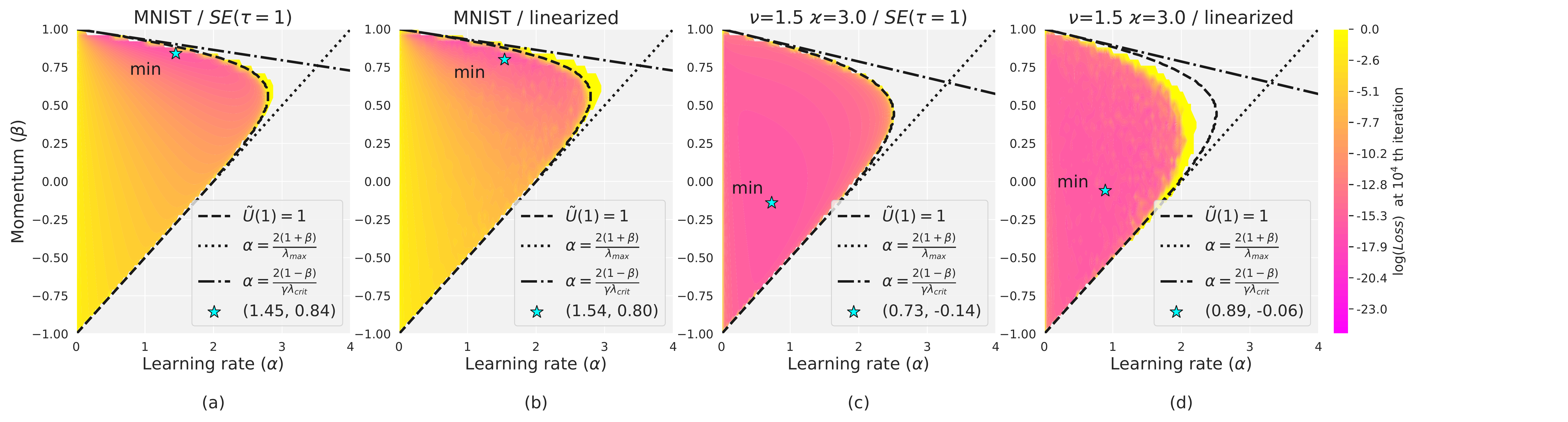}
\vspace{-2em}
\caption{$L(t=10^4)$ for different learning rates $\alpha$ and momenta $\beta$, in the SE$(\tau=1)$ and linearized regimes, for MNIST and synthetic data given by Eq. \eqref{eq:power_laws} with $\nu=1.5, \varkappa=3$, for $b=10$.  Black lines show convergence boundaries from Proposition \eqref{prop:stab} and Eqs. \eqref{eq:U1} and \eqref{eq:aeffgamma}. MNIST experiment is in the signal-dominated phase and synthetic in the noise-dominated. The stars show $\argmin_{\alpha,\beta} L$. }\label{fig:losses_2d}
\vspace{-0.5em}
\end{figure}

\vspace{-0.3em}
\hspace{-0.2em}\textbf{{\fontsize{11}{11} \selectfont Transition between phases.}} Asymptotic \eqref{eq:ltapprox} allows us to go further in understanding the ``noise-dominated'' phase $\zeta > 2-1/\nu$. Using representation \eqref{eq:dzlz} and linearity of Laplace transform, we write loss asymptotic as $L(t)\approx C_{\mathrm{signal}}t^{-\zeta}+C_{\mathrm{noise}}t^{-2+\frac{1}{\nu}}$ with two terms given by \eqref{eq:ltsig},\eqref{eq:ltnoise}. As $C_{\mathrm{noise}}$ vanishes in the noiseless limit $\gamma\rightarrow 0$, we expect that for sufficiently small $\gamma$ the signal term here dominates the noise term up to time scale $t_{\mathrm{trans}}$ given by
\begin{equation}
    t_{\mathrm{trans}} = \Big(\frac{C_\mathrm{signal}}{C_\mathrm{noise}}\Big)^{1/(\zeta-2+1/\nu)} =\Big((\tfrac{1-\beta}{2\alpha\Lambda})^{1/\nu+\zeta}\tfrac{4\nu K\Gamma(\zeta+1)(1-\widetilde U(1))}{\gamma \Gamma(2-1/\nu)\widetilde V(1)}\Big)^{1/(\zeta-2+1/\nu)}.
\end{equation} 
This conclusion is fully confirmed by our experiments with simulated data, see Fig.~\ref{fig:phase_diag}(center). Note that the time scale $t_{\mathrm{trans}}$ can vary widely depending on $\alpha,\beta,\gamma$; in particular, $t_{\mathrm{trans}}$ might not be reached at all in realistic optimization time if the noise $\gamma$ and effective learning rate $\frac{\alpha}{1-\beta}$ are small. 

A similar reasoning can be used to estimate the time $t_{\text{blowup}}$ of transition from early convergence to subsequent divergence (see Fig. \ref{fig:phase_diag} (right)). In Sec. \ref{sec:divergence} we consider the scenario with $\beta=0, \gamma=\tau=1$ and power laws \eqref{eq:power_laws} in which $\tfrac{1}{2}<\nu<1$ (eventual divergence) and $\zeta<1$. We show that at small  $\alpha$ we have $t_{\text{blowup}}\approx a_*t_{\text{div}}\approx a_*\big(\frac{1}{4\nu}\Gamma(2-1/\nu)\Gamma(1/\nu-1)\big)^{\nu/(\nu-1)}(2\alpha\Lambda)^{1/(\nu-1)}$, where $a_*=a_*(\nu,\zeta)$ is the solution of the equation $\frac{1/\nu-1}{\Gamma(1-\zeta)} a_*^{-\zeta}=e^{a_*}$.

\begin{figure}
\begin{subfigure}{0.31\textwidth}
    \centering
    \begin{tikzpicture}[scale=0.58]
    \tikzstyle{every node}=[font=\large]
\begin{axis}
[
  xmin = 0,
  xmax = 3.2,
  ymin = 0,
  ymax = 3.2, 
  axis lines = left,
  xtick = {1,2},
  ytick = {1,2},
  x label style={at={(axis description cs:0.5, -0.05)},anchor=north},
  xlabel = {$\zeta$},
  ylabel = {$1/\nu$},
  ]

  % COORDINATES
  \coordinate (A0) at (0,3);
  \coordinate (A1) at (3,3);
  \coordinate (B0) at (0,2);
  \coordinate (B1) at (3,2);
  \coordinate (C0) at (0,1);
  \coordinate (C1) at (1,1);
  \coordinate (C2) at (3,1);
  \coordinate (D0) at (0,0);
  \coordinate (D1) at (1,0);
  \coordinate (D2) at (2,0);
  \coordinate (D3) at (3,0);
  
  \draw[dotted] (C1) -- (D1);

  % REGIONS
  \fill[magenta] (A0) [snake=coil,segment aspect=0] -- (A1) -- (B1) [snake=none] -- (B0) -- cycle; 
  \fill[pink] (B0) -- (B1) [snake=coil,segment aspect=0] -- (C2) [snake=none] -- (C0) -- cycle;
  \fill[green] (C0) -- (C1) -- (D2) -- (D0) -- cycle;
  \fill[yellow] (C1) -- (C2) [snake=coil,segment aspect=0] -- (D3) [snake=none] -- (D2) -- cycle;

  \node[text width=4.5cm, align=center] at (1.5, 1.5) {Eventual divergence \\ $ \sum_k\lambda_k^2<\infty,\;\sum_k\lambda_k=\infty$};
  \node[text width=4cm, align=center] at (1.5, 2.5) {Immediate divergence \\ $\sum_k\lambda_k^2=\infty$};
  \node[text width=3cm, align=center] at (0.75, 0.3) {Signal dominated \\ $L(t)\propto t^{-\zeta}$};
  \node[text width=3cm, align=center] at (2.2, 0.6) {Noise dominated \\ $L(t)\propto t^{1/\nu-2}$};
  
  \node[star, star point ratio=2.25, minimum size=5pt, inner sep=0pt, draw=black, solid, fill=blue, label={[scale=0.7, xshift=14.5mm, yshift=-5mm] MNIST+MLP}] at (0.25, 1/1.30) {};

  \node[star, star point ratio=2.25, minimum size=5pt, inner sep=0pt, draw=black, solid, fill=blue, label={[scale=0.7, xshift=18mm, yshift=-3.5mm] CIFAR10+ResNet}] at (0.23, 1/1.12) {};

\end{axis}
  
\end{tikzpicture}
\end{subfigure}
\hspace{3mm}
\begin{subfigure}{0.31\textwidth}
    \includegraphics[scale=0.37, trim={5mm 7mm 130mm 4mm}, clip]{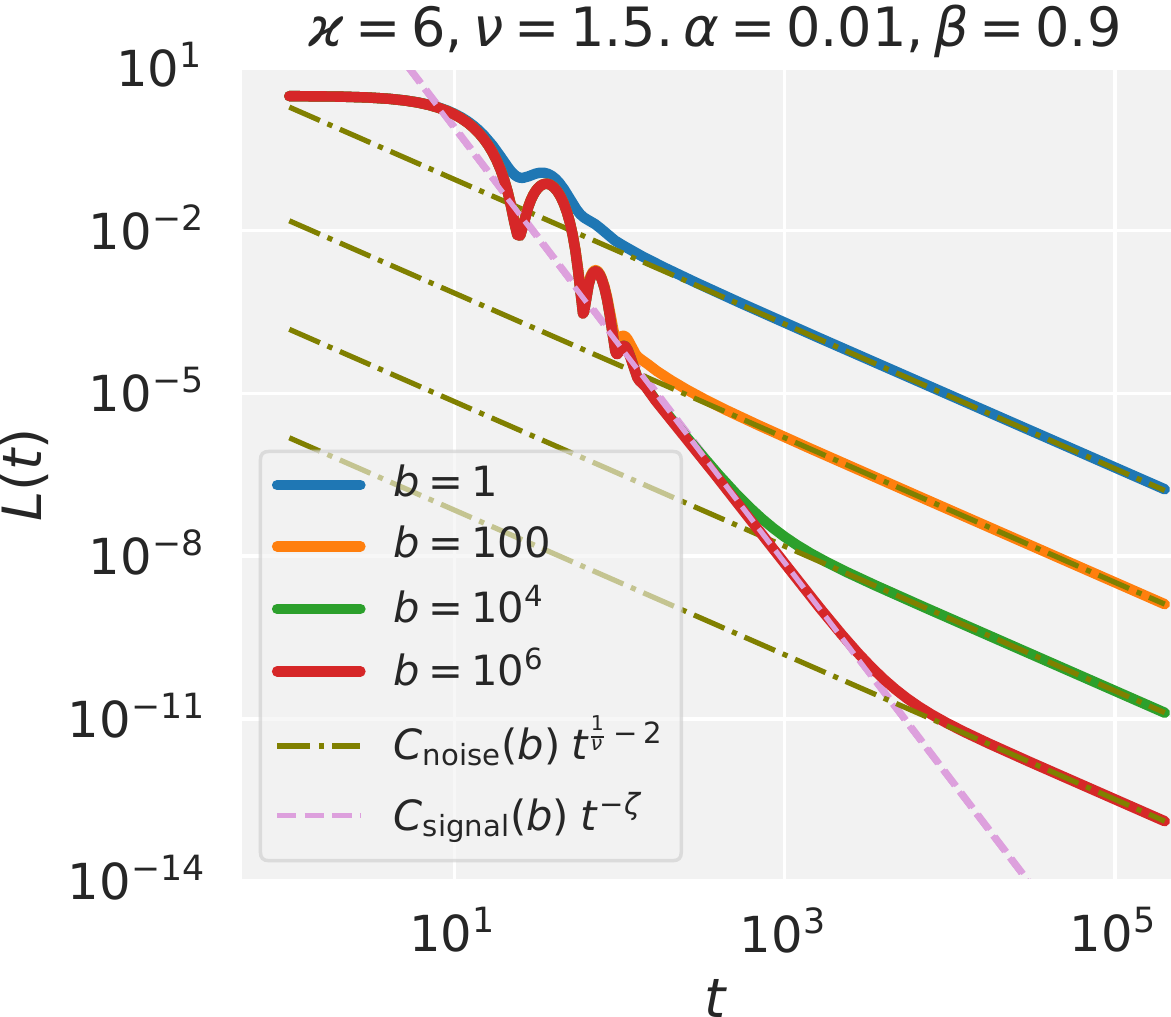}
\end{subfigure}
\hspace{3mm}
\begin{subfigure}{0.29\textwidth}
    \includegraphics[scale=0.38, trim={1mm 7mm 5mm 4mm}, clip]{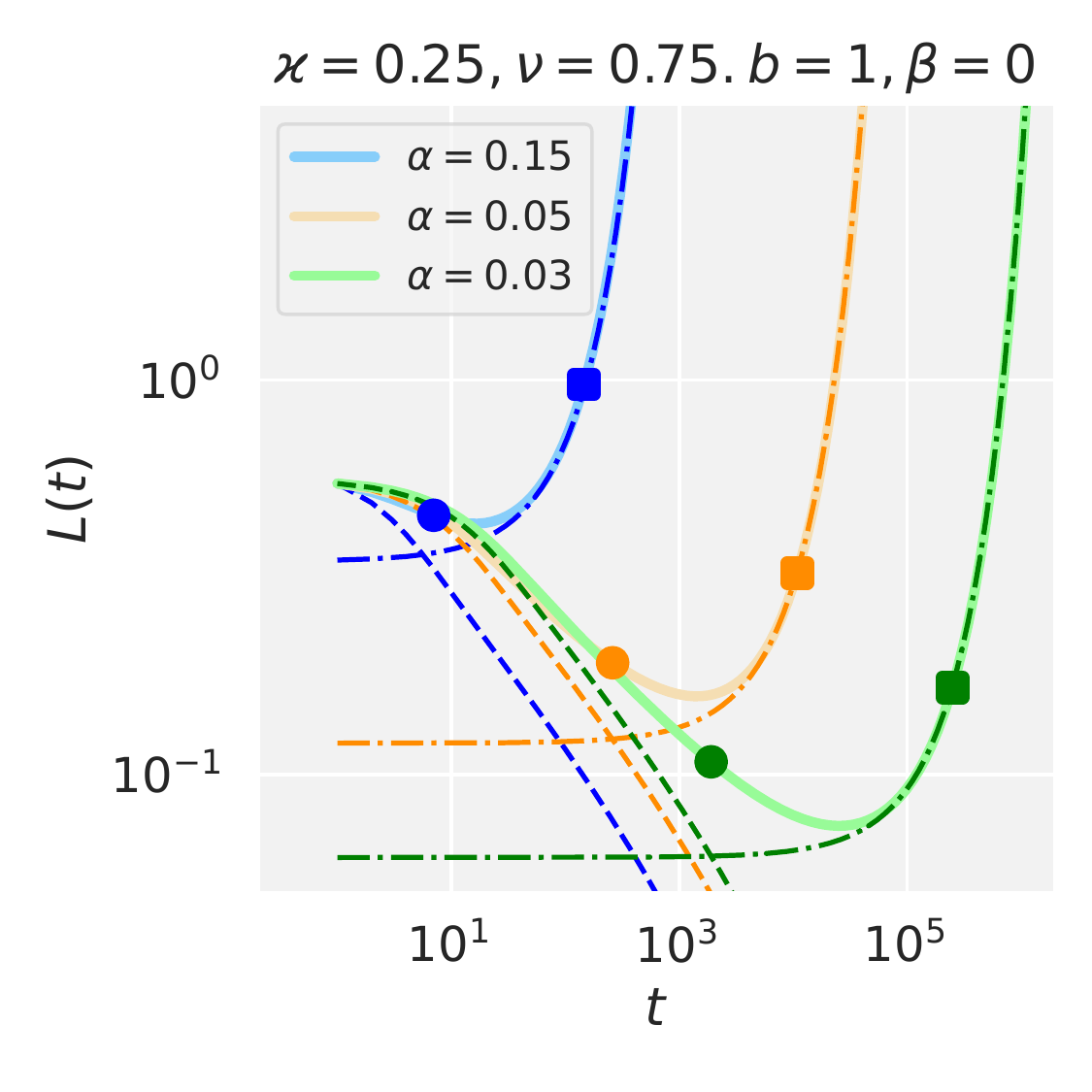}
\end{subfigure}
\vspace{-0.5em}
\caption{\textbf{Left}: Phase diagram for $L(t)$ under power-law spectral conditions \eqref{eq:power_laws}. The MNIST and CIFAR10 stars are placed according to power-fits in Figs. \ref{fig:comparison_different_regimes} and \ref{fig:linearized_and_se_CIFAR10} (top).  
\textbf{Center}: Transition from signal-dominated to noise-dominated regime during training. Asymptotic power-law lines are calculated from \eqref{eq:ltsig} and \eqref{eq:ltnoise}. \textbf{Right}: Divergent $L(t)$ in the phase $\tfrac{1}{2}<\nu<1$: experimental trajectories (solid) with marked points at positions $t=t_\text{div}$ (squares) and $t=t_\text{blowup}$ (circles), theoretical late-time asymptotics \eqref{eq:divergence_asym} (dash-dotted) and noiseless early time trajectories (dashed).}\label{fig:phase_diag}
\vspace{-0.7em}
\end{figure}

\hspace{-0.3em}\textbf{{\fontsize{11}{11} \selectfont Hyperparameter optimization.}} Observe that the optimal learning ratse $\alpha$ and momenta $\beta$ in Fig.~\ref{fig:losses_2d} differ significantly between signal-dominated ((a), (b)) and noise-dominated ((c), (d)) phases. To explain this observation, we analyze late time loss asymptotic \eqref{eq:ltapprox} at a fixed batch size $b$, and show that two phases indeed have distinct patterns of optimal $\alpha,\beta$. 

\emph{``Signal-dominated'' phase}. First, note that near the divergence boundary $\widetilde{U}(1)=1$ the loss is high due to denominator $1-\widetilde{U}(1)$ in \eqref{eq:ltsig}. This suggests that at optimal $\alpha,\beta$ the difference $\widetilde U(1)-1$ is of order 1, which allows us to neglect dependence of loss on $\widetilde{U}(1)$ as long as convergence condition $\widetilde{U}(1)<1$ holds. Then, loss becomes $L(t) \sim (\alpha_\mathrm{eff}t)^{-\zeta}$ and it is beneficial to increase $\alpha_\mathrm{eff}$. However, $\alpha_\mathrm{eff}$ has an upper bound \eqref{eq:aeffgamma_tightness} which becomes tight for sufficiently small $1-\beta$ (see \eqref{eq:aeffgamma_tightness}). Thus, in agreement with Fig. \ref{fig:losses_2d} (left), the optimal $\alpha,\beta$ correspond to small $1-\beta$ and $\alpha_\mathrm{eff}\approx \tfrac{2}{\gamma \lambda_\mathrm{crit}}$.    

\emph{``Noise-dominated'' phase}. Similarly to signal-dominated phase, we see from \eqref{eq:ltnoise} that $\widetilde{U}(1)$ can be neglected away from convergence boundary. Then, the loss is $L(t)\sim \widetilde V(1)\alpha_{\mathrm{eff}}^{1/\nu}t^{1/\nu-2}$. In contrast to signal-dominated phase, $\alpha_{\mathrm{eff}}=\alpha/(1-\beta)$ appears with the positive power suggesting that high momenta $\beta\approx1$ are non-optimal. At the same time, it can be shown that $\widetilde V(1)\sim \alpha^{-1}$ at small $\alpha$, so it is neither favorable to use small $\alpha$. Thus, in accordance with Fig. \ref{fig:losses_2d} (right), the optimal $\alpha,\beta$ should be located in a general position inside the convergence region $\widetilde{U}(1)<1$. 

Additionally, for the problem of optimizing batch-size $b$ at a fixed computational budget $bt$, in Sec. \ref{sec:budget_opt} we derive the previously observed linear scalability of SGD w.r.t. batch size \citep{ma2018power}.

\begin{figure}[t]
\vspace{-1.0em}
\centering
\begin{minipage}{.65\textwidth}
\caption{
Theoretical and experimental values of optimal momenta $\beta$ in the range $(\nu, \varkappa)\in[1,3]\times[0,6]$ of power-law models \eqref{eq:power_laws}. The blue/red background shows the values $\Xi$ given by \eqref{eq:capxi} and characterizing the theoretical sign of $\partial_\beta L_{\mathrm {approx}}(t)|_{\beta=0, \alpha=\alpha_{\mathrm{opt}}}$ at $\tau=\gamma=1$  in the noisy regime. The four diamonds represent the pairs $(\nu,\varkappa)$ for which we experimentally find optimal $\beta_{\mathrm{opt}}$. The diamond color shows the value of $\beta_{\mathrm{opt}}$. In agreement with Prop.~\ref{prop:negmomshort}, experimental $\beta_{\mathrm{opt}}>0$ ($<0$) when $\Xi<0$ ($>0$). The dotted yellow line separates the signal-dominated (below) and the noise-dominated (above) regimes.
}\label{fig:alpha_and_beta_opt_main}
\end{minipage}\hfill
\begin{minipage}{.34\textwidth}
\setlength{\fboxsep}{0pt}
{\includegraphics[scale=0.22, trim=5mm 0mm 15mm 18mm, clip]{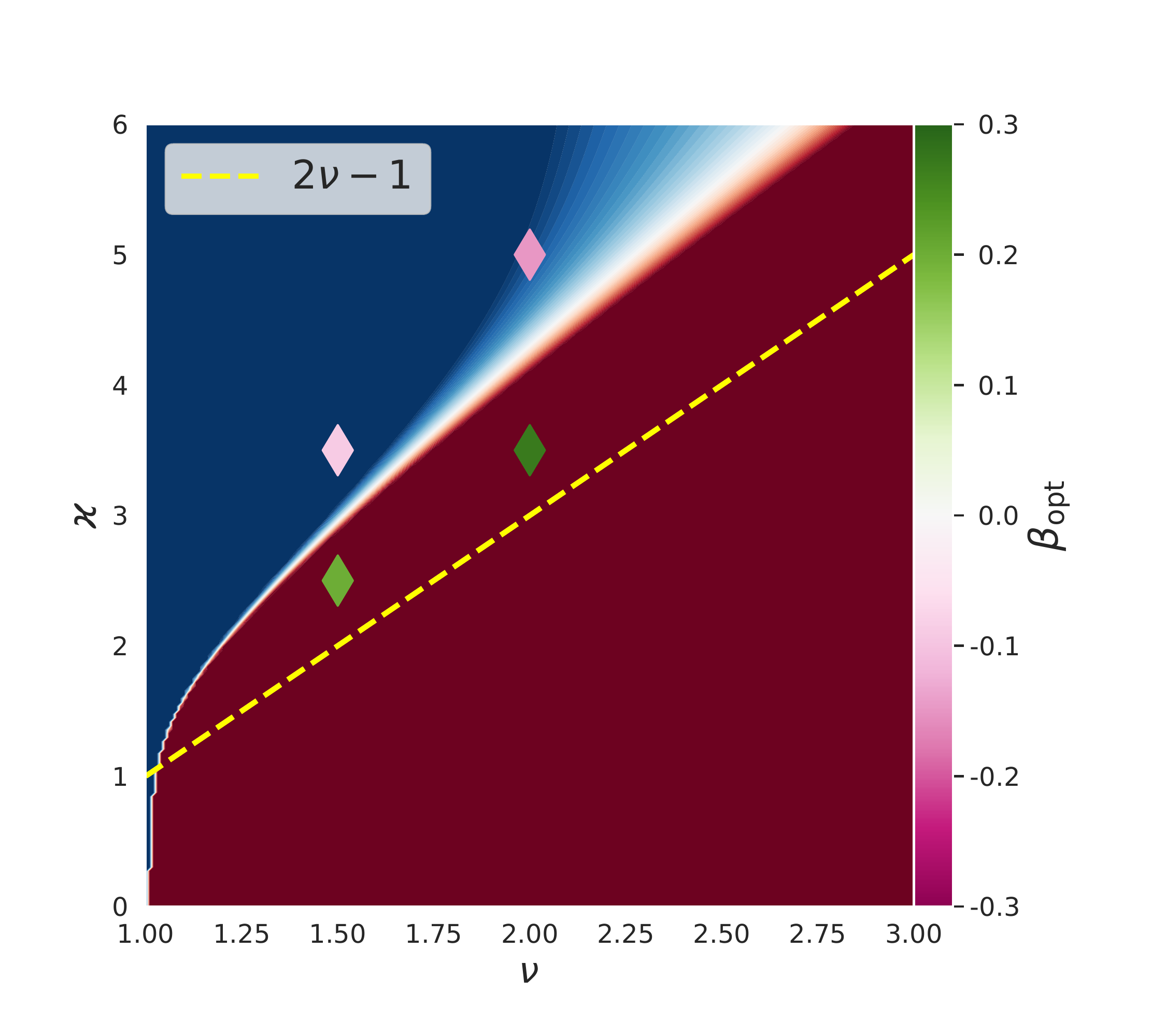}}
\end{minipage}
\vspace{-20pt}
\end{figure}

\hspace{-0.2em}\textbf{{\fontsize{11}{11} \selectfont Positive vs negative momenta.}} It is known that in some noisy settings SGD does not benefit from using momentum  \citep{Polyak87, kidambi2018insufficiency, paquette2021dynamics}. Empirically, we find that in the signal-dominated regime momentum helps significantly, while in the noise-dominated regime its effect is weaker, and it can even be preferable to set $\beta<0$ (see Fig. \ref{fig:losses_2d}). Our asymptotic formula \eqref{eq:ltapprox} with explicit constants allows us to confirm this theoretically and, moreover, in the case $\tau=\gamma=1$ give a simple explicit condition under which $\beta<0$ are preferable. Specifically, we show that if we fix $\alpha$ at the optimal value $\alpha_{\mathrm{opt}}$ at $\beta=0$, then we can further decrease the loss by increasing or decreasing $\beta$, depending on the regime and a spectral characteristic $\Xi$ defined below: 

\begin{prop}[short version]\label{prop:negmomshort}
Assume asymptotic power laws \eqref{eq:power_laws} with some exponents $\nu, \varkappa>0, \zeta=\varkappa/\nu$ and coefficients $\Lambda,K.$ Let $\alpha_{\mathrm{opt}}=\argmin_\alpha L_{\mathrm {approx}}(t)|_{\beta=0}.$ Then: \textbf{I. (signal-dominated case)} Let $\zeta<2-1/\nu$ and $0<\tau\le 1,0\le\gamma\le 1$, then $\partial_\beta L_{\mathrm {approx}}(t)|_{\beta=0, \alpha=\alpha_{\mathrm{opt}}}<0.$ 
\textbf{II. (noise-dominated case)} Let $\zeta>2-1/\nu$ and $\tau=\gamma=1$, then the sign of $\partial_\beta L_{\mathrm {approx}}(t)|_{\beta=0, \alpha=\alpha_{\mathrm{opt}}}$ equals the sign of the expression
\vspace{-0.2em}
\begin{equation}\label{eq:capxi}
    \Xi=\nu\Tr[\mathbf H]\Tr[\mathbf H\mathbf C_0]-(\nu-1)\Tr[\mathbf H^2]\Tr[\mathbf C_0].
\end{equation}
\end{prop}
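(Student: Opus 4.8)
The plan is to reduce both parts to a single partial derivative taken at fixed effective learning rate. I introduce the coordinates $(\alpha_{\mathrm{eff}},\beta)$ with $\alpha_{\mathrm{eff}}=\tfrac{\alpha}{1-\beta}$ and set $\widetilde L(\alpha_{\mathrm{eff}},\beta)\equiv L_{\mathrm{approx}}(\alpha_{\mathrm{eff}}(1-\beta),\beta)$. Differentiating $\alpha=\alpha_{\mathrm{eff}}(1-\beta)$ gives $\partial_\beta\widetilde L|_{\alpha_{\mathrm{eff}}}=-\alpha_{\mathrm{eff}}\,\partial_\alpha L_{\mathrm{approx}}+\partial_\beta L_{\mathrm{approx}}$. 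Evaluating at $\beta=0$ (where $\alpha=\alpha_{\mathrm{eff}}$) and $\alpha=\alpha_{\mathrm{opt}}$, the first-order optimality $\partial_\alpha L_{\mathrm{approx}}|_{\beta=0,\alpha=\alpha_{\mathrm{opt}}}=0$ removes the first term, so $\partial_\beta L_{\mathrm{approx}}|_{\beta=0,\alpha=\alpha_{\mathrm{opt}}}=\partial_\beta\widetilde L|_{\alpha_{\mathrm{eff}}=\alpha_{\mathrm{opt}},\beta=0}$. This is the conceptual crux: the quantity of interest equals the $\beta$-derivative at fixed $\alpha_{\mathrm{eff}}$, which is convenient because the explicit prefactor $\big(\tfrac{2\alpha\Lambda}{1-\beta}\big)^{-\zeta}$ (resp. $\big(\tfrac{2\alpha\Lambda}{1-\beta}\big)^{1/\nu}$) in \eqref{eq:ltsig}, \eqref{eq:ltnoise} equals $(2\Lambda\alpha_{\mathrm{eff}})^{-\zeta}$ (resp. $(2\Lambda\alpha_{\mathrm{eff}})^{1/\nu}$) and is $\beta$-independent, leaving all residual $\beta$-dependence inside $\widetilde U(1)$ and $\widetilde V(1)$.

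For the \emph{signal-dominated} case I use \eqref{eq:ltsig}: at fixed $\alpha_{\mathrm{eff}}$ only $\tfrac{1}{1-\widetilde U(1)}$ depends on $\beta$, so the sign of $\partial_\beta\widetilde L$ equals that of $\partial_\beta\widetilde U(1)$. Writing \eqref{eq:U1} as $\widetilde U(1)=\sum_k \lambda_k/D_k$ with $D_k=\lambda_k\tau-\tfrac{\lambda_k}{\gamma}\tfrac{1-\beta}{1+\beta}+\tfrac{2}{\gamma\alpha_{\mathrm{eff}}}$, I first check that under the assumptions of Prop.~\ref{prop:stab} together with $0<\tau\le1,\,0\le\gamma\le1$ one has $D_k>0$ for all $k$ (using $\alpha<2/\lambda_{\max}$ and $\tfrac1\gamma-\tau\ge0$). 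Since $\tfrac{d}{d\beta}\tfrac{1-\beta}{1+\beta}<0$, each $D_k$ increases in $\beta$ at fixed $\alpha_{\mathrm{eff}}$, hence $\partial_\beta\widetilde U(1)<0$ for $\gamma>0$. With $1-\widetilde U(1)>0$ in the convergence region this yields $\partial_\beta L_{\mathrm{approx}}|_{\beta=0,\alpha=\alpha_{\mathrm{opt}}}<0$ (at the degenerate noiseless boundary $\gamma=0$ the derivative merely vanishes).

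For the \emph{noise-dominated} case I set $\tau=\gamma=1$ and use \eqref{eq:ltnoise}, for which $\partial_\beta\log\widetilde L|_{\alpha_{\mathrm{eff}}}=\tfrac{\partial_\beta\widetilde V(1)}{\widetilde V(1)}+\tfrac{2\partial_\beta\widetilde U(1)}{1-\widetilde U(1)}$. The main work is to evaluate the four ingredients in closed form at $\beta=0$. From \eqref{eq:U1} at $\tau=\gamma=1$ I obtain $\widetilde U(1)|_0=\tfrac{\alpha_{\mathrm{eff}}}{2}\Tr\mathbf H$ and $\partial_\beta\widetilde U(1)|_0=-\tfrac{\alpha_{\mathrm{eff}}^2}{2}\Tr\mathbf H^2$. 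For $\widetilde V(1)$ the key algebraic step is the factorization of the denominator in \eqref{eq:wvexp2} at $z=1$, namely $S(\alpha,\beta,1,\lambda,1)=2\alpha_{\mathrm{eff}}\lambda(1-\beta)^2\big[(\alpha_{\mathrm{eff}}\lambda+1)\beta+1\big]$ after substituting $\alpha=\alpha_{\mathrm{eff}}(1-\beta)$; together with the numerator this gives $\widetilde V(1)|_0=\tfrac{1}{2\alpha_{\mathrm{eff}}}\Tr\mathbf C_0$ and, from a short logarithmic-derivative computation, $\partial_\beta\widetilde V(1)|_0=\tfrac12\Tr[\mathbf H\mathbf C_0]$. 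I then fix $\alpha_{\mathrm{opt}}$ from the $\alpha$-optimality of \eqref{eq:ltnoise} at $\beta=0$: minimizing $\alpha^{1/\nu-1}/(1-\tfrac{\alpha}{2}\Tr\mathbf H)^2$ yields the clean relation $\widetilde U(1)|_{\ast}=\tfrac{\alpha_{\mathrm{opt}}}{2}\Tr\mathbf H=\tfrac{\nu-1}{3\nu-1}$ (with $\nu>1$, so $0<\widetilde U(1)|_\ast<1$). Substituting all of this, the bracket collapses to $\tfrac{\Tr[\mathbf H\mathbf C_0]}{\Tr\mathbf C_0}-\tfrac{(\nu-1)\Tr\mathbf H^2}{\nu\Tr\mathbf H}$, and multiplying by the positive factor $\nu\Tr\mathbf H\,\Tr\mathbf C_0$ reproduces exactly $\Xi$ of \eqref{eq:capxi}; since $L_{\mathrm{approx}}>0$, the sign of $\partial_\beta L_{\mathrm{approx}}$ equals the sign of $\Xi$.

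The conceptual crux is the change of variables in the first paragraph, which converts a two-parameter optimization statement into a one-dimensional sign question. The main technical obstacle I anticipate is the bookkeeping for $\widetilde V(1)$: because $\alpha=\alpha_{\mathrm{eff}}(1-\beta)$ must be differentiated simultaneously with the explicit $\beta$'s in the numerator and in $S$, the total $\beta$-derivative at fixed $\alpha_{\mathrm{eff}}$ is error-prone, and it is precisely the factorization of $S(\alpha,\beta,1,\lambda,1)$ that keeps the calculation tractable. A secondary point needing care is the verification of $D_k>0$ and $0<\widetilde U(1)|_\ast<1$, which ensures all divisions and sign statements are legitimate inside the convergence region of Prop.~\ref{prop:stab}.
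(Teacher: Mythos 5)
Your proposal is correct and is essentially the paper's own argument in different clothing: your change of variables to $(\alpha_{\mathrm{eff}},\beta)$ is precisely the identity used in the paper's proof of Proposition \ref{prop:alphabetaopt}, $\partial_\beta \ln L_{\mathrm{approx}}|_{\beta=0}=\alpha\,\partial_\alpha \ln L_{\mathrm{approx}}|_{\beta=0}+(\text{fixed-}\alpha_{\mathrm{eff}}\text{ correction})$, and every computed ingredient — the monotone decrease of $\widetilde U(1)$ in $\beta$ at fixed $\alpha_{\mathrm{eff}}$ under $\tau\gamma\le 1$ in the signal case, and in the noise case $\partial_\beta\widetilde U(1)|_{\alpha_{\mathrm{eff}}}=-\tfrac{\alpha^2}{2}\Tr[\mathbf H^2]$, $\widetilde V(1)=\tfrac{1}{2\alpha}\Tr[\mathbf C_0]$, $\partial_\beta\widetilde V(1)|_{\alpha_{\mathrm{eff}}}=\tfrac12\Tr[\mathbf H\mathbf C_0]$, $\alpha_{\mathrm{opt}}=\tfrac{2(\nu-1)}{3\nu-1}(\Tr[\mathbf H])^{-1}$ with $\widetilde U(1)|_{*}=\tfrac{\nu-1}{3\nu-1}$ — coincides with the paper's, yielding the same $\Xi$. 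The only point left implicit is that $\alpha_{\mathrm{opt}}$ is an interior stationary point at $\beta=0$ (needed for your chain-rule reduction, and proved as statements I.1/II.1 of the paper's full version), which follows readily since $L_{\mathrm{approx}}\to\infty$ both as $\alpha\to0^{+}$ and as $\widetilde U(1)\to1^{-}$.
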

\vspace{-0.2em}
In Fig. \ref{fig:alpha_and_beta_opt_main} we experimentally validate the $\Xi$ criterion for several pairs $(\nu,\varkappa)$. See details in Sec. \ref{sec:neg_momenta}.

\vspace{-1.0em}
\section{Discussion}
\vspace{-1.0em}
We can summarize our approach to the theoretical analysis of mini-batch SGD as consisting of 3 key stages: 1) Perform SE approximation; 2) Use it to solve the SGD dynamics and obtain initial spectral expressions (in our case Eqs. \eqref{eq:wuexp2},\eqref{eq:wvexp2} combined with \eqref{eq:lvu}); 3) Apply these spectral expressions to study optimization stability, phase transitions, optimal parameters, etc. We have demonstrated this approach to produce various new and nontrivial theoretical results, which are also in good quantitative agreement with experiment on realistic problems such as MNIST and CIFAR10 learned by MLP/ResNet/MobileNet.  

In particular, we have shown that, in contrast to full-batch GD, SGD has an  effective learning rate limit determined by the spectral properties of the problem and batch size. Also, we have shown that in the ``signal-dominated'' phase (including, e.g., the MNIST and CIFAR10 tasks) momentum $\beta>0$ is always beneficial, but in the ``noise-dominated'' phase a momentum $\beta<0$ can be preferable.  

One natural future research direction is to investigate which properties of data and features $\bm\psi(\mathbf{x})$  are responsible for the accuracy of SE approximation, and how to systematically choose SE parameters $\tau_1,\tau_2$. Second, we were able to solve SE approximation only for the constant learning rate and momentum, and a more general solution might open up a way to a better understanding of accelerated SGD. Third, one can ask how the picture changes for spectral distributions other than power-law. 

\subsubsection*{Acknowledgments} We thank the anonymous reviewers for several useful comments that allowed us to improve the paper. Most numerical experiments were performed on the computational cluster Zhores \citep{zacharov2019zhores}. Research was supported by Russian Science Foundation, grant 21-11-00373.

\bibliography{main, extra}
\bibliographystyle{iclr2023_conference}

\newpage
\appendix

\tableofcontents

\section{Related work}\label{sec:literature_rev}
\paragraph{SGD.} Stochastic Gradient Descent (SGD) was introduced in \citet{robbins1951stochastic}, and its various versions have been extensively studied since then, see e.g. \citet{kiefer1952stochastic, rumelhart1986learning, NIPS2007_0d3180d6, pmlr-v28-shamir13, jain2019making, harvey2019tight, moulines2011non}.  The version \eqref{eq:sgd} considered in the present paper is motivated by applications to training neural networks and is widely used \citep{lecun2015deep, Goodfellow-et-al-2016, sutskever2013importance}. Its key feature is the presence of momentum, which is both theoretically known to improve convergence and widespread in practice \citep{polyak1964some, Polyak87, nesterov1983method, sutskever2013importance, shallue2018measuring}. 

\paragraph{Mini-batch SGD and spectrally expressible representation.} In the present paper we consider the \emph{mini-batch} SGD, meaning that the stochasticity in SGD is specifically associated with the random choice of different batches used to approximate the full loss function at each iteration (not to be confused with SGD modeled as gradient descent with additive noise as e.g. in the gradient Langevin dynamics; see e.g. \citet{wu2020noisy, zhu2018anisotropic} for some discussion and comparison, as well as our Section \ref{sec:additive_noise} for an illustration in our case of quadratic problems). 

We study this SGD in terms of how the second moments of the errors evolve with time. Our Proposition~\ref{lm:sgdmom} generalizes existing evolution equations \citep{varre2021last} to SGD with momentum.  A serious difficulty in the study of this evolution is the presence of higher order mixing terms coupling different spectral components (the first term on the r.h.s. in our Eq. \eqref{eq:stochastic_noise_term}). In \citet{varre2021last}, it was shown that the effects of these terms can be controlled by suitable inequalities, allowing to establish upper bounds on error rates. In contrast, our SE approximation allows us to go beyond upper bounds and, for example, obtain explicit coefficients in the asymptotics of large-$t$ loss evolution without a Gaussian assumption (see Theorem \ref{theor:main}).   

Our approach is different and is close to the approach of recent paper \citet{bordelon2021learning} where the authors considered a dynamic equation equivalent to our SE approximation with parameter $\tau=-1$ but without momentum. Apart from considering a wider family of dynamic equations, our major advancement compared to \citet{bordelon2021learning} is that we manage to "solve" these dynamic equations using generating functions, and consequently use this solution to obtain a various results concerning stability conditions (Sec. \ref{sec:stability}) and various SGD characteristics under power-law spectral distributions (Sec. \ref{sec:power-law_dist}).

Related assumptions of Gaussian features have also been used in a recent series of works \cite{Goldt_20a, Loureiro_21, Goldt_20b} to analytically predict performance of linear models trained with full- or mini-batch SGD.

\paragraph{Spectral power laws.} In Section \ref{sec:power-law_dist} we adopt very convenient \emph{power law spectral assumptions} \eqref{eq:power_laws} on the asymptotic behavior of eigenvalues and target expansion coefficients. Such  power laws are empirically observed in many high-dimensional problems 
\citep{cui2021generalization, bahri2021explaining,lee2020finite,Canatar_2021,kopitkov2020neural,Dou_2020,atanasov2021neural,bordelon2021learning,basri2020frequency,bietti2021approximation}, can be derived theoretically \citep{basri2020frequency, velikanov2021explicit}, and closely related conditions are often assumed for theoretical optimization guarantees \citep{nemirovskiy1984iterative1, brakhage1987ill, hanke1991accelerated,hanke1996asymptotics,gilyazov2013regularization, varre2021last,caponnetto2007optimal,steinwart2009optimal,berthier2020tight,zou2021benign, jin2021learning, bowman2022spectral}. An important aspect of our work is that, in contrast to most other works, we allow the exponent $\zeta$ characterizing initial second moments $C_{kk,0}$ to have values in the interval $(0,1)$. In this case,   $\sum_{k}\lambda_k C_{kk}<\infty$ while $\sum_k C_{kk,0}=\infty$. This effectively means that the target function $y^*(\mathbf x)$ has a finite euclidean norm, while the norm of the respective optimal weight  vector $\mathbf w^*$ is infinite (i.e., the loss does not attain a minimum value). This broader assumption allows us, in particular, to correctly describe the loss evolution on MNIST for which $\zeta\approx 0.25$ (see Figures \ref{fig:comparison_different_regimes} and \ref{fig:phase_diag}): without assuming $\|\mathbf w^*\|=\infty$, the SGD is proved to have error rate $O(t^{-\xi})$ with $\xi\ge 1$ \citep{varre2021last}, which clearly contradicts the experiment (see Figure \ref{fig:comparison_different_regimes}(left)).     

\paragraph{Phase diagram.} It is known that performance of SGD can be decomposed into ``bias'' and ``variance'' terms, with the dominant term determining what we call ``signal-dominated'' or ``noise-dominated'' regime \citep{moulines2011non, varre2021last, varre2022accelerated}. In particular, see \citet{varre2021last} for respective upper bounds for SGD without momentum.  The main novelty of our paper compared to these previous results is the  analytic framework allowing to obtain large-$t$ loss asymptotics with explicit constants for SGD with momentum (see Theorem \ref{theor:main}) and use them to derive various quantitative conclusions related to model stability, phase transitions, and parameter optimization (see Section \ref{sec:power-law_dist}).

\paragraph{Noisy GD with momentum.} GD with momentum (Heavy Ball) \citep{polyak1964some} was extensively studied, but mostly in a noiseless setting or for generic (non-sampling) kinds of noise; see e.g. the monograph \citet{Polyak87}. See \citet{proakis1974channel, roy1990analysis, tugay1989properties} for a discussion of stability on quadratic problems. \citet{Polyak87} analyses various methods with noisy gradients for different kinds of noise and concludes that the fast convergence advantage of Heavy Ball and Conjugate Gradients (compared to GD) are not preserved under noise unless it decreases to 0 as the optimization trajectory approaches the solution. See also \citet{kidambi2018insufficiency, paquette2021dynamics} for further comparisons of noisy GD with or without momentum. Our work refines this picture in the case of sampling noise. Our results in Section \ref{sec:power-law_dist} suggest that including positive momentum in SGD always improves convergence in the signal-dominated phase, but generally not in the noise-dominated phase (where it may even be beneficial to use a negative momentum).

\section{Details of the setting}\label{sec:setting}
\subsection{Basics}
In our problem setting we always consider regression tasks of fitting target function $y^\star(\mathbf{x})$ using linear models $f(\mathbf{x},\mathbf{w})$ of the form 
\begin{equation}\label{eq:linear_model}
    f(\mathbf{x},\mathbf{w}) = \langle \mathbf{w},\bm{\psi}(\mathbf{x})\rangle
\end{equation}
where $\langle\cdot, \cdot\rangle$ stands for a scalar product and $\bm{\psi}(\mathbf{x})$ are some (presumably non-linear) features of inputs $\mathbf{x}$. The space $\mathcal{X}$ containing inputs $\mathbf{x}$ is not important in our context. Next, the model will be always trained to minimize a quadratic loss function over training dataset $\mathcal{D}$:
\begin{equation}\label{eq:quadratic_loss}
    L(\mathbf{w})=\frac{1}{2}\mathbb{E}_{\mathbf{x}\sim\mathcal{D}}(\langle \mathbf{w},\bm{\psi}(\mathbf{x})\rangle - y^\star(\mathbf{x}))^2.
\end{equation}
Denote by $B_t$ a set of $b=|B_t|$ training samples $\mathbf{x}$ drawn independently and uniformly without replacement from $\mathcal{D}$. Then a single step of SGD with learning rate $\alpha_t$ and momentum parameter $\beta_t$ takes the form
\begin{align}\label{eq:Langevin_HB_basic}
    &\begin{pmatrix}
    \mathbf{w}_{t+1} \\
    \mathbf{v}_{t+1}
    \end{pmatrix} = 
    \begin{pmatrix}
    \mathbf{I}-\alpha_t \mathbf{H}(B_t) & \beta_t \mathbf{I} \\
    -\alpha_t \mathbf{H}(B_t) & \beta_t \mathbf{I}
    \end{pmatrix}
    \begin{pmatrix}
    \mathbf{w}_{t} \\
    \mathbf{v}_{t}
    \end{pmatrix}
    +
    \begin{pmatrix}
    \alpha_t \mathbf{h}(B_t) \\
    \alpha_t \mathbf{h}(B_t) \\
    \end{pmatrix}\\
    &\mathbf{H}(B_t)\equiv\frac{1}{b}\sum\limits_{\mathbf{x}_i\in B_t}\bm{\psi}(\mathbf{x}_i)\otimes\bm{\psi}(\mathbf{x}_i)\\
    &\mathbf{h}(B_t)\equiv \frac{1}{b}\sum\limits_{\mathbf{x}_i\in B_t}\bm{\psi}(\mathbf{x}_i) y^\star(\mathbf{x}_i)
\end{align}
Note that expression \eqref{eq:Langevin_HB_basic} of SGD step is equivalent to more compact expression \eqref{eq:Langevin_HB} when there exist optimal parameters $\mathbf{w}^\star$ such that $\langle\mathbf{w}^\star,\bm{\psi}(\mathbf{x})\rangle=y^\star(\mathbf{x})$. As we have already noted, existence of optimal parameters with $\|\mathbf{w}^\star\|<\infty$ may not be the case for models relevant to practical problems (e.g. MNIST). Thus we will consider \eqref{eq:Langevin_HB_basic} to be the basic form of SGD step in our work. However, the formal usage  of \eqref{eq:Langevin_HB} instead of \eqref{eq:Langevin_HB_basic} always leads to the same conclusions and therefore we focus on the case $\|\mathbf{w}^\star\|<\infty$ in the main paper. In subsection \ref{sec:output_space} we consider formulation of SGD in the output space, which allows to treat cases $\|\mathbf{w}^\star\|=\infty$ and $\|\mathbf{w}^\star\|<\infty$ withing the same framework.      

It is convenient to distinguish several cases of our problem setting differing by finite/infinite dimensionality $d$ of parameters $\mathbf{w}$, and finite/infinite size $|\mathcal{D}|=N$ of the dataset used for training. In our work we take into account all these cases, as each of them is used in either experimental or theoretical part of the paper.

\subsection{Eigendecompositions and other details}
\paragraph{Case 1: $d,N<\infty$.} This is the simplest case and we will often repeat the respective paragraph of section \ref{sec:setting_and_sgd}. First, recall that matrix elements  $d\times N$ Jacobian matrix $\bm{\Psi}$ are
\begin{equation}\label{eq:Psi_matrix}
    \Psi_{ij} = \psi_i(\mathbf{x}_j), \; \mathbf{x}_j\in\mathcal{D}
\end{equation}
We will assume without loss of generality that $\bm{\Psi}$ has a full rank $r=\min(d,N)$ (the general case reduces to this after a suitable projection). Then by \eqref{eq:quadratic_loss} the  Hessian of the model is
\begin{equation}
    \mathbf{H} =\frac{1}{N}\bm{\Psi}\bm{\Psi}^T=\frac{1}{N}\sum\limits_{j=1}^N \bm{\psi}(\mathbf{x}_j) \bm{\psi}^T(\mathbf{x}_j).     
\end{equation}   
Another important matrix is the kernel $K(\mathbf{x},\mathbf{x}')=\mathbf{\psi}(\mathbf{x})^T\mathbf{\psi}(\mathbf{x}')$ calculated on the training dataset $\mathcal{D}$
\begin{equation}\label{eq:K_matrix}
    \mathbf{K} = \frac{1}{N}\bm{\Psi}^T\bm{\Psi} =\frac{1}{N} \left(\sum\limits_{i=1}^d \psi_i(\mathbf{x}_j)\psi_i(\mathbf{x}_{j'})\right)_{j,j'=1}^N
\end{equation}

Next, consider the SVD decomposition of Jacobian $\bm{\Psi}=\sqrt{N}\mathbf{U}\bm{\Lambda}\mathbf{V}$ with $\mathbf{U}=(\mathbf{u}_k)_{k=1}^d$ being the matrix of left eigenvectors, $\mathbf{V}=(\bm{\phi}_k^T)_{k=1}^N$ being the matrix of right eigenvectors, and rectangular $d\times N$ diagonal matrix with singular values $\bm{\Lambda}_{kk}=\sqrt{\lambda_k}, \; k=1\ldots r$ on the diagonal. The Hessian and kernel matrices are symmetric and have eigendecompositions with shared spectrum of non-zero eigenvalues: $\lambda_k, \mathbf{u}_k$ for $\mathbf{H}$ and $\lambda_k,\mathbf{v}_k$ for $\mathbf{K}$.    

Now let us proceed to the target function $y^\star(\mathbf{x})$. As in this paper we focus on the training error \eqref{eq:quadratic_loss}, we may also restrict ourselves to the target vector $\mathbf{y}^\star$ of target function values at training points: $y^\star_i = \frac{1}{\sqrt{N}}y^\star(\mathbf{x}_i), \; \mathbf{x}_i\in\mathcal{D}$. The normalization by $\sqrt{N}$ is done here to allow for direct correspondence with the $N=\infty$ case. If $d\leq N$, we recall our assumption of $\bm{\Psi}$ having full rank and therefore the target function is reachable since the equation $\frac{1}{\sqrt{N}}\mathbf{w}^T \bm\Psi = \mathbf{y}^\star$ always has a solution. However, if $d<N$, there are many possible solutions of $\frac{1}{\sqrt{N}}\mathbf{w}^T \bm\Psi = \mathbf{y}^\star$, and we choose $\mathbf{w}^*$ to be the one with the same projection on $\ker \mathbf{H}$ as the initial model parameters $\mathbf{w}_0$. Later, this choice will guarantee that during SGD dynamic \eqref{eq:Langevin_HB_basic} parameter deviation always lies in the image of the Hessian: $\Delta\mathbf{w}_t \equiv \mathbf{w}_t-\mathbf{w}^\star \in \Ima\mathbf{H}$. Finally, if $N>d$, the target may by unreachable: $\mathbf{y}^\star \notin \Ima \bm{\Psi^T}$. In this case we consider the decomposition of the output space $\mathbb{R}^N=\Ima \mathbf{K} \oplus \ker \mathbf{K}$ and the respective decomposition of target vector $\mathbf{y}^\star=\mathbf{y}^\star_\parallel+\mathbf{y}^\star_\perp$. Then $\mathbf{y}^\star_\parallel$ is always reachable $\mathbf{y}^\star_\parallel\in\Ima \bm{\Psi^T}$ and in the rest we simply consider reachable part of the target $\mathbf{y}^\star_\parallel$ instead of the original target $\mathbf{y}^\star$.        

Apart from general finite regression problems, the case $d,N<\infty$ described above applies to linearization of practical neural networks (and other parametric models). Specifically, consider a model $f(\mathbf{w},\mathbf{x})$ initialized at $\mathbf{w}_0$ with initial predictions $f_0(\mathbf{x})\equiv f(\mathbf{w}_0,\mathbf{x})$. The training of linearized model $f(\mathbf{w},\mathbf{x})=f_0(\mathbf{x})+\langle\mathbf{w}-\mathbf{w}_0, \nabla_{\mathbf{w}}f(\mathbf{w}_0, \mathbf{x})\rangle$ can be mapped to out basic problem setting \eqref{eq:linear_model} with replacements $\bm{\psi}(\mathbf{x})\longmapsfrom \nabla_{\mathbf{w}}f(\mathbf{w}_0, \mathbf{x})$, \; $\mathbf{w} \longmapsfrom \mathbf{w}-\mathbf{w}_0$, \; $y^\star(\mathbf{x}) \longmapsfrom y^\star(\mathbf{x})-f_0(\mathbf{x})$.

\paragraph{Case 2: $d=\infty, N<\infty$.} In this case we assume that model parameters $\mathbf{w}$ and features $\bm{\psi}(\mathbf{x})$ belong to a Hilbert space $\mathcal{H}$, and $\langle\cdot,\cdot\rangle$ denotes the scalar product in $\mathcal{H}$. For the  Hessian we have 
\begin{equation}
\mathbf{H}=\frac{1}{N}\sum\limits_{j=1}^N \bm{\psi}(\mathbf{x}_j) \otimes \bm{\psi}(\mathbf{x}_j), \quad \mathbf{x}_j\in\mathcal{D} 
\end{equation}
We assume  that the features $\bm{\psi}(\mathbf{x}_j), \; \mathbf{x}_j \in \mathcal{D}$ are linearly independent (as is usually the case in practice), then $\dim \Ima \mathbf{H}=N<\infty$. As SGD updates occur in $\Ima \mathbf{H}$, we again use space decomposition $\mathcal{H}=\Ima \mathbf{H} \oplus  \ker \mathbf{H}$ and respective decomposition of parameters $\mathbf{w}=\mathbf{w}_\parallel+\mathbf{w}_\perp$ and features $\bm{\psi}(\mathbf{x})=\bm{\psi}(\mathbf{x})_\parallel+\bm{\psi}(\mathbf{x})_\perp$. As $\bm{\psi}(\mathbf{x}_j)_\perp=0, \mathbf{x}_j \in \mathcal{D}$ and $\mathbf{w}_\perp$ do not affect the loss \eqref{eq:quadratic_loss} we can restrict ourselves to $\Ima \mathbf{H}$ which bring us to finite dimensional case $d=N<\infty$ fully described above.    

The case $d=\infty, N<\infty$ is primarily used for kernel regression problems defined by a kernel function $K(\mathbf{x},\mathbf{x}')$ which can to be, for example, one of the "classical" kernels (e.g. Gaussian) or Neural Tangent Kernel (NTK) of some infinitely wide neural network. In this case $\mathcal{H}$ is a reproducing kernel Hilbert space (RKHS) of the kernel $K(\mathbf{x},\mathbf{x}')$, and features $\bm{\psi}(\mathbf{x})$ are the respective mappings $\bm{\psi}: \mathcal{X} \rightarrow \mathcal{H}$ with the property $K(\mathbf{x},\mathbf{x}')=\langle\bm{\psi}(\mathbf{x}),\bm{\psi}(\mathbf{x}')\rangle$.  

\paragraph{Case 3: $d,N=\infty$.} We are particularly interested in the case $d=N=\infty$ because in practice $N$ is quite large and, as previously discussed, the eigenvalues $\lambda_k$ and target expansion coefficients often are distributed according to asymptotic power laws, which all suggests working in an infinite-dimensional setting. A standard approach to rigorously accommodate $N=\infty$, which we sketch now, is based on Mercer's theorem.   

We describe the infinite ($N=\infty$) training dataset by a probability measure $\mu(\mathbf{x})$ so that the loss takes the form
\begin{equation}
    L(\mathbf{w})=\frac{1}{2}\int (\langle \mathbf{w},\bm{\psi}(\mathbf{x})\rangle - y^\star(\mathbf{x}))^2 d\mu(\mathbf x) = \frac{1}{2}\|\langle \mathbf{w},\bm{\psi}(\mathbf{x})\rangle - y^\star(\mathbf{x})\|^2_\mathcal{F},
\end{equation}
assuming that $\| y^\star(\mathbf{x})\|^2_\mathcal{F}<\infty$ for the loss to be well defined. Here in addition to the Hilbert space of parameters $\mathcal{H}\ni \mathbf{w}$ we introduced the Hilbert space $\mathcal{F}$ of square integrable functions $f(\mathbf{x}): \|f\|^2_\mathcal{F}\equiv\int f(\mathbf{x})^2d\mu(\mathbf{x})<\infty$. The Hessian $\mathbf{H}$, and the counterparts of the kernel and feature matrices $\mathcal{K}$ \eqref{eq:K_matrix} and $\bm{\Psi}$ \eqref{eq:Psi_matrix} are now operators
\begin{align}
    \label{eq:H_operator}
    &\mathbf{H}:\mathcal{H}\rightarrow\mathcal{H}, \quad \mathbf{H} = \int \bm{\psi}(\mathbf{x})\otimes\bm{\psi}(\mathbf{x}) d\mu(\mathbf{x})\\
    \label{eq:K_operator}
    &\mathbf{K}:\mathcal{F}\rightarrow\mathcal{F}, \quad \mathbf{K} g(x) = \int K(\mathbf{x}, \mathbf{x}')g(\mathbf{x}') d\mu(\mathbf{x}')\\
    &\bm{\Psi}:\mathcal{F}\rightarrow\mathcal{H}, \quad \bm{\Psi} g(x) = \int \bm{\psi}(\mathbf{x}')g(\mathbf{x}') d\mu(\mathbf{x}')
\end{align}
Again, due to our interest in convergence of the loss \eqref{eq:quadratic_loss} during SGD \eqref{eq:Langevin_HB_basic}, we project parameters $\mathbf{w}$ to $\mathcal{H} \ominus \ker \mathbf{H}$ and target function $y^\star(\mathbf{x})$ to $\mathcal{F}\ominus\ker\mathbf{K}$. After this projection the target is reachable in the sense that $y^\star(\mathbf{x})\in \mathcal{F}\ominus\ker\mathbf{K} = \overline{\Ima \bm{\Psi}^\dagger}$. In particular, this means that an optimum such that $\|\langle\mathbf{w}^\star,\bm{\psi}(\mathbf{x})\rangle-y^\star(\mathbf{x})\|=0$ may not exist, but there is always a sequence $\mathbf{w}_n$ such that $\lim_{n\rightarrow\infty}\|\langle\mathbf{w}^\star,\bm{\psi}(\mathbf{x})\rangle-y^\star(\mathbf{x})\|=0$

Now we proceed to eigendecompositions of $\mathbf{H}$ and $\mathbf{K}$. According to the Mercer's theorem, the restriction of the operator $\mathbf{K}$ given by \eqref{eq:K_operator} to $\mathcal{F}\ominus\ker\mathbf{K}$ admits an eigendecomposition of the form
\begin{equation}\label{eq:K_decomp}
    \mathbf{K} g(\mathbf{x}) = \sum\limits_{k=1}^\infty \lambda_k \phi_k(\mathbf{x}) \int \phi_k(\mathbf{x}') g(\mathbf{x}')d\mu(\mathbf{x}')
\end{equation}
with $\lambda_k>0$, $\int \phi_k(\mathbf{x})\phi_l(\mathbf{x})d\mu(\mathbf{x})=\delta_{kl}$, and $\{\phi_k(\mathbf{x})\}$ being a complete basis in $\mathcal{F}\ominus\ker\mathbf{K}$. The latter allows to decompose features as 
\begin{equation}\label{eq:features_decomposition}
    \bm{\psi}(\mathbf{x})=\sum_k \rho_k \mathbf{u}_k \phi_k(\mathbf{x})
\end{equation}
with $\|\mathbf{u}_k\|^2=1$. Substituting \eqref{eq:features_decomposition} into \eqref{eq:K_operator} gives $\langle \mathbf{u_k},\mathbf{u}_l\rangle=\delta_{kl}$ and $\rho_k^2=\lambda_k$. Substituting then \eqref{eq:features_decomposition} into \eqref{eq:H_operator} gives
\begin{equation}
    \mathbf{H}=\sum\limits_{k=1}^\infty \lambda_k \mathbf{u}_k \otimes \mathbf{u}_k,
\end{equation}
which also makes $\{\mathbf{u}_k\}$ an orthonormal basis in $\mathcal{H}\ominus\ker\mathbf{H}$.

Under mild regularity assumptions, one can show \citep{konig2013eigenvalue} that the eigenvalues $\lambda_k$ converge to 0,  and we will assume this in the sequel.  Note that if $\lambda_k\to 0$, then the range of the operator $\bm{\Psi}^\dagger$ is not closed so that, exactly as pointed out above, for some $y^*(\mathbf x)\in \mathcal F\ominus \ker\mathbf K$ there are no exact finite-norm minimizers $\mathbf w^*\in \mathcal H$ satisfying  $\langle\mathbf{w}^\star,\bm{\psi}(\mathbf{x})\rangle=y^\star(\mathbf{x})$ for $\mu$-almost all $\mathbf x.$

\subsection{Output and parameter spaces}\label{sec:output_space}
Recall that we only assume $\int y^\star(\mathbf{x})^2d\mu(\mathbf{x})<\infty$ but do not guarantee existence of the optimum $\mathbf{w}^\star$ in the case $N=d=\infty$. This means that considering SGD dynamics directly in the space of model outputs $f(\mathbf{x})=\langle\mathbf{w},\bm{\psi}(\mathbf{x})\rangle$ may be advantageous when $\mathbf{w}^\star$ is not available. Moreover, it will allow to completely bypass construction of parameter space when original problem is formulated in terms of kernel $K(\mathbf{x},\mathbf{x}')$ (e.g. for an infinitely wide neural network in the NTK regime). First, let us rewrite SGD recursion in the output space by taking the scalar product of \eqref{eq:Langevin_HB_basic} with $\bm{\psi}(\mathbf{x})$:
\begin{align}
    \label{eq:Langevin_HB_output_basic}
    \begin{pmatrix}
    f_{t+1}(\mathbf{x}) \\
    v_{t+1}(\mathbf{x})
    \end{pmatrix} &= 
    \begin{pmatrix}
    \mathbf{I}-\alpha_t \mathbf{K}(B_t) & \beta_t \mathbf{I} \\
    -\alpha_t \mathbf{K}(B_t) & \beta_t \mathbf{I}
    \end{pmatrix}
    \begin{pmatrix}
    f_{t}(\mathbf{x}) \\
    v_{t}(\mathbf{x})
    \end{pmatrix}
    +
    \begin{pmatrix}
    \alpha_t \mathbf{K}(B_t) y^\star(\mathbf{x}) \\
    \alpha_t \mathbf{K}(B_t) y^\star(\mathbf{x}) \\
    \end{pmatrix}\\
    \mathbf{K}(B_t)g(\mathbf{x})&=\frac{1}{b}\sum\limits_{\mathbf{x}_i\in B_t}K(\mathbf{x},\mathbf{x}_i)g(\mathbf{x}_i)
\end{align}
Here $v_{t}(\mathbf{x}) = \langle\mathbf{v}_t,\bm{\psi}(\mathbf{x})\rangle = f_t(\mathbf{x})-f_{t+1}(\mathbf{x})$. Considering the approximation error $\Delta f(\mathbf{x}) \equiv f(\mathbf{x})-y^\star(\mathbf{x})$ we get
\begin{equation}
    \label{eq:Langevin_HB_output}
    \begin{pmatrix}
    \Delta f_{t+1}(\mathbf{x}) \\
    v_{t+1}(\mathbf{x})
    \end{pmatrix} = 
    \begin{pmatrix}
    \mathbf{I}-\alpha_t \mathbf{K}(B_t) & \beta_t \mathbf{I} \\
    -\alpha_t \mathbf{K}(B_t) & \beta_t \mathbf{I}
    \end{pmatrix}
    \begin{pmatrix}
    \Delta f_{t}(\mathbf{x}) \\
    v_{t}(\mathbf{x})
    \end{pmatrix}
\end{equation}
As \eqref{eq:Langevin_HB_output_basic} is equivalent to \eqref{eq:Langevin_HB_basic}, we also get that \eqref{eq:Langevin_HB_output} is equivalent to \eqref{eq:Langevin_HB} when $\mathbf{w}^\star$ is available. Thus \eqref{eq:Langevin_HB_output} can be considered as the most general and convenient form of writing an SGD iteration. 

Similarly to second moments \eqref{eq:second_moments} defined for parameter space, we define second moments in the output space: 
\begin{equation}\label{eq:second_moments_output}
    \begin{aligned}
     C^\text{out}(\mathbf{x},\mathbf{x}') &\equiv \mathbb{E}[\Delta f(\mathbf{x})\Delta f(\mathbf{x}')] \\
     J^\text{out}(\mathbf{x},\mathbf{x}') & \equiv \mathbb{E}[\Delta f(\mathbf{x}) v(\mathbf{x}')] \\
     V^\text{out}(\mathbf{x},\mathbf{x}') & \equiv \mathbb{E}[v(\mathbf{x})v(\mathbf{x}')] \\
     \mathbf{M}^\text{out}(\mathbf{x},\mathbf{x}') &\equiv
    \big(\begin{smallmatrix}
    C^\text{out}(\mathbf{x},\mathbf{x}') & J^\text{out}(\mathbf{x},\mathbf{x}') \\
    J^\text{out}(\mathbf{x}',\mathbf{x}) & V^\text{out}(\mathbf{x},\mathbf{x}')
    \end{smallmatrix}\big),
    \end{aligned}
\end{equation}
where the superscript $^\text{out}$ indicates the output space (to avoid confusion with the respective moment counterparts defined in the parameter space).
The same moments can be written in the eigenbasis $\{\phi(\mathbf{x})\}$ of the operator $\mathbf{K}$ according to the rule 
\begin{equation}G^{\text{out}}_{kl}=\int \phi_k(\mathbf{x})G^{\text{out}}(\mathbf{x},\mathbf{x}')\phi_l(\mathbf{x})d\mu(\mathbf{x}),
\end{equation}
where $G$ stands for either $C$,$F$ or $V$.  When $\mathbf{w}^\star$ is available, the moments $C^{\text{out}}_{kl}$ in the output space are related to the moments $C_{kl}$ in the parameter space simply by 
\begin{equation}\label{eq:cklout}
    C^{\text{out}}_{kl}=\sqrt{\lambda_k\lambda_l}C_{kl}.
\end{equation} 
In particular,
\begin{equation}
    C^{\text{out}}_{kk}=\lambda_k C_{kk}
\end{equation}
and
\begin{equation}
    \sum_k\lambda_k C_{kk}=\sum_{k}C^{\text{out}}_{kk}=\mathbb E[\|\Delta f\|^2]<\infty.
\end{equation}
Even when $\mathbf{w}^\star$ is not available we can still define $C_{kl}$ by Eq. \eqref{eq:cklout}, but in this case $\sum_k C_{kk}=\infty$.

\section{Dynamics of second moments}\label{sec:sm_dyn}
The purpose of this section is to prove proposition \ref{lm:sgdmom}. However, as discussed in section \ref{sec:output_space}, the description in terms of moments in output space \eqref{eq:second_moments_output} is more widely applicable than description in terms of parameter moments \eqref{eq:second_moments}. Thus we first prove an analogue of proposition \ref{lm:sgdmom} for output space moments. For convinience, let us denote the moments \eqref{eq:second_moments_output} as $\mathbf{C}^{\text{out}}, \mathbf{J}^{\text{out}}, \mathbf{V}^{\text{out}} \in\mathcal{F}\otimes\mathcal{F}$.  

\begin{prop}\label{prop:SM_dyn_out}
Consider SGD \eqref{eq:Langevin_HB_output} with learning rates $\alpha_t$, momentum $\beta_t$, batch size $b$ and random uniform choice of the batch $B_t$. Then the update of second moments \eqref{eq:second_moments_output} is
\begin{equation}
\begin{split}
    \label{eq:second_moments_dyn_output}
    \begin{pmatrix}
    \mathbf{C}^{\text{out}}_{t+1} & \mathbf{J}^{\text{out}}_{t+1}\\
    (\mathbf{J}^{\text{out}}_{t+1})^\dagger & \mathbf{V}^{\text{out}}_{t+1}
    \end{pmatrix} =& \begin{pmatrix}
    \mathbf{I}-\alpha_t\mathbf{K} & \beta_t\mathbf{I} \\
    -\alpha_t\mathbf{K} & \beta_t\mathbf{I}
    \end{pmatrix} \begin{pmatrix}
    \mathbf{C}^{\text{out}}_{t} & \mathbf{J}^{\text{out}}_{t}\\
    (\mathbf{J}^{\text{out}}_t)^\dagger & \mathbf{V}^{\text{out}}_{t}
    \end{pmatrix}
     \begin{pmatrix}
    \mathbf{I}-\alpha_t\mathbf{K} & \beta_t\mathbf{I} \\
    -\alpha_t\mathbf{K} & \beta_t\mathbf{I}
    \end{pmatrix}^T \\
    &+ \gamma\alpha_t^2\begin{pmatrix}
    \bm{\Sigma}^{\text{out}}_t & \bm{\Sigma}^{\text{out}}_t\\
    \bm{\Sigma}^{\text{out}}_t & \bm{\Sigma}^{\text{out}}_t
    \end{pmatrix}
\end{split}
\end{equation}
The second term represents covariance of function gradients induced mini-batch sampling noise and is given by

1) for finite dataset $\mathcal{D}=\{\mathbf{x}_i\}_{i=1}^N$:
\begin{equation}
    \label{eq:stochastic_noise_term_out_finite}
    \bm{\Sigma}^{\text{out}}_t(\mathbf{x},\mathbf{x}') = \frac{1}{N}\sum\limits_{\mathbf{x}_i\in\mathcal{D}} K(\mathbf{x},\mathbf{x}_i) C^{\text{out}}_t(\mathbf{x}_i,\mathbf{x}_i)K(\mathbf{x}_i,\mathbf{x}') -\mathbf{K}\mathbf{C}^{\text{out}}_t\mathbf{K}(\mathbf{x},\mathbf{x}'), 
\end{equation}
and the amplitude $\gamma=\frac{N-b}{(N-1)b}$; 

2) for infinite dataset $\mathcal{D}$ with density $d\mu(\mathbf{x})$: 
\begin{equation}
    \label{eq:stochastic_noise_term_out_inf}
    \bm{\Sigma}^{\text{out}}_t(\mathbf{x},\mathbf{x}') = \int K(\mathbf{x},\mathbf{x}'') C^{\text{out}}_t(\mathbf{x}'',\mathbf{x}'')K(\mathbf{x}'',\mathbf{x}')d\mu(\mathbf{x}'') -\mathbf{K}\mathbf{C}^{\text{out}}_t\mathbf{K}(\mathbf{x},\mathbf{x}'), 
\end{equation}
and the amplitude $\gamma=\frac{1}{b}$.
\end{prop}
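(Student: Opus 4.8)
The plan is to read the output-space recursion \eqref{eq:Langevin_HB_output} as the action of a random linear operator on the stacked state $\mathbf{z}_t=(\Delta f_t,v_t)^T$, i.e.\ $\mathbf{z}_{t+1}=\mathbf{A}(B_t)\mathbf{z}_t$ with $\mathbf{A}(B_t)=\big(\begin{smallmatrix}\mathbf{I}-\alpha_t\mathbf{K}(B_t)&\beta_t\mathbf{I}\\-\alpha_t\mathbf{K}(B_t)&\beta_t\mathbf{I}\end{smallmatrix}\big)$, and to observe that the block matrix in \eqref{eq:second_moments_output} is exactly $\mathbf{M}^{\text{out}}_t=\mathbb{E}[\mathbf{z}_t\otimes\mathbf{z}_t]$. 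The structural fact I would invoke first is that $B_t$ is sampled independently of the whole history and hence of $\mathbf{z}_t$; conditioning on $\mathbf{z}_t$ and averaging over $B_t$ then gives $\mathbf{M}^{\text{out}}_{t+1}=\mathbb{E}_{B_t}\big[\mathbf{A}(B_t)\,\mathbf{M}^{\text{out}}_t\,\mathbf{A}(B_t)^\dagger\big]$, reducing the claim to computing this averaged conjugation.

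Next I would split $\mathbf{A}(B_t)=\bar{\mathbf{A}}+\delta\mathbf{A}$ into mean and fluctuation. Uniform sampling gives $\mathbb{E}_{B_t}[\mathbf{K}(B_t)]=\mathbf{K}$ (each training point enters $B_t$ with the correct marginal weight), so $\bar{\mathbf{A}}=\big(\begin{smallmatrix}\mathbf{I}-\alpha_t\mathbf{K}&\beta_t\mathbf{I}\\-\alpha_t\mathbf{K}&\beta_t\mathbf{I}\end{smallmatrix}\big)$ and $\delta\mathbf{A}=-\alpha_t\big(\begin{smallmatrix}1\\1\end{smallmatrix}\big)\,\delta\mathbf{K}\,\big(\begin{smallmatrix}1&0\end{smallmatrix}\big)$ with $\delta\mathbf{K}=\mathbf{K}(B_t)-\mathbf{K}$. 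Expanding $\mathbb{E}[(\bar{\mathbf{A}}+\delta\mathbf{A})\mathbf{M}^{\text{out}}_t(\bar{\mathbf{A}}+\delta\mathbf{A})^\dagger]$, the two cross terms vanish since $\mathbb{E}[\delta\mathbf{A}]=0$, leaving the deterministic drift $\bar{\mathbf{A}}\mathbf{M}^{\text{out}}_t\bar{\mathbf{A}}^\dagger$ — which is precisely the first term on the right-hand side of \eqref{eq:second_moments_dyn_output} — plus the noise term $\mathbb{E}[\delta\mathbf{A}\,\mathbf{M}^{\text{out}}_t\,\delta\mathbf{A}^\dagger]$. The rank-one column/row shape of $\delta\mathbf{A}$ makes the latter equal to $\alpha_t^2\,\mathbb{E}[\delta\mathbf{K}\,\mathbf{C}^{\text{out}}_t\,\delta\mathbf{K}]\,\big(\begin{smallmatrix}1&1\\1&1\end{smallmatrix}\big)$ (only the top-left block $\mathbf{C}^{\text{out}}_t$ of $\mathbf{M}^{\text{out}}_t$ survives, and $\delta\mathbf{K}$ is self-adjoint), already reproducing the all-equal block pattern of the claimed noise; it then remains to identify $\mathbb{E}[\delta\mathbf{K}\,\mathbf{C}^{\text{out}}_t\,\delta\mathbf{K}]$ with $\gamma\bm{\Sigma}^{\text{out}}_t$.

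The heart of the argument is this covariance, which I would write as $\mathbb{E}[\delta\mathbf{K}\,\mathbf{C}\,\delta\mathbf{K}]=\mathbb{E}[\mathbf{K}(B_t)\,\mathbf{C}\,\mathbf{K}(B_t)]-\mathbf{K}\mathbf{C}\mathbf{K}$, the linear term dropping because $\mathbb{E}[\delta\mathbf{K}]=0$. For a finite dataset I would introduce indicators $\xi_i\in\{0,1\}$ recording whether $\mathbf{x}_i\in B_t$, so that $\mathbf{K}(B_t)$ has kernel $\tfrac1b\sum_i\xi_i K(\mathbf{x},\mathbf{x}_i)\,(\cdot)$, and expand the kernel of $\mathbf{K}(B_t)\mathbf{C}\mathbf{K}(B_t)$ as a double sum over indices $j,m$ weighted by $\xi_j\xi_m$. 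Taking expectations with the sampling-without-replacement moments $\mathbb{E}[\xi_j\xi_m]=b/N$ for $j=m$ and $\tfrac{b(b-1)}{N(N-1)}$ for $j\neq m$, I would separate the diagonal contribution $\tfrac1N\sum_j K(\mathbf{x},\mathbf{x}_j)C^{\text{out}}(\mathbf{x}_j,\mathbf{x}_j)K(\mathbf{x}_j,\mathbf{x}')$ from the full sum $\mathbf{K}\mathbf{C}\mathbf{K}$; collecting coefficients, the full sum cancels $\mathbf{K}\mathbf{C}\mathbf{K}$ up to the prefactor and the remainder is exactly $\gamma\big[\tfrac1N\sum_j K(\mathbf{x},\mathbf{x}_j)C^{\text{out}}(\mathbf{x}_j,\mathbf{x}_j)K(\mathbf{x}_j,\mathbf{x}')-\mathbf{K}\mathbf{C}^{\text{out}}\mathbf{K}(\mathbf{x},\mathbf{x}')\big]$ with $\gamma=\tfrac{N-b}{(N-1)b}$, i.e.\ \eqref{eq:stochastic_noise_term_out_finite}. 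The infinite-dataset case is the same computation with the batch drawn i.i.d.\ from $\mu$: separating the $b$ coincident pairs $s=s'$ from the $b(b-1)$ distinct pairs gives the continuum kernel \eqref{eq:stochastic_noise_term_out_inf} with $\gamma=1/b$ (consistent with the $N\to\infty$ limit of the finite formula).

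I expect the only real difficulty to be bookkeeping. One must keep the $1/N$ normalization of the discrete inner product consistent when passing between the operator and kernel pictures, and must verify that the diagonal/off-diagonal coefficient algebra collapses to the single clean prefactor $\gamma$. The remaining ingredients — independence of $B_t$ and $\mathbf{z}_t$, the vanishing of the cross terms, and the rank-one form of $\delta\mathbf{A}$ that yields the $\big(\begin{smallmatrix}1&1\\1&1\end{smallmatrix}\big)$ block structure — are structurally immediate once the setup is in place.
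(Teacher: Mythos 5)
Your proposal is correct and follows essentially the same route as the paper's proof: conditioning on the independence of $B_t$ from the current state, splitting $\mathbf{K}(B_t)=\mathbf{K}+\delta\mathbf{K}$ so the cross terms vanish and only the top-left block $\mathbf{C}^{\text{out}}_t$ survives in the rank-one noise term, and then evaluating $\mathbb{E}[\delta\mathbf{K}\,\mathbf{C}^{\text{out}}_t\,\delta\mathbf{K}]$ via the same pair-inclusion probabilities $b/N$ and $\tfrac{b(b-1)}{N(N-1)}$ (your indicator moments $\mathbb{E}[\xi_j\xi_m]$ are exactly the paper's batch-fraction counts), with the i.i.d.\ computation for the infinite case yielding $\gamma=1/b$. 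No gaps; the bookkeeping you flag is precisely what the paper's calculation carries out.
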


\begin{proof}
First, we take expression of single SGD \eqref{eq:Langevin_HB_output} and use it to express $\bm{M}_{t+1}^{\text{out}}$ through $\bm{M}_{t}^{\text{out}}$.
\begin{equation}\label{eq:sm_dyn_calc1}
    \begin{split}
    \bm{M}_{t+1}^{\text{out}}(\mathbf{x},\mathbf{x}') &= \mathbb{E}\left[\begin{pmatrix}
    \Delta f_{t+1}(\mathbf{x})\Delta f_{t+1}(\mathbf{x}') & \Delta f_{t+1}(\mathbf{x}) v_{t+1}(\mathbf{x}')\\
    v_{t+1}(\mathbf{x})\Delta f_{t+1}(\mathbf{x}') & 
    v_{t+1}(\mathbf{x})v_{t+1}(\mathbf{x}')
    \end{pmatrix}\right]\\
    &=\mathbb{E}\left[\begin{pmatrix}
    \mathbf{I}-\alpha_t\mathbf{K}(B_t) & \beta_t\mathbf{I} \\
    -\alpha_t\mathbf{K}(B_t) & \beta_t\mathbf{I}
    \end{pmatrix} \begin{pmatrix}
    \Delta f_{t}(\mathbf{x}) \\ v_{t}(\mathbf{x})\end{pmatrix}
    \Big(
    \Delta f_{t}(\mathbf{x}') \; v_{t}(\mathbf{x}')
    \Big)
    \begin{pmatrix}
    \mathbf{I}-\alpha_t\mathbf{K}(B_t) & \beta_t\mathbf{I} \\
    -\alpha_t\mathbf{K}(B_t) & \beta_t\mathbf{I}
    \end{pmatrix}^T\right] \\
    &\stackrel{(1)}{=}\mathbb{E}\left[\begin{pmatrix}
    \mathbf{I}-\alpha_t\mathbf{K}(B_t) & \beta_t\mathbf{I} \\
    -\alpha_t\mathbf{K}(B_t) & \beta_t\mathbf{I}
    \end{pmatrix} \begin{pmatrix}
    \mathbf{C}^{\text{out}}_{t} & \mathbf{J}^{\text{out}}_{t}\\
    (\mathbf{J}^{\text{out}}_t)^\dagger & \mathbf{V}^{\text{out}}_{t}
    \end{pmatrix}
     \begin{pmatrix}
    \mathbf{I}-\alpha_t\mathbf{K}(B_t) & \beta_t\mathbf{I} \\
    -\alpha_t\mathbf{K}(B_t) & \beta_t\mathbf{I}
    \end{pmatrix}^T\right] \\
    &\stackrel{(2)}{=}\begin{pmatrix}
    \mathbf{I}-\alpha_t\mathbf{K} & \beta_t\mathbf{I} \\
    -\alpha_t\mathbf{K} & \beta_t\mathbf{I}
    \end{pmatrix} \begin{pmatrix}
    \mathbf{C}^{\text{out}}_{t} & \mathbf{J}^{\text{out}}_{t}\\
    (\mathbf{J}^{\text{out}}_t)^\dagger & \mathbf{V}^{\text{out}}_{t}
    \end{pmatrix}
     \begin{pmatrix}
    \mathbf{I}-\alpha_t\mathbf{K} & \beta_t\mathbf{I} \\
    -\alpha_t\mathbf{K} & \beta_t\mathbf{I}
    \end{pmatrix}^T \\
    &+\begin{pmatrix}
    \mathbb{E}\Big[\delta\mathbf{K}(B_t)\mathbf{C}^{\text{out}}\delta\mathbf{K}(B_t)\Big]&\mathbb{E}\Big[\delta\mathbf{K}(B_t)\mathbf{C}^{\text{out}}\delta\mathbf{K}(B_t)\Big] \\
    \mathbb{E}\Big[\delta\mathbf{K}(B_t)\mathbf{C}^{\text{out}}\delta\mathbf{K}(B_t)\Big]&\mathbb{E}\Big[\delta\mathbf{K}(B_t)\mathbf{C}^{\text{out}}\delta\mathbf{K}(B_t)\Big]
    \end{pmatrix}
    \end{split}
\end{equation}
Here in (1) we used that $\Delta f_t, v_t$ are independent from $B_t$ and therefore the average factorizes into product of averages. In (2) we introduced notation $\delta\mathbf{K}(B_t)=\mathbf{K}(B_t)-\mathbf{K}$ and used that $\mathbb{E}[\delta\mathbf{K}(B_t)]=0$.

Now we proceed to the calculation of the average from the last line in \eqref{eq:sm_dyn_calc1}:
\begin{equation}
\gamma \bm{\Sigma}^{\text{out}}_t \equiv\mathbb{E}\Big[\delta\mathbf{K}(B_t)\mathbf{C}^{\text{out}}_t\delta\mathbf{K}(B_t) \Big] 
\end{equation}

\textit{1)} Finite $\mathcal{D}=\{\mathbf{x}_i\}_{i=1}^N$. 

Denoting for convenience function values calculated at $\mathbf{x}_i,\mathbf{x}_{j}$ with subscripts $_{ij}$, and omitting time index $t$, we get
\begin{equation}\label{eq:stochastic_term_calc1}
    \begin{split}
\gamma\Big(\bm{\Sigma}^{\text{out}}\Big)_{ij} =& \mathbb{E}\left[\Bigg( \delta\mathbf{K}(B) \mathbf{C}^{\text{out}} \delta\mathbf{K}(B)\Bigg)_{ij}\right]\\
=&\mathbb{E}\left[\left(\frac{1}{b}\sum_{i'\in B} - \frac{1}{N}\sum_{i'}\right)\left(\frac{1}{b}\sum_{j'\in B} - \frac{1}{N}\sum_{j'}\right) K_{ii'}K_{jj'}C^{\text{out}}_{i'j'}\right] \\
=&\mathbb{E}\left[\left(\frac{1}{b^2}\sum_{i',j' \in B}-\frac{1}{N^2}\sum_{i',j'}\right)K_{ii'}K_{jj'}C^{\text{out}}_{i'j'}\right] \\
\stackrel{(1)}{=}& \left(\left(\frac{1}{b^2}\frac{b}{N}-\frac{1}{N^2}\right)\sum_{i'=j' }+\left(\frac{1}{|B|^2}\frac{b(b-1)}{N(N-1)}-\frac{1}{N^2}\right)\sum_{i'\ne j' }\right)K_{ii'}K_{jj'}C^{\text{out}}_{i'j'} \\
=& \left(\frac{N-b}{b}\sum_{i'=j' }-\frac{N-b}{b(N-1)}\sum_{i'\ne j' }\right)\frac{1}{N^2}K_{ii'}K_{jj'}C^{\text{out}}_{i'j'} \\
=&\frac{N-b}{(N-1)b} \sum_{i'j'}(N\delta_{i'j'}-1)\frac{1}{N^2}K_{ii'}K_{jj'}C^{\text{out}}_{i'j'} 
    \end{split}
\end{equation}
Here unspecified sums run over dataset $\mathcal{D}$. In (1) we used that the fraction of batches $B$ which contain two indices $i'\ne j'$ is $\binom{N-2}{|B|-2} / \binom{N}{|B|} = \frac{|B|(|B-1|)}{N(N-1)}$ and the fraction of batches containing index $j'$ is $\binom{N-1}{|B|-1} / \binom{N}{|B|} = \frac{|B|}{N}$. Taking $\gamma=\frac{N-b}{(N-1)b}$ we get \eqref{eq:stochastic_noise_term_out_finite}

\textit{2)} Infinite $\mathcal{D}$ with density $d\mu(\mathbf{x})$. Proceeding similarly to \eqref{eq:stochastic_term_calc1} we get
\begin{equation}
\begin{split}
    \bm{\Sigma}^{\text{out}}(\mathbf{x},\mathbf{x}') =&\mathbb{E}\left[\mathbf{K}(B) \mathbf{C}^{\text{out}} \mathbf{K}(B)\right](\mathbf{x},\mathbf{x}') -\mathbf{K} \mathbf{C}^{\text{out}}\mathbf{K}(\mathbf{x},\mathbf{x}') \\
    =& \frac{b}{b^2} \int K(\mathbf{x},\mathbf{x}'')C^{\text{out}}(\mathbf{x}'',\mathbf{x}'')K(\mathbf{x}'',\mathbf{x}')d\mu(\mathbf{x}'') \\
    &+ \frac{b(b-1)}{b^2}\mathbf{K}\mathbf{C}^{\text{out}}\mathbf{K}(\mathbf{x},\mathbf{x}') -\mathbf{K}\mathbf{C}^{\text{out}}\mathbf{K}(\mathbf{x},\mathbf{x}')\\
    =&\frac{1}{b}\left(\int K(\mathbf{x},\mathbf{x}'')C^{\text{out}}(\mathbf{x}'',\mathbf{x}'')K(\mathbf{x}'',\mathbf{x}')d\mu(\mathbf{x}'')-\mathbf{K}\mathbf{C}^{\text{out}}\mathbf{K}(\mathbf{x},\mathbf{x}') \right)
\end{split}    
\end{equation}
which gives \eqref{eq:stochastic_noise_term_out_inf} after setting $\gamma=\frac{1}{b}$.
\end{proof}

Let us write second moments dynamics for both parameter space \eqref{eq:second_moments_dyn} and output space \eqref{eq:second_moments_dyn_output} in eigenbasises $\{\mathbf{u}_k\}$ and $\{\phi_k(\mathbf{x})\}$ using decompositions \eqref{eq:features_decomposition} and \eqref{eq:K_decomp}. For parameter space we get 
\begin{equation}\label{eq:SM_eigendyn}
    \begin{split}
    \begin{pmatrix}
    C_{kl, t+1} & J_{kl, t+1} \\
    J_{lk, t+1} & V_{kl, t+1}
    \end{pmatrix} =& \begin{pmatrix}
    1-\alpha_t \lambda_k & \beta_t \\
    -\alpha_t \lambda_k & \beta_t
    \end{pmatrix}
    \begin{pmatrix}
    C_{kl, t} & J_{kl, t} \\
    J_{lk, t} & V_{kl, t}
    \end{pmatrix}
    \begin{pmatrix}
    1-\alpha_t \lambda_l & \beta_t \\
    -\alpha_t \lambda_l & \beta_t
    \end{pmatrix}^T
    \\
    +\gamma \alpha_t^2\sqrt{\lambda_k\lambda_l} \Bigg(\sum_{k'l'}&\sqrt{\lambda_{k'}\lambda_{l'}}C_{k'l',t}\int \phi_{k}(\mathbf{x})\phi_{l}(\mathbf{x})\phi_{k'}(\mathbf{x})\phi_{l'}(\mathbf{x})d\mu(\mathbf{x}) - C_{kl, t}\Bigg)
    \begin{pmatrix}
    1 &  1\\    
    1 &  1
    \end{pmatrix}
\end{split}
\end{equation}
And for output space
\begin{equation}\label{eq:SM_eigendyn_out}
    \begin{split}
    \begin{pmatrix}
    C^{\text{out}}_{kl, t+1} & J^{\text{out}}_{kl, t+1} \\
    J^{\text{out}}_{lk, t+1} & V^{\text{out}}_{kl, t+1}
    \end{pmatrix} =& \begin{pmatrix}
    1-\alpha_t \lambda_k & \beta_t \\
    -\alpha_t \lambda_k & \beta_t
    \end{pmatrix}
    \begin{pmatrix}
    C^{\text{out}}_{kl, t} & J^{\text{out}}_{kl, t} \\
    J^{\text{out}}_{lk, t} & V^{\text{out}}_{kl, t}
    \end{pmatrix}
    \begin{pmatrix}
    1-\alpha_t \lambda_l & \beta_t \\
    -\alpha_t \lambda_l & \beta_t
    \end{pmatrix}^T
    \\
    +\gamma \alpha_t^2\lambda_k\lambda_l &\left(\sum_{k'l'}C^{\text{out}}_{k'l',t}\int \phi_{k}(\mathbf{x})\phi_{l}(\mathbf{x})\phi_{k'}(\mathbf{x})\phi_{l'}(\mathbf{x})d\mu(\mathbf{x}) - C^{\text{out}}_{kl, t}\right)
    \begin{pmatrix}
    1 &  1\\    
    1 &  1
    \end{pmatrix}
\end{split}
\end{equation}
Now we are ready to prove proposition \ref{lm:sgdmom}
\propSMdyn*
\begin{proof}
Note that in the eigenbasis $\{\phi_k(\mathbf{x})\}$ of $\mathbf{K}$ and eigenbasis $\{\mathbf{u}_k\}$ of $\mathbf{H}$ output and parameter second moments are connected as
\begin{equation}
    \sqrt{\lambda_k\lambda_l}\bm{M}_{kl} = \bm{M}^{\text{out}}_{kl} 
\end{equation}
Using this connection rule we see that parameter dynamics \eqref{eq:SM_eigendyn} in eigenbasis of $\mathbf{H}$ is equivalent to output space second moments dynamics \eqref{eq:SM_eigendyn_out} in the eigenbasis of $\mathbf{K}$. Finally, \eqref{eq:SM_eigendyn_out} is equivalent to \eqref{eq:second_moments_dyn_output} proved in \ref{prop:SM_dyn_out}. 
\end{proof}

For completeness, let us also write a formula of SE approximation in output space: 
\begin{equation}\label{eq:dynamic_equation_output_space_sd}
\begin{split}
    \begin{pmatrix}
    C^{\text{out}}_{kk, n+1} & J^{\text{out}}_{kk, n+1} \\
    J^{\text{out}}_{kk, n+1} & V^{\text{out}}_{kk, n+1}
    \end{pmatrix} =& \begin{pmatrix}
    1-\alpha_t \lambda_k & \beta_t \\
    -\alpha_t \lambda_k & \beta_t
    \end{pmatrix}
    \begin{pmatrix}
    C^{\text{out}}_{kk, t} & J^{\text{out}}_{kk, t} \\
    J^{\text{out}}_{kk, t} & V^{\text{out}}_{kk, t}
    \end{pmatrix}
    \begin{pmatrix}
    1-\alpha_t \lambda_k & \beta_t \\
    -\alpha_t \lambda_k & \beta_t
    \end{pmatrix}^T
    \\
    &+\gamma(\alpha_t\lambda_k)^2 \bigg(\tau_1\sum_l C^{\text{out}}_{ll,t}- \tau_2 C_{kk, t}\bigg)
    \begin{pmatrix}
    1 &  1\\    
    1 &  1
    \end{pmatrix}.
\end{split}
\end{equation}

\section{SE approximation}
\subsection{Presence of non-spectral details in the general case.}\label{sec:non_spec}
In this section we show that for a mini-batch SGD noise term $\bm{\Sigma}$ given by \eqref{eq:stochastic_noise_term} one cannot exactly describe respective noise spectral components $\Sigma_{kk}=\langle\mathbf u_k,\bm{\Sigma} \mathbf u_k\rangle$ in terms of only spectral distributions:eigenvalues $\lambda_k$ and second moments components $C_{kk}$. In turn, for a SGD dynamics \eqref{eq:second_moments_dyn} this would imply that spectral distributions are not sufficient to reconstruct $\rho_{\mathbf{C}_{t+1}}$ from $\rho_{\mathbf{C}_{t}}$. The following proposition characterizes non-spectral variability of $\Sigma_{kk}$ in a stronger sense: when not only $\lambda_k,C_{kk}$ are fixed, but the full Hessian $\mathbf{H}$ and second moments $\mathbf{C}$. 
\begin{prop}\label{prop:non_spec_ex}
Consider a problem with fixed Hessian $\mathbf{H}$ and second moments $\mathbf{C}$, and a finite dataset size $N<\infty$. Then, all noise spectral component $\Sigma_{kk}$ are bounded as
\begin{equation}\label{eq:Sigma_kk_bound}
    \Sigma_{kk} \leq (N-1)\lambda_k \Tr[\mathbf{H}\mathbf{C}]
\end{equation}
Next, any chosen component $\Sigma_{kk}$ can take any value in the interval $[A,B]$ (or $[B,A]$ if $B<A$) where endpoints $A,B$ are given by
\begin{equation}\label{eq:non_spec_interval}
    A=\lambda_k(\Tr[\mathbf{H}\mathbf{C}] - \lambda_k C_{kk}), \quad B=(N-1)\lambda_k^2 C_{kk}.
\end{equation}
\end{prop}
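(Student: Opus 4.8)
The plan is to reduce the statement to a scalar/Gram-matrix problem about how correlated two nonnegative sequences can be, then establish the universal upper bound by one clever inequality and realize the interval $[A,B]$ by two explicit extremal feature configurations together with an intermediate-value argument.

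First I would make $\Sigma_{kk}$ explicit. Writing $a_i=\langle\bm\psi(\mathbf{x}_i),\mathbf{C}\bm\psi(\mathbf{x}_i)\rangle\ge 0$ and $w_i\equiv\psi_k(\mathbf{x}_i)=\langle\mathbf{u}_k,\bm\psi(\mathbf{x}_i)\rangle$, and using $\mathbf{H}\mathbf{u}_k=\lambda_k\mathbf{u}_k$, Eq.~\eqref{eq:stochastic_noise_term} gives $\Sigma_{kk}=\tfrac1N\sum_i a_i w_i^2-\lambda_k^2 C_{kk}$. The fixed data $\mathbf{H},\mathbf{C}$ enter only through three facts: $\tfrac1N\sum_i w_i^2=\lambda_k$, $\tfrac1N\sum_i a_i=\Tr[\mathbf{H}\mathbf{C}]$, and the vector identity $\sum_i w_i\,\bm\psi(\mathbf{x}_i)=N\mathbf{H}\mathbf{u}_k=N\lambda_k\mathbf{u}_k$. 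Thus the whole question becomes: over all feature families realizing the fixed $\mathbf{H}$, how correlated can the nonnegative weights $w_i^2$ and $a_i$ be.

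The hard part will be the upper bound, and I would handle it through the positive semidefinite Gram matrix $M_{ij}=\langle\bm\psi(\mathbf{x}_i),\mathbf{C}\bm\psi(\mathbf{x}_j)\rangle=(\bm\Psi^{T}\mathbf{C}\bm\Psi)_{ij}\succeq 0$, whose diagonal is $M_{ii}=a_i$. Then $N\Sigma_{kk}+N\lambda_k^2C_{kk}=\sum_i w_i^2 M_{ii}$, while the vector identity yields the exact value
\begin{equation}
\mathbf{w}^{T} M\,\mathbf{w}=\Big\langle \sum_i w_i\bm\psi(\mathbf{x}_i),\,\mathbf{C}\sum_j w_j\bm\psi(\mathbf{x}_j)\Big\rangle = N^2\lambda_k^2 C_{kk}.
\end{equation}
Since $M\succeq0$ forces $|M_{ij}|\le\sqrt{M_{ii}M_{jj}}$, the off-diagonal part of $\mathbf{w}^{T}M\mathbf{w}$ is bounded below, giving $2\sum_i w_i^2 M_{ii}\le \mathbf{w}^{T}M\mathbf{w}+\big(\sum_i|w_i|\sqrt{M_{ii}}\big)^2$; bounding the last term by Cauchy--Schwarz with $\big(\sum_i|w_i|\sqrt{M_{ii}}\big)^2\le(\sum_i w_i^2)(\sum_i M_{ii})=N^2\lambda_k\Tr[\mathbf{H}\mathbf{C}]$ leads to $\Sigma_{kk}\le \tfrac{N\lambda_k}{2}\Tr[\mathbf{H}\mathbf{C}]+(\tfrac N2-1)\lambda_k^2 C_{kk}$. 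Finishing with the elementary bound $\lambda_k C_{kk}\le \Tr[\mathbf{H}\mathbf{C}]=\sum_l\lambda_l C_{ll}$ (valid since $\lambda_l,C_{ll}\ge0$) then collapses this to $\Sigma_{kk}\le (N-1)\lambda_k\Tr[\mathbf{H}\mathbf{C}]$ for $N\ge 2$, which is \eqref{eq:Sigma_kk_bound}. I expect locating exactly this $M\succeq0$ identity plus inequality to be the real obstacle, since naive bounds such as $w_i^2\le N\lambda_k$ only yield the factor $N$ rather than the sharp $N-1$.

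For the interval \eqref{eq:non_spec_interval} I would exhibit the two endpoints by construction. Taking the uniform configuration $w_i^2=\lambda_k$ for all $i$ (achievable from any realization of $\mathbf{H}$ by pairwise rotations $(\bm\psi_i,\bm\psi_j)\mapsto(\cos\theta\,\bm\psi_i+\sin\theta\,\bm\psi_j,-\sin\theta\,\bm\psi_i+\cos\theta\,\bm\psi_j)$, which preserve $\sum_i\bm\psi_i\otimes\bm\psi_i$) makes $w_i^2$ and $a_i$ uncorrelated, so $\tfrac1N\sum_i a_i w_i^2=\lambda_k\Tr[\mathbf{H}\mathbf{C}]$ and $\Sigma_{kk}=A$. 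Taking instead $\bm\psi(\mathbf{x}_1)=\sqrt{N\lambda_k}\,\mathbf{u}_k$ and the remaining $\bm\psi(\mathbf{x}_i)\perp\mathbf{u}_k$ chosen to realize $\mathbf{H}-\lambda_k\mathbf{u}_k\otimes\mathbf{u}_k$ (possible as $N-1\ge r-1$) concentrates all weight on one point, giving $\tfrac1N\sum_i a_i w_i^2=N\lambda_k^2 C_{kk}$ and $\Sigma_{kk}=B$. Finally, since the feasible set $\{(\bm\psi(\mathbf{x}_i))_i:\tfrac1N\sum_i\bm\psi(\mathbf{x}_i)\otimes\bm\psi(\mathbf{x}_i)=\mathbf{H}\}$ is path-connected (parametrize $\bm\Psi=\sqrt{N}\,\mathbf{H}^{1/2}\mathbf{W}$ with $\mathbf{W}$ on a Stiefel manifold, or connect the two configurations directly by the pairwise rotations above) and $\Sigma_{kk}$ is continuous, the intermediate value theorem yields every value between $A$ and $B$, proving achievability of the whole interval $[A,B]$ (or $[B,A]$).
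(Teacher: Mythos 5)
Your proof is correct, and its first half takes a genuinely different route from the paper's. For the upper bound \eqref{eq:Sigma_kk_bound} the paper never touches the Gram matrix $M=\bm\Psi^T\mathbf{C}\bm\Psi$: it writes $\Sigma_{kk}$ in the SVD basis $\bm\Psi=\sqrt{N}\,\mathbf{U}\bm\Lambda\mathbf{V}$, observes the exact trace identity $\Tr[\mathbf{H}^{-1}\bm\Sigma]=(N-1)\Tr[\mathbf{H}\mathbf{C}]$ (using orthonormality of both rows and columns of $\mathbf{V}$), and concludes $\lambda_k^{-1}\Sigma_{kk}\le\Tr[\mathbf{H}^{-1}\bm\Sigma]$ from positive semi-definiteness of $\bm\Sigma$. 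That argument is shorter and yields more — an identity for the whole weighted trace, which explains at once why $N-1$ is the right constant — but the column-orthonormality step implicitly requires the square/overparametrized case $d\ge N$. Your chain ($\mathbf{w}^TM\mathbf{w}=N^2\lambda_k^2C_{kk}$, the PSD bound $|M_{ij}|\le\sqrt{M_{ii}M_{jj}}$, Cauchy--Schwarz, then $\lambda_k C_{kk}\le\Tr[\mathbf{H}\mathbf{C}]$) is self-contained and valid for any $d$; the coefficient bookkeeping checks out ($\tfrac{N}{2}+\tfrac{N}{2}-1=N-1$), with the restriction $N\ge2$ harmless since $\bm\Sigma=0$ when $N=1$. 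For the interval \eqref{eq:non_spec_interval} you and the paper do essentially the same thing: the same two extremal configurations (uniform, $w_i^2\equiv\lambda_k$, equivalently $\phi_{k,i}=1/\sqrt N$, giving $A$; concentrated, $\bm\psi(\mathbf{x}_1)=\sqrt{N\lambda_k}\,\mathbf{u}_k$, the paper's $\mathbf{V}=\mathbf{I}$, giving $B$), followed by continuity of $\Sigma_{kk}$ and an intermediate-value argument over a connected family of feasible feature matrices.

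One small repair in that last step: your parenthetical claim that the feasible set is path-connected because ``$\mathbf{W}$ lies on a Stiefel manifold'' fails precisely in the square case $r=N$, where the Stiefel manifold is $O(N)$, which has two components. Your fallback — connecting the two configurations directly by a continuous path of Givens rotations, each of which preserves $\sum_i\bm\psi(\mathbf{x}_i)\otimes\bm\psi(\mathbf{x}_i)$ and moves $\Sigma_{kk}$ continuously — is sound and should be stated as the actual argument. The paper resolves the same issue differently but in the same spirit: since each extremal example pins down only one row of $\mathbf{V}$, the remaining rows can be chosen so that both endpoints have $\det\mathbf{V}=1$, and path-connectedness of $SO(N)$ then supplies the deformation.
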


Let us first discuss this result. Intuitively, the value $A$ corresponds to the case where the  angle between different $\bm{\psi}(\mathbf{x}_i)$ and $\bm{\psi}(\mathbf{x}_j)$ is small, and therefore different rank-one terms in $\mathbf{H}=\frac{1}{N}\sum\limits_{i=1}^n\bm{\psi}(\mathbf{x}_i)\otimes\bm{\psi}(\mathbf{x}_i)$ greatly overlap with each other. The value $B$ corresponds to the opposite case where all $\bm{\psi}(\mathbf{x}_i)$ are orthogonal.

Now we characterize the magnitude of non-spectral variations. For this, from SE approximation \eqref{eq:seapprox} we can take $\lambda_k \Tr[\mathbf{H}\mathbf{C}]$ as a natural scale of $\Sigma_{kk}$. Then, bound \eqref{eq:Sigma_kk_bound} shows that relative to this natural scale, the magnitude of non-spectral variations of $\Sigma_{kk}$ is bounded by $N$. 
The values $A,B$ in \eqref{eq:non_spec_interval} actually show that this magnitude is approximately achieved for a typical scenario with large dataset size $N$ and the most contributing term to the trace $\Tr[\mathbf{H}\mathbf{C}]$ having the same order as the full trace: $\lambda_{k_*}C_{k_*k_*}\sim\Tr[\mathbf{H}\mathbf{C}]$. Indeed, in this case $B \sim N\lambda_k^2 C_{kk}\sim N\lambda_k\Tr[\mathbf{H}\mathbf{C}], \;A \lesssim \lambda_k\Tr[\mathbf{H}\mathbf{C}]$ and the ratio $B/A \gtrsim N$.

\begin{proof}[Proof of Proposition \ref{prop:non_spec_ex}.]
Recall SVD decomposition (see section \ref{sec:setting}) of the Jacobian $\bm{\Psi}=\sqrt{N}\mathbf{U}\bm{\Lambda}\mathbf{V}$ with $\mathbf{U}=(\mathbf{u}_k)_{k=1}^d$ being the matrix of left eigenvectors, $\mathbf{V}=(\bm{\phi}_k^T)_{k=1}^N$. Also, in the proof $k$ is always fixed and denote an index for each we want to analyze $\Sigma_{kk}$. Then, spectral component $\Sigma_{kk}$ can be written as
\begin{equation}\label{eq:Sigma_kk}
    \Sigma_{kk}=\lambda_k \left[N\sum\limits_{i=1}^N\phi_{k,i}^2\sum\limits_{k_1,k_2} \phi_{k_1,i}\phi_{k_2,i}\sqrt{\lambda_{k_1}\lambda_{k_2}}C_{k_1k_2}\right] - \lambda_k^2C_{kk}
\end{equation}
From representation \eqref{eq:Sigma_kk} we observe that $\Tr[\mathbf{H}^{-1}\bm\Sigma] = (N-1)\Tr[\mathbf{H}\mathbf{C}]$ by using orthogonality of both columns and rows of $\mathbf{V}=(\bm{\phi}_k^T)_{k=1}^N$. Then, since $\bm{\Sigma}$ is positive semi-definite, we have $\lambda_k^{-1}\Sigma_{kk}<\Tr[\mathbf{H}^{-1}\bm\Sigma]$ and bound \eqref{eq:Sigma_kk_bound} follows. 

The key idea of the proof for the second part of the proposition is that after fixing $\mathbf{H}$ and $\mathbf{C}$, we fixed $\mathbf{U}$ and $\mathbf{\Lambda}$ but $\mathbf{V}$ can be an arbitrary orthogonal matrix. Now we construct two specific examples of $\mathbf{V}$ so that $\Sigma_{kk}$ attains values $A$ and $B$ given by \eqref{eq:non_spec_interval}.

To get the $A$ value, for all $i$ take $\phi_{k,i}=\frac{1}{\sqrt{N}}$. Substituting this into \eqref{eq:Sigma_kk} and using orthogonality $\sum_i  \phi_{k_1,i}\phi_{k_2,i}=\delta_{k_1k_2}$ gives
\begin{equation}
    A=\lambda_k\left[\sum\limits_{k_1k_2}\delta_{k_1k_2}\sqrt{\lambda_{k_1}\lambda_{k_2}}C_{k_1k_2}\right]- \lambda_k^2C_{kk} = \lambda_k(\Tr[\mathbf{H}\mathbf{C}] - \lambda_k C_{kk})
\end{equation}
To get the $B$ value, we set $\mathbf{V}$ to the identity matrix. Then we have
\begin{equation}
     B=\lambda_k \left[N\sum\limits_{i=1}^N\delta_{ki}\sum\limits_{k_1,k_2} \delta_{k_1i}\delta_{k_2i}\sqrt{\lambda_{k_1}\lambda_{k_2}}C_{k_1k_2}\right] - \lambda_k^2C_{kk} = N\lambda_k^2 C_{kk}-\lambda_k^2C_{kk}
\end{equation}
Finally, we demonstrate that $\Sigma_{kk}$ can take any value between $A$ and $B$ by showing that two examples can be continuously deformed from one to another. Indeed, in the first example we specified only single row of $\mathbf{V}$, and the rest of the matrix can be chosen so that $\det \mathbf{V}=1$. As the special orthogonal group $SO(N)$ is path connected, the required continuous deformation is guaranteed to exist.   
\end{proof}

\subsection{SE approximation for uncorrelated losses and features}\label{sec:SE_for_uncorrelated_losses}
Recall expression \eqref{eq:Sigma_kk} for the noise-spectral component $\Sigma_{kk}$. From SVD decomposition of Jacobian $\bm{\Psi}=\sqrt{N}\mathbf{U}\bm{\Lambda}\mathbf{V}$ we get $\psi_k^2(\mathbf{x}_i)\equiv\langle\mathbf{u}_k,\bm{\psi}(\mathbf{x}_i)\rangle^2=N\lambda_k \phi_{k,i}^2$. Using again SVD of the Jacobian and definition $C_{k_1k_2}=\langle\mathbf{u}_{k_1},\mathbf{C}\mathbf{u}_{k_2}\rangle$, we get 
\begin{equation}
\begin{split}
    \sum\limits_{k_1,k_2} \phi_{k_1,i}\phi_{k_2,i}\sqrt{\lambda_{k_1}\lambda_{k_2}}C_{k_1k_2} &= \langle \bm{\psi}(\mathbf{x}_i),\mathbf{C} \bm{\psi}(\mathbf{x}_i)\rangle = \mathbb{E}\Big[|(\mathbf{w}-\mathbf{w}^*)^T\bm{\psi}(\mathbf{x}_i)|^2\Big] \\
    &= \mathbb{E}\Big[|f(\mathbf{w},\mathbf{x}_i)-f^*(\mathbf{x}_i)|^2\Big] \equiv 2L(\mathbf{x}_i)
\end{split}
\end{equation}
Now we can rewrite \eqref{eq:Sigma_kk} as
\begin{equation}
    \Sigma_{kk}=\frac{1}{N}\sum\limits_{i=1}^N \psi_k^2(\mathbf{x}_i) 2L(\mathbf{x}_i) -\lambda_k^2C_{kk}= \mathbb{E}_{\mathbf{x}\sim\mathcal{D}} \Big[\psi_k^2(\mathbf{x}) 2L(\mathbf{x})\Big] -\lambda_k^2C_{kk}
\end{equation}
If $L(\mathbf{x})$ and $\psi_k^2(\mathbf{x})$ are statistically independent w.r.t. $\mathbf{x}\sim\mathcal{D}$, the expectation factorizes into the product of expectations $\mathbb{E}_{\mathbf{x}\sim\mathcal{D}}\big[\psi_k^2(\mathbf{x})\big]=\lambda_k$ and $\mathbb{E}_{\mathbf{x}\sim\mathcal{D}}\big[2L(\mathbf{x})\big] = \Tr[\mathbf{H}\mathbf{C}]$. In this case
\begin{equation}
    \Sigma_{kk}= \mathbb{E}_{\mathbf{x}\sim\mathcal{D}} \Big[\psi_k^2(\mathbf{x}) 2L(\mathbf{x})\Big] -\lambda_k^2C_{kk}= \lambda_k \Tr[\mathbf{H}\mathbf{C}] - \lambda_k^2C_{kk} 
\end{equation}
which is exactly SE approximation with $\tau_1=\tau_2=1$.
\subsection{Dynamics of spectral measures}
In Sec. \ref{sec:se} we mention that SE approximation allows to reconstruct the trajectories of observables $\Tr[\phi(\mathbf{H})\mathbf{C}_t]$ from initial state $\mathbf{C}_0$. The way to do it is, of course, through considering dynamics of spectral measures, which we explicitly write below. Denote, for simplicity, $\rho_{\mathbf{J}_t}\equiv\rho_{\tfrac{1}{2}(\mathbf{J}_t+\mathbf{J}_t^\dagger)}$.
\begin{prop}\label{prop:spectral_measures_dyn}
    Assume SE approximation \eqref{eq:seapprox} holds for all $\mathbf{C}_t$ during SGD dynamics \eqref{eq:second_moments_dyn}. Then spectral measures of all second moment matrices $\rho_{\mathbf{C}_t},\rho_{\mathbf{J}_t},\rho_{\mathbf{V}_t}$ can be written as
    \begin{equation}\label{eq:all_SGD_specmeasures}
        \begin{pmatrix}
        \rho_{\mathbf{C}_{t}}(d\lambda) \\
        \rho_{\mathbf{J}_{t}}(d\lambda) \\
        \rho_{\mathbf{V}_{t}}(d\lambda) 
        \end{pmatrix} = 
        \mathbf{G}_t(\lambda) \begin{pmatrix}
        \rho_{\mathbf{C}_{0}}(d\lambda) \\
        \rho_{\mathbf{J}_{0}}(d\lambda) \\
        \rho_{\mathbf{V}_{0}}(d\lambda) 
        \end{pmatrix}
        + \mathbf{s}_t(\lambda) \rho_{\mathbf{H}}(d\lambda)
    \end{equation}
    where $\mathbf{G}_t(\lambda)$ is $\lambda$ dependent $3\times3$ matrix, and $\mathbf{s}_t(\lambda)$ is $\lambda$ dependent $3$-dimensional vector.
    
    During SGD iterations $\mathbf{G}_t(\lambda)$ and $\mathbf{s}_t(\mathbf{\lambda})$ are transformed as 
    \begin{equation}
        \label{eq:G_matrix_dynamics}
        \mathbf{G}_{t+1}(\lambda) =  \begin{pmatrix}(1-\alpha_t\lambda)^2 -\gamma\tau_2(\alpha_t\lambda)^2 & 2\beta_t(1-\alpha_t\lambda) & \beta_t^2 \\
    -\alpha_t\lambda(1-\alpha_t\lambda)-\gamma\tau_2(\alpha_t\lambda)^2 & 1-2\beta_t\alpha_t\lambda & \beta_t^2 \\
    (\alpha_t\lambda)^2-\gamma\tau_2(\alpha_t\lambda)^2 & -2\beta_t\alpha_t\lambda & \beta_t^2
    \end{pmatrix}\mathbf{G}_t(\lambda)
    \end{equation}
    \begin{equation}
        \label{eq:s_vector_dynamics}
        \begin{split}
        &\mathbf{s}_{t+1}(\lambda) = \begin{pmatrix}(1-\alpha_t\lambda)^2 -\gamma\tau_2(\alpha_t\lambda)^2 & 2\beta_t(1-\alpha_t\lambda) & \beta_t^2 \\
    -\alpha_t\lambda(1-\alpha_t\lambda)-\gamma\tau_2(\alpha_t\lambda)^2 & 1-2\beta_t\alpha_t\lambda & \beta_t^2 \\
    (\alpha_t\lambda)^2-\gamma\tau_2(\alpha_t\lambda)^2 & -2\beta_t\alpha_t\lambda & \beta_t^2
    \end{pmatrix}\mathbf{s}_{t}(\lambda) \\
    &\quad+ \gamma\tau_1\alpha_t^2\int\lambda's_{t,1}(\lambda)\rho_\mathbf{H}(d\lambda)\begin{pmatrix}
        1\\
        1\\
        1
        \end{pmatrix}\\
        &\quad+\gamma\tau_1\alpha_t^2 \int \lambda' \Big[G_{t,11}(\lambda)\rho_{\mathbf{C}_0}(d\lambda')+G_{t,12}(\lambda)\rho_{\mathbf{J}_0}(d\lambda')G_{t,13}(\lambda)\rho_{\mathbf{V}_0}(d\lambda')\Big]\begin{pmatrix}
        1\\
        1\\
        1
        \end{pmatrix}
        \end{split}
    \end{equation}
\end{prop}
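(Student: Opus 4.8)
The plan is to reduce the matrix recursion \eqref{eq:second_moments_dyn} to a scalar recursion for the diagonal triples $(C_{kk,t},J_{kk,t},V_{kk,t})$ at each eigenvalue $\lambda_k$, and then to recognize that the resulting linear system has exactly the ``local-plus-$\rho_{\mathbf H}$'' structure asserted in \eqref{eq:all_SGD_specmeasures}. Since $\rho_{\mathbf{C}_t}=\sum_k C_{kk,t}\delta_{\lambda_k}$, $\rho_{\mathbf{J}_t}=\sum_k J_{kk,t}\delta_{\lambda_k}$ (symmetrized), and $\rho_{\mathbf{V}_t}=\sum_k V_{kk,t}\delta_{\lambda_k}$, it suffices to control these three scalar sequences. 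First I would pass to the eigenbasis of $\mathbf H$ as in \eqref{eq:SM_eigendyn}: under the SE approximation \eqref{eq:seapprox} the off-diagonal entries decouple from the diagonal ones, so the update of the symmetric block $\big(\begin{smallmatrix}C_{kk}&J_{kk}\\J_{kk}&V_{kk}\end{smallmatrix}\big)$ is the conjugation by $P_t(\lambda)=\big(\begin{smallmatrix}1-\alpha_t\lambda&\beta_t\\-\alpha_t\lambda&\beta_t\end{smallmatrix}\big)$ together with the SE noise.

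Expanding $P_t(\lambda)MP_t(\lambda)^T$ for a symmetric $M$ and reading off the coefficients of $C,J,V$ produces a $3\times3$ matrix acting on $(C_{kk},J_{kk},V_{kk})^T$. The $\tau_2$ part of the noise, which per \eqref{eq:atl} and \eqref{eq:seapprox} amounts to subtracting $\gamma\tau_2\alpha_t^2\lambda^2 C_{kk}$ equally from the new $C,J,V$, contributes only to the first column (the coefficient of $C$) in all three rows. Combining the conjugation with this correction yields the homogeneous matrix $\mathbf{T}_t(\lambda)$ displayed in \eqref{eq:G_matrix_dynamics}.

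The decisive observation is that the remaining $\tau_1$ part of the noise is \emph{nonlocal} only through the single scalar $\Tr[\mathbf H\mathbf C_t]=\int\lambda'\rho_{\mathbf C_t}(d\lambda')$: at $\lambda_k$ it injects $\gamma\tau_1\alpha_t^2\lambda_k\Tr[\mathbf H\mathbf C_t]$ equally into $C,J,V$, so summing against $\delta_{\lambda_k}$ this injection equals $\gamma\tau_1\alpha_t^2\Tr[\mathbf H\mathbf C_t]\,\rho_{\mathbf H}(d\lambda)\,(1,1,1)^T$, a measure proportional to $\rho_{\mathbf H}$. I would then prove \eqref{eq:all_SGD_specmeasures} by induction on $t$ (base case $\mathbf G_0=I$, $\mathbf s_0=0$): because $\mathbf T_t(\lambda)$ acts pointwise in $\lambda$, it maps the initial-measure part to $\mathbf T_t(\lambda)\mathbf G_t(\lambda)$, giving $\mathbf G_{t+1}=\mathbf T_t\mathbf G_t$, and maps the $\rho_{\mathbf H}$ part to $\mathbf T_t(\lambda)\mathbf s_t(\lambda)\rho_{\mathbf H}$; the new $\tau_1$ injection adds a further multiple of $\rho_{\mathbf H}$. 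Expanding $\Tr[\mathbf H\mathbf C_t]$ through \eqref{eq:all_SGD_specmeasures} into its $\mathbf G_t$-contribution $\int\lambda'[G_{t,11}\rho_{\mathbf C_0}+G_{t,12}\rho_{\mathbf J_0}+G_{t,13}\rho_{\mathbf V_0}]$ and its $\mathbf s_t$-contribution $\int\lambda' s_{t,1}(\lambda')\rho_{\mathbf H}(d\lambda')$ reproduces exactly the two source terms of \eqref{eq:s_vector_dynamics}.

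The main obstacle is the bookkeeping needed to show that the update never generates measure components outside $\operatorname{span}(\rho_{\mathbf C_0},\rho_{\mathbf J_0},\rho_{\mathbf V_0},\rho_{\mathbf H})$. This hinges on the fact that the $\tau_1$ source carries a factor $\lambda^2$, which factors as $\lambda$ times the weight $\lambda$ that defines $\rho_{\mathbf H}$, so that it remains a multiple of $\rho_{\mathbf H}$; and on keeping the two distinct copies of $\Tr[\mathbf H\mathbf C_t]$ (from the $\mathbf G_t$ part and from the $\mathbf s_t$ part) separate, which is precisely what makes the recursion \eqref{eq:s_vector_dynamics} close on itself. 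The pointwise-in-$\lambda$ nature of $\mathbf T_t$ is what guarantees that $\mathbf G_t(\lambda)$ carries no cross-eigenvalue mixing, so the entire nonlocal history is stored in the single vector $\mathbf s_t(\lambda)$.
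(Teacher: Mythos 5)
Your proposal is correct and takes essentially the same route as the paper's own proof: the paper likewise closes the dynamics on the triple of spectral measures (deriving the recursion by tracing \eqref{eq:second_moments_dyn} against arbitrary test functions $\phi(\mathbf H)$, which is equivalent to your diagonal-component reading in the eigenbasis), uses \eqref{eq:seapprox} to write $\rho_{\bm\Sigma_t}=\tau_1\bigl(\int\lambda'\rho_{\mathbf C_t}(d\lambda')\bigr)\rho_{\mathbf H}-\tau_2\lambda^2\rho_{\mathbf C_t}$, and runs exactly your induction with $\mathbf G_0=\mathbf I$, $\mathbf s_0=0$, splitting the nonlocal scalar $\int\lambda'\rho_{\mathbf C_t}(d\lambda')$ into its $\mathbf G_t$-part against the initial measures and its $s_{t,1}$-part against $\rho_{\mathbf H}$ to obtain \eqref{eq:G_matrix_dynamics} and \eqref{eq:s_vector_dynamics}. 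One incidental point: a literal expansion of the conjugation $\bigl(\begin{smallmatrix}1-\alpha_t\lambda&\beta_t\\-\alpha_t\lambda&\beta_t\end{smallmatrix}\bigr)M\bigl(\begin{smallmatrix}1-\alpha_t\lambda&\beta_t\\-\alpha_t\lambda&\beta_t\end{smallmatrix}\bigr)^T$ gives $\beta_t(1-2\alpha_t\lambda)$ as the middle entry of the second row (sanity check at $\lambda=0$, where $J_{t+1}=\beta_t J_t+\beta_t^2 V_t$), so the entry $1-2\beta_t\alpha_t\lambda$ printed in \eqref{eq:G_matrix_dynamics} appears to be a typo in the paper, which your derivation-from-scratch would silently correct rather than reproduce.
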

\begin{proof}
Let us rewrite dynamics of second moments \eqref{eq:second_moments_dyn} as linear transformation of on $(\mathbf{C}_t,\mathbf{J}_t,\mathbf{V}_t)$ and then take traces $\Tr[\phi(\mathbf{H}) \#]$ of both sides with arbitrary test function $\phi(\lambda)$. Since arbitrariness of $\phi(\lambda)$ in the obtained relation implies the same relation but for spectral measures instead of traces 
\begin{equation}
    \begin{pmatrix}
        \rho_{\mathbf{C}_{t+1}}(d\lambda) \\
        \rho_{\mathbf{J}_{t+1}}(d\lambda) \\
        \rho_{\mathbf{V}_{t+1}}(d\lambda) 
    \end{pmatrix} = 
    \begin{pmatrix}
    (1-\alpha_t\lambda)^2 & 2\beta_t(1-\alpha_t\lambda) & \beta_t^2 \\
    -\alpha_t\lambda(1-\alpha_t\lambda) & 1-2\beta_t\alpha_t\lambda & \beta_t^2 \\
    (\alpha_t\lambda)^2 & -2\beta_t\alpha_t\lambda & \beta_t^2
    \end{pmatrix}
    \begin{pmatrix}
        \rho_{\mathbf{C}_{t}}(d\lambda) \\
        \rho_{\mathbf{J}_{t}}(d\lambda) \\
        \rho_{\mathbf{V}_{t}}(d\lambda)
    \end{pmatrix} + 
    \gamma\alpha_t^2
    \begin{pmatrix}
    \rho_{\bm{\Sigma}_t} \\
    \rho_{\bm{\Sigma}_t} \\
    \rho_{\bm{\Sigma}_t}
    \end{pmatrix}
\end{equation}
Now we use SE approximation \eqref{eq:seapprox} to express $\rho_{\bm{\Sigma}_t}$ through $\rho_{\mathbf{C}_{t}}$ and $\rho_{\mathbf{H}}$. The result is
\begin{equation}\label{eq:spec_measures_recur1}
\begin{split}
    \begin{pmatrix}
        \rho_{\mathbf{C}_{t+1}}(d\lambda) \\
        \rho_{\mathbf{J}_{t+1}}(d\lambda) \\
        \rho_{\mathbf{V}_{t+1}}(d\lambda) 
    \end{pmatrix} =& 
    \begin{pmatrix}
    (1-\alpha_t\lambda)^2 -\gamma\tau_2(\alpha_t\lambda)^2 & 2\beta_t(1-\alpha_t\lambda) & \beta_t^2 \\
    -\alpha_t\lambda(1-\alpha_t\lambda)-\gamma\tau_2(\alpha_t\lambda)^2 & 1-2\beta_t\alpha_t\lambda & \beta_t^2 \\
    (\alpha_t\lambda)^2-\gamma\tau_2(\alpha_t\lambda)^2 & -2\beta_t\alpha_t\lambda & \beta_t^2
    \end{pmatrix}
    \begin{pmatrix}
        \rho_{\mathbf{C}_{t}}(d\lambda) \\
        \rho_{\mathbf{J}_{t}}(d\lambda) \\
        \rho_{\mathbf{V}_{t}}(d\lambda)
    \end{pmatrix} \\
    &+ 
    \gamma\tau_1\alpha_t^2 \int \lambda' \rho_{\mathbf{C}_t}(d\lambda')
    \begin{pmatrix}
    \rho_{\mathbf{H}}(d\lambda) \\
    \rho_{\mathbf{H}}(d\lambda) \\
    \rho_{\mathbf{H}}(d\lambda)
    \end{pmatrix}
\end{split}
\end{equation}
Now we use obtained form of recursion of spectral measures prove representation \eqref{eq:all_SGD_specmeasures} and transition formulas \eqref{eq:G_matrix_dynamics},\eqref{eq:s_vector_dynamics}. We proceed by induction. At $t=0$ representation \eqref{eq:all_SGD_specmeasures} indeed holds with $\mathbf{G}_0(\lambda)=\mathbf{I}$ and $\mathbf{s}_0(\lambda)=0$. Now assume that \eqref{eq:all_SGD_specmeasures} holds for iteration $t$ and substitute it into \eqref{eq:spec_measures_recur1}. The result is   
\begin{equation}\label{eq:spec_measures_recur2}
\begin{split}
    &\begin{pmatrix}
        \rho_{\mathbf{C}_{t+1}}(d\lambda) \\
        \rho_{\mathbf{J}_{t+1}}(d\lambda) \\
        \rho_{\mathbf{V}_{t+1}}(d\lambda) 
    \end{pmatrix} = 
    \begin{pmatrix}
    (1-\alpha_t\lambda)^2 -\gamma\tau_2(\alpha_t\lambda)^2 & 2\beta_t(1-\alpha_t\lambda) & \beta_t^2 \\
    -\alpha_t\lambda(1-\alpha_t\lambda)-\gamma\tau_2(\alpha_t\lambda)^2 & 1-2\beta_t\alpha_t\lambda & \beta_t^2 \\
    (\alpha_t\lambda)^2-\gamma\tau_2(\alpha_t\lambda)^2 & -2\beta_t\alpha_t\lambda & \beta_t^2
    \end{pmatrix}
    \mathbf{G}_t(\lambda)
    \begin{pmatrix}
        \rho_{\mathbf{C}_{0}}(d\lambda) \\
        \rho_{\mathbf{J}_{0}}(d\lambda) \\
        \rho_{\mathbf{V}_{0}}(d\lambda)
    \end{pmatrix} \\
    &\qquad+\begin{pmatrix}
    (1-\alpha_t\lambda)^2 -\gamma\tau_2(\alpha_t\lambda)^2 & 2\beta_t(1-\alpha_t\lambda) & \beta_t^2 \\
    -\alpha_t\lambda(1-\alpha_t\lambda)-\gamma\tau_2(\alpha_t\lambda)^2 & 1-2\beta_t\alpha_t\lambda & \beta_t^2 \\
    (\alpha_t\lambda)^2-\gamma\tau_2(\alpha_t\lambda)^2 & -2\beta_t\alpha_t\lambda & \beta_t^2
    \end{pmatrix} \mathbf{s}_t(\lambda) \rho_{\mathbf{H}}(d\lambda) \\
    &\qquad+ 
    \gamma\tau_1\alpha_t^2 \int \lambda' \Big[G_{t,11}(\lambda)\rho_{\mathbf{C}_0}(d\lambda')+G_{t,12}(\lambda)\rho_{\mathbf{J}_0}(d\lambda')G_{t,13}(\lambda)\rho_{\mathbf{V}_0}(d\lambda')\Big]
    \begin{pmatrix}
    \rho_{\mathbf{H}}(d\lambda) \\
    \rho_{\mathbf{H}}(d\lambda) \\
    \rho_{\mathbf{H}}(d\lambda)
    \end{pmatrix}\\
    &\qquad+ 
    \gamma\tau_1\alpha_t^2 \int \lambda' s_{t,1}(\lambda')\rho_{\mathbf{H}}(d\lambda')
    \begin{pmatrix}
    \rho_{\mathbf{H}}(d\lambda) \\
    \rho_{\mathbf{H}}(d\lambda) \\
    \rho_{\mathbf{H}}(d\lambda)
    \end{pmatrix}
\end{split}
\end{equation}
Now we simply observe that right-hand side of \eqref{eq:spec_measures_recur2} has exactly the form \eqref{eq:all_SGD_specmeasures} with $\mathbf{G}_{t+1}(\lambda),\mathbf{s}_{t+1}(\lambda)$ expressed through  $\mathbf{G}_t(\lambda),\mathbf{s}_t(\lambda)$ according to \eqref{eq:G_matrix_dynamics},\eqref{eq:s_vector_dynamics}.

\end{proof}

Note that updates of matrix $\mathbf{G}_t(\lambda)$ and vector $\mathbf{s}_t(\lambda)$ in representation \eqref{eq:all_SGD_specmeasures} depend only on learning algorithm parameters $\alpha_t,\beta_t$,  combinations $\gamma\tau_1,\gamma\tau_2$, and non-zero eigenvalues of Hessian $\mathbf{H}$. Also, we stress that although in this proposition we considered all possible initial conditions with $\mathbf{J}_0\ne0,\mathbf{V}\ne0$, in all other parts of the paper we focus on the case $\mathbf{J}_0=\mathbf{V}=0$. 

\subsection{Theorem 1}
In this section we provide full version of theorem 1 and its proof, and also elaborate on how spectral measures $\rho_{\mathbf{C}_t}$ change during training. We keep the notations from previous sections, in particular the model Jacobian is $\bm{\Psi}$, the Hessian $\mathbf{H}=\frac{1}{N}\bm{\Psi}\bm{\Psi}^T$, the kernel matrix $\mathbf{K}=\frac{1}{N}\bm{\Psi}^T\bm{\Psi}$, and output space second moments $\mathbf{C}^{\text{out}}=\bm{\Psi}^T\mathbf{C}\bm{\Psi}$. However, for simplicity, in this section we consider only finite training datasets $N<\infty$ and features dimension $d\geq N$ (including $d=\infty$). The extension of the results to all values of $d,N$ seems to be rather technical and we expect it to lead to effectively the same conclusions.   

We start with an analogue of theorem \ref{theor:SE_family} but for the isolated noise term \eqref{eq:stochastic_noise_term_R_class} containing  the tensor $\widehat{R}$. It turns out that the SE approximation is closely related to what we call the \textit{SE family} of tensors $\widehat{R}$:
\begin{equation}\label{eq:SE_family}
    R_{i_1i_2i_3i_4} = \tau_1 \delta_{i_1i_2}\delta_{i_3i_4} - \frac{\tau_2-1}{2}(\delta_{i_1i_3}\delta_{i_2i_4}+\delta_{i_1i_4}\delta_{i_2i_3})
\end{equation}
It is easy to check that the respective $\bm{\Sigma}(\widehat{R})$  satisfies not only \eqref{eq:seapprox}, but also a stronger version
\begin{equation}
    \bm{\Sigma}_{\widehat{R}}(\mathbf{C}) = \tau_1 \mathbf{H} \Tr[\mathbf{H}\mathbf{C}] - (\tau_2)\mathbf{H}\mathbf{C}\mathbf{H}
\end{equation}
Let's focus only on the $\widehat{R}$-dependent term in \eqref{eq:stochastic_noise_term_R_class}, and for convenience rewrite it in terms $\bm{\Psi}$ as
\begin{equation}
    \label{eq:mixing_term_R_class}
    \bm{\Sigma}_{\widehat{R}}'(\mathbf{C}) = \frac{1}{N^2}\sum\limits_{i_1,i_2,i_3,i_4=1}^N R_{i_1i_2i_3i_4}\langle\bm{\Psi}_{i_3},\mathbf{C}\bm{\Psi}_{i_4}\rangle \bm{\Psi}_{i_1}\otimes\bm{\Psi}_{i_2},
\end{equation}
where $\bm{\Psi}_{i}=\bm{\psi}(\mathbf{x}_i)$ denotes the i'th column of $\bm{\Psi}$. Then we have
\begin{prop}\label{prop:SE_family}
Consider a fixed tensor $\widehat{R}$, and
let operators $\bm{\Sigma}'$, $\mathbf{C}$ be related to each over as $\bm{\Sigma}'=\bm{\Sigma}_{\widehat{R}}'(\mathbf{C})$. Then, the following 3 statements are equivalent:
\begin{enumerate}
    \item For any Jacobian $\bm{\Psi}$ and initial state $\mathbf{C}$, spectral measure $\rho_{\bm{\Sigma}'}$ depends on $\mathbf{C}$ only through its spectral measure $\rho_{\mathbf{C}}$, and on $\bm{\Psi}$ only through non-zero eigenvalues $\lambda_k$ of Hessian $\mathbf{H}$. 
    \item Tensor $\hat{R}$ belongs to SE family \eqref{eq:SE_family}.
    \item For any Jacobian $\bm{\Psi}$ and initial state $\mathbf{C}$, the operator $\bm{\Sigma}'$ does not change by the replacement of $\bm{\Psi}$ with $\widetilde{\bm{\Psi}}=\bm{\Psi}\mathbf{U}^T$ in \eqref{eq:mixing_term_R_class} for any orthogonal matrix $\mathbf{U}$. 
\end{enumerate}
\end{prop}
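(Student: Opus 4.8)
The plan is to establish the two equivalences $(2)\Leftrightarrow(3)$ and $(1)\Leftrightarrow(2)$ separately, which together yield the equivalence of all three statements. The common engine is a change-of-variables identity. Writing $\widetilde{\bm\Psi}=\bm\Psi\mathbf U^T$ and substituting into \eqref{eq:mixing_term_R_class}, the orthogonal matrix $\mathbf U$ can be absorbed into the four index contractions, giving $\bm{\Sigma}'_{\widehat{R}}(\bm\Psi\mathbf U^T,\mathbf C)=\bm{\Sigma}'_{\widehat{R}^{\mathbf U}}(\bm\Psi,\mathbf C)$, where $R^{\mathbf U}_{i_1i_2i_3i_4}=\sum_{j_1j_2j_3j_4}U_{j_1i_1}U_{j_2i_2}U_{j_3i_3}U_{j_4i_4}R_{j_1j_2j_3j_4}$ is $\widehat{R}$ rotated on every index. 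I would also record an injectivity fact: since $d\ge N$, one may take the columns of $\bm\Psi$ to be orthonormal ($\bm\Psi_i=\mathbf e_i$), so that $\bm{\Sigma}'_{\widehat{R}}(\bm\Psi,\mathbf C)_{ab}=\tfrac1{N^2}\sum_{i_3i_4}R_{abi_3i_4}C_{i_3i_4}$; varying $\mathbf C$ over all symmetric matrices recovers $\widehat{R}$ (within the assumed symmetry class), so equality of the maps $\bm{\Sigma}'_{\widehat{R}_1}=\bm{\Sigma}'_{\widehat{R}_2}$ forces $\widehat{R}_1=\widehat{R}_2$.

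For the easy directions I would dispatch $(2)\Rightarrow(3)$ and $(2)\Rightarrow(1)$ together. If $\widehat{R}$ has the form \eqref{eq:SE_family}, a direct contraction collapses \eqref{eq:mixing_term_R_class} to $\bm{\Sigma}'_{\widehat{R}}(\mathbf C)=\tau_1\mathbf H\,\Tr[\mathbf H\mathbf C]-(\tau_2-1)\mathbf H\mathbf C\mathbf H$, using $\tfrac1N\sum_i\bm\Psi_i\otimes\bm\Psi_i=\mathbf H$ and $\tfrac1{N^2}\sum_{i_1i_2}\langle\bm\Psi_{i_1},\mathbf C\bm\Psi_{i_2}\rangle\,\bm\Psi_{i_1}\otimes\bm\Psi_{i_2}=\mathbf H\mathbf C\mathbf H$. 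This expression depends on $\bm\Psi$ only through $\mathbf H$, which is invariant under $\bm\Psi\mapsto\bm\Psi\mathbf U^T$; this is $(3)$. Taking diagonal components in the eigenbasis of $\mathbf H$ gives $(\bm{\Sigma}')_{kk}=\tau_1\lambda_k\Tr[\mathbf H\mathbf C]-(\tau_2-1)\lambda_k^2C_{kk}$, which depends on $\mathbf C$ only through $\rho_{\mathbf C}$ (via $\Tr[\mathbf H\mathbf C]=\int\lambda'\rho_{\mathbf C}(d\lambda')$ and the value $C_{kk}$) and on $\bm\Psi$ only through the $\lambda_k$, so that $\rho_{\bm{\Sigma}'}=\sum_k(\bm{\Sigma}')_{kk}\delta_{\lambda_k}$ is spectrally expressible; this is $(1)$.

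For $(3)\Rightarrow(2)$ I would combine the two preliminary facts: statement $(3)$ reads $\bm{\Sigma}'_{\widehat{R}}(\bm\Psi\mathbf U^T,\mathbf C)=\bm{\Sigma}'_{\widehat{R}}(\bm\Psi,\mathbf C)$ for all $\bm\Psi,\mathbf C,\mathbf U$, which by the change-of-variables identity becomes $\bm{\Sigma}'_{\widehat{R}^{\mathbf U}}=\bm{\Sigma}'_{\widehat{R}}$ as maps, and then by injectivity $R^{\mathbf U}=R$ for every orthogonal $\mathbf U$. Thus $\widehat{R}$ is an isotropic rank-$4$ tensor of $O(N)$, and such tensors are spanned by $\delta_{i_1i_2}\delta_{i_3i_4}$, $\delta_{i_1i_3}\delta_{i_2i_4}$ and $\delta_{i_1i_4}\delta_{i_2i_3}$ (the sign-sensitive Levi--Civita tensor is excluded because $(3)$ ranges over all orthogonal $\mathbf U$, including reflections). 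Imposing the symmetries $R_{i_1i_2i_3i_4}=R_{i_2i_1i_3i_4}=R_{i_1i_2i_4i_3}$ equates the coefficients of the last two products, leaving exactly the two-parameter family \eqref{eq:SE_family}, which is $(2)$.

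The main obstacle is the remaining direction $(1)\Rightarrow(2)$, whose hypothesis is genuinely weaker: $(1)$ constrains only the diagonal components $(\bm{\Sigma}')_{kk}$ (equivalently the spectral measure $\rho_{\bm{\Sigma}'}$), not the full operator, so injectivity cannot be invoked directly. My approach is to write, in the eigenbasis and with right-singular matrix $\mathbf V$ (an arbitrary orthogonal matrix), $(\bm{\Sigma}')_{kk}=\lambda_k\sum_{k_3k_4}\sqrt{\lambda_{k_3}\lambda_{k_4}}\,C_{k_3k_4}\,R^{\mathbf V}_{kkk_3k_4}$. Since $(1)$ must hold for all $\bm\Psi,\mathbf C$ with fixed $\{\lambda_k\}$ and fixed diagonal $\{C_{jj}\}$, I would extract two pointwise conditions, valid for every orthogonal $\mathbf V$: independence of the off-diagonal entries of $\mathbf C$ forces $R^{\mathbf V}_{kkk_3k_4}=0$ whenever $k_3\ne k_4$, while independence of $\mathbf V$ constrains the surviving diagonal contractions. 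The crux is then to show that ``$R^{\mathbf V}_{aabc}=0$ for $b\ne c$ in every orthonormal basis'', together with this $\mathbf V$-independence, forces $\widehat{R}$ to be isotropic and hence (by the same symmetry reduction) of SE form. I expect this to require either decomposing the symmetry class into $O(N)$-irreducibles and checking that every nontrivial component violates the vanishing condition, or a hands-on argument feeding a family of coordinate-plane rotations and eigenvalue/diagonal configurations into the two conditions to eliminate the non-isotropic part of $\widehat{R}$ piece by piece; assembling this rigidity cleanly is the hardest step.
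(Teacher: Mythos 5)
Three of your four implications are sound and coincide with the paper's proof: the change-of-variables identity $\bm{\Sigma}'_{\widehat{R}}(\bm\Psi\mathbf{U}^T,\mathbf{C})=\bm{\Sigma}'_{\widehat{R}^{\mathbf U}}(\bm\Psi,\mathbf{C})$, the injectivity of $\widehat{R}\mapsto\bm{\Sigma}'_{\widehat{R}}$ on the symmetry class (the paper realizes it with a full-rank $\bm\Psi$ and arbitrary positive-definite $\mathbf{C}^{\text{out}}$ rather than orthonormal columns, but the content is identical), the collapse of \eqref{eq:mixing_term_R_class} to $\tau_1\mathbf{H}\Tr[\mathbf{H}\mathbf{C}]-(\tau_2-1)\mathbf{H}\mathbf{C}\mathbf{H}$ for $(2)\Rightarrow(3)$ and $(2)\Rightarrow(1)$, and the Weyl classification of $\mathrm{O}(N)$-invariant rank-4 tensors for $(3)\Rightarrow(2)$. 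Your exclusion of the Levi--Civita term via reflections is in fact tidier than the paper's case analysis over $N$ (rank for $N>4$, symmetries for $N=4,2$, parity for $N=3$).

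The genuine gap is $(1)\Rightarrow(2)$, which you leave as an unassembled ``rigidity'' program and correctly identify as the hardest step. The paper never attempts the direct classification you sketch (tensors whose diagonal contractions $R^{\mathbf V}_{kkk_3k_4}$ vanish off-diagonally in every basis, with $\mathbf V$-independent diagonal part); that route would also have to fight the eigenvalue weights and degeneracy issues you gloss over. Instead, the paper closes this direction by a mechanism your proposal is missing: test hypothesis (1) against the rotations themselves. Since $\widetilde{\bm\Psi}=\bm\Psi\mathbf U^T$ preserves both the eigenvalues $\lambda_k$ and $\mathbf C$, statement (1) already forces $\rho_{\widetilde{\bm\Sigma}'}=\rho_{\bm\Sigma'}$, and pairing with an arbitrary test function $\phi$ and passing to the kernel side gives the trace identity
\begin{equation*}
0=\int\phi(\lambda)\bigl(\rho_{\widetilde{\bm\Sigma}'}(d\lambda)-\rho_{\bm\Sigma'}(d\lambda)\bigr)=\Tr\bigl[\phi(\mathbf K)\mathbf K\,(\delta\widehat R\,\mathbf C^{\text{out}})\bigr],
\end{equation*}
where $\delta\widehat R=\widehat R^{\mathbf U}-\widehat R$, $\mathbf K=\tfrac1N\bm\Psi^T\bm\Psi$, and $(\delta\widehat R\,\mathbf C^{\text{out}})_{i_1i_2}=\sum_{i_3,i_4}\delta R_{i_1i_2i_3i_4}C^{\text{out}}_{i_3i_4}$ is symmetric. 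The decisive point is that $\mathbf K$ and $\mathbf C^{\text{out}}$ can be varied independently: fixing $\mathbf C^{\text{out}}=\mathbf C^{\text{out}}_*$ while letting $\mathbf K$ and $\phi$ range (so that $\phi(\mathbf K)\mathbf K$ sweeps out all symmetric matrices) forces $\delta\widehat R\,\mathbf C^{\text{out}}_*=0$; since $\mathbf C^{\text{out}}_*$ was arbitrary, $\delta\widehat R=0$ for every $\mathbf U\in\mathrm O(N)$, and one concludes exactly as in your $(3)\Rightarrow(2)$ argument. In other words, the arbitrariness of the feature geometry at fixed output-space covariance upgrades the purely diagonal (spectral-measure) information of (1) to full operator information, so no new tensor-rigidity lemma is needed. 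As written, your proposal establishes $(2)\Leftrightarrow(3)$ and $(2)\Rightarrow(1)$ but not $(1)\Rightarrow(2)$, so the three-way equivalence remains unproved.
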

\begin{proof}
\textbf{From (2) to (3).} Substituting SE form \eqref{eq:SE_family} of tensor $\widehat{R}$ into \eqref{eq:mixing_term_R_class} we get 
\begin{equation}
    \bm{\Sigma}' = \tau_1 \mathbf{H}\Tr[\mathbf{H}\mathbf{C}] + \tau_2 \mathbf{H}\mathbf{C}\mathbf{H}
\end{equation}
Then statement \textbf{(3)} follows from invariance of Hessian under orthogonal transformations of the Jacobian
\begin{equation}
    \widetilde{\mathbf{H}}=\frac{1}{N}\widetilde{\bm{\Psi}}\widetilde{\bm{\Psi}}^T = \frac{1}{N}\bm{\Psi}\mathbf{U}^T\mathbf{U}\bm{\Psi}^T = \mathbf{H}
\end{equation}

\textbf{From (3) to (2).} Denote $\widetilde{\bm{\Sigma}}'$ the result of \eqref{eq:mixing_term_R_class} but rotated Jacobian $\widetilde{\bm{\Psi}}=\bm{\Psi}\mathbf{U}^T$ in the right-hand side. It is convenient to represent $\widetilde{\bm{\Sigma}}'$ as
\begin{equation}\label{eq:sigma_rotated}
    \widetilde{\bm{\Sigma}}' = \frac{1}{N^2}\sum\limits_{i_1,i_2,i_3,i_4=1}^N  \widetilde{R}_{i_1i_2i_3i_4} C^{\text{out}}_{i_3i_4} \bm{\Psi}_{i_1}\otimes\bm{\Psi}_{i_2}
\end{equation}
where $\widehat{\widetilde{R}}$ is the rotated tensor $\widehat{R}$ with coordinates $\widetilde{R}_{i_1i_2i_3i_4} = U_{j_1i_1}U_{j_2i_2}U_{j_3i_3}U_{j_4i_4}R_{j_1j_2j_3j_4}$. As statement \textbf{(3)} implies $\widetilde{\bm{\Sigma}}'=\bm{\Sigma}'$, we get
\begin{equation}\label{eq:sigma_rotation_change}
    0 = \widetilde{\bm{\Sigma}}'-\bm{\Sigma}' = \frac{1}{N^2}\sum\limits_{i_1,i_2,i_3,i_4=1}^N \delta R_{i_1i_2i_3i_4} C^{\text{out}}_{i_3i_4} \bm{\Psi}_{i_1}\otimes\bm{\Psi}_{i_2}
\end{equation}
where the tensor $\delta\widehat{R}=\widehat{\widetilde{R}}-\widehat{R}$ is the difference between original and "rotated" tensor $\widehat{R}$. Note that $\widehat{R}$, $\widehat{\widetilde{R}}$ and therefore $\delta\widehat{R}=\widehat{\widetilde{R}}-\widehat{R}$ are symmetric w.r.t. permutation of the first two and the last two indices. Taking Jacobian with the full rank $\operatorname{rank}(\bm{\Psi})=N$ and exploiting permutation symmetry $i_1\leftrightarrow i_2$ of $\delta \widehat{R}$ we notice that \eqref{eq:sigma_rotation_change} implies
\begin{equation}
    \sum\limits_{i_3,i_4=1}^N \delta R_{i_1i_2i_3i_4} C^{\text{out}}_{i_3i_4} = 0
\end{equation}
Next, we observe that for full rank $\bm{\Psi}$ we can choose $\mathbf{C}$ such that $\mathbf{C}^{\text{out}}$ is an arbitrary positive-definite matrix. Then, permutation symmetry $i_3\leftrightarrow i_4$ and arbitrariness of $\mathbf{C}^{\text{out}}$ implies $\delta \widehat{R}=0$. As $\mathbf{U}$ was arbitrary, we see that 
\begin{equation}
    U_{j_11_1}U_{j_2i_2}U_{j_3i_3}U_{j_4i_4}R_{j_1j_2j_3j_4} = R_{i_1i_2i_3i_4},\quad \forall \mathbf{U}\in \mathrm{O}(N),
\end{equation}
where $\mathrm{O}(N)$ is the group of orthogonal matrices of size $N$. According  to Weyl \cite{Weyl1939TheCG} (see \cite{jeffreys_1973} for compact reference), the equality above implies that $\widehat{R}$ is isotropic - expressed as a sum of products of dirac deltas $\delta_{ii'}$ and possibly one anti-symmetric symbol $\epsilon_{j_1j_2...j_N}$. In our case we can not have $\epsilon$ tensor: for $N>4$ it has higher rank $N>\operatorname{rank}(\hat{R})$; for $N=4$ it does not satisfy symmetries of $\widehat{R}$; for $N=3$ it cannot be present because $3$ is odd; for $N=2$ all three possible combinations $\delta_{i_1i_2}\epsilon_{i_3i_4}, \delta_{i_3i_4}\epsilon_{i_1i_2}, \delta_{i_1i_4}\epsilon_{i_2i_3}$ again do not satisfy permutation symmetries $i_1\leftrightarrow i_2$ and $i_3 \leftrightarrow i_4$. Then we left with 3 possible combinations of Dirac deltas 
\begin{equation}
    \mathbf{R}_{i_1i_2i_3i_4} = c_1 \delta_{i_1i_2}\delta_{i_3i_4} + c_2 \delta_{i_1i_3}\delta_{i_2i_4} + c_3 \delta_{i_1i_4}\delta_{i_2i_3}
\end{equation}
Again, from symmetry considerations we get $c_2=c_3$, and finally recover \eqref{eq:SE_family} by setting $c_1=\tau_1$ and $c_2=c_3=\frac{1}{2}\tau_2$.

\textbf{From (2) to (1)}. We take arbitrary test function $\phi(\lambda)$ and calculate its average with spectral measure $\rho_{\bm{\Sigma}'}$
\begin{equation}
    \begin{split}
    \int\phi(\lambda)\rho_{\bm{\Sigma}'}(d\lambda) &= \Tr[\phi(\mathbf{H})(\tau_1\mathbf{H}\Tr[\mathbf{H}\mathbf{C}]+\tau_2\mathbf{H}\mathbf{C}\mathbf{H})] \\
    &= \tau_2 \Tr[\phi(\mathbf{H})\mathbf{H}^2\mathbf{C}] + \tau_1 \Tr[\mathbf{C}\mathbf{H}]\Tr[\phi(\mathbf{H})\mathbf{H}]\\
    &=\int \phi(\lambda) \tau_2 \lambda^2 \rho_{\mathbf{C}}(d\lambda) + \tau_1 \Tr[\phi(\mathbf{H})\mathbf{H}]\int \lambda \rho_{\mathbf{C}}(d\lambda) 
    \end{split}
\end{equation}
As $\phi(\lambda)$ is arbitrary, we obtain
\begin{equation}
    \rho_{\bm{\Sigma}'}(d\lambda) = \tau_2 \lambda^2 \rho_{\mathbf{C}}(d\lambda) + \Big(\tau_1\int \lambda'\rho_{\mathbf{C}}(d\lambda')\Big)\rho_{\mathbf{H}}(d\lambda)
\end{equation}
As the right-hand side depends only on the initial spectral measure $\rho_{\mathbf{C}}(d\lambda)$ and the Hessian spectral measure $\rho_{\mathbf{H}}(d\lambda)$, statement \textbf{(1)} follows from the fact that $\rho_{\mathbf{H}}(d\lambda)$ depends only on non-zero eigenvalues of $\mathbf{H}$.

\textbf{From (1) to (2).} We take arbitrary $\mathbf{U}\in\mathrm{O}(N)$ and follow the notation of the \textbf{From (3) to (2)} proof. As we already noted, original and rotated Jacobians have the same hessian $\widetilde{\mathbf{H}}=\mathbf{H}$, and therefore eigenvalues $\lambda_k$. Then, as we do not change $\mathbf{C}$ with the rotation of Jacobian, statement \textbf{(3)} implies that spectral measure of $\bm{\Sigma}'$ and $\widetilde{\bm{\Sigma}}'$ are the same: $\rho_{\bm{\Sigma}'}=\rho_{\widetilde{\bm{\Sigma}'}}$. For an arbitrary test function $\phi(\lambda)$ we have 
\begin{equation}\label{eq:mix_term_specmeasure_diff}
    \begin{split}
        0&=\int \phi(\lambda)(\rho_{\widetilde{\bm{\Sigma}}'}(d\lambda)-\rho_{\bm{\Sigma}'}(d\lambda)) \\
        &= \frac{1}{N^2}\Tr[\phi(\mathbf{H})\sum_{i_1i_2}(\delta\widehat{R}\mathbf{C}^{\text{out}})_{i_1i_2}\bm{\Psi}_{i_1}\otimes\bm{\Psi}_{i_2}]\\
        &=\frac{1}{N^2}\sum_{i_1i_2}\bm{\Psi}_{i_1}^T\phi(\mathbf{H})\bm{\Psi}_{i_2} (\delta\widehat{R}\mathbf{C}^{\text{out}})_{i_1i_2} \\
        &= \Tr\Big[\phi(\mathbf{K})\mathbf{K}(\delta\widehat{R}\mathbf{C}^{\text{out}})\Big] = 0
    \end{split}
\end{equation}
Let us now fix some arbitrary output space matrix $\mathbf{C}^{\text{out}}_*$. As $\bm{\Psi}$ and $\mathbf{C}$ were arbitrary, we can choose them so that we get arbitrary $\mathbf{K}$ but $\mathbf{C}^{\text{out}}=\mathbf{C}^{\text{out}}_*$. It means that in the last line in \eqref{eq:mix_term_specmeasure_diff} we can consider $\mathbf{C}^{\text{out}}$ to be fixed, but $\mathbf{K},\phi$ to be arbitrary. Then \eqref{eq:mix_term_specmeasure_diff} implies that $\delta\widehat{R}\mathbf{C}^{\text{out}}=0$ and we can repeat respective steps of \textbf{From (3) to (2)} proof to get the statement \textbf{(2)}.
\end{proof}
With proposition \ref{prop:SE_family} as a basis, we are ready to state and prove the full version of theorem \ref{theor:SE_family}. 

\begin{ther}\label{theor:SE_family_full}
Consider SGD dynamics \eqref{eq:second_moments_dyn} with noise term \eqref{eq:stochastic_noise_term_R_class}, initial condition $\mathbf{v}_0=0$ and $\gamma,\alpha_0>0$. If tensor $\widehat{R}$ is fixed, the following statements are equivalent: 
\begin{enumerate}
    \item For any Jacobian $\bm{\Psi}$ and initial state $\mathbf{C}_0$, spectral measures $\rho_{\mathbf{C}_t}$ occurring during SGD are uniquely determined by the initial spectral measure $\rho_{\mathbf{C}_0}$, the eigenvalues $\lambda_k$ of Hessian $\mathbf{H}$, and SGD parameters $\{\alpha_t,\beta_t\}_{t=0}^\infty$.
    \item Tensor $\widehat{R}$ is given by $R_{i_1i_2i_3i_4} = \tau_1 \delta_{i_1i_2}\delta_{i_3i_4} - \tfrac{\tau_2-1}{2}(\delta_{i_1i_3}\delta_{i_2i_4}+\delta_{i_1i_4}\delta_{i_2i_3})$ for some $\tau_1,\tau_2$, and Eq. \eqref{eq:seapprox} holds.
    \item For any Jacobian $\bm{\Psi}$ and initial state $\mathbf{C}_0$, the trajectory $\{\mathbf{C}_t\}_{t=0}^\infty$ is invariant under transformations $\widetilde{\bm{\Psi}}=\bm{\Psi}\mathbf{U}^T$ of the feature matrix $\bm{\Psi}$ by any $\mathbf{U}\in\mathrm{O}(N)$.
\end{enumerate}
\end{ther}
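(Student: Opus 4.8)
The plan is to lift the single-step result of Proposition~\ref{prop:SE_family} to the whole SGD trajectory, exploiting the fact that statement \textbf{(2)} of the theorem is \emph{verbatim} the middle statement of that proposition (with the shift $\tfrac{\tau_2-1}{2}$ absorbing the $-\mathbf{H}\mathbf{C}\mathbf{H}$ term of \eqref{eq:stochastic_noise_term_R_class}). I would therefore establish the two equivalences \textbf{(1)}$\Leftrightarrow$\textbf{(2)} and \textbf{(2)}$\Leftrightarrow$\textbf{(3)} by proving the two forward implications out of \textbf{(2)} directly, and reducing each converse back to the corresponding single-step statement of Proposition~\ref{prop:SE_family} via the first SGD step.

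For the forward directions, assume $\widehat{R}$ lies in the SE family \eqref{eq:SE_family}. Proposition~\ref{prop:SE_family}, implication (2)$\Rightarrow$(1), gives that $\rho_{\bm{\Sigma}'_{\widehat{R}}(\mathbf{C})}$ is determined by $\rho_{\mathbf{C}}$ and the eigenvalues $\lambda_k$, and the same computation yields Eq.~\eqref{eq:seapprox}. Proposition~\ref{prop:spectral_measures_dyn} then supplies a closed recursion \eqref{eq:G_matrix_dynamics}--\eqref{eq:s_vector_dynamics} for the triple $(\rho_{\mathbf{C}_t},\rho_{\mathbf{J}_t},\rho_{\mathbf{V}_t})$ driven only by $\lambda_k$, $\alpha_t,\beta_t$ and the combinations $\gamma\tau_1,\gamma\tau_2$; since $\mathbf{v}_0=0$ forces $\rho_{\mathbf{J}_0}=\rho_{\mathbf{V}_0}=0$, the sequence $\rho_{\mathbf{C}_t}$ is uniquely fixed by $\rho_{\mathbf{C}_0}$ and $\lambda_k$, which is statement \textbf{(1)}. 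For \textbf{(2)}$\Rightarrow$\textbf{(3)}, Proposition~\ref{prop:SE_family} (implication (2)$\Rightarrow$(3)) makes $\bm{\Sigma}'_{\widehat{R}}$ invariant under $\bm{\Psi}\mapsto\bm{\Psi}\mathbf{U}^T$; since $\widetilde{\mathbf{H}}=\tfrac{1}{N}\bm{\Psi}\mathbf{U}^T\mathbf{U}\bm{\Psi}^T=\mathbf{H}$, the term $\mathbf{H}\mathbf{C}\mathbf{H}$, the full noise $\bm{\Sigma}=\bm{\Sigma}'-\mathbf{H}\mathbf{C}\mathbf{H}$, and the ``main'' block of the update \eqref{eq:second_moments_dyn} are all invariant. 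As $\mathbf{M}_0$ does not depend on $\bm{\Psi}$ (with $\mathbf{J}_0=\mathbf{V}_0=0$), a straightforward induction on $t$ propagates invariance to the whole trajectory $\{\mathbf{M}_t\}$, hence to $\{\mathbf{C}_t\}$, giving \textbf{(3)}.

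For the converse directions \textbf{(1)}$\Rightarrow$\textbf{(2)} and \textbf{(3)}$\Rightarrow$\textbf{(2)}, I would isolate the first step. With $\mathbf{v}_0=0$ one has $\mathbf{C}_1=(\mathbf{I}-\alpha_0\mathbf{H})\mathbf{C}_0(\mathbf{I}-\alpha_0\mathbf{H})+\gamma\alpha_0^2\big(\bm{\Sigma}'_{\widehat{R}}(\mathbf{C}_0)-\mathbf{H}\mathbf{C}_0\mathbf{H}\big)$. Since $\mathbf{I}-\alpha_0\mathbf{H}$ and $\mathbf{H}$ commute with every $\phi(\mathbf{H})$, both the main term and $\mathbf{H}\mathbf{C}_0\mathbf{H}$ contribute spectral measures depending only on $\lambda_k$ and $\rho_{\mathbf{C}_0}$ and are invariant under $\bm{\Psi}$-rotations; and since $\gamma\alpha_0^2>0$ this prefactor can be divided out. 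Hence spectral determinism of $\rho_{\mathbf{C}_1}$ (statement \textbf{(1)} at $t=1$) is equivalent to spectral determinism of $\rho_{\bm{\Sigma}'_{\widehat{R}}(\mathbf{C}_0)}$, i.e. statement (1) of Proposition~\ref{prop:SE_family}, while $\bm{\Psi}$-invariance of $\mathbf{C}_1$ (statement \textbf{(3)} at $t=1$) is equivalent to statement (3) there. Applying the proposition's implications (1)$\Rightarrow$(2) and (3)$\Rightarrow$(2) yields that $\widehat{R}$ is in the SE family, and the accompanying computation of $\rho_{\bm{\Sigma}'}$ shows Eq.~\eqref{eq:seapprox} holds, completing \textbf{(2)}.

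I expect the main obstacle to be the bookkeeping in these reductions rather than any new hard mathematics, since the genuinely nontrivial ingredient -- Weyl's classification of orthogonally invariant tensors -- is already packaged inside Proposition~\ref{prop:SE_family}. The delicate points are: matching the ``for any $\bm{\Psi},\mathbf{C}_0$'' quantifiers between the trajectory statements and the single-step proposition; verifying that $\gamma,\alpha_0>0$ genuinely lets one cancel the noise prefactor so that the noise measure is cleanly isolated in the first step and not confounded with the deterministic part; and checking that the inductive step in \textbf{(2)}$\Rightarrow$\textbf{(3)} carries along the auxiliary moments $\mathbf{J}_t,\mathbf{V}_t$ (not just $\mathbf{C}_t$), so that invariance of the full block matrix $\mathbf{M}_t$ -- and only then of $\mathbf{C}_t$ -- propagates correctly.
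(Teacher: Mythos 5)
Your proposal is correct and follows essentially the same route as the paper's own proof: both converse directions \textbf{(1)}$\Rightarrow$\textbf{(2)} and \textbf{(3)}$\Rightarrow$\textbf{(2)} are reduced to the single-step Proposition~\ref{prop:SE_family} by isolating the first SGD iteration (using $\mathbf{v}_0=0$ so that $\mathbf{J}_0=\mathbf{V}_0=0$, and $\gamma\alpha_0^2>0$ to cleanly separate the noise measure from the deterministic part), while \textbf{(2)}$\Rightarrow$\textbf{(1)} is obtained via the closed spectral-measure recursion of Proposition~\ref{prop:spectral_measures_dyn} and \textbf{(2)}$\Rightarrow$\textbf{(3)} by induction that propagates invariance of the full triple $(\mathbf{C}_t,\mathbf{J}_t,\mathbf{V}_t)$, exactly as in the paper. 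The delicate points you flag (quantifier matching, cancellation of the prefactor, and carrying the auxiliary moments through the induction) are precisely the ones the paper's argument handles, so no gap remains.
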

\begin{proof}
\textbf{From (1) to (2).} Let us consider the first iteration of SGD. The initial condition $\mathbf{v}_0=0$ implies $\mathbf{J}_0=\mathbf{V}_0=0$. Also, denote for simplicity $\bm{\Sigma}_t=\bm{\Sigma}_{\widetilde{R}}(\mathbf{C}_t)$. Then, according to \eqref{eq:second_moments_dyn} we have
\begin{equation}
    \mathbf{C}_1 = (\mathbf{I}-\alpha_0 \mathbf{H}) \mathbf{C}_0 (\mathbf{I}-\alpha_0\mathbf{H}) + \gamma \alpha_0^2 \bm{\Sigma}_0
\end{equation}
Respective spectral measure is
\begin{equation}
    \rho_{\mathbf{C}_1}=(1-\alpha_0\lambda)^2\rho_{\mathbf{C}_0}+\gamma\alpha_0^2\rho_{\bm{\Sigma}_0} = \Big((1-\alpha_0\lambda)^2-\gamma\alpha_0^2\lambda^2\Big)\rho_{\mathbf{C}_0}+\gamma\alpha_0^2\rho_{\bm{\Sigma}_0'}
\end{equation}
Then, according to statement \textbf{(1)}, we get that $\rho_{\bm{\Sigma}_0'}$ is uniquely determined by $\rho_{\mathbf{C}_0}$ and eigenvalues of $\mathbf{H}$, and therefore $\widehat{R}$ is given by \eqref{eq:SE_family} according to equivalence of statements \textbf{(1)} and \textbf{(2)} of proposition \ref{prop:SE_family}.

\textbf{From (3) to (2).} Considering first iteration of SGD as in the previous step, we get that $\bm{\Sigma}_0'$ is invariant under orthogonal transformations of Jacobian $\bm{\Psi}$. Then, statement \textbf{(2)} follows from equivalence of statements \textbf{(3)} and \textbf{(2)} of proposition \ref{prop:SE_family}.

\textbf{From (2) to (3).} We proceed by induction. First, notice that $\mathbf{C}_0,\mathbf{J}_0,\mathbf{V}_0$ are (trivially) invariant under orthogonal transformations of Jacobian $\widetilde{\bm{\Psi}}=\bm{\Psi}$. 

To make an induction step, assume that $\mathbf{C}_t,\mathbf{J}_t,\mathbf{V}_t$ are invariant under orthogonal transformations of Jacobian. Then, according to equivalence of statements \textbf{(3)} and \textbf{(2)} of proposition \ref{prop:SE_family} and invariance of Hessian $\mathbf{H}$ under orthogonal transformations of Jacobian $\bm{\Psi}$ we get that $\bm{\Sigma}_t$ is also invariant. As first term in \eqref{eq:second_moments_dyn} depend on $\bm{\Psi}$ only through $\mathbf{H}$ we get that the whole right-hand side of \eqref{eq:second_moments_dyn} is invariant under orthogonal transformations of Jacobian, and so are $\mathbf{C}_{t+1},\mathbf{J}_{t+1},\mathbf{V}_{t+1}$.

\textbf{From (2) to (1)}
As noted in the statement \textbf{(2)}, SE approximation \eqref{eq:seapprox} is satisfied. It means that we can apply proposition \ref{prop:spectral_measures_dyn} with initial conditions $\rho_{\mathbf{J}_0}=\rho_{\mathbf{V}_0}=0$. Then, according to representation \eqref{eq:all_SGD_specmeasures} and update rules \eqref{eq:G_matrix_dynamics},\eqref{eq:s_vector_dynamics}, spectral measures $\rho_{\mathbf{C}_t}$ are uniquely determined by the initial spectral measure $\rho_{\mathbf{C}_0}$, the eigenvalues $\lambda_k$ of Hessian $\mathbf{H}$, SGD parameters $\{\alpha_t,\beta_t\}_{t=0}^\infty$, and parameters $\gamma,\tau_1,\tau_2$. Recalling that $\gamma,\tau_1,\tau_2$ are considered to be fixed in statement \textbf{(1)} finishes the proof. 
\end{proof}

\subsection{Translation invariant models}\label{sec:translation}
The purpose of this section is to prove Proposition \ref{prop:trans_inv}.
\propTranslInvar*
\begin{proof} Statement \eqref{eq:seapprox} with $\tau_1=1,\tau_2=1$ is equivalent to
\begin{equation}\label{eq:trphdc}
    \Tr(\phi(\mathbf H)\bm\Sigma)=\Tr(\mathbf H\mathbf C)\Tr(\mathbf H\phi(\mathbf H))-\Tr(\mathbf H^2\phi(\mathbf H)\mathbf C),\quad \forall \phi.
\end{equation}
We will be proving this latter statement.
Denote the grid-indexing set $(\mathbb Z/N_1\mathbb Z)\times\cdots\times(\mathbb Z/N_d\mathbb Z)$ by $\mathbb Z/\mathbf N\mathbb Z$. Thanks to translation invariance of the data set and kernel, the eigenvectors $\mathbf u_k$ of $\mathbf H$ have an explicit representation in the $\mathbf x$-domain in terms of Fourier modes. Specifically, let $\mathbf K$ be the $N\times N$ kernel matrix with matrix elements 
\begin{equation}
    K_{\mathbf i\mathbf j}=K_{\mathbf i-\mathbf j}=K_{\mathbf j-\mathbf i}=\frac{1}{N}\langle \psi(\mathbf x_\mathbf{i}),\psi(\mathbf x_\mathbf{j})\rangle,\quad \mathbf i,\mathbf j\in \mathbb Z/\mathbf N\mathbb Z.
\end{equation}
Consider the functions $\phi_{\mathbf k}:\mathcal D\to\mathbb C$ given by
\begin{equation}\label{eq:vk}
    \phi_{\mathbf k}(\mathbf x_{\mathbf i})=N^{-1/2}e^{\sqrt{-1}\mathbf k\cdot\mathbf x_{\mathbf i}}\equiv \phi_{\mathbf k\mathbf i},\quad \mathbf k\in  \mathbb Z/\mathbf N\mathbb Z.
\end{equation}
These functions form an orthonormal basis in $\mathbb C^{\mathcal D}$ and diagonalize the matrix $\mathbf K$:
\begin{equation}
    K_{\mathbf i\mathbf j}=\sum_{\mathbf k\in  \mathbb Z/\mathbf N\mathbb Z}\lambda_{\mathbf k}\phi_{\mathbf k\mathbf i}\overline \phi_{\mathbf k\mathbf j}
\end{equation}
with 
\begin{equation}
    \lambda_{\mathbf k} = \sum_{\mathbf i\in \mathbb Z/\mathbf N\mathbb Z} K_{\mathbf i}\phi_{\mathbf k\mathbf i}.
\end{equation}
Operator $\mathbf H$ is unitarily equivalent to $\mathbf K$ up to its nullspace. Assuming that the Hilbert space of $\mathbf H$ is complexified, the respective normalized eigenvectors $\mathbf u_{\mathbf k}$ of $\mathbf H$ can be written as
\begin{equation}\label{eq:uknl}
    \mathbf u_{\mathbf k} = (N\lambda_{\mathbf k})^{-1/2}\sum_{\mathbf i\in  \mathbb Z/\mathbf N\mathbb Z}\psi(\mathbf x_{\mathbf i})\phi_{\mathbf k\mathbf i}, \quad \mathbf H\mathbf u_{\mathbf k}=\lambda_{\mathbf k}\mathbf u_{\mathbf k}.
\end{equation}
Assume for the moment that the problem is nondegenerate in the sense that all $\lambda_{\mathbf k}>0$ so that all the vectors $\mathbf u_{\mathbf k}$ are well-defined. It follows in particular that the indices $k$ of the eigenvalues $\lambda_k$ can be identified with $\mathbf k\in  \mathbb Z/\mathbf N\mathbb Z$. By inverting relation \eqref{eq:uknl},
\begin{equation}\label{eq:psixiuk}
    \psi(\mathbf x_{\mathbf i})= \sum_{\mathbf k\in  \mathbb Z/\mathbf N\mathbb Z}(N\lambda_{\mathbf k})^{1/2}\mathbf u_{\mathbf k}\overline \phi_{\mathbf k\mathbf i}.
\end{equation}
We can then find the diagonal elements of $\bm\Sigma$ using Eq. \eqref{eq:stochastic_noise_term} and Eqs. \eqref{eq:vk}, \eqref{eq:psixiuk}: 
\begin{align}
    \Sigma_{\mathbf k\mathbf k} ={}& \Big(\frac{1}{N}\sum\limits_{i=1}^N \langle\bm{\psi}(\mathbf{x}_i),\mathbf{C}\bm{\psi}(\mathbf{x}_i)\rangle \bm{\psi}(\mathbf{x}_i)\otimes\bm{\psi}(\mathbf{x}_i) -\mathbf{H}\mathbf{C}\mathbf{H}\Big)_{\mathbf k\mathbf k} \\
    ={}& N\sum\limits_{\mathbf i}\lambda_{\mathbf k} \phi_{\mathbf k\mathbf i}\overline \phi_{\mathbf k\mathbf i}\sum_{\mathbf k',\mathbf l'}\lambda_{\mathbf k'}^{1/2}C_{\mathbf k'\mathbf l'}\lambda_{\mathbf l'}^{1/2}\overline \phi_{\mathbf k'\mathbf i}\phi_{\mathbf l'\mathbf i}  -\lambda_{\mathbf k}^2 C_{\mathbf k\mathbf k}\\
    ={}&\lambda_{\mathbf k} \sum_{\mathbf k',\mathbf l'}\lambda_{\mathbf k'}^{1/2}\lambda_{\mathbf l'}^{1/2}C_{\mathbf k'\mathbf l'}\sum\limits_{\mathbf i}\overline \phi_{\mathbf k'\mathbf i}\phi_{\mathbf l'\mathbf i}  -\lambda_{\mathbf k}^2 C_{\mathbf k\mathbf k}\\
    ={}&\lambda_{\mathbf k} \sum_{\mathbf l}\lambda_{\mathbf l}C_{\mathbf l\mathbf l}  -\lambda_{\mathbf k}^2 C_{\mathbf k\mathbf k}.\label{eq:dclktransinv}
\end{align}
It follows that
\begin{align}
    \Tr(\phi(\mathbf H)\bm\Sigma)={}&\sum_{\mathbf k}\phi(\lambda_{\mathbf k})\Sigma_{\mathbf k\mathbf k}\\
    ={}&\sum_{\mathbf k}\phi(\lambda_{\mathbf k})\lambda_{\mathbf k} \sum_{\mathbf l}\lambda_{\mathbf l}C_{\mathbf l\mathbf l}  -\sum_{\mathbf k}\phi(\lambda_{\mathbf k})\lambda_{\mathbf k}^2 C_{\mathbf k\mathbf k}\\
    ={}&\Tr(\mathbf H\phi(\mathbf H))\Tr(\mathbf H\mathbf C)-\Tr(\mathbf H^2\phi(\mathbf H)\mathbf C),\label{eq:dclktransinv1}
\end{align}
which is the desired Eq. \eqref{eq:trphdc}. 

Now we comment on the degenerate case, when $\lambda_{\mathbf k}=0$ for some $\mathbf k$.  This occurs if there is linear dependence between some $\psi(\mathbf x_{\mathbf i})$, i.e. $\dim \operatorname{Ran}(\mathbf H)<N.$ The vectors $\mathbf u_{\mathbf k}$ given by \eqref{eq:uknl} with $\lambda_{\mathbf k}\ne 0$ form a basis in $\operatorname{Ran}(\mathbf H)$. Inverse relation \eqref{eq:psixiuk} remains valid with arbitrary vectors $\mathbf u_{\mathbf k}$ for $\mathbf k$ such that $\lambda_{\mathbf k}=0$  (this can be seen, for example, by lifting the degeneracy of the problem with a regularization $\varepsilon$ and then letting $\varepsilon\to 0$). As a result, Eq. \eqref{eq:dclktransinv} remains valid for $\mathbf k$ such that $\lambda_{\mathbf k}>0$. On the other hand, if $\mathbf u_k$ is an eigenvector corresponding to the eigenvalue $\lambda_k=0$, i.e. $\mathbf u_{k}\in\ker \mathbf H$, then $\mathbf u_k$ is orthogonal to $\psi(\mathbf x_{\mathbf i})$ for all $\mathbf i$, and then $\Sigma_{\mathbf k\mathbf k}=0$ for such $k$. Thus, formula \eqref{eq:dclktransinv} remains valid in this case too, and so Eq. \eqref{eq:dclktransinv1} holds true. 
\end{proof}

\section{Generating functions and their applications}\label{sec:generating_functions}
\subsection{Reduction to generating functions}\label{sec:generating functions derivation}

We clarify some details of derivations in Section \ref{sec:gen_func}. 

\paragraph{Gas of rank-1 operators.} 
Let us first discuss SE condition \eqref{eq:seapprox}. Recall that we assumed $\mathbf H$ to be diagonalized by the eigenbasis $\mathbf u_k$ with eigenvalues $\lambda_k$. 
If the eigenvalues of $\mathbf H$ are non-degenerate, SE condition \eqref{eq:seapprox} can be equivalently written as a relation between the diagonal matrix elements $\Sigma_{kk}, C_{kk}$ of the matrices $\bm\Sigma, \mathbf C$:
\begin{equation}\label{eq:skkdiag}
    \Sigma_{kk} = \lambda_k\sum_{l}\lambda_l C_{ll}-\tau\lambda_k^2C_{kk} \quad \forall k
\end{equation}
(where we have incorporated the convention $\tau_1=1,\tau_2=\tau$). More generally, if several $\lambda_k$ may be equal to some $\lambda$ and so the choice of eigenvectors $\mathbf u_k$ is not unique, conditions \eqref{eq:skkdiag} should be replaced by weaker conditions (invariant w.r.t. this choice):
\begin{equation}
    \sum_{k:\lambda_k=\lambda}\Sigma_{kk} = \sum_{k:\lambda_k=\lambda}\lambda\sum_{l}\lambda_l C_{ll}-\tau\lambda^2\sum_{k:\lambda_k=\lambda}C_{kk},\quad \forall \lambda\in\operatorname{spec}(\mathbf H). 
\end{equation}
Note, however, that representation \eqref{eq:skkdiag} can be used without any restriction of generality if we are interested in the observables  $\Tr[\phi(\mathbf H)\mathbf C_t]$ (in particular, $L(t)$) along the optimization trajectory \eqref{eq:stochastic_noise_term}: by Proposition \ref{prop:spectral_measures_dyn} such quantities are uniquely determined by initial conditions and the learning rates of SGD once the SE approximation is assumed.     

Representations \eqref{eq:lt12tr} -- \eqref{eq:atl} now follow directly from the evolution law \eqref{eq:stochastic_noise_term}, by inspecting the evolution of diagonal elements in submatrices $\mathbf C_t, \mathbf J_t, \mathbf V_t$ using Eq. \eqref{eq:skkdiag}.

\paragraph{Genarting functions.} Eq.  \eqref{eq:lvu} follows from a relation between generating functions obtained using Eq. \eqref{eq:itlt}:
\begin{align}
    \widetilde L(z) ={}& \sum_{t=0}^\infty L(t)z^t\\
    ={}& \sum_{T=0}^\infty \Big(\frac{1}{2} V_{T+1}+\sum_{t_m=1}^TU_{T+1-t_m}L(t_m-1)\Big)z^T\\
    ={}&\frac{1}{2} \sum_{T=0}^\infty  V_{T+1} z^T+ \sum_{t=0}^\infty\sum_{s=0}^\infty U_{t+1}L(s)z^{t+s+1}\\
    ={}&\frac{1}{2} \widetilde V(z)+z\widetilde U(z)\widetilde L(z),
    \end{align}
where we have used the substitution $T-t_m=t, t_m-1=s$. 

The resolvent formulas in Eqs. \eqref{eq:wuexp1}, \eqref{eq:wvexp1} follow from the definitions of $U_t, V_s$ (Eqs. \eqref{eq:U_t_def}, \eqref{eq:vt}). 

We explain now the expansions \eqref{eq:wuexp2}, \eqref{eq:wvexp2} of the noise and signal generating functions $\widetilde U(z), \widetilde V(z)$ as series of scalar rational functions. Consider first expansion \eqref{eq:wuexp2} for $\widetilde U(z)$. The terms of this expansion correspond to the eigenvalues $\lambda_k$ of $\mathbf H$. For each eigenvalue, the respective term is, by Eq. \eqref{eq:wuexp1},  
\begin{equation}\label{eq:scprzal}
 \lambda^2\langle (\begin{smallmatrix}
    1&0\\0&0
    \end{smallmatrix}), (1-zA_{\lambda})^{-1} (\begin{smallmatrix}
     1&1\\1&1
    \end{smallmatrix})\rangle,   
\end{equation}
where $A_\lambda$ is the linear operator acting on the space of $2\times 2$ matrices and given by Eq. \eqref{eq:atl}. The scalar product in \eqref{eq:scprzal} can be computed as the component $x_{11}$ of the solution of the $4\times 4$ linear system
\begin{equation}\label{eq:1zau}
  (1-zA_{\lambda}) (\begin{smallmatrix}
     x_{11}&x_{12}\\x_{21}&x_{22}
    \end{smallmatrix}) = (\begin{smallmatrix}
     1&1\\1&1
    \end{smallmatrix}).   
\end{equation}
Since all the components of the $4\times 4$ matrix of the operator $1-zA_{\lambda}$ are polynomial in the parameters $\alpha,\beta,\gamma,\tau,\lambda,z,$ the resulting $x_{11}$ is a rational function of these parameters. The explicit form of $x_{11}$ is tedious to derive by hand, but it can be easily obtained with the help of a computer algebra system such as Sympy \citep{sympy}. We provide the respective Jupyter notebook in the supplementary material.\footnote{\url{https://colab.research.google.com/drive/1eai1apCMCeLLbGIC7vfjbXNOKZ3CaIbw?usp=sharing}} 

The terms of $\widetilde V$ are obtained similarly, but by solving the equation 
\begin{equation}
  (1-zA_{\lambda}) (\begin{smallmatrix}
     x_{11}&x_{12}\\x_{21}&x_{22}
    \end{smallmatrix}) = (\begin{smallmatrix}
     1&0\\0&0
    \end{smallmatrix})  
\end{equation}
instead of Eq. \eqref{eq:1zau}.

\subsection{Convergence conditions}\label{sec:convergence_conditions}

\paragraph{Proof of Proposition \ref{prop:stab}.}
Note first that Proposition \ref{prop:stab} is implied by the following lemma: 

\begin{lemma}\label{lemma:monotone} Assume (as in Proposition \eqref{prop:stab}) that $\beta\in(-1,1)$, $0<\alpha <2(\beta+1)/\lambda_{\max}$ and $\gamma\in [0,1]$. Then
\begin{enumerate}
    \item $S(\alpha,\beta,\gamma,\lambda,z)>0$ for all $z\in (0,1)$; 
    \item $\widetilde U(z), \widetilde V(z)$ are analytic in an open neighborhood of the interval $(0,1)$ and have no singularities there;
    \item $z\widetilde{U}(z)$ is monotone increasing for all $z\in (0,1)$.
\end{enumerate}
\end{lemma}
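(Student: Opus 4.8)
The plan is to exploit a rank-one decomposition of the per-eigenvalue evolution operator $A_\lambda$ from \eqref{eq:atl}, which makes all three claims flow from one algebraic identity. Restricting $A_\lambda$ to the $3$-dimensional space of symmetric $2\times2$ matrices (which the dynamics preserves), I would write $A_\lambda = A^{(0)}_\lambda - c\,P$, where $A^{(0)}_\lambda M = B_\lambda M B_\lambda^{T}$ with $B_\lambda = \left(\begin{smallmatrix}1-\alpha\lambda & \beta\\ -\alpha\lambda & \beta\end{smallmatrix}\right)$, the scalar $c = \tau\gamma\alpha^2\lambda^2 \ge 0$, and $P$ is the rank-one operator $PM = M_{11}\,\mathbf w$ with $\mathbf w = \left(\begin{smallmatrix}1&1\\1&1\end{smallmatrix}\right)$. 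Writing $\boldsymbol\phi^\dagger M = M_{11}$ we have $P = \mathbf w\,\boldsymbol\phi^\dagger$. The denominator $S(\alpha,\beta,\tau\gamma,\lambda,z)$ is precisely $\det(\mathbf 1 - zA_\lambda)$ on this space, and the per-mode summands of $\widetilde U,\widetilde V$ are scalar products of $(\mathbf 1-zA_\lambda)^{-1}$ against fixed matrices.

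Next I would record the noiseless facts. Since $\det B_\lambda=\beta$, the eigenvalues $\mu_1,\mu_2$ of $B_\lambda$ satisfy $\mu_1\mu_2=\beta$ and $\mu_1+\mu_2=1+\beta-\alpha\lambda$, and the standard Heavy-Ball stability analysis (Appendix \ref{sec:noiseless_convergence}) shows that the hypotheses $\beta\in(-1,1)$, $0<\alpha\lambda<2(1+\beta)$ are equivalent to $|\mu_1|,|\mu_2|<1$. Hence the eigenvalues $\mu_1^2,\mu_1\mu_2,\mu_2^2$ of $A^{(0)}_\lambda$ all have modulus $<1$, so $S^{(0)}(z):=\det(\mathbf 1-zA^{(0)}_\lambda)=(1-z\mu_1^2)(1-z\beta)(1-z\mu_2^2)>0$ for $z\in(0,1)$ (real factors stay positive, complex-conjugate factors combine into a modulus square). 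Moreover $r(z):=\boldsymbol\phi^\dagger(\mathbf1-zA^{(0)}_\lambda)^{-1}\mathbf w=\sum_{t\ge0}z^t\,[(A^{(0)}_\lambda)^t\mathbf w]_{11}$ has \emph{nonnegative} coefficients, because $(A^{(0)}_\lambda)^t\mathbf w=(B_\lambda^t\mathbf 1)(B_\lambda^t\mathbf 1)^{T}$ is PSD rank one with $(1,1)$ entry $([B_\lambda^t\mathbf 1]_1)^2\ge0$, and the spectral radius bound gives radius of convergence $>1$.

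The single identity is the matrix determinant lemma applied to the rank-one perturbation: $S(z)=\det(\mathbf 1-zA^{(0)}_\lambda+zc\,\mathbf w\boldsymbol\phi^\dagger)=S^{(0)}(z)\,(1+c\,R(z))$, where $R(z):=z\,r(z)$ has nonnegative coefficients and $R(0)=0$. Since $c\ge0$ and $R(z)\ge0$ on $(0,1)$, the factor $1+cR(z)\ge1$, and with $S^{(0)}(z)>0$ this gives $S(z)>0$ on $(0,1)$, proving Part 1. For Part 2, each summand of $\widetilde U,\widetilde V$ is analytic where $S\ne0$; since $S(\cdot,\lambda,z)\to(1-z)(1-\beta z)(1-\beta^2 z)$ as $\lambda\to0$, bounded away from $0$ on compact subsets of $(0,1)$, the general summand is $O(\lambda_k^2)$ there, and assuming $\sum_k\lambda_k^2<\infty$ (the complementary case being "immediate divergence", where $\widetilde U\equiv+\infty$) the series converge normally on a complex neighbourhood of each compact subinterval, so Weierstrass' theorem yields analyticity with no singularities. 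For Part 3 I would use the companion Sherman–Morrison identity for the same perturbation, $\boldsymbol\phi^\dagger(\mathbf1-zA_\lambda)^{-1}\mathbf w=r(z)/(1+cR(z))$, so that the per-mode contribution to $z\widetilde U(z)$ is $R(z)/(1+cR(z))$; thus $z\widetilde U(z)=\gamma\alpha^2\sum_k\lambda_k^2\,R_k(z)/(1+c_kR_k(z))$ with $c_k\ge0$. Each $R_k$ is strictly increasing on $(0,1)$ (nonnegative coefficients, $R_k'\ge1$), and $u\mapsto u/(1+c_ku)$ is strictly increasing for $u\ge0$, so every summand is strictly increasing, and a positively-weighted convergent sum of increasing functions is increasing.

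The conceptual crux is recognizing the rank-one structure $A_\lambda=A^{(0)}_\lambda-c\,\mathbf w\boldsymbol\phi^\dagger$ together with the nonnegativity of the coefficients of $r(z)$; once these are in place, Parts 1 and 3 become one-line consequences of the determinant lemma and of Sherman–Morrison, and they even share the factor $1+cR$. The main technical obstacle I anticipate is the bookkeeping in Part 2: establishing a uniform lower bound on $|S(\cdot,\lambda_k,z)|$ valid simultaneously for all $\lambda_k$ (including the accumulation at $\lambda=0$) and for $z$ in a complex neighbourhood of a compact subinterval of $(0,1)$, so that the infinite series converges normally. This is precisely where the hypothesis $\sum_k\lambda_k^2<\infty$ and the joint continuity of $S$ in $(\lambda,z)$ must be combined with care.
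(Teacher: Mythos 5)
Your proposal is correct, but it proves the lemma by a genuinely different route than the paper. The paper's proof of statements 1 and 3 is computer-assisted: it reformulates $S>0$ and $R\ge 0$ (where $R$ is the numerator of $\tfrac{d}{dz}(z\widetilde U(z))$) as the emptiness of explicit semi-algebraic sets in the variables $(a,\beta,\gamma,z)$ with $a=\alpha\lambda$, and verifies this by Tarski--Seidenberg quantifier elimination in REDUCE/Redlog; no structural reason for the positivity is given. Your argument replaces both symbolic checks by one structural observation: on the invariant $3$-dimensional space of symmetric matrices, $A_\lambda=A^{(0)}_\lambda-c\,\mathbf w\boldsymbol\phi^\dagger$ with $c=\tau\gamma\alpha^2\lambda^2\ge0$, so the matrix determinant lemma yields $S=S^{(0)}(1+cR)$ and Sherman--Morrison yields the per-mode term $R_k/(1+c_kR_k)$ of $z\widetilde U(z)$. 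Your two positivity inputs check out: the eigenvalues of $A^{(0)}_\lambda$ are $\mu_1^2,\mu_1\mu_2=\beta,\mu_2^2$ with $|\mu_i|<1$ exactly under the noiseless stability hypotheses (so $S^{(0)}>0$ on $(0,1)$, including the degenerate factorization $(1-z)(1-\beta z)(1-\beta^2 z)$ as $\lambda\to0$), and $(A^{(0)}_\lambda)^t\mathbf w=(B_\lambda^t\mathbf 1)(B_\lambda^t\mathbf 1)^T$ gives $r(z)=\sum_t z^t([B_\lambda^t\mathbf 1]_1)^2$ with nonnegative coefficients and $a_0=1$, so $1+cR\ge1$ and $u\mapsto u/(1+c_ku)$ composed with the increasing $R_k$ makes each summand of $z\widetilde U$ increasing. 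A reassuring internal consistency check: since the first row of $B_\lambda^{-1}$ is $(1,-1)$, one gets $\boldsymbol\phi^\dagger(A^{(0)}_\lambda)^{-1}\mathbf w=0$ and hence $\det A_\lambda=\beta^3$, matching the coefficient $-\beta^3 z^3$ in the paper's explicit $S$. What each approach buys: yours is human-verifiable, explains \emph{why} stability of the noiseless operator plus nonnegativity of the noise-interaction kernel force $S>0$ and monotonicity, and extends transparently to any $c\ge0$; the paper's buys certainty about the specific quartic polynomial $R$ with essentially zero case analysis, at the cost of opacity.

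Two minor points to tighten. First, in Part 2 you invoke only $\sum_k\lambda_k^2<\infty$, which controls the $\widetilde U$ series; the $\widetilde V$ series needs $\sum_k\lambda_kC_{kk,0}<\infty$ as well (the paper cites both; this holds since it equals twice the initial loss). Second, your "strictly increasing" conclusion in Part 3 degenerates when $\gamma=0$ (then $z\widetilde U\equiv0$); the paper's $R\ge0$ likewise only yields nondecreasing there, and this suffices for the application in Proposition \ref{prop:stab}, so just state monotonicity in the nondecreasing sense under the hypothesis $\gamma\in[0,1]$. Finally, your identification of $S$ with $\det(\mathbf 1-zA_\lambda)$ restricted to symmetric matrices (the full $4\times4$ determinant carries an extra factor $(1-\beta z)$ from the invariant antisymmetric line, which never enters since the sources $\mathbf w$ and $\bigl(\begin{smallmatrix}1&0\\0&0\end{smallmatrix}\bigr)$ are symmetric) should be recorded as a one-line verification of normalization, e.g.\ matching constant and leading coefficients, since the paper defines $S$ only as the output of a symbolic computation.
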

Indeed, by statement 2) and representation \eqref{eq:lvu}, a singularity of $\widetilde L(z)=\widetilde V(z)/(2(1-z\widetilde U(z)))$ on the interval $(0,1)$ can only be a pole resulting from a zero of the  denominator $1-z\widetilde U(z)$. By statement 3),  such a zero is present if and only if $\widetilde U(z)>1,$ as claimed. It remains to prove the lemma.

\begin{proof}[Proof of Lemma \ref{lemma:monotone}]\hfill

1) Our proof of statement 1 is computer-assisted. Note first that this statement can be written as the condition $W=\varnothing$ for a semi-algebraic set $W\subset\mathbb R^4$ defined by several polynomial inequalities with integer coefficients:
\begin{align}
    W ={}& \{\exists (a, \beta, \gamma, z)\in\mathbb R^4: (-1<\beta<1)\wedge (0<a<2(\beta+1))\\
    &\wedge (0\le\gamma\le 1)\wedge (0<z<1)\wedge(S(a, \beta, \gamma, 1, z) \le 0)\},
\end{align}
where we have introduced the variable $a=\alpha\lambda.$
By Tarski-Seidenberg theorem, the emptyness of a semi-algebraic set in $\mathbb R^n$ is an algorithmically decidable problem (solvable by sequential elimination of existential quantifiers, via repeated standard operations such as differentiation, evaluation of polynomials, division of polynomials with remainder, etc., see e.g. \citep{basu2014algorithms}). This procedure is implemented in some computer algebra systems, e.g. in REDUCE/Redlog \citep{dolzmann1997redlog}. We used this system to verify that $W=\varnothing$:
\begin{verbatim}
1: rlset reals$
2: S := a**2*b*g*z**2 + a**2*b*z**2 + a**2*g*z - a**2*z - 
2*a*b**2*z**2 - 2*a*b*z**2 + 2*a*b*z + 2*a*z - b**3*z**3 + 
b**3*z**2 + b**2*z**2 - b**2*z + b*z**2 - b*z - z + 1$
3: W := ex({a,b,g,z}, -1<b and b<1 and 0<a and a<2*(b+1) 
and 0<=g and g<=1 and 0<z and z<1 and S<=0)$
4: rlqe W;
false
\end{verbatim}

2) Statement 2 follows from statement 1 and the fact that $S(\alpha,\beta,\gamma,\lambda,z)\to (1-z)(1-\beta z)(1-\beta^2 z)$ as $\lambda\to  0$. Indeed, since the series $\sum_k \lambda_k^2$ and $\sum_{k}\lambda_k C_{kk,0}$ converge, singularities in $\widetilde U,\widetilde V$ outside the points $z=1, \beta^{-1}, \beta^{-2}$ can only be poles of some of the terms of expansions \eqref{eq:wuexp2}, \eqref{eq:wvexp2} associated with zeros of $S$, which are excluded by statement 1.

3) Our proof of statement 3 is also computer-assisted and similar to the proof of statement 1. First we compute
\begin{equation}
    \tfrac{d}{dz}(z\widetilde U(z)) = \gamma\alpha^2\sum_k\lambda_k^2\frac{R(\alpha,\beta,\lambda_k,z)}{S^2(\alpha,\beta,\gamma\tau,\lambda_k,z)}, 
\end{equation}
where
\begin{equation}
    R(\alpha,\beta,\lambda_k,z) =- 2 \alpha^{2} \beta \lambda_{k}^{2} z^{2} + 4 \alpha \beta^{2} \lambda_{k} z^{2} + 4 \alpha \beta \lambda_{k} z^{2} + \beta^{4} z^{4} + 2 \beta^{3} z^{3} - 2 \beta^{3} z^{2} - 2 \beta^{2} z^{2} - 2 \beta z^{2} + 2 \beta z + 1.
\end{equation}
By statement 1, it is sufficient to check that $R\ge 0$ on the domain of interest. This is done  by verifying the emptyness of the set 
\begin{align}
    W_1 ={}& \{\exists (a, \beta, z)\in\mathbb R^3: (-1<\beta<1)\wedge (0<a<2(\beta+1))\\
    &\wedge  (0<z<1)\wedge(R(a, \beta, 1, z) < 0)\}
\end{align}
in REDUCE/Redlog:

\begin{verbatim}
1: rlset reals$
2: R1 := -2*a**2*b*z**2 + 4*a*b**2*z**2 + 4*a*b*z**2 + b**4*z**4 
+ 2*b**3*z**3 - 2*b**3*z**2 - 2*b**2*z**2 - 2*b*z**2 + 2*b*z + 1$
3: W1 := ex({a,b,z}, -1<b and b<1 and 0<a and a<2*(b+1) and 0<z 
and z<1 and R1<0)$
4: rlqe W1;
false
\end{verbatim}
\end{proof}
We provide experimental verifications of statements 1 and 2 in the accompanying Jupyter notebook. 

\paragraph{Proof of Proposition \ref{prop:eff_lr_stability_condition}.}
Recall representation \eqref{eq:U1}
\begin{equation}\label{eq:U1_copy}
    \widetilde U(1) =\sum_k\frac{\lambda_k}{\lambda_k\big(\tau-\frac{1-\beta}{\gamma(1+\beta)}\big)+\frac{2(1-\beta)}{\alpha\gamma}}
\end{equation}
and rewrite it as $\widetilde{U}(1) = f(\frac{2(1-\beta)}{\alpha\gamma}, Y), \; Y=\frac{1-\beta}{\gamma(1+\beta)}$ where $f(x,y)$ is defined as
\begin{equation}\label{eq:f_function}
    f(x,y) = \sum_k\frac{\lambda_k}{\lambda_k(\tau-y)+x}
\end{equation}
Note that due to assumptions $\alpha\lambda_\mathrm{max}<2(\beta+1)$ and $\tau>0$, the denominator in \eqref{eq:U1_copy} is always positive. Then, from \eqref{eq:U1_copy} we see that $\widetilde{U}(1)$ is monotonically increasing function of $\alpha$, and therefore at fixed $\beta,\gamma$ the convergence condition $\widetilde{U}(1)<1$ can be written as $\alpha<\alpha_c$, or equivalently $\gamma\alpha_\mathrm{eff}/2<1/x_c$, where $x_c$ is the largest positive solution of $f(x, Y)=1$. Also, original definition of $\lambda_\mathrm{crit}$ in \ref{prop:eff_lr_stability_condition} can be formulated in terms of $f$ is a unique positive solution of $f(\lambda_\mathrm{crit},0)=1$. Now observe that $f(x,Y)>f(x,0)$ for $x>x_c$, and $f(x,0)$ is monotonically decreasing functions of $x$ for $x>0$. These observations immediately imply that $x_c>\lambda_\mathrm{crit}$ and thus the statement \eqref{eq:aeffgamma}.

Now we obtain the estimate \eqref{eq:aeffgamma_tightness} by abusing the fact $f(\lambda_\mathrm{crit},0)=f(x_c,Y)=1$ we explicitly calculate the difference $f(\lambda_\mathrm{crit},0)-f(x_c,Y)$ and use it to bound $x_c-\lambda_\mathrm{crit}$
\begin{align}
    0=f(\lambda_\mathrm{crit},0)-f(x_c,Y)=&\sum_k \frac{\lambda_k\Big(x_c-\lambda_\mathrm{crit}-\lambda_k Y\Big)}{\Big(\lambda_k(\tau-Y)+x_c\Big)\Big(\tau\lambda_k + \lambda_\mathrm{crit}\Big)}\\
    \sum_k \frac{\lambda_k(x_c-\lambda_\mathrm{crit})}{\Big(\lambda_k(\tau-Y)+x_c\Big)\Big(\tau\lambda_k + \lambda_\mathrm{crit}\Big)}=&Y\sum_k \frac{\lambda_k^2}{\Big(\lambda_k(\tau-Y)+x_c\Big)\Big(\tau\lambda_k + \lambda_\mathrm{crit}\Big)}\\
\end{align}
Next we bound l.h.s. from below with $\frac{x_c-\lambda_\mathrm{crit}}{\tau\lambda_max+\lambda_\mathrm{crit}}$ (by bounding denominator with $\lambda_\mathrm{crit}$ and equating the rest of the series to $f(x_c,Y)$). To bound the sum in r.h.s. we note that it can be expressed as $\sum_k a_k b_k$ were $\sum_k a_k = f(x_c,Y)$ and $\sum_k b_k = f(\lambda_\mathrm{crit},0)$ . Then, the upper bound $\sum_k a_k b_k<b_0=\tfrac{\lambda_\mathrm{max}}{\tau\lambda_\mathrm{max}+\lambda_\mathrm{crit}}$ is constructed by noting that redistribution of the mass $a_k \rightarrow \delta_{k 0}$ yields the maximum r.h.s. for fixed $b_k$ and under constraint $a_k\geq0, \;\sum_k a_k \leq 1$. Thus we arrive at
\begin{align}
    \frac{x_c-\lambda_\mathrm{crit}}{\tau\lambda_\mathrm{max}+\lambda_\mathrm{crit}} &\leq Y \frac{\lambda_\mathrm{max}}{\tau\lambda_\mathrm{max}+\lambda_\mathrm{crit}} \\
    x_c-\lambda_\mathrm{crit} &\leq \frac{1-\beta}{\gamma(1+\beta)}\lambda_\mathrm{max}
\end{align}
Noting that $\alpha_\mathrm{eff}^c(\beta,\gamma)=\frac{2}{\gamma x_c}$ we get \eqref{eq:aeffgamma_tightness} as
\begin{equation}
    |\alpha_\mathrm{eff}^c(\beta,\gamma)-\frac{2}{\gamma \lambda_\mathrm{crit}}|/\alpha_\mathrm{eff}^c(\beta,\gamma) = \frac{x_c-\lambda_\mathrm{crit}}{\lambda_\mathrm{crit}}\leq\frac{\lambda_\mathrm{max}}{\lambda_\mathrm{crit}}\frac{1-\beta}{\gamma(1+\beta)}
\end{equation}

%\newpage
\subsection{Loss asymptotic under power-laws}
The main purpose of this section is to prove theorem \ref{theor:main}. First, note that if $\lambda_k\to 0$ and $C_{kk,0}>0$ for infinitely many $k$ (non-strongly-convex problems, including power-laws \eqref{eq:power_laws}), then $r_L\le 1$, thus excluding the possibility of exponential loss convergence. Indeed,  $S(\alpha, \beta, \gamma, \lambda, z)\stackrel{\lambda\to 0}{\longrightarrow} (1-z)(1-\beta z)(1-\beta^2 z)$. It follows that for small $\lambda_k$ the respective terms in expansion \eqref{eq:wuexp2} have three poles $z_{k,0}, z_{k,1}, z_{k,2}$ converging to $1,\beta^{-1},\beta^{-2},$ respectively. By Eq. \eqref{eq:lvu}, $\widetilde L(z)$ has a removable singularity at $z_{k,0},$ moreover $\widetilde L(z_{k,0})<0$ if $C_{kk,0}>0$ and $k$ is large enough, meaning that $r_L<z_{k,0}$. We then get $r_L\le 1$ by taking the limit in $k.$

As $r_L\ge1$ for convergent trajectories and $r_L\le 1$ for non-strongly-convex problems, we conclude that $r_L=1$. Then, we formulate a lemma allowing to calculate the $\varepsilon\rightarrow+0, \; \varepsilon=1-z$ asymptotics of the generating functions \eqref{eq:wuexp2} and \eqref{eq:wvexp2}.

\begin{lemma}\label{lemma:asymsumcalc}
Suppose that $S_k$ and $C_k$ are two sequences such that $S_k=k^{-\nu}(1+o(1))$ and $F_k=\sum_{l=k}^\infty C_l=k^{-\varkappa}(1+o(1))$ as $k\to\infty$, with some constants $\nu,\varkappa>0$. Also denote $\zeta=\frac{\varkappa}{\nu}$. Then
\begin{enumerate}
    \item 
    \begin{equation}
    \sum\limits_{l=k}^\infty C_l S_l = \frac{\zeta}{\zeta+1}k^{-\varkappa-\nu}(1+o(1)),\quad k\to\infty.       
    \end{equation}
    \item  Assume $\zeta<n$ for some $n>0$, then
    \begin{equation}
    \sum\limits_{k=1}^\infty \frac{C_k}{(\varepsilon+S_k)^n} = \frac{\Gamma(\zeta+1)\Gamma(n-\zeta)}{\Gamma(n)} \varepsilon^{\zeta-n}(1 + o(1)),\quad \varepsilon\to 0+.
\end{equation}
\end{enumerate}
\end{lemma}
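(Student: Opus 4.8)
The plan is to handle the two parts separately, but in both to exploit the same two devices: a squeeze that lets me replace the merely-asymptotic weight $S_k$ by the exact power $k^{-\nu}$, and Abel summation that transfers the computation from the increments $C_l$ (about which I know little pointwise) to the tail sums $F_l=\sum_{m\ge l}C_m$ (whose power-law asymptotics are given). For part 1, since $S_l=l^{-\nu}(1+o(1))$ and the $C_l$ are nonnegative (as in the application, where $C_l=\lambda_l C_{ll,0}$), for any small $\epsilon>0$ and all large $k$ I can trap $\sum_{l\ge k}C_lS_l$ between $(1\mp\epsilon)\sum_{l\ge k}C_l\,l^{-\nu}$, reducing the problem to the exact weight. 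Writing $C_l=F_l-F_{l+1}$ and summing by parts gives
\[
\sum_{l\ge k} C_l\,l^{-\nu}=F_k\,k^{-\nu}+\sum_{l\ge k+1} F_l\big(l^{-\nu}-(l-1)^{-\nu}\big).
\]
The first term is $k^{-\varkappa-\nu}(1+o(1))$. The difference $l^{-\nu}-(l-1)^{-\nu}=-\nu l^{-\nu-1}(1+O(1/l))$ is now an explicit deterministic sequence, so inserting $F_l=l^{-\varkappa}(1+o(1))$ and comparing with $\int_k^\infty \nu x^{-\varkappa-\nu-1}\,dx=\tfrac{\nu}{\varkappa+\nu}k^{-\varkappa-\nu}$ yields $-\tfrac{\nu}{\varkappa+\nu}k^{-\varkappa-\nu}(1+o(1))$. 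Adding the two terms produces the factor $1-\tfrac{\nu}{\varkappa+\nu}=\tfrac{\varkappa}{\varkappa+\nu}=\tfrac{\zeta}{\zeta+1}$, and letting $\epsilon\to0$ closes part 1. The point of summing by parts is precisely that it moves the delicate difference onto the explicit power rather than onto the asymptotic sequence $S_l$.

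For part 2 I would first squeeze again: for large $k$ one has $\varepsilon+S_k=(1+o(1))(\varepsilon+k^{-\nu})$ uniformly in $\varepsilon$, and the finitely many small-$k$ terms contribute only $O(1)$, negligible against the asserted $\varepsilon^{\zeta-n}\to\infty$. It then suffices to analyze $\Theta(\varepsilon)=\sum_{k\ge1}C_k(\varepsilon+k^{-\nu})^{-n}$. Introducing the crossover scale $M=\varepsilon^{-1/\nu}$ and substituting $k=sM$ gives $\varepsilon+k^{-\nu}=\varepsilon(1+s^{-\nu})$, while by Abel summation the $C_k$-mass in a window is controlled by a tail-sum difference, $F_{sM}-F_{(s+ds)M}\approx \varkappa\,s^{-\varkappa-1}M^{-\varkappa}\,ds$. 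This turns $\Theta(\varepsilon)$ into a Riemann sum converging to
\[
\varkappa\,\varepsilon^{\zeta-n}\int_0^\infty\frac{s^{-\varkappa-1}}{(1+s^{-\nu})^n}\,ds=\varkappa\,\varepsilon^{\zeta-n}\cdot\frac1\nu B(\zeta,n-\zeta),
\]
where $M^{-\varkappa}=\varepsilon^{\zeta}$ and the substitution $t=s^{-\nu}$ converts the integral into the Beta integral $\tfrac1\nu\int_0^\infty t^{\zeta-1}(1+t)^{-n}\,dt=\tfrac1\nu B(\zeta,n-\zeta)$, convergent exactly because $0<\zeta<n$. Using $\varkappa/\nu=\zeta$, $\zeta\,\Gamma(\zeta)=\Gamma(\zeta+1)$ and $B(\zeta,n-\zeta)=\Gamma(\zeta)\Gamma(n-\zeta)/\Gamma(n)$ collapses the prefactor to $\Gamma(\zeta+1)\Gamma(n-\zeta)/\Gamma(n)$, as claimed.

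The main obstacle is making this last step rigorous: because I only have $F_k=k^{-\varkappa}(1+o(1))$ rather than exact values, the passage from the discrete sum to the Beta integral must be justified by a dominated-convergence estimate uniform in $\varepsilon$, with the two tails treated by hand. The region $k\ll M$ is where $\varepsilon+S_k\approx S_k$ and where the integrand's $s^{-\varkappa-1}$ singularity at $s=0$ sits, integrable only thanks to $n>\zeta$; the region $k\gg M$ is where $\varepsilon+S_k\approx\varepsilon$, contributing $\sim \delta^{\varkappa}\varepsilon^{\zeta-n}$ over $k>M/\delta$ and hence vanishing relative to the main term as $\delta\to0$. One could instead invoke a Karamata-type Tauberian theorem for the regularly-varying measure $\sum_k C_k\delta_k$, but the explicit split-and-scale argument is preferable here because it keeps the leading constant explicit, which is exactly what Theorem \ref{theor:main} requires.
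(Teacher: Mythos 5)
Your proposal is correct, and part 1 takes a genuinely different route from the paper. The paper proves part 1 by a rescaling/weak-convergence argument: it forms the measures $\mu_k=k^\varkappa\sum_{l\ge k}C_l\,\delta_{(l/k)^{-\nu}}$, shows their cumulative distribution functions converge to that of $\mathbf{1}_{[0,1]}\,dx^\zeta$, pairs this with uniform convergence of the interpolated functions $f_k(x)\to x$, and reads off $\int_0^1 x\,\zeta x^{\zeta-1}dx=\zeta/(\zeta+1)$; your squeeze-plus-Abel-summation computation is more elementary and stays at the level of explicit sums, buying self-containedness at the cost of the constant-sign bookkeeping you correctly perform. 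For part 2 your split-and-scale scheme is structurally the paper's proof: the paper cuts at $k_{a,\varepsilon}=\lfloor(\varepsilon a)^{-1/\nu}\rfloor$ (your scale $M=\varepsilon^{-1/\nu}$), rigorizes your Riemann-sum middle step by the same weak-convergence device (measures $\mu_{a,\varepsilon}$ with atoms at $k^{-\nu}/\varepsilon$, integrand converging uniformly to $(1+x)^{-n}$), and — importantly — controls the small-$k$ region not via integrability of the limiting integrand, which by itself bounds nothing, but by exactly the Abel-summation estimate you deployed in part 1: $\sum_{k\le \delta M}C_k/(\varepsilon+S_k)^n\le \sum_{k\le\delta M}C_kS_k^{-n}=O\big(\sum_{k\le\delta M}C_kk^{\nu n}\big)=O(\delta^{\nu n-\varkappa}\varepsilon^{\zeta-n})$, uniform in $\varepsilon$, using $n>\zeta$ in the same arithmetic; so the step you flag as the main obstacle closes with a tool you already have. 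Two tacit hypotheses are worth stating: both proofs need $C_k\ge 0$ (you say so explicitly; the paper's term-by-term $O(\cdot)$ bounds and cdf convergence need it too), and both need $S_k>0$ for every $k$, since the asymptotic hypothesis alone would permit a single index with $S_k=0$, $C_k>0$, contributing $C_k\varepsilon^{-n}\gg\varepsilon^{\zeta-n}$ and falsifying part 2 as stated.
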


\begin{proof}\mbox\hfill

\textbf{Part 1.} Fix $k$ and consider a discrete measure $\mu_k$ with atoms located at points $x_{l,k}=(l/k)^{-\nu}$:
\begin{equation}
    \mu_k = k^\varkappa \sum\limits_{l=k}^\infty C_l \delta_{x_{l,k}}
\end{equation}
By assumption on $F_k$ as $k\rightarrow\infty$, the cumulative distribution function of $\mu_k$ converges to cumulative distribution function of measure $\mu(dx)=\mathbf{1}_{[0,1]}dx^\zeta$, i.e. the measure $\mu_k$ weakly converges to $\mu$.

Next, consider the function $f_k(x)$ defined at points $x_{l,k}$ by $f_k(x_{l,k}) = k^\nu S_l$, with $f_k(0)=0$, linearly interpolated between the points $x_{l,k}$, and with $f(x>1)=f(x_{k,k})$. Then $f_k(x)$ converges uniformly on $[0,1]$ to $f(x)=x$ as $k\rightarrow\infty$.

The above two observations on $\mu_k$ and $f_k$ give
\begin{equation}
    \lim_{k\rightarrow\infty} k^{\varkappa+\nu} \sum\limits_{l=k}^\infty C_l S_l = \lim_{k\rightarrow\infty} \int\limits_0^1 f_k(x)\mu_k(dx) = \int\limits_0^1 x \zeta x^{\zeta-1} dx = \frac{\zeta}{\zeta+1}
\end{equation}

\textbf{Part 2.} For any $a>0$ let $k_{a,\varepsilon}=\lfloor (\varepsilon a)^{-1/\nu}\rfloor$. Rescale the sum of interest and divide it  into two parts separated by index $k_{a,\varepsilon}$:
\begin{align}
    \varepsilon^{n-\zeta}\sum\limits_{k=1}^\infty \frac{C_k}{(\varepsilon+S_k)^n} = \varepsilon^{n-\zeta}\sum_{k=1}^{k_{a,\varepsilon}}+\varepsilon^{n-\zeta}\sum_{k=k_{a,\varepsilon}+1}^\infty=:I_1(a,\varepsilon)+I_2(a,\varepsilon). 
\end{align}
We will show that
\begin{align}
    \limsup_{\varepsilon\to 0+} I_1(a,\varepsilon) ={}& O(a^{\zeta-n}),\label{eq:i1}\\
    \lim_ {\varepsilon\to 0+} I_2(a,\varepsilon)={}&\int_0^a \frac{dx^\zeta}{(1+x)^n}=:J(a).\label{eq:i2}
\end{align}
These two fact will imply the statement of the lemma since $\zeta-n<0$ and
\begin{align}
    \lim_{a\to+\infty} J(a) ={}& \int_0^{\infty} \frac{\zeta x^{\zeta-1}}{(1+x)^n}dx\\
    \stackrel{1+x=1/t}{=}{}&\zeta\int_0^1 t^{n-\zeta-1}(1-t)^{\zeta-1} dt \\
    ={}&\zeta B(n-\zeta,\zeta).
\end{align}
To prove Eq. \eqref{eq:i1}, use summation by parts and the hypotheses on $S_k,F_k$:
\begin{align}
    \varepsilon^{n-\zeta}\sum_{k=1}^{k_{a,\varepsilon}}\frac{C_k}{(\varepsilon+S_k)^n}={}&\varepsilon^{n-\zeta} O\Big(\sum_{k=1}^{k_{a,\varepsilon}}C_k k^{\nu n}\Big)\\
    ={}&\varepsilon^{n-\zeta} O\Big(\sum_{k=1}^{k_{a,\varepsilon}}(F_k-F_{k+1}) k^{\nu n}\Big)\\
    ={}&\varepsilon^{n-\zeta}O\Big(F_1-F_{k_{a,\varepsilon}+1}k_{a,\varepsilon}^{\nu n}+\sum_{k=2}^{k_{a,\varepsilon}}F_k (k^{\nu n}-(k-1)^{\nu  n})\Big)\\
    ={}&\varepsilon^{n-\zeta} O(k_{a,\varepsilon}^{\nu n-\varkappa})\\
    ={}&O(a^{\zeta-n}).
\end{align}
To prove Eq. \eqref{eq:i2}, it is convenient to represent the sum $I_2(a,\varepsilon)$ as an integral over discrete measure $\mu_{a,\varepsilon}$ with atoms $x_{k,\varepsilon}=k^{-\nu}/\varepsilon$:
\begin{align}
    I_2(a,\varepsilon)={}&\int f_{\varepsilon} d\mu_{a,\varepsilon}(x),\\
    \mu_{a,\varepsilon} ={}&\varepsilon^{-\zeta}\sum_{k=k_{a,\varepsilon}+1}^\infty C_k\delta_{x_{k,\varepsilon}},\\
    f_\varepsilon(x_{k,\varepsilon})={}&(1+S_k/\varepsilon)^{-n}.
\end{align}
Again, by assumption on $F_k$, as $\varepsilon\to 0+$, the cumulative distribution functions of $\mu_{a,\varepsilon}$ converge to the cumulative distribution function of the measure $\mu_{a}(dx)=\mathbf 1_{[0,a]}dx^{\zeta}$, i.e., the measures $\mu_{a,\varepsilon}$ weakly converge to $\mu_{a}.$ At the same time, the functions $f_\varepsilon(x)$ converge to $(1+x)^{-n}$ uniformly on $[0,a]$. This implies desired convergence \eqref{eq:i2}.
\end{proof}

Now we are ready to proceed with the proof of theorem \ref{theor:main}.

\lossasym*
\begin{proof}
As asymptotic of partial sums $\sum_{t=1}^T t L(t)$ are associated with asymptotic of $\widetilde{L}'(1-\varepsilon)$ at $\varepsilon\rightarrow+0$ we recall its expression \eqref{eq:dzlz}
\begin{equation}\label{eq:dzlz2}
    \frac{d}{dz}\widetilde L(z) = \frac{\frac{d}{dz} \widetilde{V}(z)}{2(1-z\widetilde U(z))}+\frac{\widetilde V(z)\frac{d}{dz}(z\widetilde U(z))}{2(1-z\widetilde U(z))^2}
\end{equation} 
Full expressions for generating functions \eqref{eq:wuexp2} and \eqref{eq:wvexp2} can be significantly simplified in the limit $\varepsilon\rightarrow+0$ and $\lambda_k\rightarrow0$
\begin{align}
    \label{eq:U_asym}
    &\widetilde{U}(1-\varepsilon) = \gamma \sum\limits_{k=1}^\infty \frac{\left(\frac{\alpha\lambda_k}{1-\beta}\right)^2}{\varepsilon+2\frac{\alpha\lambda_k}{1-\beta}}(1 + O(\lambda_k)+O(\varepsilon))\\
    \label{eq:V_asym}
    &\widetilde{V}(1-\varepsilon) =  \sum\limits_{k=1}^\infty \frac{\lambda_k C_{kk}}{\varepsilon+2\frac{\alpha\lambda_k}{1-\beta}}(1 + O(\lambda_k)+O(\varepsilon))
\end{align}
Differentiating we get
\begin{align}
    \label{eq:Uder_asym}
    &\widetilde{U}'(1-\varepsilon) = \gamma \sum\limits_{k=1}^\infty \frac{\left(\frac{\alpha\lambda_k}{1-\beta}\right)^2}{\left(\varepsilon+2\frac{\alpha\lambda_k}{1-\beta}\right)^2}(1 + O(\lambda_k)+O(\varepsilon))\\
    &\widetilde{V}'(1-\varepsilon) =  \sum\limits_{k=1}^\infty \frac{\lambda_k C_{kk}}{\left(\varepsilon+2\frac{\alpha\lambda_k}{1-\beta}\right)^2}(1 + O(\lambda_k)+O(\varepsilon))
    \label{eq:Vder_asym}
\end{align}
Let's make a couple of observations. From asymptotic expressions above one can show that
\begin{align}
    &\widetilde{U}(1-\varepsilon)\sim\sum_{k=1}^{\infty}\frac{\lambda_k^2}{\varepsilon+\lambda_k}\sim \sum_{k=1}^{\varepsilon^{-\frac{1}{\nu}}}\lambda_k\sim 1\\
    &\widetilde{U}'(1-\varepsilon)\sim\sum_{k=1}^{\infty}\frac{\lambda_k^2}{(\varepsilon+\lambda_k)^2}\sim \sum_{k=1}^{\varepsilon^{-\frac{1}{\nu}}}1\sim \varepsilon^{-\frac{1}{\nu}}\\
    &\widetilde{V}(1-\varepsilon)\sim\sum_{k=1}^{\infty}\frac{\lambda_k C_{kk}}{\varepsilon+\lambda_k}\sim \sum_{k=1}^{\varepsilon^{-\frac{1}{\nu}}}C_{kk}\sim \begin{cases}
    1, \quad \zeta>1\\
    \varepsilon^{\zeta-1}, \quad \zeta<1
    \end{cases}\\
    \label{eq:V1_estimate}
    &\widetilde{V}'(1-\varepsilon)\sim\sum_{k=1}^{\infty}\frac{\lambda_k C_{kk}}{(\varepsilon+\lambda_k)^2}\sim \sum_{k=1}^{\varepsilon^{-\frac{1}{\nu}}}\frac{C_{kk}}{\lambda_k}\sim \begin{cases}
    1, \quad \zeta>2\\
    \varepsilon^{\zeta-2}, \quad \zeta<2
    \end{cases}
\end{align}
Here the sign $\sim$ means an asymptotic ($\varepsilon\rightarrow0$) equality up to a multiplicative constant. Using these asymptotic relations we see that for $\widetilde{L}'(1-\varepsilon)$ leading asymptotic terms are given by either $\widetilde{U}'(1-\varepsilon)$ when $\zeta>2-\frac{1}{\nu}$, or by $\widetilde{V}'(1-\varepsilon)$ when $\zeta<2-\frac{1}{\nu}$. 

Now we set for simplicity $\Lambda=K=1$ and apply lemma \ref{lemma:asymsumcalc} to \eqref{eq:Uder_asym} and \eqref{eq:Vder_asym} to get more refined asymptotic expressions. For $V'(1-\varepsilon)$ we assume $\zeta<2$ because $V'(1-\varepsilon)=O(1)$ for $\zeta>2$ and refined asymptotic is not needed. 
\begin{equation}\label{eq:V_asym_refined}
\begin{split}
    \widetilde{V}'(1-\varepsilon) &=\sum\limits_{k=1}^\infty \frac{\lambda_k C_{kk}(1+O(\lambda_k)+O(\varepsilon))}{\left(\varepsilon+2\frac{\alpha \lambda_k}{1-\beta}\right)^2}\\
    &=(1+O(\varepsilon))\sum\limits_{k=1}^\infty \frac{\lambda_k C_{kk}}{\left(\varepsilon+2\frac{\alpha \lambda_k}{1-\beta}\right)^2} + \sum\limits_{k=1}^\infty \frac{O(C_{kk}\lambda_k^2)}{\left(\varepsilon+2\frac{\alpha \lambda_k}{1-\beta}\right)^2}\\
    &\stackrel{(1)}{=}\left(\frac{2\alpha}{1-\beta}\right)^{-2}\sum\limits_{k=1}^\infty \frac{C_{kk}\lambda_k}{\left(\frac{\varepsilon(1-\beta)}{2\alpha}+\lambda_k\right)^2} + O(\varepsilon^{\min(0,\zeta-1)})\\
    &\stackrel{(1)}{=} \Gamma(\zeta+1)\Gamma(2-\zeta)\left(\frac{2\alpha}{1-\beta}\right)^{-\zeta} \varepsilon^{\zeta-2}(1+o(1))
\end{split}
\end{equation}
Here in (1) we first used first part of lemma \ref{lemma:asymsumcalc} to get $\sum_{l\geq k}C_{ll}\lambda_l^2 = \frac{\zeta}{\zeta+1}k^{-\varkappa-\nu}(1+o(1))$. Then for $1<\zeta<2$ the second sum is finite (see \eqref{eq:V1_estimate}) and for $\zeta<1$ second part of lemma \ref{lemma:asymsumcalc} allows to bound the sum as $O(\varepsilon^{\zeta-1})$. In (2) we applied second part of lemma \ref{lemma:asymsumcalc} to the first sum in (1).

For $U'(1-\varepsilon)$ we have
\begin{equation}\label{eq:U_asym_refined}
    \begin{split}
        \widetilde{U}'(1-\varepsilon) &= \gamma\sum\limits_{k=1}^\infty \frac{\left(\frac{\alpha\lambda_k}{1-\beta}\right)^2(1 + O(\lambda_k)+O(\varepsilon))}{\left(\varepsilon+2\frac{\alpha\lambda_k}{1-\beta}\right)^2} \\
        &\stackrel{(1)}{=}\frac{\gamma}{4}(1+O(\varepsilon))\sum\limits_{k=1}^\infty \frac{\lambda_k^2}{\left(\frac{\varepsilon(1-\beta)}{2\alpha}+\lambda_k\right)^2}+\frac{\gamma}{4}\sum\limits_{k=1}^\infty \frac{O(\lambda_k^3)}{\left(\frac{\varepsilon(1-\beta)}{2\alpha}+\lambda_k\right)^2}\\
        &\stackrel{(2)}{=}\frac{\gamma}{4\nu} \Gamma(2-\frac{1}{\nu})\Gamma(\frac{1}{\nu})\left(\frac{2\alpha}{1-\beta}\right)^\frac{1}{\nu} \varepsilon^{-\frac{1}{\nu}}(1+o(1))
    \end{split}
\end{equation}
In (1) the second sum can be bounded as $O(1)$ after recalling that $\nu>1$. In (2) we again applied second part of lemma \ref{lemma:asymsumcalc} with $C_k = \lambda_k^2$ and $n=2$ since $\sum_{l\geq k}\lambda_k^2=\frac{1}{2\nu-1}k^{-2\nu+1}(1+o(1))$. To get the coefficient in (2) we used $\frac{\Gamma(3-1/\nu)}{2\nu-1}=\frac{\Gamma(2-1/\nu)}{\nu}$.

Substituting \eqref{eq:U_asym_refined} and \eqref{eq:V_asym_refined} into \eqref{eq:dzlz2} we get
\begin{equation}
    \widetilde{L}'(1-\varepsilon)=\frac{C_{V'}}{2(1-\widetilde{U}(1))}\varepsilon^{-\max(0,2-\zeta)}(1+o(1)) + \frac{\widetilde{V}(1)C_{U'}}{2(1-\widetilde{U}(1))^2}\varepsilon^{-\frac{1}{\nu}}(1+o(1))
\end{equation}
where $C_{V'}$ and $C_{U'}$ are constants in the asymptotics \eqref{eq:V_asym_refined} and \eqref{eq:U_asym_refined}. Picking the leading term depending on the sign of $\zeta-(2-\frac{1}{\nu})$ we obtain final expressions
\begin{equation}\label{eq:dzlz_asym_final}
    \widetilde{L}'(1-\varepsilon) = 
    \begin{cases}
    \frac{\Gamma(\zeta+1)\Gamma(2-\zeta)}{2(1-\widetilde{U}(1))}\left(\frac{2\alpha}{1-\beta}\right)^{-\zeta}\varepsilon^{\zeta-2}(1+o(1)), \quad &\zeta<2-\frac{1}{\nu} \\
    \gamma\frac{\widetilde{V}(1)\Gamma(2-\frac{1}{\nu})\Gamma(\frac{1}{\nu})}{8\nu(1-\widetilde{U}(1))^2} \left(\frac{2\alpha}{1-\beta}\right)^\frac{1}{\nu}\varepsilon^{-\frac{1}{\nu}}(1+o(1)), \quad & \zeta<2-\frac{1}{\nu}
    \end{cases}
\end{equation}

The last step is to apply Tauberian theorem (\citet{feller-vol-2}, p. 445) which states that if generating function $\widetilde{F}(z)$ of a sequence $F(t)$ has asymptotic $\widetilde{F}(1-\varepsilon)=C\varepsilon^{-\rho}(1+o(1))$, then the partial sums have the asymptotic $\sum_{t=1}^T F(t)=\frac{C}{\Gamma(\rho+1)}T^\rho(1+o(1))$. Applying this theorem to $F(t)=tL(t)$ and its generating function given by
\eqref{eq:dzlz_asym_final} we get precisely \eqref{eq:tlttb} with $\Lambda=K=1$. To restore the values of constants $\Lambda,K$ recall that the part of the loss coming from signal ($\widetilde{V}(z)$) is simply proportional to $K$, and $\Lambda$ always goes in combination $\alpha\Lambda$.
\end{proof}

Finally, let us show how \eqref{eq:tlttb} can be used to obtain asymptotics \eqref{eq:ltsig},\eqref{eq:ltnoise} for the loss values $L(t)$ themselves. Assuming that $L(t)=Ct^{-\xi}, \xi<2$, we get 
\begin{equation}
    \sum_{t=1}^T tL(t) = C\sum_{t=1}^T t^{-\xi+1}(1+o(1))\approx C \int\limits_1^T t^{-\xi+1}(1+o(1)) = \frac{C}{2-\xi}T^{2-\xi}(1+o(1)) 
\end{equation}
Comparing this with \eqref{eq:tlttb}, we find the constants $C$ appearing in \eqref{eq:ltsig},\eqref{eq:ltnoise}.

\subsection{Exponential divergence of the loss and convergence-divergence transition}\label{sec:divergence}
\paragraph{Convergence radius $r_L$ and exponential divergence rate.} By Proposition \ref{prop:stab}, the convergence radius $r_L<1$ if and only if $\widetilde U(1)>1,$ and in the case $\widetilde U(1)>1$ it is determined by the condition 
\begin{equation}
    1=r_l\widetilde U(r_L).
\end{equation}
If $r_L<1$, loss exponentially diverges as $t\to\infty$:
\begin{prop}
Under assumption of Proposition \ref{prop:stab}, suppose that $r_L<1$. Then 
\begin{align}\label{eq:ltrl}
    \sum_{t=0}^T L(t)r_L^t ={}&\frac{\widetilde V(r_L)}{2(1+r_L^2\widetilde U'(r_L))}T(1+o(1)),\quad (T\to \infty).
\end{align}
\end{prop}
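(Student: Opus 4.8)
The plan is to reduce the statement to the singularity structure of $\widetilde L$ at its convergence radius $r_L$ and then invoke the same Tauberian theorem (Feller, p.~445) already used for Theorem~\ref{theor:main}, but now applied to the rescaled sequence $F(t)=L(t)r_L^t$. First I would pass to generating functions: the generating function of $F(t)$ is
\begin{equation}
    \widetilde F(w):=\sum_{t=0}^\infty F(t)w^t=\widetilde L(r_L w),
\end{equation}
which by the very definition of $r_L$ has radius of convergence exactly $1$. Crucially, since $L(t)\ge 0$ we have $F(t)\ge 0$, so the nonnegativity hypothesis of the Tauberian theorem is met and only the one-sided asymptotics of $\widetilde F$ as $w\to 1^-$ will be needed.

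The heart of the argument is to show that $z=r_L$ is a \emph{simple} pole of $\widetilde L(z)=\tfrac{\widetilde V(z)/2}{g(z)}$, where $g(z)=1-z\widetilde U(z)$. Since $\widetilde U(r_L)=1/r_L$ by the defining relation $r_L\widetilde U(r_L)=1$, I compute $g'(r_L)=-\widetilde U(r_L)-r_L\widetilde U'(r_L)=-\tfrac{1}{r_L}\bigl(1+r_L^2\widetilde U'(r_L)\bigr)$. Because $r_L\in(0,1)$, Lemma~\ref{lemma:monotone} applies at $z=r_L$: statement~2 gives that $\widetilde U,\widetilde V$ are analytic and $\widetilde V(r_L)>0$ there (the coefficients of $\widetilde V$ are the noiseless losses, hence nonnegative), while statement~3 gives $\tfrac{d}{dz}(z\widetilde U(z))\big|_{r_L}=\widetilde U(r_L)+r_L\widetilde U'(r_L)>0$, equivalently $1+r_L^2\widetilde U'(r_L)>0$. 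Therefore $g'(r_L)<0$, so $r_L$ is a simple zero of $g$ that is not cancelled by $\widetilde V$, making it a genuine simple pole of $\widetilde L$.

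Next I would extract the leading singular behavior on the real axis. Writing $\varepsilon=1-w$ and $z=r_L(1-\varepsilon)$, so that $z-r_L=-r_L\varepsilon$, a first-order Taylor expansion of $g$ at $r_L$ gives
\begin{equation}
    \widetilde F(1-\varepsilon)=\widetilde L\bigl(r_L(1-\varepsilon)\bigr)=\frac{\widetilde V(r_L)/2}{g'(r_L)(z-r_L)}(1+o(1))=\frac{\widetilde V(r_L)}{2\bigl(1+r_L^2\widetilde U'(r_L)\bigr)}\,\varepsilon^{-1}(1+o(1)).
\end{equation}
This is precisely an asymptotic of the form $\widetilde F(1-\varepsilon)=C\varepsilon^{-\rho}(1+o(1))$ with exponent $\rho=1$ and constant $C=\tfrac{\widetilde V(r_L)}{2(1+r_L^2\widetilde U'(r_L))}$. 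Applying the Tauberian theorem to the nonnegative sequence $F(t)$ yields $\sum_{t=1}^T F(t)=\tfrac{C}{\Gamma(2)}T(1+o(1))=CT(1+o(1))$, and since $\Gamma(2)=1$ and adding the constant term $F(0)=L(0)$ does not affect the leading order, this is exactly the claimed \eqref{eq:ltrl}.

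The main obstacle I anticipate is the justification that $r_L$ is a clean simple pole: one must rule out both a higher-order zero of $g$ and an accidental cancellation by $\widetilde V$. Both are resolved by Lemma~\ref{lemma:monotone} (strict monotonicity of $z\widetilde U(z)$ forces $g'(r_L)\neq 0$, and positivity of $\widetilde V$ near $(0,1)$ forces $\widetilde V(r_L)>0$), so no further analytic work is needed beyond checking that $r_L$ lies in the interval $(0,1)$ where the lemma is valid — which holds precisely because we assume $r_L<1$. The remaining appeal to the Tauberian theorem is routine once the nonnegativity of $F(t)$ and the $\varepsilon^{-1}$ asymptotic are in hand.
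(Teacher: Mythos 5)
Your proof is correct and follows essentially the same route as the paper's: the paper likewise passes to the rescaled generating function $\widetilde L_1(z)=\widetilde L(r_L z)$, expands the denominator $1-z\widetilde U(z)$ to first order at its zero $z=r_L$ to obtain $\widetilde L_1(1-\epsilon)=\tfrac{\widetilde V(r_L)}{2(1+r_L^2\widetilde U'(r_L))}\epsilon^{-1}(1+o(1))$, and concludes by the Tauberian theorem applied to the nonnegative sequence $L(t)r_L^t$. Your explicit verification that the zero of $1-z\widetilde U(z)$ is simple and not cancelled by the numerator is left implicit in the paper, and making it explicit is a genuine improvement.

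However, one of your added justifications is wrong as stated, even though the conclusion it serves is true. The coefficients $V_t$ of $\widetilde V$ are \emph{not} the noiseless losses and need not be nonnegative: the operator $A_\lambda$ in \eqref{eq:atl} retains the subtracted term proportional to $\tau\gamma\alpha^2\lambda^2$, so at $\beta=0$ one finds $V_{t+1}=\sum_k\lambda_k C_{kk,0}\big((1-\alpha\lambda_k)^2-\tau\gamma\alpha^2\lambda_k^2\big)^{t}$, whose $k$-th contribution alternates in sign whenever $(1-\alpha\lambda_k)^2<\tau\gamma\alpha^2\lambda_k^2$ (e.g.\ $\alpha\lambda_k=1$ with $\tau\gamma=1$, which is admissible under the hypotheses of Proposition~\ref{prop:stab}). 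The correct route to $\widetilde V(r_L)>0$ is termwise positivity of the closed form \eqref{eq:wvexp2}: the numerator of the $k$-th term factors as $\lambda_k C_{kk,0}\big((1-\beta z)(1-\beta^2 z)+2\alpha\beta\lambda_k z\big)$, which is manifestly positive on $z\in(0,1)$ for $\beta\ge 0$, and remains positive for $\beta\in(-1,0)$ using $\alpha\lambda_k<2(1+\beta)$ (a short monotonicity estimate shows the bounding expression is decreasing in $z$ with value $(1-|\beta|)^3>0$ at $z=1$), while $S>0$ on $(0,1)$ by Lemma~\ref{lemma:monotone}(1). A smaller point of the same kind: Lemma~\ref{lemma:monotone}(3) literally gives only $1+r_L^2\widetilde U'(r_L)\ge 0$; strictness follows from the explicit representation $\tfrac{d}{dz}(z\widetilde U)=\gamma\alpha^2\sum_k\lambda_k^2 R/S^2$ with $R\ge 0$ not vanishing simultaneously for all $\lambda_k$ — a gloss you share with the paper, but worth noting if you want the simple-pole claim fully airtight. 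With these repairs your argument is complete.
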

\begin{proof}
Consider the modified generating function $\widetilde L_1(z)=\widetilde L(r_Lz),$ which is the generating function for the sequence $L(t)r_L^t:$
\begin{equation}
    \widetilde L_1(z) = \sum_{t=0}^\infty z^t L(t)r_L^t.
\end{equation}
The convergence radius for $\widetilde L_1$ is 1. We can also write 
\begin{align}
    \widetilde L_1(1-\epsilon) ={}&\frac{\widetilde V(r_L(1-\epsilon))}{2(1-r_L(1-\epsilon)\widetilde U(r_L(1-\epsilon)))}\\
    ={}&\frac{\widetilde V(r_L)}{2(1+r^2_L\widetilde U'(r_L))\epsilon}(1+o(1)),\quad (\epsilon\to 0).
\end{align}
The statement of the proposition then follows by the Tauberian theorem.
\end{proof}

If we assume that
\begin{equation}
    L(t) \sim C r_L^{-t} 
\end{equation}
with some constant $C$, then this constant is fixed by Eq. \eqref{eq:ltrl}:
\begin{equation}
    L(t) \sim\frac{\widetilde V(r_L)}{2(1+r_L^2\widetilde U'(r_L))} r_L^{-t}. 
\end{equation}

\paragraph{``Eventual divergence'' phase with small $\alpha$.} Recall that we characterized the ``eventual divergence'' phase by the condition $\widetilde U(1)=\infty.$ In this phase we necessarily have $r_L<1$. However, $r_L$ can be very close to 1 (and thus ``divergence delayed'') -- for example if the learning rate $\alpha$ is very small. In this scenario we can derive an asymptotic expression for the difference $\epsilon_*=1-r_L$. 

In the sequel, we assume for simplicity that $\beta=0$ and $\tau=\gamma=1$, and that the eigenvalues and initial covariance coefficients $C_{kk,0}$ satisfy power law relations \eqref{eq:power_laws} with $\tfrac{1}{2}<\nu<1$ (corresponding to the ``eventual divergence'' phase). 

\begin{prop} As $\alpha\to 0,$ 
\begin{equation}\label{eq:epsgg}
    \epsilon_* \equiv 1-r_L =\Big(\frac{1}{4\nu}\Gamma(2-1/\nu)\Gamma(1/\nu-1)\Big)^{\nu/(1-\nu)}(2\alpha\Lambda)^{1/(1-\nu)} (1+o(1)).
\end{equation}
\end{prop}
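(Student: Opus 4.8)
The plan is to pin down $\varepsilon_*=1-r_L$ from its defining equation and extract its leading behaviour as $\alpha\to0$ by an asymptotic evaluation of $\widetilde U$ near the singularity $z=1$. I would first specialize the generating function \eqref{eq:wuexp2} to $\beta=0$, $\gamma=\tau=1$, where the polynomial $S$ collapses to $S(\alpha,0,1,\lambda,z)=1-z(1-2\alpha\lambda)$, so that
\begin{equation}
\widetilde U(1-\varepsilon)=\alpha^2\sum_k\frac{\lambda_k^2}{\varepsilon+2\alpha\lambda_k(1-\varepsilon)}
\end{equation}
holds \emph{exactly}. By Lemma \ref{lemma:monotone} the map $z\mapsto z\widetilde U(z)$ is continuous and increasing on $(0,1)$, so the root $r_L$ of $z\widetilde U(z)=1$ is unique. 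Moreover, for any fixed $\delta>0$ one has $\widetilde U(1-\delta)\le\frac{\alpha^2}{\delta}\sum_k\lambda_k^2\to0$ as $\alpha\to0$ (the series converging since $\nu>\tfrac12$), which forces $r_L>1-\delta$, i.e. $\varepsilon_*<\delta$, for small $\alpha$; hence $\varepsilon_*\to0$. Since then $(1-\varepsilon_*)^{-1}=1+O(\varepsilon_*)$, the equation $(1-\varepsilon_*)\widetilde U(1-\varepsilon_*)=1$ becomes $\widetilde U(1-\varepsilon_*)=1+O(\varepsilon_*)$, whose right-hand side is $1$ to leading order.

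Next I would evaluate the joint small-$\varepsilon$, small-$\alpha$ asymptotics of $\widetilde U(1-\varepsilon)$. Replacing $(1-\varepsilon)$ in the denominator by $1$ (an $O(\varepsilon)$ relative perturbation) and factoring $a\equiv2\alpha\Lambda$ out gives, using $\lambda_k=\Lambda k^{-\nu}(1+o(1))$,
\begin{equation}
\widetilde U(1-\varepsilon)=\tfrac14\,a\sum_k\frac{k^{-2\nu}}{\tilde\varepsilon+k^{-\nu}}\bigl(1+o(1)\bigr),\qquad\tilde\varepsilon\equiv\frac{\varepsilon}{a}.
\end{equation}
This sum is of the form treated by Lemma \ref{lemma:asymsumcalc}(2) with $S_k=k^{-\nu}$, $n=1$ and $C_k=k^{-2\nu}$, whose tail $F_k=\sum_{l\ge k}l^{-2\nu}=\tfrac{1}{2\nu-1}k^{-(2\nu-1)}(1+o(1))$ has exponent $\varkappa=2\nu-1$ and effective $\zeta=2-1/\nu$; the hypotheses $\varkappa>0$, $\zeta<n=1$ are exactly $\nu>\tfrac12$, $\nu<1$. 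After absorbing the tail constant $\tfrac1{2\nu-1}$ and using $\Gamma(3-1/\nu)/(2\nu-1)=\tfrac1\nu\Gamma(2-1/\nu)$, the lemma gives
\begin{equation}
\widetilde U(1-\varepsilon)=\frac{1}{4\nu}\Gamma(2-\tfrac1\nu)\Gamma(\tfrac1\nu-1)\,(2\alpha\Lambda)^{1/\nu}\,\varepsilon^{1-1/\nu}\bigl(1+o(1)\bigr).
\end{equation}
The lemma requires $\tilde\varepsilon\to0$, which I would confirm a posteriori from the solution below: $\varepsilon_*\sim\alpha^{1/(1-\nu)}$ yields $\tilde\varepsilon\sim\alpha^{\nu/(1-\nu)}\to0$.

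Finally I would solve the leading-order balance
\begin{equation}
\frac{1}{4\nu}\Gamma(2-\tfrac1\nu)\Gamma(\tfrac1\nu-1)\,(2\alpha\Lambda)^{1/\nu}\,\varepsilon_*^{1-1/\nu}=1.
\end{equation}
Denoting the $\nu$-dependent constant by $C(\nu)$ and noting $1-1/\nu=(\nu-1)/\nu<0$, raising both sides to the power $\tfrac{\nu}{\nu-1}$ gives $\varepsilon_*=C(\nu)^{\nu/(1-\nu)}(2\alpha\Lambda)^{1/(1-\nu)}$, which is precisely \eqref{eq:epsgg}.

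The main obstacle is that \eqref{eq:epsgg} is a coupled-limit statement: it is not a fixed-problem $\varepsilon\to0$ expansion but an $\alpha\to0$ asymptotic in which both $\varepsilon_*$ and the natural scale $\tilde\varepsilon=\varepsilon_*/(2\alpha\Lambda)$ vanish at $\alpha$-dependent rates. Rigour requires: (i) a bootstrap confirming $\tilde\varepsilon\to0$ so that Lemma \ref{lemma:asymsumcalc}(2) genuinely applies; (ii) checking that the $O(\varepsilon)$ error from dropping $(1-\varepsilon)$ and the $O(\varepsilon_*)$ error from the $(1-\varepsilon_*)^{-1}$ prefactor perturb the balance only multiplicatively by $1+o(1)$, hence shift $\varepsilon_*$ by a relative $o(1)$; and (iii) verifying that the dominant contribution to the sum comes from indices $k\sim(2\alpha\Lambda/\varepsilon_*)^{1/\nu}\to\infty$, where $\lambda_k=\Lambda k^{-\nu}(1+o(1))$ is accurate — the same dominant-balance bookkeeping already performed in the proof of Lemma \ref{lemma:asymsumcalc}.
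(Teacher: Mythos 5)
Your proposal is correct and takes essentially the same route as the paper's proof: both start from the exact specialization $\widetilde U(1-\varepsilon)=\alpha^2\sum_k\lambda_k^2/(\varepsilon+2\alpha\lambda_k(1-\varepsilon))$, apply Lemma \ref{lemma:asymsumcalc}(2) with $\varkappa=2\nu-1$, $n=1$ (using $\Gamma(3-1/\nu)/(2\nu-1)=\Gamma(2-1/\nu)/\nu$) to obtain $\widetilde U(1-\varepsilon)=\tfrac{1}{4\nu}\Gamma(2-\tfrac1\nu)\Gamma(\tfrac1\nu-1)(2\alpha\Lambda)^{1/\nu}\varepsilon^{1-1/\nu}(1+o(1))$, and then solve the balance $(1-\epsilon_*)\widetilde U(1-\epsilon_*)=1$ after establishing $\epsilon_*\to0$. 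The only cosmetic difference is bookkeeping of the coupled limit — you rescale to $\tilde\varepsilon=\varepsilon/(2\alpha\Lambda)$ and drop the $(1-\varepsilon)$ factor and $\lambda_k$-corrections with a posteriori checks, whereas the paper keeps the exact rewriting with $\alpha$-dependent sequences $S_k=\lambda_k/(\Lambda(1-2\alpha\lambda_k))$, $C_k=\lambda_k^2/(1-2\alpha\lambda_k)$ and asserts uniformity of the lemma's $o(1)$ for small $\alpha$; your explicit argument that $\epsilon_*\to0$ even supplies a step the paper only asserts.
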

\begin{proof}
With $\beta=0$ and $\tau=\gamma=1$, we have by Eq. \eqref{eq:wuexp2}
\begin{align}
    \widetilde U(1-\epsilon) ={}& \alpha^2\sum_k\frac{\lambda_k^2}{2\alpha\lambda_k(1-\epsilon)+\epsilon}\\
    ={}&\frac{\alpha}{2\Lambda}\sum_k\frac{\lambda_k^2/(1-2\alpha\lambda_k)}{\lambda_k/(\Lambda(1-2\alpha\lambda_k))+\epsilon/(2\alpha\Lambda)}.
\end{align}
For any $\alpha$, we can now apply Lemma \ref{lemma:asymsumcalc}(2) with $\varkappa=2\nu-1, \zeta=2-1/\nu$ and $n=1$ and get
\begin{align}
    \widetilde U(1-\epsilon) ={}&\frac{\alpha\Lambda}{2(2\nu-1)}\Gamma(3-1/\nu)\Gamma(1/\nu-1)\Big(\frac{\epsilon}{2\alpha\Lambda}\Big)^{1-1/\nu}(1+o(1))\label{eq:u1eps1}\\
    ={}&\frac{1}{4\nu}\Gamma(2-1/\nu)\Gamma(1/\nu-1)(2\alpha\Lambda)^{1/\nu}\epsilon^{1-1/\nu}(1+o(1)),\quad (\epsilon\to 0).
\end{align}
It is easy to see that $o(1)$ here is uniform for small $\alpha$. The desired $\epsilon_*=\epsilon_*(\alpha)$ is determined from the equation $1=(1-\epsilon_*)\widetilde U(1-\epsilon_*)$. As $\alpha\to 0$, we clearly have $\epsilon_*=\epsilon_*(\alpha)\to 0$. It follows that
\begin{equation}
    1 = \frac{1}{4\nu}\Gamma(2-1/\nu)\Gamma(1/\nu-1)(2\alpha\Lambda)^{1/\nu}\epsilon_*(\alpha)^{1-1/\nu}(1+o(1)),\quad (\alpha\to 0).
\end{equation}
This implies desired expression \eqref{eq:epsgg}.
\end{proof}
We can similarly use Lemma \ref{lemma:asymsumcalc} to also estimate the derivative $\widetilde U'(1-\epsilon):$
\begin{align}
    \widetilde U'(1-\epsilon) ={}&\alpha^2\sum_k\frac{\lambda_k^2(1-2\alpha\lambda_k)}{(2\alpha\lambda_k(1-\epsilon_*)+\epsilon)^2}\\
    ={}&\frac{1}{4\Lambda^2}\sum_k\frac{\lambda_k^2/(1-2\alpha\lambda_k)}{(\lambda_k/(\Lambda(1-2\alpha\lambda_k))+\epsilon/(2\alpha\Lambda))^2}\\
    ={}&\frac{1}{4(2\nu-1)}\Gamma(3-1/\nu)\Gamma(1/\nu)\Big(\frac{\epsilon}{2\alpha\Lambda}\Big)^{-1/\nu}(1+o(1))\\
    ={}&\frac{1}{4\nu}\Gamma(2-1/\nu)\Gamma(1/\nu)(2\alpha\Lambda)^{1/\nu}\epsilon^{-1/\nu}(1+o(1))\\
    ={}&\frac{1/\nu-1}{\epsilon}\widetilde U(1-\epsilon)(1+o(1)),\quad (\epsilon\to 0)
\end{align}
(this result can also formally be obtained by differentiating Eq. \eqref{eq:u1eps1} in $\epsilon$). In particular, for $\epsilon=\epsilon_*$ we get
\begin{equation}\label{eq:uprime}
    \widetilde U'(1-\epsilon_*)=\frac{1/\nu-1}{\epsilon_* r_L}(1+o(1)),\quad (\alpha\to 0).
\end{equation}

\paragraph{``Early convergent'' regime.} If $r_L<1$, then, as discussed above, loss eventually diverges exponentially. However, in general, we expect this divergence to occur only after some interval of convergence (see Figure \ref{fig:phase_diag}(right)). We will give the loss asymptotic for this ``early convergent'' phase and give a rough estimate of the time scale at which the transition between the two regimes occurs. We retain the assumptions $\beta=0$ and $\gamma=\tau=1$ from the previous paragraph, and assume power laws \eqref{eq:power_laws} with $\tfrac{1}{2}<\nu<1$ and $\zeta<1$. 

We start with finding the loss asymptotic in the early convergent phase. First we argue that this phase can be approximately described by the loss generating function 
\begin{equation}\label{eq:wlv}
    \widetilde L(z)\approx\frac{1}{2}\widetilde V(z),
\end{equation}
i.e. generating function \eqref{eq:lvu} in which we approximate the denominator by 1:
\begin{equation}\label{eq:1zu1}
    1-z\widetilde U(z)\approx 1.
\end{equation}
This approximation can be justified for $z< r_L$ by noting that, by Eq. \eqref{eq:u1eps1}, \begin{equation}
    \widetilde U(1-\epsilon)\approx \Big(\frac{\epsilon}{\epsilon_*}\Big)^{1-1/\nu}\widetilde U(1-\epsilon_*)=\Big(\frac{\epsilon}{\epsilon_*}\Big)^{1-1/\nu}
\end{equation}
with $1-1/\nu<0.$ We remark that approximations \eqref{eq:1zu1} and \eqref{eq:wlv} essentially mean that we ignore terms containing factors $U_{t,s}$ associated with noise in expansion  \eqref{eq:lbinexp}.

If relation \eqref{eq:wlv} held for all $z\in(0,1),$ then we would have an asymptotic convergence of the loss for all $t$. Indeed, under our assumptions,
\begin{equation}
    \widetilde V(1-\epsilon)=\sum_{k=1}^\infty \frac{\lambda_k C_{kk,0}}{2\alpha\lambda_k(1-\epsilon)+\epsilon}
\end{equation}
and $\widetilde V(1-\epsilon)\to \infty$ as $\epsilon\to 0$ (since $\zeta<1$). Applying again Lemma \ref{lemma:asymsumcalc}, we find 
\begin{align}
    \widetilde V(1-\epsilon)={}& \frac{1}{2\alpha\Lambda}\sum_k\frac{\lambda_kC_{kk,0}/(1-2\alpha\lambda_k)}{\lambda_k/(\Lambda(1-2\alpha\lambda_k))+\epsilon/(2\alpha\Lambda)}\\
    ={}&\frac{K}{2\alpha\Lambda}\Gamma(\zeta+1)\Gamma(1-\zeta)\Big(\frac{\epsilon}{2\alpha\Lambda}\Big)^{\zeta-1}(1+o(1))\\
    ={}&\frac{K}{2}\Gamma(\zeta+1)\Gamma(1-\zeta)(2\alpha\Lambda)^{-\zeta}\epsilon^{\zeta-1}(1+o(1)),\quad (\epsilon\to 0).\label{eq:wv1eps}
\end{align}
The Tauberian theorem then implies that
\begin{align}
    \sum_{t=0}^T L(t)={}&\frac{K}{2\Gamma(2-\zeta)}\Gamma(\zeta+1)\Gamma(1-\zeta)(2\alpha\Lambda)^{-\zeta}T^{1-\zeta}(1+o(1))\\
    ={}&\frac{K}{2(1-\zeta)}\Gamma(\zeta+1)(2\alpha\Lambda)^{-\zeta}T^{1-\zeta}(1+o(1)),\quad (\epsilon\to 0).
\end{align}
Assuming a power-law form 
\begin{equation}
    L(t)\sim Ct^{-\zeta},
\end{equation}
we then find 
\begin{equation}\label{eq:early}
    L(t)\sim \frac{K}{2}\Gamma(\zeta+1)(2\alpha\Lambda)^{-\zeta} t^{-\zeta}.
\end{equation}
This expression agrees with ``signal-dominated'' loss asymptotic \eqref{eq:ltsig} up to the factor $(1-\widetilde U(1))^{-1}$ (missing because of our approximation \eqref{eq:1zu1}).

\paragraph{Transition from convergence to divergence.}
Of course, the obtained asymptotic \eqref{eq:early} does not hold for very large $t$, since representation \eqref{eq:wlv} breaks down at $z\gtrsim r_L$, and the loss trajectory switches to exponential divergence. We can roughly estimate the switching time $t_{\mathrm{blowup}}$ by equating the asymptotic expressions for $L(t)$ in the convergent and divergent phases:
\begin{equation}
    \frac{K}{2}\Gamma(\zeta+1)(2\alpha\Lambda)^{-\zeta} t_{\mathrm{blowup}}^{-\zeta}\approx\frac{\widetilde V(r_L)}{2(1+r_L^2\widetilde U'(r_L))} r_L^{-t_{\mathrm{blowup}}}.
\end{equation}
This is a transcendental equation  not explicitly solvable in $t_{\mathrm{blowup}}$. We simplify it using a number of approximations. Using approximation \eqref{eq:wv1eps} for $\widetilde V(r_L)$ and the fact that $\epsilon_*$ is small, we can simplify this equation to    
\begin{equation}\label{eq:1rl2}
    \frac{1+r_L^2\widetilde U'(1-\epsilon_*)}{\Gamma(1-\zeta)}\epsilon_*^{1-\zeta} t_{\mathrm{blowup}}^{-\zeta}\approx (1-\epsilon_*)^{-t_{\mathrm{blowup}}}\approx e^{\epsilon_* t_{\mathrm{blowup}}}.
\end{equation}
Next, using Eq. \eqref{eq:uprime}, we can approximate the l.h.s. by 
\begin{equation}
    \frac{1+r_L^2\widetilde U'(1-\epsilon_*)}{\Gamma(1-\zeta)}\epsilon_*^{1-\zeta} t_{\mathrm{blowup}}^{-\zeta}\approx \frac{1/\nu-1}{\Gamma(1-\zeta)} \epsilon_*^{-\zeta} t_{\mathrm{blowup}}^{-\zeta}.
\end{equation}
It follows that Eq. \eqref{eq:1rl2} can be approximated as 
\begin{equation}
    \frac{1/\nu-1}{\Gamma(1-\zeta)} (\epsilon_* t_{\mathrm{blowup}})^{-\zeta}=e^{\epsilon_* t_{\mathrm{blowup}}}.
\end{equation}
Thus, we can write 
\begin{equation}
    t_{\mathrm{blowup}}\approx \frac{a_*}{\epsilon_*},
\end{equation}
where $a_*$ is the solution of the equation
\begin{equation}
    \frac{1/\nu-1}{\Gamma(1-\zeta)} a_*^{-\zeta}=e^{a_*}.
\end{equation}
If $t_{\text{div}}=-1/\log r_L$ denotes the characteristic time scale of exponential divergence, then at small $\alpha$ we have $t_{\text{div}}\approx 1/\epsilon_*,$ so the characteristic transition time $t_{\mathrm{blowup}}$ is related to $t_{\text{div}}$ by the simple relation
\begin{equation}
    t_{\mathrm{blowup}}\approx a_* t_{\text{div}}.
\end{equation}

\subsection{Budget optimization}\label{sec:budget_opt}
It is known that for SGD with not too large batches, there is approximately linear scalability of learning w.r.t. batch size, in the sense that to reach a given loss value with an increased batch size the number of iterations should be decreased proportionally \citep{ma2018power, shallue2018measuring}. This law is naturally no longer valid at very large batch sizes, since in the limit of infinitely large batch size SGD just converges to the non-stochastic GD, so that further increasing batch size when it it already large brings little benefit (the effect of ``diminishing returns''). Figure \ref{fig:loss_vs_budget} shows that this effect is also reproduced in our setting.

Let us now show that this effect should be observed in the signal-dominated phase if for each batch size $b$, learning rate and momentum are chosen optimally. In the respective part of Sec. \ref{sec:power-law_dist} we showed that loss can be approximately written as $L(t)\sim (\alpha_\mathrm{eff} t)^{-\zeta}$; the optimal learning rate and momenta are located near the convergence boundary and the momenta is close to $1$. Since the effective learning rate is bounded $\alpha_\mathrm{eff}<\tfrac{2}{\gamma\lambda_\mathrm{crit}}$ (see eq. \eqref{eq:aeffgamma}) and the bound is tight for the $beta$ close to $1$ (see Eq. \eqref{eq:aeffgamma_tightness}), we in the optimal loss value we can approximate $\alpha_\mathrm{eff}$ with $\tfrac{2}{\gamma\lambda_\mathrm{crit}}$. This leads us to the optimal loss $L(t)\approx C'(t/\gamma)^{-\xi}$. Replacing $\gamma$ with $\tfrac{1}{b}$ (which is very accurate for large dataset sizes $N$, see prop. \ref{lm:sgdmom}), we get $L(t)\approx C'(bt)^{-\xi}$ which is exactly linear scalability of SGD with batch size.
\begin{figure}[t]
\centering
\includegraphics[scale=0.3]{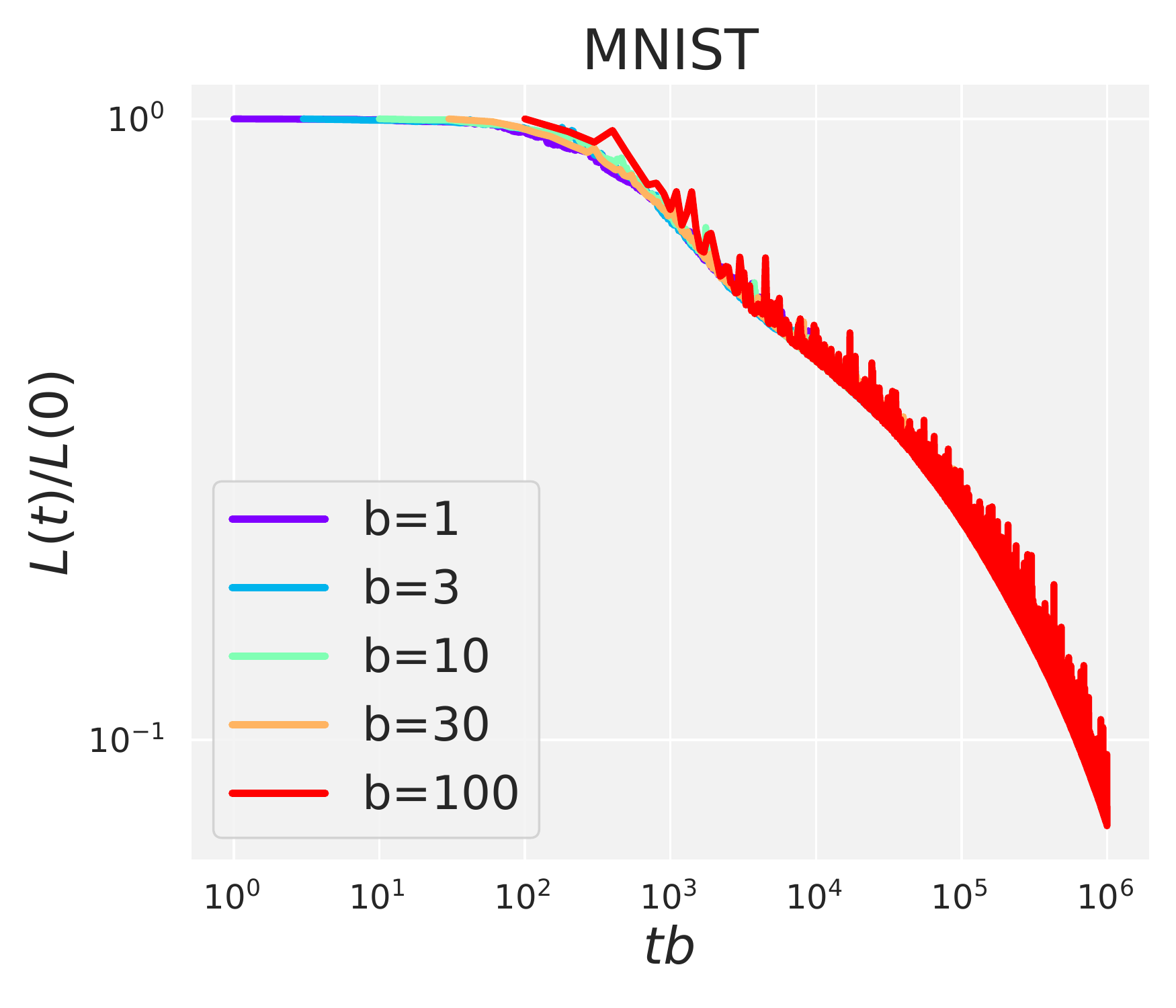}
\caption{Representation of loss $L(t)$ obtained using different batch sizes $b$ as (approximately) a function of the computation budged $bt$ (regardless of how it is factorized over $b$ and $t$) for MNIST. Momentum is fixed at $\beta = 0$ and the ratio $\alpha /b$ is kept the same for all curves.
}\label{fig:loss_vs_budget}
\end{figure}

\subsection{Positive vs. negative momenta}\label{sec:neg_momenta}

Recall that our analysis of the effect of momentum is based on the approximations \eqref{eq:ltsig}, \eqref{eq:ltnoise} for $L(t)$ at large $t$:
\begin{subnumcases}{\label{eq:lmrmappr} L_{\mathrm{approx}}(t)=}
\tfrac{K\Gamma(\zeta+1)}{2(1-\widetilde U(1))}(\tfrac{2\alpha\Lambda}{1-\beta})^{-\zeta} t^{-\zeta}, & $\zeta < 2- 1/\nu$,\label{eq:ltsig1}\\
\tfrac{\gamma \widetilde V(1)\Gamma(2-1/\nu)}{8\nu(1-\widetilde U(1))^2}(\tfrac{2\alpha\Lambda}{1-\beta})^{1/\nu} t^{1/\nu-2},& $\zeta > 2-1/\nu$.\label{eq:ltnoise1}
\end{subnumcases}
We now formulate and prove the full version of Proposition \ref{prop:negmomshort} describing how $L_{\mathrm{approx}}(t)$ is affected by changing the momentum from $\beta=0$ at the optimal $\alpha$. Note that $t$ enters expression $L_{\mathrm{approx}}(t)$ only through the factor $t^{-\zeta}$ or $t^{1/\nu-2}$. As a result, the statements below hold in the same form for all $t>0$. 
\begin{prop}[full version of Proposition \ref{prop:negmomshort}]\label{prop:alphabetaopt} Suppose that the eigenvalues $\lambda_k$ and initial moments $C_{kk,0}$ are subject to large-$k$ power law asymptotics \eqref{eq:power_laws} with some exponents $\nu, \varkappa>0, \zeta=\varkappa/\nu$ and coefficients $\Lambda,K.$ Denote  $\alpha_{\mathrm{opt}}=\argmin_\alpha L_{\mathrm {approx}}(t)|_{\beta=0}.$

I. (\textbf{signal-dominated phase}) Let $\zeta<2-1/\nu$ and $0<\tau\le 1, 0\le\gamma\le 1$. Then 
\begin{enumerate}
    \item At $\beta=0$ and as a function of $\alpha$, the function $L_{\mathrm {approx}}(t)$ has a unique minimum in the interval $(0,\alpha_{\max})$, where $\alpha_{\max}$ is the critical stability value defined by the condition $\widetilde U(z=1, \alpha=\alpha_{\max},\beta=0)=1.$ Accordingly, $\alpha_{\mathrm{opt}}=\argmin_\alpha L_{\mathrm {approx}}(t)|_{\beta=0}$ is well-defined and unique.
    \item $\partial_\beta L_{\mathrm {approx}}(t)|_{\beta=0, \alpha=\alpha_{\mathrm{opt}}}<0.$
\end{enumerate}
    
II. (\textbf{noise-dominated phase}) Let $\zeta>2-1/\nu$ and Let $\tau=\gamma=1$. Then 
\begin{enumerate}
    \item $\alpha_{\mathrm{opt}}=\tfrac{2(\nu-1)}{3\nu-1}(\Tr[\mathbf H])^{-1};$
    \item The sign of $\partial_\beta L_{\mathrm {approx}}(t)|_{\beta=0, \alpha=\alpha_{\mathrm{opt}}}$ equals the sign of the expression
    \begin{equation}\label{eq:capxi2}
    \Xi=\nu\Tr[\mathbf H]\Tr[\mathbf H\mathbf C_0]-(\nu-1)\Tr[\mathbf H^2]\Tr[\mathbf C_0].
    \end{equation}
\end{enumerate}
\end{prop}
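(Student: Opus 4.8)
The plan is to treat $L_{\mathrm{approx}}(t)$ as an explicit function of $(\alpha,\beta)$ via the formulas \eqref{eq:ltsig1}, \eqref{eq:ltnoise1}, feeding in the series \eqref{eq:U1} for $\widetilde U(1)$ and the rational expansion \eqref{eq:wvexp2} for $\widetilde V(1)$, and to reduce each claim to the sign of a single logarithmic derivative. Since $t$ enters only through a fixed power, every statement is really about the $(\alpha,\beta)$-dependent prefactor.

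For the signal-dominated case (Part I, step 1), at $\beta=0$ we have $L_{\mathrm{approx}}\propto \alpha^{-\zeta}/(1-\widetilde U(1))$. As $\alpha\to 0$ the factor $\alpha^{-\zeta}\to\infty$ while $\widetilde U(1)\to 0$, and as $\alpha\to\alpha_{\max}$ we have $\widetilde U(1)\to 1$; since $\widetilde U(1)$ is increasing in $\alpha$ (as in the proof of Prop.~\ref{prop:eff_lr_stability_condition}), $L_{\mathrm{approx}}$ blows up at both ends of $(0,\alpha_{\max})$ and a minimizer exists. For uniqueness I would show that the critical-point condition, equivalent to $h(\alpha):=\tfrac{\alpha\,\partial_\alpha\widetilde U(1)}{1-\widetilde U(1)}=\zeta$, has a single solution by proving $h$ is strictly increasing from $0$ to $+\infty$. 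The one nontrivial ingredient is that $\alpha\,\partial_\alpha\widetilde U(1)$ is increasing: writing each denominator in \eqref{eq:U1} as $D_k=\lambda_k\tau+\tfrac1\gamma(\tfrac2\alpha-\lambda_k)$, a termwise computation shows the $\alpha$-derivative of each summand of $\alpha\,\partial_\alpha\widetilde U(1)$ has sign $\tfrac{2}{\gamma}-\alpha\lambda_k\tau+\tfrac{\alpha\lambda_k}{\gamma}$, which is positive because $\alpha\lambda_k\tau<2\le \tfrac2\gamma$ in the admissible region. Then $h$ is a product of two positive increasing factors, so it crosses $\zeta$ once. This monotonicity argument is the main obstacle in Part I.

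For Part I, step 2, I would compute $\partial_\beta\log L_{\mathrm{approx}}|_{\beta=0}=\tfrac{\partial_\beta\widetilde U(1)}{1-\widetilde U(1)}-\zeta$ and combine it with the optimality identity $\tfrac{\alpha\,\partial_\alpha\widetilde U(1)}{1-\widetilde U(1)}=\zeta$. Differentiating \eqref{eq:U1} in $\alpha$ and in $\beta$ at $\beta=0$ gives the clean cancellation $\alpha\,\partial_\alpha\widetilde U(1)-\partial_\beta\widetilde U(1)=\tfrac{2}{\gamma}\sum_k \lambda_k^2 D_k^{-2}>0$, so $\tfrac{\partial_\beta\widetilde U(1)}{1-\widetilde U(1)}<\zeta$ and hence $\partial_\beta\log L_{\mathrm{approx}}<0$, which is the claim (as $L_{\mathrm{approx}}>0$).

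For the noise-dominated case (Part II) I would first exploit the drastic simplification at $\tau=\gamma=1,\beta=0$: the denominator in \eqref{eq:U1} becomes constant, giving exactly $\widetilde U(1)=\tfrac\alpha2\Tr[\mathbf H]$, and evaluating \eqref{eq:wvexp2} at $z=1,\beta=0$ (where $S=2\alpha\lambda_k$ and the numerator is $1$) gives $\widetilde V(1)=\Tr[\mathbf C_0]/(2\alpha)$. Then $L_{\mathrm{approx}}\propto \alpha^{1/\nu-1}/(1-\tfrac\alpha2\Tr[\mathbf H])^2$, and a one-variable log-derivative calculation yields $\alpha_{\mathrm{opt}}=\tfrac{2(\nu-1)}{3\nu-1}(\Tr[\mathbf H])^{-1}$, settling step 1. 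For step 2 I would expand $\partial_\beta\log L_{\mathrm{approx}}|_{\beta=0}=\tfrac{\partial_\beta\widetilde V(1)}{\widetilde V(1)}+\tfrac{2\,\partial_\beta\widetilde U(1)}{1-\widetilde U(1)}+\tfrac1\nu$, with $\partial_\beta\widetilde U(1)=\tfrac\alpha2\Tr[\mathbf H]-\tfrac{\alpha^2}2\Tr[\mathbf H^2]$ and $\partial_\beta\widetilde V(1)=\tfrac12\Tr[\mathbf H\mathbf C_0]-\tfrac1{2\alpha}\Tr[\mathbf C_0]$; the latter comes from differentiating the summand $N_k/S_k$ in \eqref{eq:wvexp2} using $N_k(0)=1,\ N_k'(0)=2\alpha\lambda_k-1,\ S_k(0)=2\alpha\lambda_k,\ S_k'(0)=2\alpha^2\lambda_k^2$. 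Substituting $\alpha_{\mathrm{opt}}$, the three pieces combine so that the $\alpha$-independent constants cancel ($-1+\tfrac{\nu-1}\nu+\tfrac1\nu=0$) and the remainder is a positive multiple of $\tfrac{\Tr[\mathbf H\mathbf C_0]}{\Tr[\mathbf C_0]}-\tfrac{(\nu-1)\Tr[\mathbf H^2]}{\nu\Tr[\mathbf H]}$; clearing the positive denominators recovers exactly $\Xi$. The main effort in Part II is the careful evaluation of $\partial_\beta S$ at $\beta=0$ and the final algebraic cancellation, which I would cross-check with the same computer-algebra tooling already used for the generating functions.
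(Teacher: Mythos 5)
Your proposal is correct and follows essentially the same route as the paper's proof: logarithmic derivatives of $L_{\mathrm{approx}}$ expressed through $\widetilde U(1),\widetilde V(1)$ and their $\alpha$- and $\beta$-derivatives at $\beta=0$, the identity $\partial_\beta\widetilde U(1)=\alpha\,\partial_\alpha\widetilde U(1)-\tfrac{2}{\gamma}\sum_k\lambda_k^2D_k^{-2}$ for Part I.2, and the simplifications $\widetilde U(1)=\tfrac{\alpha}{2}\Tr[\mathbf H]$, $\widetilde V(1)=\tfrac{1}{2\alpha}\Tr[\mathbf C_0]$ with the same endgame cancellation producing $\Xi$ in Part II (your $N_k,S_k$ derivative values and the sign expression $\tfrac{2}{\gamma}-\alpha\lambda_k\tau+\tfrac{\alpha\lambda_k}{\gamma}$ all check out). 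The only difference is cosmetic: you establish uniqueness in Part I.1 via monotonicity of $h(\alpha)=\alpha\,\partial_\alpha\widetilde U(1)/(1-\widetilde U(1))$ as a product of positive increasing factors, whereas the paper multiplies $\partial_\alpha\ln L_{\mathrm{approx}}$ by $\alpha(1-\widetilde U(1))$ and shows the resulting bracket is monotone with a sign change — the same termwise monotonicity fact in different packaging.
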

Before giving the proof, we make a few remarks. 
\begin{enumerate}
    \item The ``noise-diminated'' part of our result is proved under much more restrictive assumptions ($\tau=\gamma=1$) than the ``signal-dominated'' part ($0<\tau\le 1, 0\le\gamma\le 1$). The reason is that the effect of $\beta$ is much more subtle in the noise-dominated case, and is described by relatively cumbersome formulas. Setting $\tau=\gamma=1$ allows us to substantially simplify these formulas. In contrast, improvement from positive $\beta$ in the signal-dominated case appears to be a general and robust phenomenon.
    \item It can be observed from the proof in the noise-dominated case that the second, negative term in Eq. \eqref{eq:capxi2} results from the ``signal'' generating function $\widetilde U(1)$ in Eq. \eqref{eq:ltnoise1}, while the first, positive term results from the ``noise'' generating function $\widetilde V(1)$. A similar observation can be made in the signal-dominated case, but Eq.  \eqref{eq:ltsig1} only includes the  ``signal'' generating function $\widetilde U(1)$, so the counterpart of expression $\Xi$ in this case is always negative.
    \item If $\tau=\gamma=1$, then in the signal-dominated case $\alpha_{\mathrm{opt}}$ has the simple form $\alpha_{\mathrm{opt}}=\tfrac{2\zeta}{\zeta+1}(\Tr[\mathbf H])^{-1}$.
    \item  If $\tau=\gamma=1$, then the critical stability value $\alpha_{\max}$ at which $\widetilde U(z=1,\alpha,\beta=0)=1$ has the simple form $\alpha_{\max}= 2(\Tr[\mathbf H])^{-1}$. In particular, we see from the explicit formulas for $\alpha_{\mathrm{opt}}$ that in both signal-dominated and noise-dominated regime $\alpha_{\mathrm{opt}}<\alpha_{\max}$, as expected. 
\end{enumerate}

\begin{proof}[Proof of Proposition \ref{prop:alphabetaopt}]\mbox{}\hfill

Let $\widetilde U_1, \widetilde V_1$ denote the values $\widetilde U(z=1), \widetilde V(z=1)$. Recall from Eqs. \eqref{eq:wuexp2}, \eqref{eq:wvexp2} that
\begin{align}
    \widetilde U_1(\alpha,\beta) ={}& \sum_{k}
    \frac{\gamma\alpha\lambda_k(\beta + 1)}{\alpha \beta \tau\gamma \lambda_{k} + \alpha \beta \lambda_{k} + \alpha \tau\gamma \lambda_{k} - \alpha \lambda_{k} - 2 \beta^{2} + 2}, \\
    \widetilde V_1(\alpha,\beta) ={}& \sum_{k}
    \frac{ C_{kk,0}(2 \alpha \beta \lambda_k   + \beta^{3}  - \beta^{2} - \beta + 1)}{\alpha(\alpha \beta \tau\gamma \lambda_{k} + \alpha \beta \lambda_{k} + \alpha \tau\gamma \lambda_{k} - \alpha \lambda_{k} - 2 \beta^{2} + 2)}.
\end{align}

As a preliminary step, note that at $\beta=0$ we have:
\begin{align}
    \widetilde U_1(\alpha,0) ={}& \sum_{k}
    \frac{\gamma\alpha\lambda_k}{ \alpha \lambda_{k}(\tau\gamma-1)  + 2}, \label{eq:u1a0first}\\
    \partial_\alpha\widetilde U_1(\alpha,0) ={}& \sum_{k}
    \frac{\gamma\lambda_k}{ \alpha \lambda_{k}(\tau\gamma-1)  + 2}-\frac{\gamma\alpha\lambda_k\cdot \lambda_{k}(\tau\gamma-1) }{ (\alpha \lambda_{k}(\tau\gamma-1)  + 2)^2} \\
={}&\sum_{k}
    \frac{2\gamma\lambda_k}{ (\alpha \lambda_{k}(\tau\gamma-1)  + 2)^2},\\
    \partial_\beta\widetilde U_1(\alpha,0) ={}& \sum_{k}
    \frac{\gamma\alpha\lambda_k}{ \alpha \lambda_{k}(\tau\gamma-1)  + 2}-\frac{\gamma\alpha\lambda_k\cdot \alpha\lambda_{k}(\tau\gamma+1) }{ (\alpha \lambda_{k}(\tau\gamma-1)  + 2)^2} \\
={}&\sum_{k}
    \frac{-2\gamma\alpha^2\lambda_k^2+2\gamma\alpha\lambda_k}{ (\alpha \lambda_{k}(\tau\gamma-1)  + 2)^2}\\
={}&\alpha\partial_\alpha\widetilde U_1(\alpha,0)-\sum_{k}
    \frac{2\gamma\alpha^2\lambda_k^2}{ (\alpha \lambda_{k}(\tau\gamma-1)  + 2)^2}.\label{eq:u1a0last}
\end{align}

By our assumptions, $\tau\gamma-1\le 0$. As a result, the function $\widetilde U_1(\alpha,0)$ is monotone increasing in $\alpha$, and there is a unique critical stability value $\alpha_{\max}$ at which $\widetilde U_1(\alpha_{\max},0)=1$ and $L_{\mathrm{approx}}(t)$ becomes infinite by Eq. \eqref{eq:lmrmappr}.  

I. (\textbf{signal-dominated phase}) 

1. It is convenient to consider the logarithmic derivative of $L_{\mathrm{approx}}(t)|_{\beta=0}$. By Eq. \eqref{eq:ltsig1}, 
\begin{align}
    \partial_\alpha \ln L_{\mathrm{approx}}&(t)|_{\beta=0}\\
    ={}&  \alpha^{\zeta}\partial_\alpha \alpha^{-\zeta}+(1-\widetilde U_1(\alpha,0))^{-1}\partial_\alpha \widetilde U_1(\alpha,0)\\
    ={}&\Big[(1-\sum_k\frac{\gamma\alpha\lambda_k}{ \alpha \lambda_{k}(\tau\gamma-1)  + 2})\frac{-\zeta}{\alpha}+\sum_{k}
    \frac{2\gamma\lambda_k}{ (\alpha \lambda_{k}(\tau\gamma-1)  + 2)^2}\Big]\\
    &\times(1-\widetilde U_1(\alpha,0))^{-1}\\
    ={}&\Big[\zeta\sum_k\frac{\gamma\alpha\lambda_k}{ \alpha \lambda_{k}(\tau\gamma-1)  + 2}+\sum_{k}
    \frac{2\gamma\alpha\lambda_k}{ (\alpha \lambda_{k}(\tau\gamma-1)  + 2)^2} -\zeta\Big]\\
    &\times \alpha^{-1}(1-\widetilde U_1(\alpha,0))^{-1}.
\end{align}
Note that the last expression in square brackets is monotone increasing in $\alpha$, because $\tau\gamma\le 1$. If $\alpha=0$, this expression is negative. On the other hand, at $\alpha=\alpha_{\max}$ such that $\widetilde U_1(\alpha_{\max},0)=1$ this expression is positive. It follows that there is a unique point $\alpha_{\mathrm{opt}}\in(0,\alpha_{\max})$ where
\begin{equation}
    \partial_\alpha L_{\mathrm{approx}}(t)|_{\alpha=\alpha_{\mathrm{opt}},\beta=0}=0 
\end{equation}
and $L_{\mathrm{approx}}(t)|_{\beta=0}$ attains its minimum.

2. Now observe that 
\begin{align}
    \partial_\beta \ln L_{\mathrm{approx}}&(t)|_{\beta=0}\\
    ={}& [(1-\beta)^{-\zeta}\partial_\beta (1-\beta)^{\zeta}+(1-\widetilde U_1(\alpha,0))^{-1}\partial_\beta \widetilde U_1(\alpha,0)]|_{\beta=0}\\
    ={}& \Big[(1-\widetilde U_1(\alpha,0))(-\zeta)+\alpha\partial_\alpha\widetilde U_1(\alpha,0)-\sum_{k}
    \frac{2\gamma\alpha^2\lambda_k^2}{ (\alpha \lambda_{k}(\tau\gamma-1)  + 2)^2}\Big]\\
    &\times (1-\widetilde U_1(\alpha,0))^{-1}\\
    ={}&\alpha \partial_\alpha \ln L_{\mathrm{approx}}(t) - \Big[\sum_{k}
    \frac{2\gamma\alpha^2\lambda_k^2}{ (\alpha \lambda_{k}(\tau\gamma-1)  + 2)^2}\Big](1-\widetilde U_1(\alpha,0))^{-1}.\label{eq:pblma}
\end{align}
At $\alpha=\alpha_{\max}$ the first term in Eq. \eqref{eq:pblma}  vanishes, and the remaining term is negative, proving that 
\begin{equation}
    \partial_\beta L_{\mathrm {approx}}(t)|_{\beta=0, \alpha=\alpha_{\mathrm{opt}}}<0.
\end{equation}

II. (\textbf{noise-dominated phase}) At $\tau=\gamma=1$, formulas \eqref{eq:u1a0first}-\eqref{eq:u1a0last} admit a simpler form:

\begin{align}
    \widetilde U_1(\alpha,0) ={}& \frac{\alpha}{2}\sum_{k}
    \lambda_k=\frac{\alpha}{2}\Tr[\mathbf H],\\
    \partial_\alpha\widetilde U_1(\alpha,0) ={}& \frac{1}{2}\sum_{k}
    \lambda_k=\frac{1}{2}\Tr[\mathbf H],\\
    \partial_\beta\widetilde U_1(\alpha,0) ={}& \frac{\alpha}{2}\sum_{k}
    \lambda_k(1-\alpha\lambda_k)\\
={}&\alpha\partial_\alpha\widetilde U_1(\alpha,0)-\frac{\alpha^2}{2}\Tr[\mathbf H^2].
\end{align}
Snce Eq. \eqref{eq:ltnoise1} for the noise-dominated phase involves $\widetilde V_1$, we will also need similar formulas for this expression. At $\beta=0$ we have
\begin{align}
    \widetilde V_1(\alpha,0) ={}& \frac{1}{2\alpha}\sum_{k}
     C_{kk,0}=\frac{1}{2\alpha}\Tr[\mathbf C_0],\\
    \partial_\alpha\widetilde V_1(\alpha,0) ={}& - \frac{1}{2\alpha^2}\sum_{k}
     C_{kk,0}=-\frac{1}{2\alpha^2}\Tr[\mathbf C_0],\\
    \partial_\beta\widetilde V_1(\alpha,0) ={}& \sum_{k}
    \Big(\frac{ C_{kk,0}(2\alpha\lambda_{k}-1)}{2\alpha}-\frac{ C_{kk,0}\lambda_{k}}{2}\Big)\\
    ={}&\frac{1}{2}\Tr[\mathbf {HC}_0]-\frac{1}{2\alpha}\Tr[\mathbf C_0].
\end{align}

1. Differentiating Eq. \eqref{eq:ltnoise1},
\begin{align}
    \partial_\alpha \ln L_{\mathrm{approx}}&(t)|_{\beta=0}\\
    ={}&\alpha^{-1/\nu }\partial_\alpha\alpha^{1/\nu }+\widetilde V_1^{-1}(\alpha,0)\partial_\alpha \widetilde V_1(\alpha,0)+2(1-\widetilde U_1(\alpha,0))^{-1}\partial_\alpha\widetilde U_1(\alpha,0)\\
    ={}&\frac{1}{\alpha\nu}-\frac{1}{\alpha}+2\Big(1-\frac{\alpha}{2}\Tr[\mathbf H]\Big)^{-1}\frac{1}{2}\Tr[\mathbf H]\label{eq:sqbr}\\
    ={}&\Big[\Big(\frac{1}{\alpha\nu}-\frac{1}{\alpha}\Big)\Big(1-\frac{\alpha}{2}\Tr[\mathbf H]\Big)+\Tr[\mathbf H]\Big]\Big(1-\frac{\alpha}{2}\Tr[\mathbf H]\Big)^{-1}\\
    ={}&\Big[\Big(\frac{1}{\nu}-1\Big)+\alpha \Big(\frac{3}{2}-\frac{1}{2\nu}\Big)\Tr[\mathbf H]\Big]\alpha^{-1}\Big(1-\frac{\alpha}{2}\Tr[\mathbf H]\Big)^{-1}.
\end{align}
The expression in brackets is monotone increasing in $\alpha$ and vanishes at 
\begin{equation}\label{eq:alphaoptnu}
    \alpha_{\mathrm{opt}}=\frac{2(\nu-1)}{3\nu-1}(\Tr[\mathbf H])^{-1},
\end{equation}
proving statement 1.

2. We have
\begin{align}
    \partial_\beta \ln & L_{\mathrm{approx}}(t)|_{\beta=0}\\
    ={}& \Big[(1-\beta)^{1/\nu }\partial_\beta(1-\beta)^{-1/\nu }+\widetilde V_1^{-1}(\alpha,0)\partial_\beta \widetilde V_1(\alpha,0)+2(1-\widetilde U_1(\alpha,0))^{-1}\partial_\beta\widetilde U_1(\alpha,0)\Big]\Big|_{\beta=0}\nonumber\\
    ={}&\frac{1}{\nu}+\alpha(\Tr[\mathbf C_0])^{-1}\Tr[\mathbf{HC}_0]-1+2\Big(1-\frac{\alpha}{2}\Tr[\mathbf H]\Big)^{-1}\Big(\frac{\alpha}{2}\Tr[\mathbf H]-\frac{\alpha^2}{2}\Tr[\mathbf H^2]\Big).\nonumber
\end{align}
At $\alpha=\alpha_{\mathrm{opt}}$ we can simplify this expression by recalling that at this $\alpha$   Eq. \eqref{eq:sqbr} vanishes, and using Eq. \eqref{eq:alphaoptnu}:
\begin{align}
    \partial_\beta\ln & L_{\mathrm{approx}}(t)|_{\beta=0,\alpha=\alpha_{\mathrm{opt}}}\\
    ={}&\alpha_{\mathrm{opt}}(\Tr[\mathbf C_0])^{-1}\Tr[\mathbf{HC}_0]+2\Big(1-\frac{\alpha_{\mathrm{opt}}}{2}\Tr[\mathbf H]\Big)^{-1}\Big(-\frac{\alpha_{\mathrm{opt}}^2}{2}\Tr[\mathbf H^2]\Big)\nonumber\\
    ={}&\Big[\nu\Tr[\mathbf H]\Tr[\mathbf{HC}_0]-(\nu-1)\Tr[\mathbf{C}_0]\Tr[\mathbf H^2]\Big] \frac{\alpha_{\mathrm{opt}}}{\nu}(\Tr[\mathbf C_0])^{-1}(\Tr[\mathbf H])^{-1}\nonumber,
\end{align}
proving statement 2.
\end{proof}

In Figure \ref{fig:alpha_and_beta_opt_app} we illustrate this result for a range of models with exact (not asymptotic) power laws \eqref{eq:power_laws} at different $\nu$ and $\varkappa$. In this case, in the noise-dominated regime, characteristic  $\Xi$ is positive  for sufficiently large $\varkappa$ but negative for smaller $\varkappa$. In agreement with Proposition \ref{prop:alphabetaopt}, we observe the experimentally found optimal $\beta$ to be negative or positive, respectively.

\begin{figure}[t]
\centering
\begin{subfigure}{0.32\textwidth}
    \centering
    \includegraphics[scale=0.185]{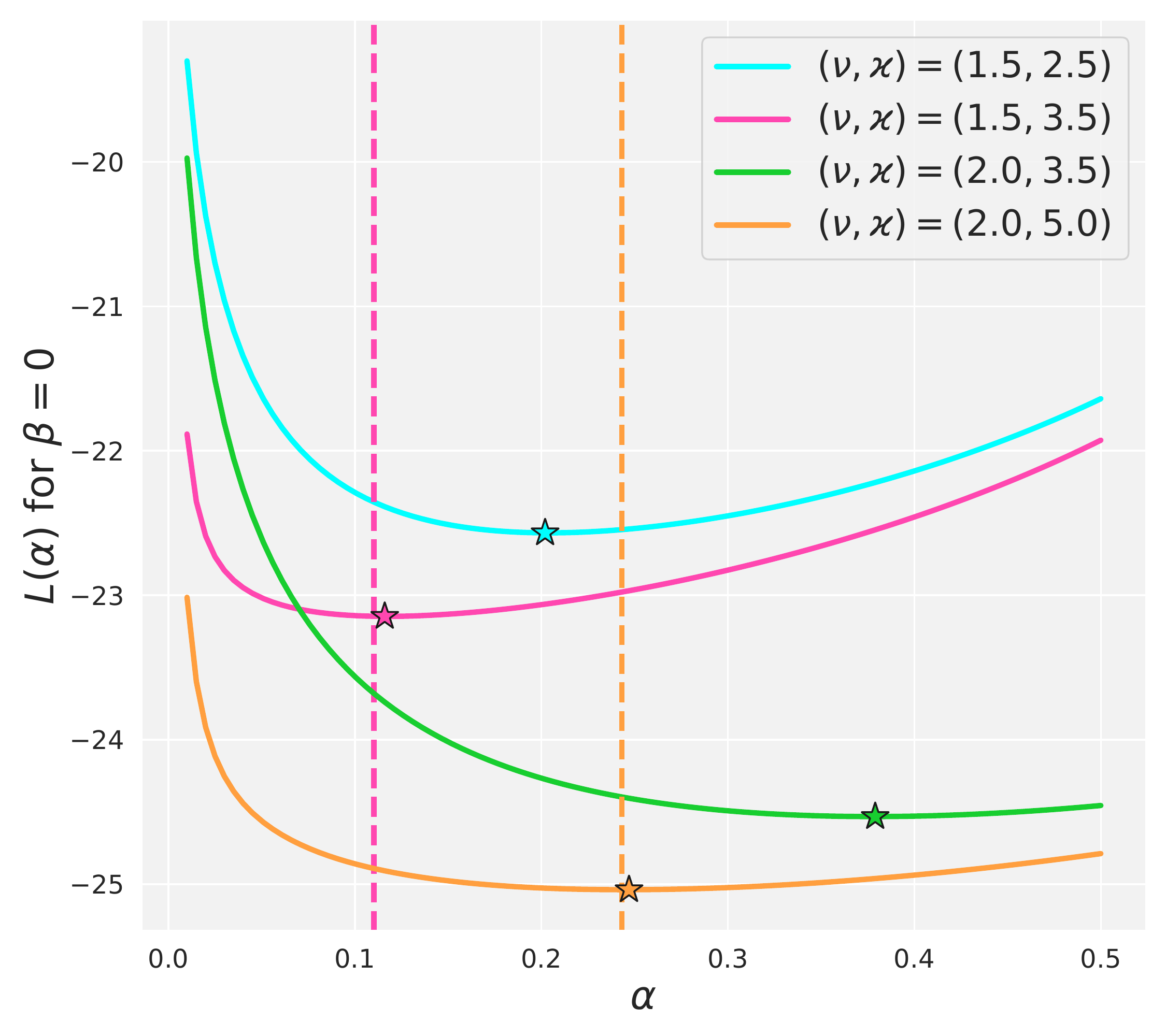}
\end{subfigure}
\begin{subfigure}{0.32\textwidth}
    \centering
    \includegraphics[scale=0.192]{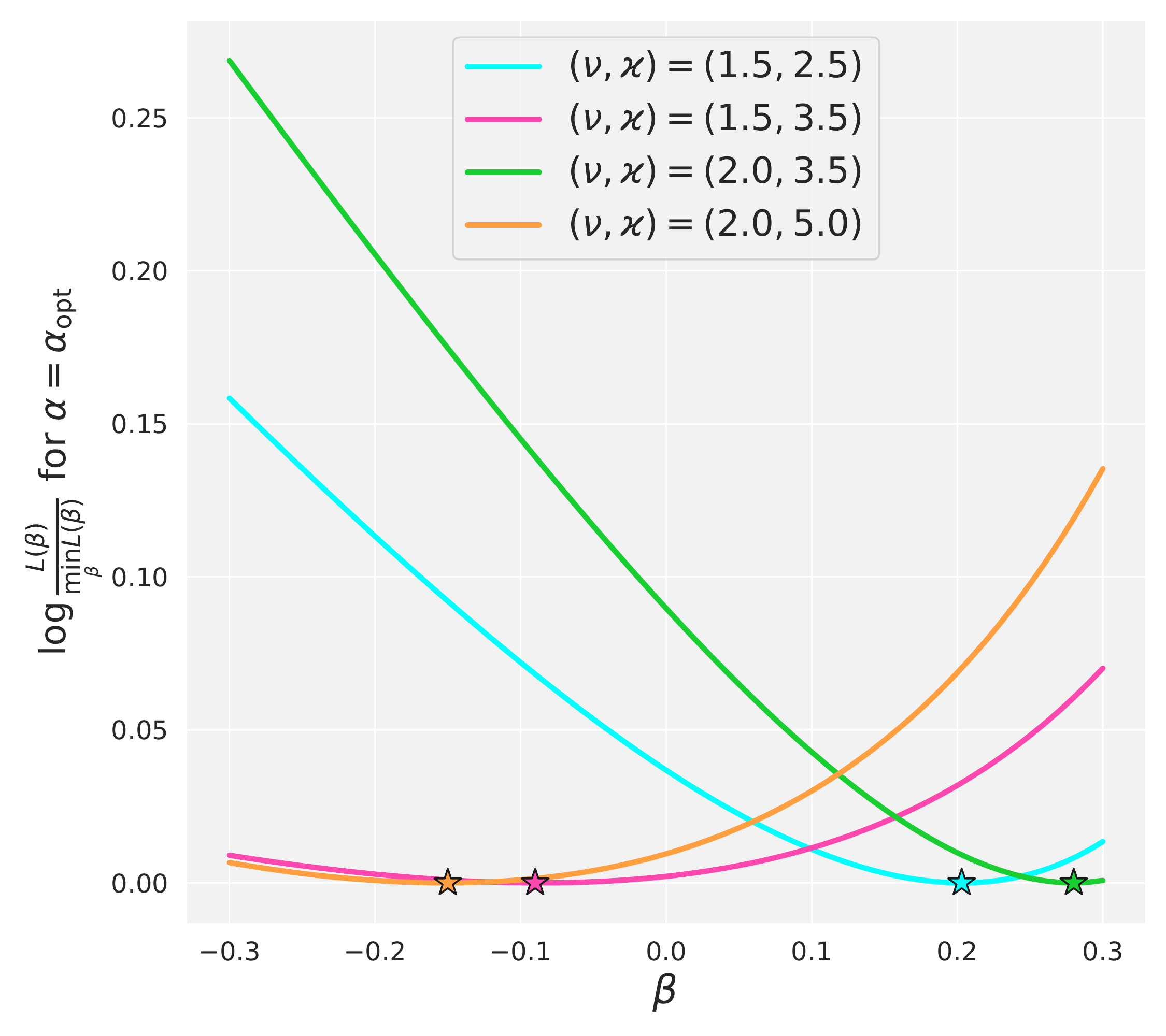}
\end{subfigure}
\begin{subfigure}{0.32\textwidth}
    \centering
    \includegraphics[scale=0.2]{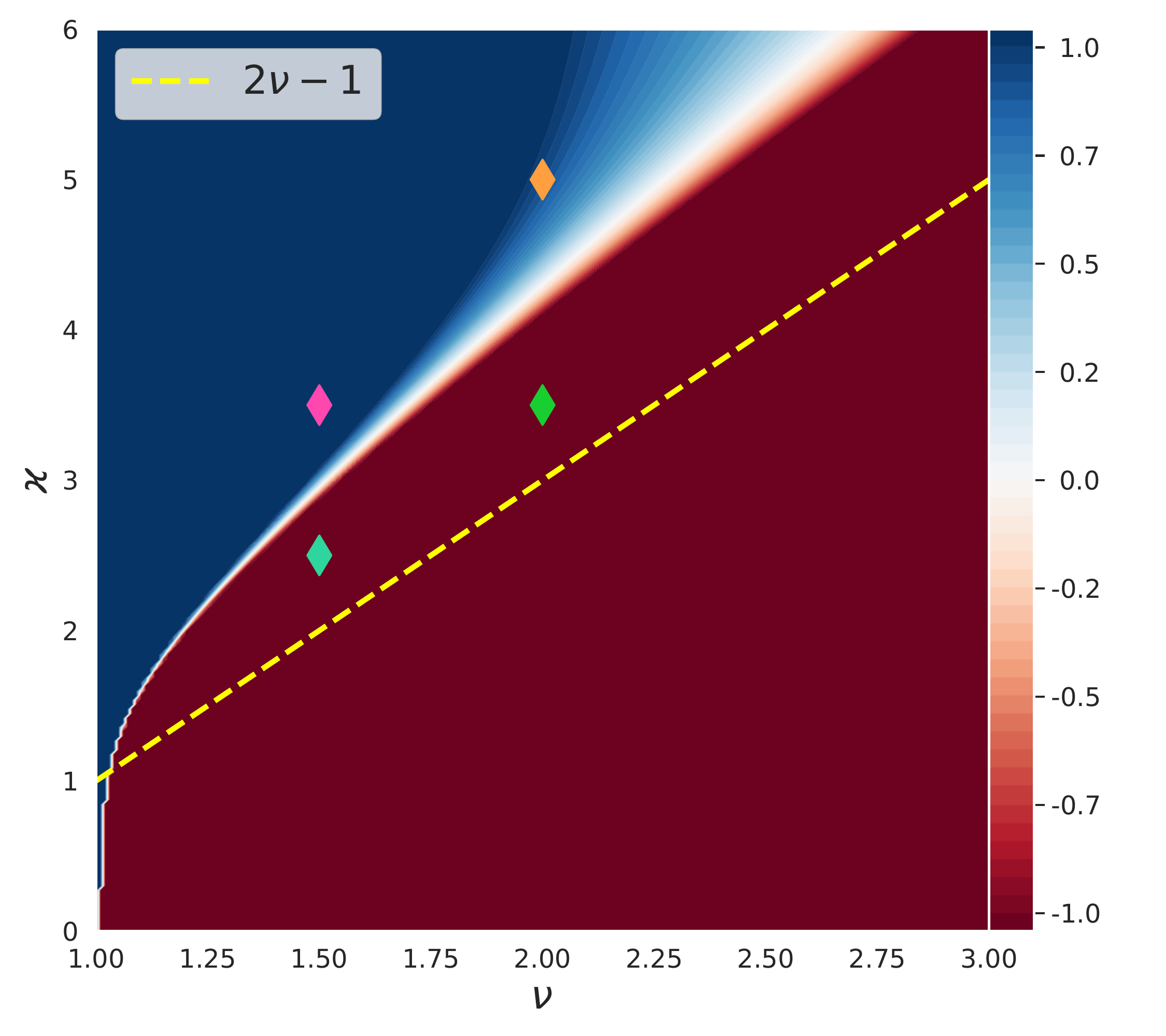}
\end{subfigure}
\vspace{-0.5em}
\caption{Theoretical and experimental study of optimal $\beta$.
\textbf{Left}: Dependence of the loss on the learning rate $\alpha$ at $\beta = 0$,  for different $\nu$ and $\varkappa$ (synthetic data) in the ``noisy'' phase. The stars show the experimentally determined optimal learning rates $\argmin_{\alpha} L(\alpha)$, while the
dashed vertical lines show the respective theoretical values 
$\alpha_{\mathrm{opt}} = (\sum_k \lambda_k)^{-1} \frac{2 (\nu - 1)}{3 \nu - 1}$. The experimental and theoretical optima agree well for two pairs $(\nu,\varkappa)$ deep in the noisy phase, but not so much for $(\nu,\varkappa)$ near the phase boundary (cf. right figure).
\textbf{Center}: Dependence of the loss log-ratio $\log \frac{L(\beta)}{\min_{\beta} L(\beta)}$ on $\beta$ at experimental $\alpha_{\mathrm{opt}}$. The stars denote 
$\textrm{argmin}_{\beta} L(\beta)$. In two scenarios the minimum is attained at $\beta < 0$, and in the other two at $\beta > 0$. \textbf{Right}: The characteristic $\Xi$ given by \eqref{eq:capxi} and characterizing 
improvement from $\beta<0$ at large $t$ by Proposition~\ref{prop:alphabetaopt}, computed for the range $(\nu, \varkappa)\in[1,3]\times[0,6]$ of power-law models \eqref{eq:power_laws}. Diamonds show the pairs $(\nu,\varkappa)$ in the experiments on the left and central figures. In agreement with our theoretical prediction, the characteristic $\Xi$ is positive in the scenarios where the optimal $\beta$ is negative, and vice versa. The dashed yellow line $\varkappa=2\nu-1$ separates  the noise-dominated (top) and the signal-dominated (bottom) regimes.}\label{fig:alpha_and_beta_opt_app}
\end{figure}

\section{Experiments}\label{sec:experiments}

\subsection{Training regimes}

We have considered four regimes of training with the increasing level 
of approximation for real network dynamics:
\begin{enumerate}
    \item 
    \textbf{Real neural network (\emph{NN})}. 
    \\
    We used fully-connected neural network with one hidden layer in the NTK parametrization:
    $$
    f(\mathbf{x}, \mathbf{w}) = 
    \frac{1}{\sqrt{N}} \sum_{l=1}^{d} \mathbf{w}_{l}^{(2)} 
    \mathrm{ReLU} (\mathbf{w}_{l}^{(1)} + b_l)
    $$
    Here $\mathbf{w}_{l}^{(1, 2)}$ are the weight matrices, 
    $b_l$ are the biases for neuron $l$, and $d$ is the number of units in the hidden layer. We have used $D=1000$ in our experiments. For the sake of simplicity, the network outputs a single value for each input, i.e the classification task is treated as regression to the label with the MSE loss.  
    The model is trained via SGD in the standard way: at each step a subset of size $b$ is sampled without repetition from the training dataset and the weights are updated according to the equation \eqref{eq:sgd}.
    \item 
    \textbf{Linearized regime with sampled batches (\emph{linearized} or \emph{sampled})}. 
    \\
    Model outputs $f(\mathbf{x})$ evolve on the 
    dataset of interest according to the equation \eqref{eq:Langevin_HB_output_basic}. At each step a batch of size $b$
    is sampled without replacement from the whole dataset of size $N$ and SGD step is performed. 
    \item 
    \textbf{All second matrix moments dynamics (\emph{all sm})}. 
    \\
    We simulate the dynamics of the whole second moment matrix that is obtained after taking expectation over all possible noise configurations, defined by \eqref{eq:sm_dyn_calc1} with the noise term given in
    the last line of \eqref{eq:stochastic_term_calc1}. 
    \item 
    \textbf{spectrally expressible regime (\emph{SE})}. 
    \\
    In the spectrally expressible regime, the diagonal of the second moment matrix is invariant under SGD dynamics, i.e diagonal elements evolve independently from off-diagonal terms. Therefore, one needs to keep track only of the diagonal terms. The dynamics in the output space follows Eq. \eqref{eq:dynamic_equation_output_space_sd}. 
\end{enumerate}

\subsection{Datasets}

In the experiments we have used synthetic data with the NTK spectrum and second moment matrix obeying a power-law decay rule $\lambda_{k} = k^{-\nu}$ and $\lambda_k C_{kk,0} = k^{-\varkappa - 1}$ with specific $\nu$ and $\varkappa$
and the subset of size $N$ of the MNIST dataset.
The whole train dataset of size $60 000$ requires much memory 
for the computation of NTK 
and is too costly from computational side for running large series of experiments, therefore we have used only part of the data of size 1000 -- 10000 depending on the experiment. The digits are distributed uniformly in the dataset, i.e there are $\frac{N}{10}$ samples of each class. We have observed that this subset of data is sufficient to approximate the power-law asymptotic for NTK and partial sums since the exponents do not change significantly with the increase of the amount of data. 

\subsection{Experimental details}

For Fig. \ref{fig:comparison_different_regimes} one we have taken a subset of $N=3000$ digits from  MNIST and ran $10^4$ SGD updates for the real neural network with one hidden layer, linearized dynamics, and in the spectrally expressible appoximation with $\tau=\pm 1$. One can see that $\tau=1$ provides more accurate approximation for smaller batch size $b=10$, whereas for $b=100$ both $\tau=\pm 1$ are rather accurate. 

For Fig. \ref{fig:losses_2d} we ran 10000 gradient descent updates for every point on the grid of learning rates $\alpha$ and $\beta$, where we have taken 100 values of $\alpha$ uniformly selected on $[0, 4]$ and 50 values for $\beta$ uniformly on $[0, 1]$. The grey regions on the 2d plots correspond to the values of $\alpha$ and $\beta$ for which the optimization procedure has diverged. The size of the dataset is 1000 both in the synthetic and MNIST case. 

For Figs \ref{fig:loss_vs_budget}, \ref{fig:alpha_and_beta_opt_app} (left, center and right) we have taken the dataset of size $N=10000$ both for synthetic and MNIST case and performed 10000 steps of gradient descent. In Fig. \ref{fig:alpha_and_beta_opt_app} (center) we have taken 100 points uniformly on the interval $\alpha \in [0, 0.5]$, and 100 points uniformly on the interval 
$\beta \in [-0.3, 0.3]$ for Fig. \ref{fig:alpha_and_beta_opt_app} (right).
Concerning the correspondence of optimal learning rates \ref{fig:alpha_and_beta_opt_app}  with the analytical formula \eqref{eq:capxi} we observe that is quite accurate for large ratio $\kappa / \nu$ since the ratio
$\frac{C_{\text{noise}}}{C_{\text{signal}}} \simeq 100$ for $(\nu, 
\varkappa) = (1.5, 3.5), (2.0, 5.0)$, whereas for 
$\frac{C_{\text{noise}}}{C_{\text{signal}}} \simeq 1$ for $(\nu, 
\varkappa) = (1.5, 2.5), (2.0, 3.5)$, therefore both the signal and noise asymptotic are of same order for quite a long time, since the difference between the 'signal' exponent $\xi$ and 'noisy' $2 - \frac{1}{\nu}$ is relatively small. 

\subsection{Validity of spectrally expressible approximation}\label{sec:SE_validity}

\begin{figure}[!ht]
\centering
\begin{subfigure}{0.32\textwidth}
    \centering
    \includegraphics[scale=0.28]{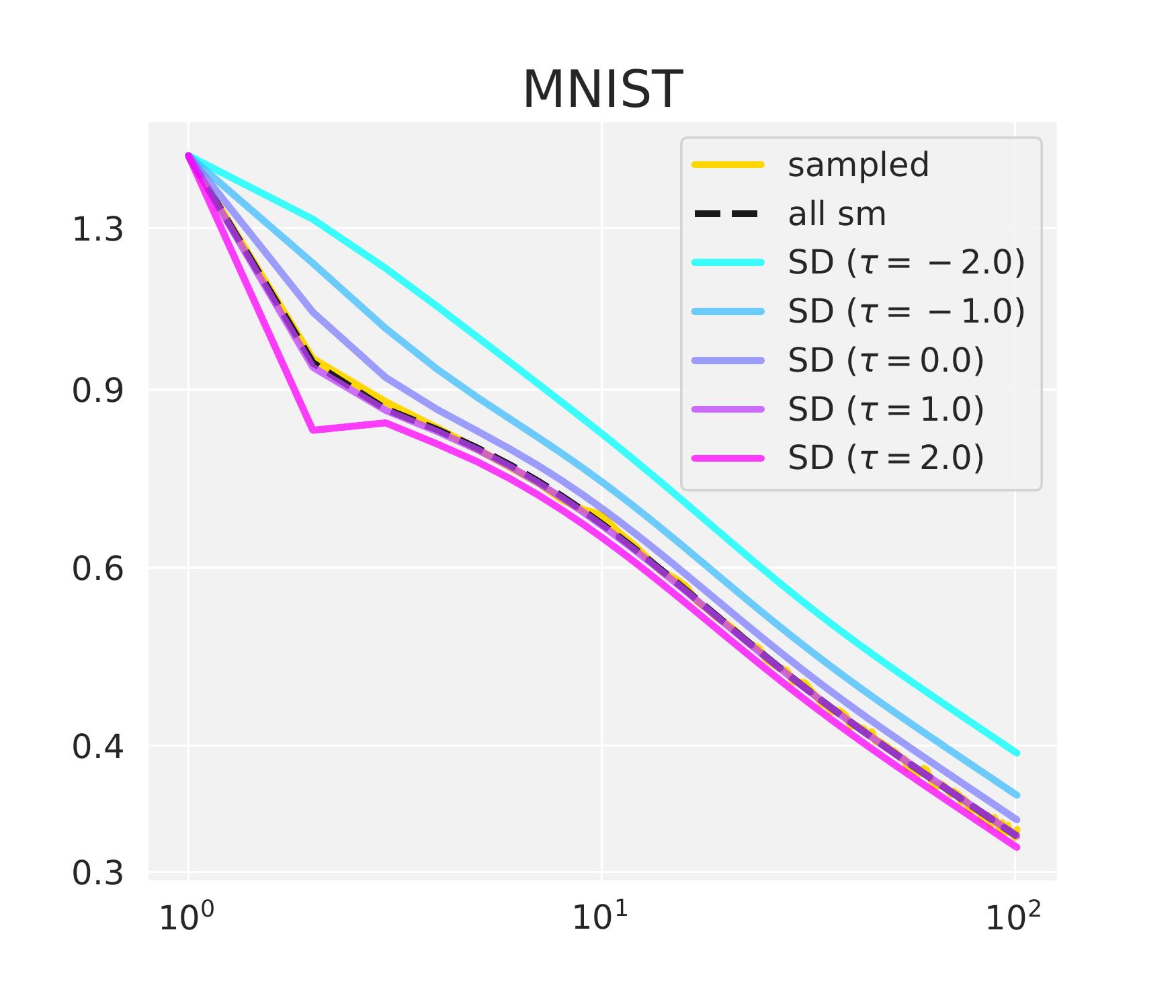}
\end{subfigure}
\begin{subfigure}{0.32\textwidth}
    \centering
    \includegraphics[scale=0.28]{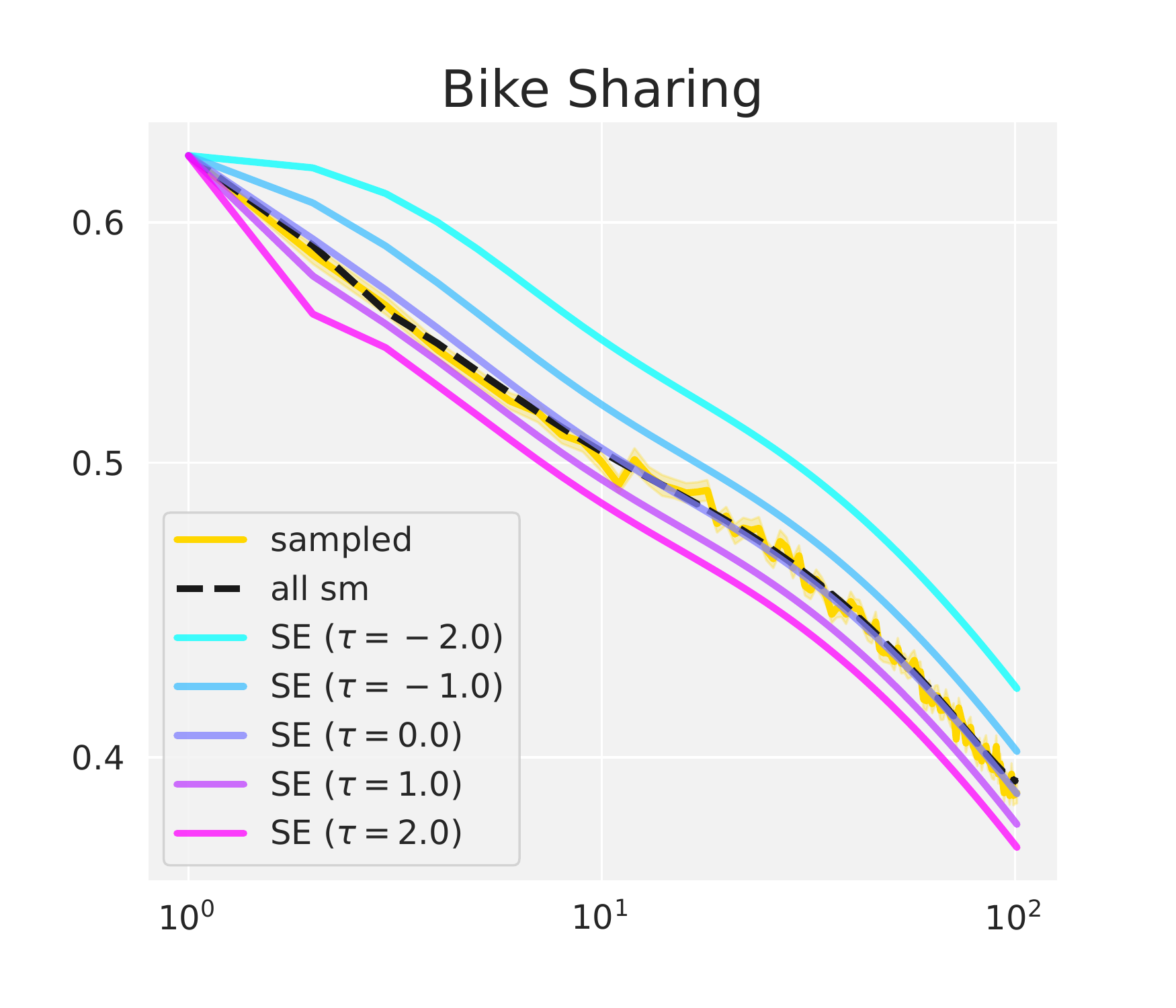}
\end{subfigure}
\begin{subfigure}{0.32\textwidth}
    \centering
    \includegraphics[scale=0.28]{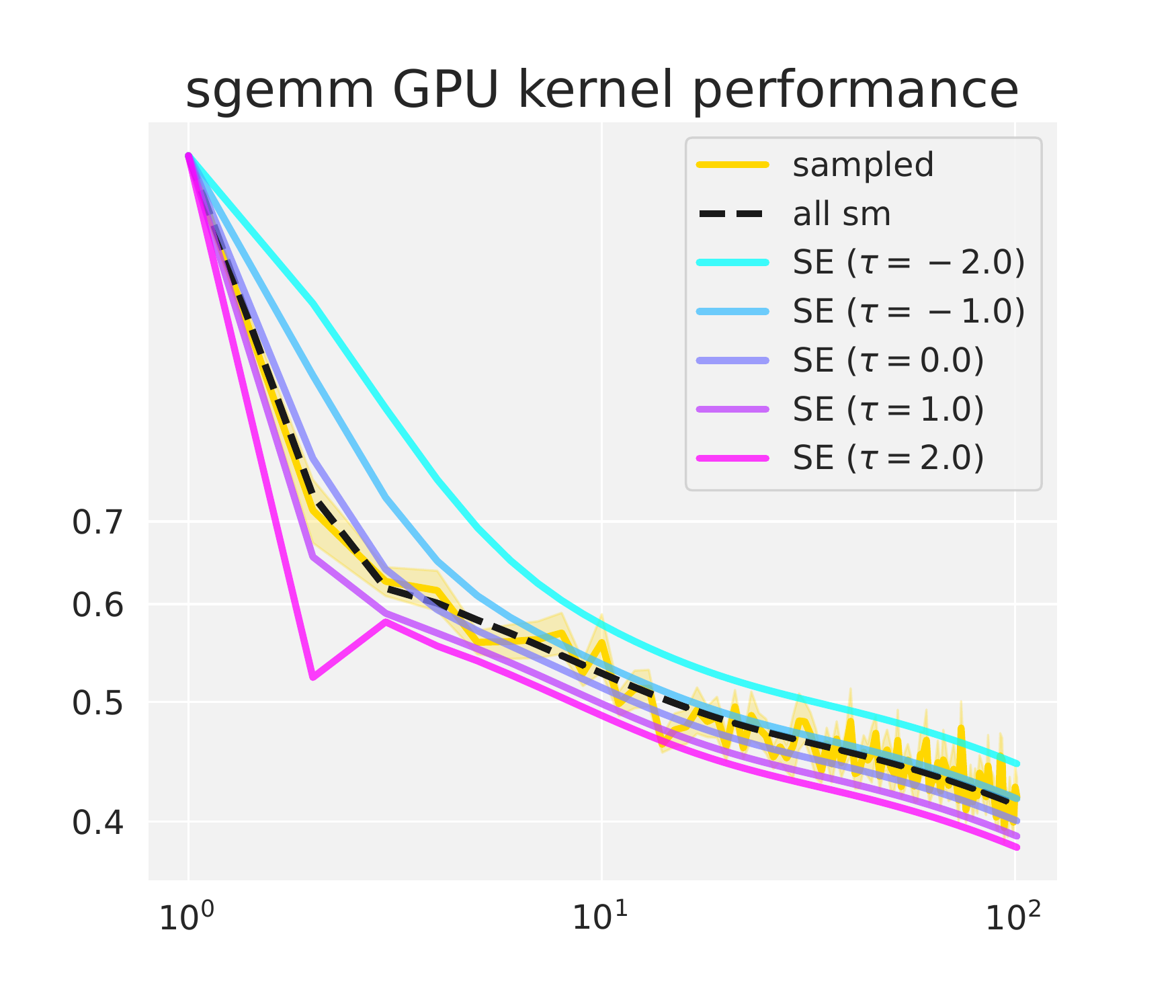}
\end{subfigure}
\caption{Loss trajectories for linearized stochastic loss dynamics (\textit{solid yellow}), exact simulation of the loss dynamics with full second matrix (\textit{dashed black}) \eqref{eq:second_moments_dyn_output}, and the loss trajectories for several values of $\tau$ (shown with different colors). Linearized dynamics with stochastic sampling of batches is averaged over $1000$ runs. }
\end{figure}\label{fig:sd_validity}

\begin{figure}[!ht]
\centering
\includegraphics[scale=0.45]{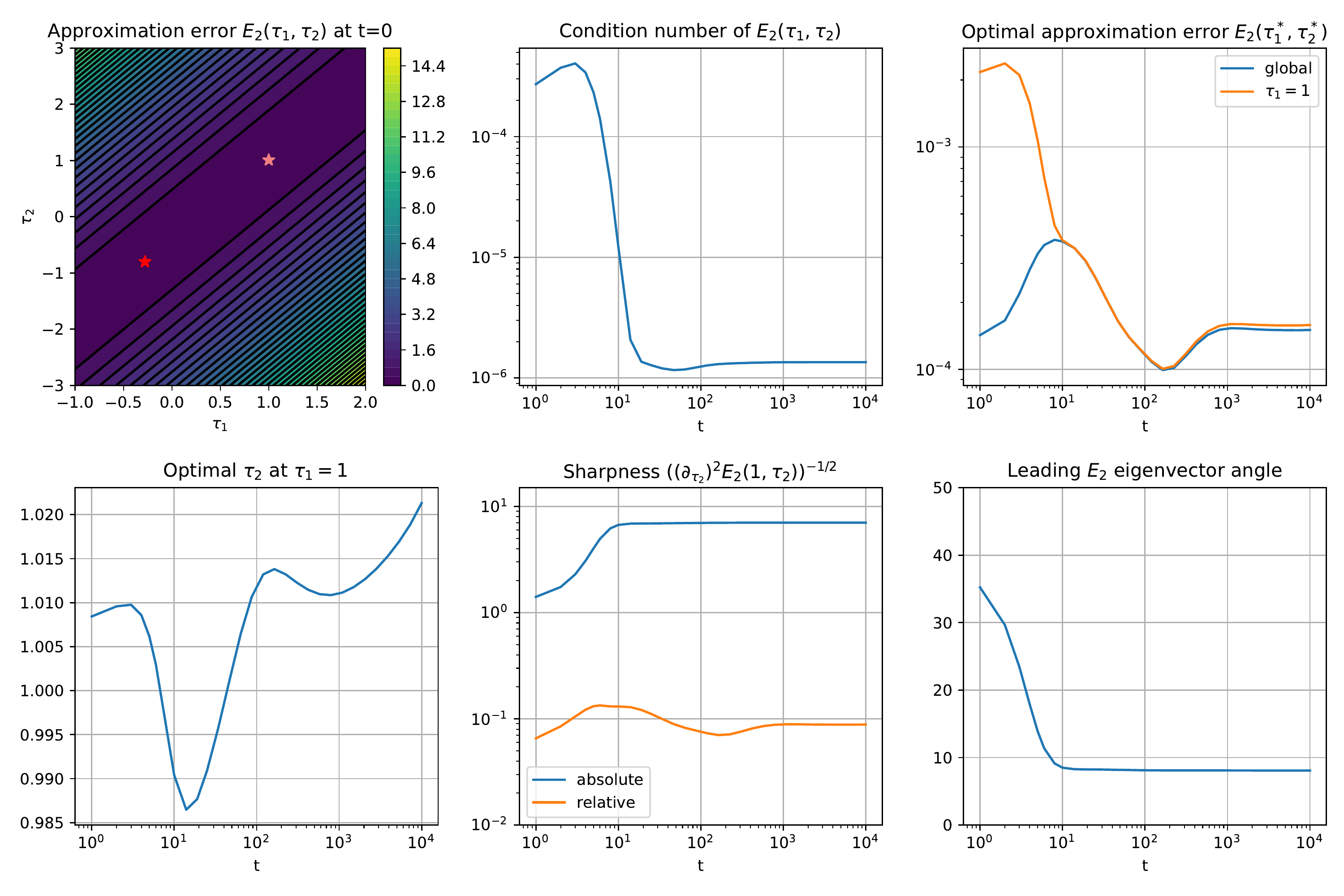}
\caption{Various characteristics of relative approximation error $E_2(\tau_1,\tau_2)$ for SGD optimization on MNIST dataset. \textbf{Left top:} Dependence of $E_2(\tau_1,\tau_2)$ on $\tau_1,\tau_2$ at initialization. Red star denotes the global minimum, while light red star denotes the minimum on the line $\tau_1=1$. \textbf{Center top:} The ratio of smaller to bigger eigenvalue of the $2\times2$ matrix $\mathbf{E}$ which determines purely quadratic part of $E_2(\tau_1,\tau_2)$. \textbf{Right top:} Comparison of globally optimal approximation errors and approximation error optimal on the line $\tau_1=1$. \textbf{Left bottom:} Value $\tau_2$ minimizing $E_2(\tau_1,\tau_2)$ on the line $\tau_1=1$. \textbf{Center bottom:} length scale on line $\tau_1=1$ on which either approximation error increases by $1$ (\textit{absolute}) when moving from optimum $\tau_2^*$, or increases by optimal approximation error (\textit{relative}) when moving from optimum $\tau_2^*$. \textbf{Right bottom:} Angle between $\tau_2$ axis and leading egenvector of $\mathbf{E}$.}
\end{figure}\label{fig:SE_quadratic_form}

In section \ref{sec:se} we introduced several scenarios where SE approximation is exact. However, more interesting and at the same time much more difficult question is how well SE approximation well suited to describe the practical problems, where we are given a particular Jacobian $\bm{\Psi}$, initial state $\mathbf{C}_0$ and then generate the sequence of states $\mathbf{C}_t$ according to \eqref{eq:second_moments_dyn} with true noise term \eqref{eq:stochastic_noise_term}.

We can approach this question in a different ways. First approach is to consider individually the true dynamics with \eqref{eq:stochastic_noise_term} and approximated dynamics with noise term \eqref{eq:seapprox}. Then we can look at certain quantities of practical interest and check whether they are close to each other. In figure \ref{fig:losses_2d} we compared stability regions in $(\alpha,\beta)$ plane for MNIST and found a good agreement between actual dynamics and SE approximation with $\tau_1=1,\tau_2=1$. In figure \ref{fig:comparison_different_regimes} we did it for the loss values on MNIST, but instead of noise-averaged loss values we compared with loss on a single stochastic trajectory. To make the comparison for loss values more thorough, in figure \ref{fig:sd_validity} we add average of several stochastic trajectories (\textit{sampled}), true second moments dynamics (all sm), and several SE approximations with $\tau_1=1$ but different $\tau_2$. For MNIST we find excellent agreement with $\tau_2=1$, for Bike Sharing dataset agreement is good with $\tau_2=0$, and for GPU performance dataset the agreement is worse with optimal $\tau_2$ changing during optimization from 0 to $-1$.

The second approach is to run true dynamics of second moments, and on each step compare how true noise covariance $\bm{\Sigma}_t$ given by \eqref{eq:stochastic_noise_term} with SE noise covariance $\bm{\Sigma}_t^{SE}(\tau_1,\tau_2)$, which we take to be given by  
\begin{equation}
    \bm{\Sigma}_t^{SE}(\tau_1,\tau_2) = \tau_1 \mathbf{H}\Tr[\mathbf{H}\mathbf{C}_t]-\tau_2\mathbf{H}\mathbf{C}_t\mathbf{H}
\end{equation}
We measure closeness of true and approximated noise covariances with relative quadratic trace distance \begin{equation}
    E_2(\tau_1,\tau_2) = \frac{\Tr\Big[\big(\bm{\Sigma}_t - \bm{\Sigma}_t^{SE}(\tau_1,\tau_2)\big)^T\big(\bm{\Sigma}_t - \bm{\Sigma}_t^{SE}(\tau_1,\tau_2)\big)\Big]}{\Tr\Big[\bm{\Sigma}_t^T\bm{\Sigma}_t\Big]}
\end{equation} 
Note that linearity of $\bm{\Sigma}_t^{SE}(\tau_1,\tau_2)$ implies that $E_2(\tau_1,\tau_2)$ is quadratic in $\tau_1,\tau_2$. We analyze quadratic form $E_2(\tau_1,\tau_2)$ in Figure \ref{fig:SE_quadratic_form}. From the bottom left part we see that the ellipses corresponding to level sets of $E_2(\tau_1,\tau_2)$ are extremely stretched, meaning that there exist a direction along which approximation error changes very weekly. To check that conclusion for during the whole SGD optimization, we plot in center top part the ratio of smaller to bigger principal axes of level sets ellipses and see that during optimization the stretching only increases. One particular consequence is that there is a ambiguity in choosing the optimal $\tau_1,\tau_2$, which means that we can safely put $\tau_1=1$ and vary only $\tau_2$. This conclusion is illustrated on right top part where both global and on $\tau_1=1$ optimal aprroximation error are very small $\lesssim 10^{-3}$ during the whole optimization. Finally, we observe left bottom part of \ref{fig:SE_quadratic_form} that optimal $\tau_2$ is extremely close to $1$, which well agrees with the conclusion of the first approach based on predicting the evolution of loss values and stability regions.        

\subsection{Additional experiments}\label{app:additional_experiments}

To study the validity of the theoretical framework developed in this work in more
practically interesting setting we computed empirical NTKs \footnote{We adopted the open source implementation \url{https://github.com/aw31/empirical-ntks}} of the \href{https://pytorch.org/vision/stable/models.html}{pretrained models from torchvision}
and conducted experiments with model training in the linearized regime and spectrally expressible approximation 
with $\tau = \pm 1$ on the CIFAR10 dataset \cite{krizhevsky2009learning}. 
One can observe from Fig.\ref{fig:linearized_and_se_CIFAR10} that spectrally expressible approximation can accurately describe the averaged dynamic of the linearized model.
The NTK eigendecomposition and target expansion in the NTK eigenvectors follow again power-law decay rule.
In the experiments below we used popular ResNet-18 and MobileNet-V2 models pretrained on the ImageNet dataset \cite{ILSVRC15}. 

\begin{figure}[ht]
\centering
\includegraphics[width=0.99\textwidth]{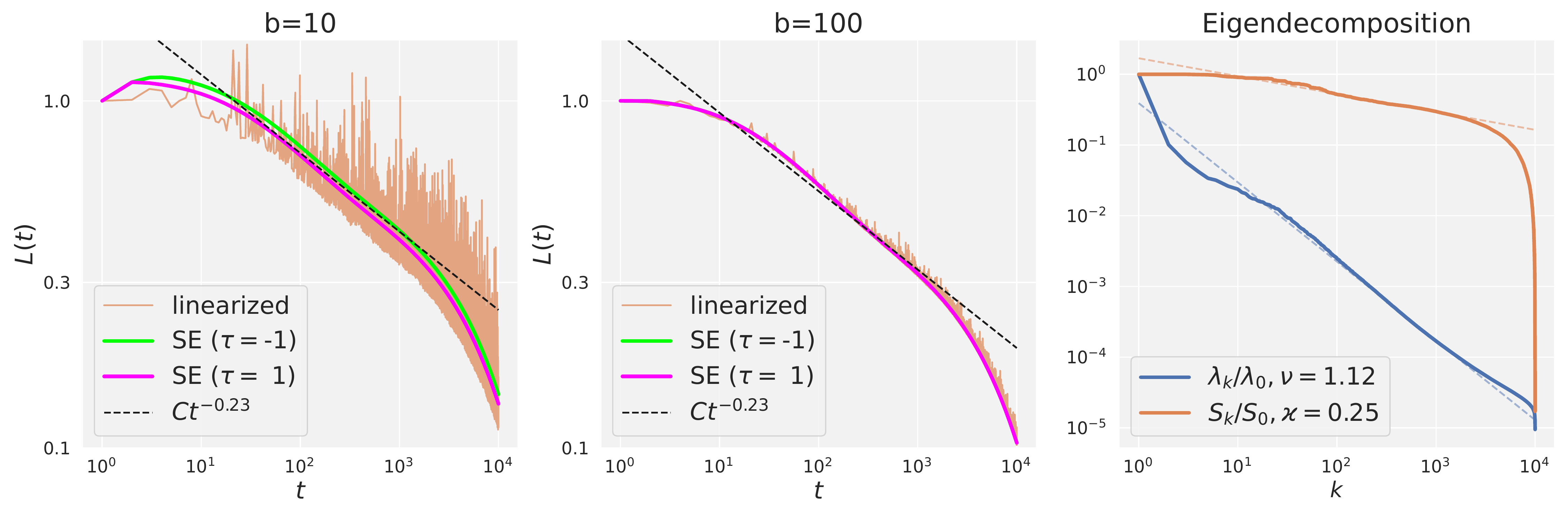}
\includegraphics[width=0.99\textwidth]{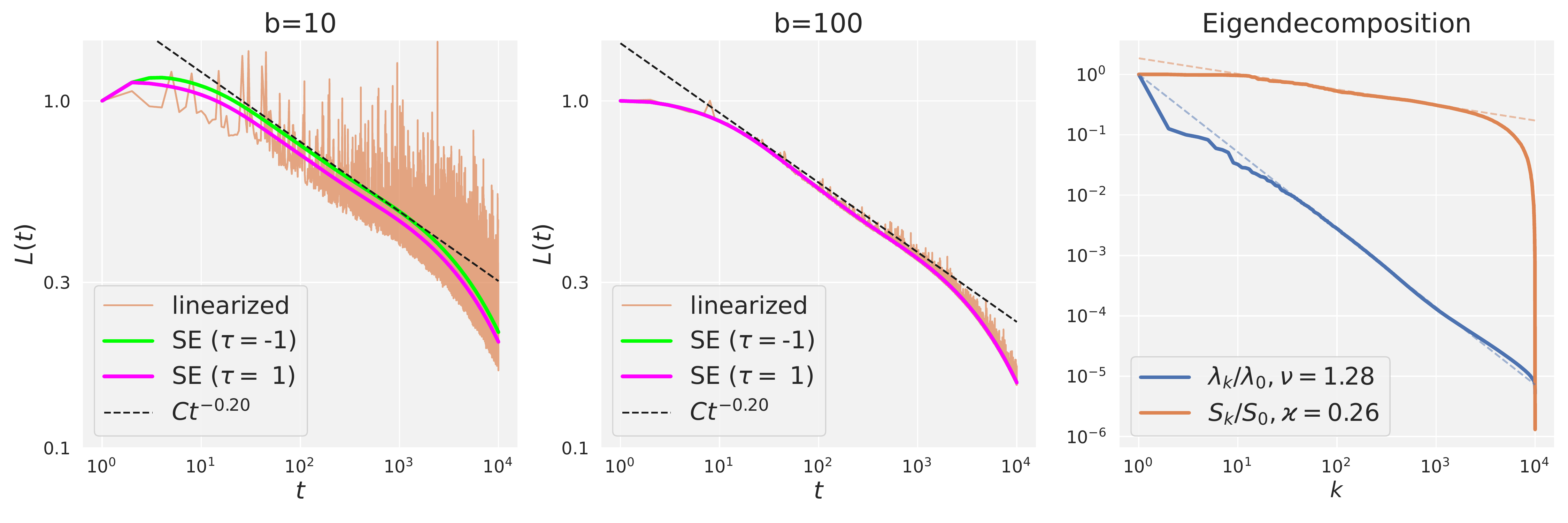}
\caption{Loss trajectories and spectrum for linearized regression of prertrained ResNet-18 (\textbf{Top})
and MobileNet-V2 (\textbf{Bottom}). All notations follow Fig.\ref{fig:comparison_different_regimes}. 
The power-law exponents are $\nu \simeq 1.12, \varkappa \simeq 0.25, \zeta \simeq 0.23$ for ResNet-18 and
$\nu \simeq 1.28, \varkappa \simeq 0.26, \zeta \simeq 0.20$ for MobileNet-V2.
}\label{fig:linearized_and_se_CIFAR10}
\end{figure}

In addition to the image and synthetic data
we ran experiments with one hidden layer fully-connected 
neural network and with the approximate dynamics for two tabular datasets from UCI Machine Learning Repository:
Bike Sharing dataset \footnote{\url{https://archive.ics.uci.edu/ml/datasets/Bike+Sharing+Dataset}} and
SGEMM GPU kernel performance dataset \footnote{\url{https://archive.ics.uci.edu/ml/datasets/SGEMM+GPU+kernel+performance}}. The former is a regression problem, namely, prediction of the count of rental bikes per hour or per day, and we've selected the per hour dataset. There are 17389 instances in the dataset and 16 features per input sample. The latter is a regression problem as well, there are 14 features, describing the different properties of the computation hardware and software and the outputs are the execution times of $4$ runs of matrix-matrix product for two matrices of size $2048 \times 2048$. We predict the average of four runs. There are 241600 instances in the dataset. See results in Fig. \ref{fig:bike_sharing_and_sgemm_performance}.

\begin{figure}[ht]
\centering
\includegraphics[width=0.99\textwidth]{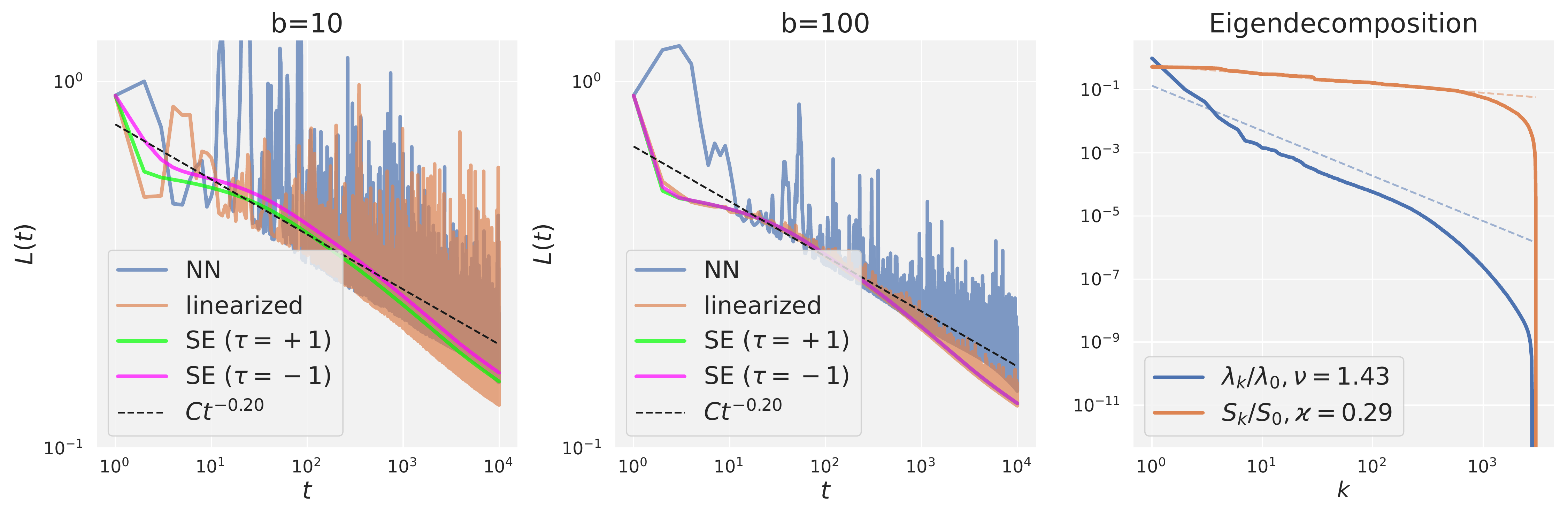}
\includegraphics[width=0.99\textwidth]{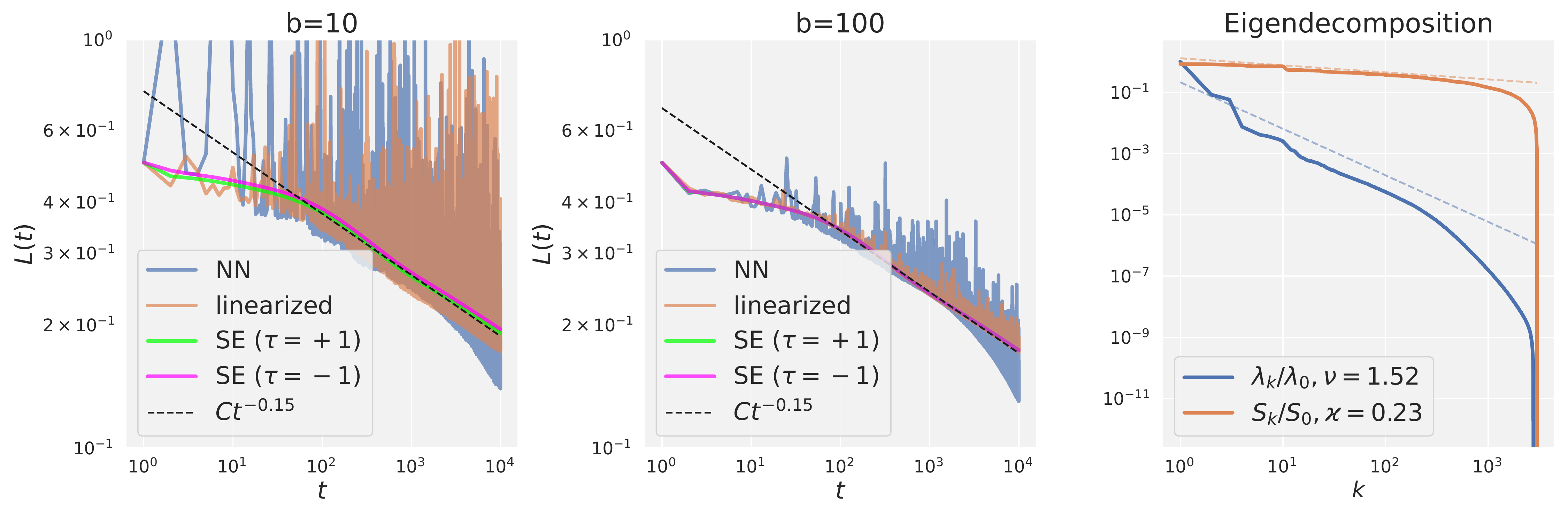}
\caption{Loss trajectories and spectrum for Bike Sharing dataset (\textbf{Top}) and
Sgemm GPU kernel performance dataset (\textbf{Bottom}) . All notations follow Fig.\ref{fig:comparison_different_regimes}. The power-law exponents are $\nu \simeq 1.43, \varkappa \simeq 0.29, \zeta \simeq 0.20$ for  Bike Sharing dataset
and $\nu \simeq 1.52, \varkappa \simeq 0.23, \zeta \simeq 0.15$ for Sgemm GPU kernel performance dataset.}\label{fig:bike_sharing_and_sgemm_performance}
\end{figure}

In order to investigate the problem of the loss convergence and determination of the critical learning rate we performed experiments with different batch sizes $b=1, 10, 1000, 1000$
on MNIST and synthetic data with $\nu=1.5, \varkappa=0.75$ (the 'signal' regime) and with h $\nu=1.5, \varkappa=3.0$ (the 'noisy' regime). 
For each value of momentum $\beta$ we plot the critical value of learning rate
$\alpha_{\mathrm{crit}}$ such that the dynamics starts to diverge. See results in Figures \ref{fig:losses_2d_mnist_b=1_b=1000}, \ref{fig:losses_2d_synth_kappa75_b=1_b=1000} and \ref{fig:losses_2d_synth_kappa3_b=1_b=1000}. Moreover, we carried out
the same experiment with empirical NTK computed on CIFAR10 for pretrained ResNet18 and batch sizes $b=10, 100$ (see Figure \ref{fig:losses_2d_CIFAR10}). 

\begin{figure}[!ht]
\centering
\includegraphics[width=1.1\textwidth]{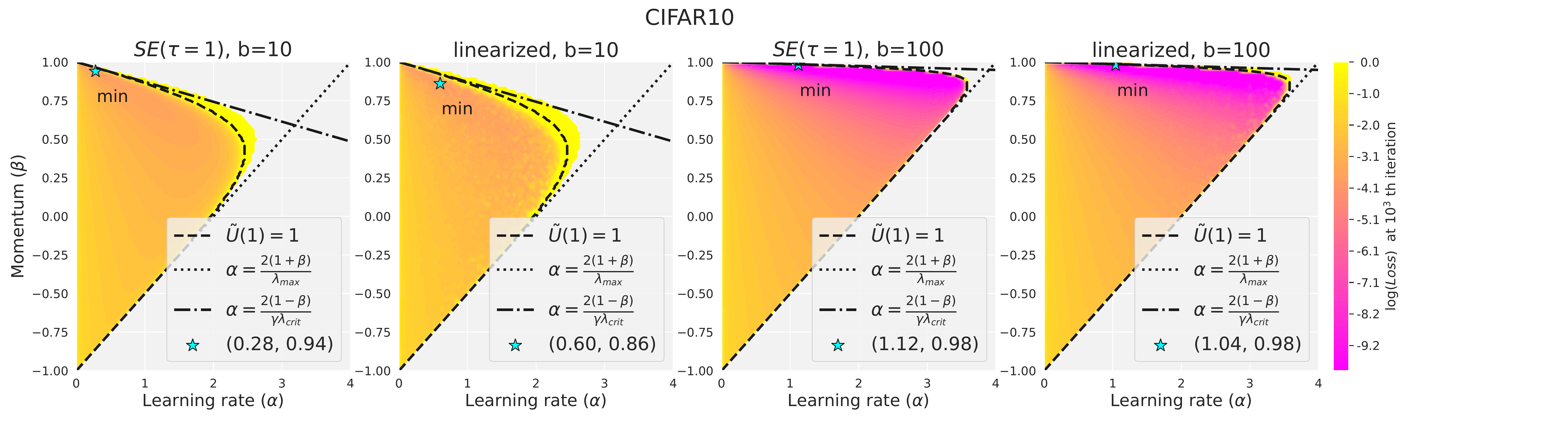}
\caption{$L(t=10^4)$ for different learning rates $\alpha$, momenta $\beta$ and batch sizes $b$ on CIFAR10. Legend and notations follow Fig. \ref{fig:losses_2d}.}\label{fig:losses_2d_CIFAR10}
\end{figure}

\begin{figure}[!ht]
\centering
\includegraphics[width=0.99\textwidth]{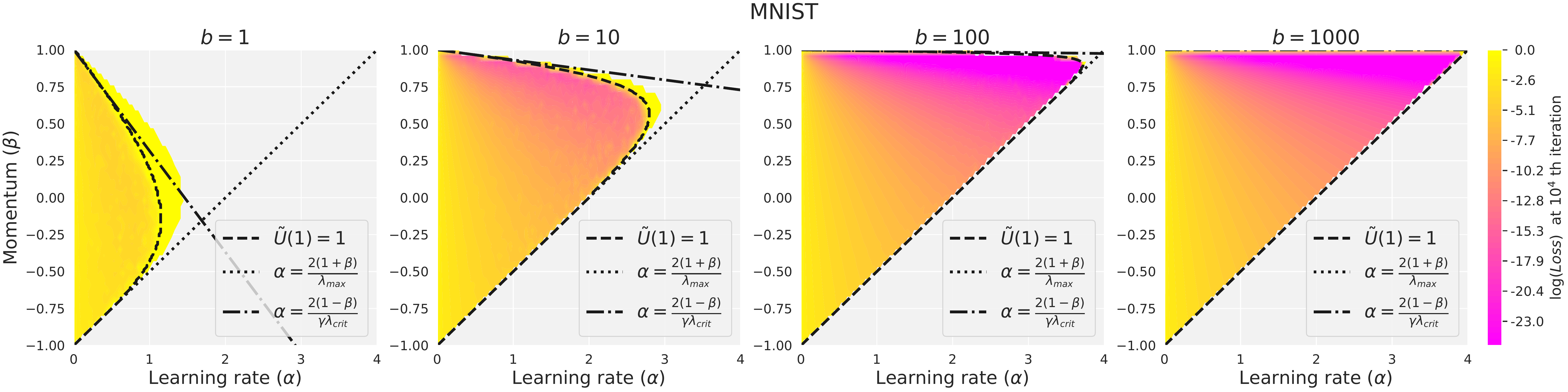}
\caption{$L(t=10^4)$ for different learning rates $\alpha$, momenta $\beta$ and batch sizes $b$ on MNIST. Legend and notations follow Fig. \ref{fig:losses_2d}}\label{fig:losses_2d_mnist_b=1_b=1000}
\end{figure}

\begin{figure}[!h]
\centering
\includegraphics[width=0.99\textwidth]{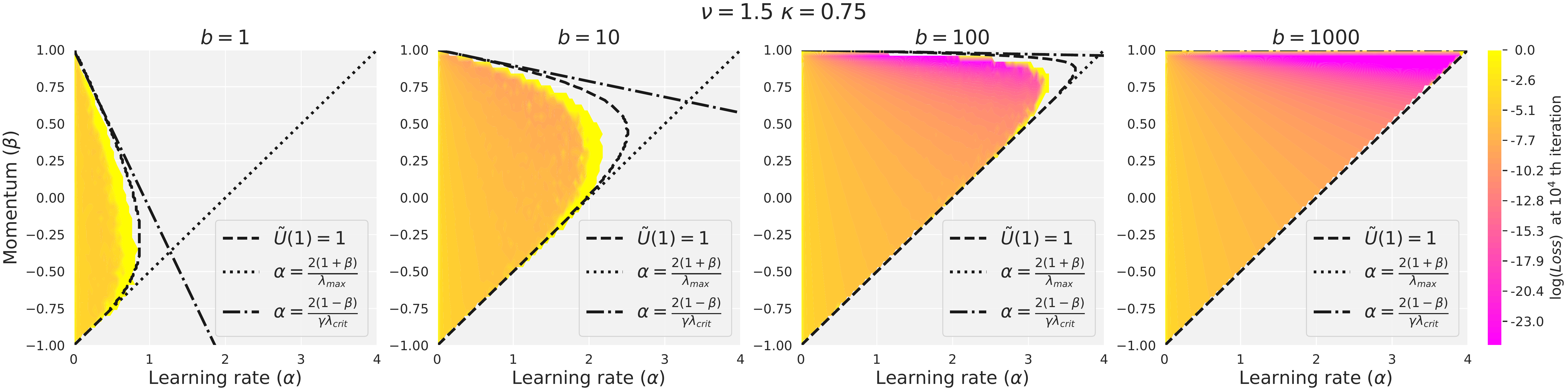}
\caption{$L(t=10^4)$ for different learning rates $\alpha$, momenta $\beta$ and batch sizes $b$ on synthetic dataset with $\nu=1.5$ and $\varkappa=0.75$. Legend and notations follow Fig. \ref{fig:losses_2d}}\label{fig:losses_2d_synth_kappa75_b=1_b=1000}
\end{figure}

\begin{figure}[!h]
\centering
\includegraphics[width=0.99\textwidth]{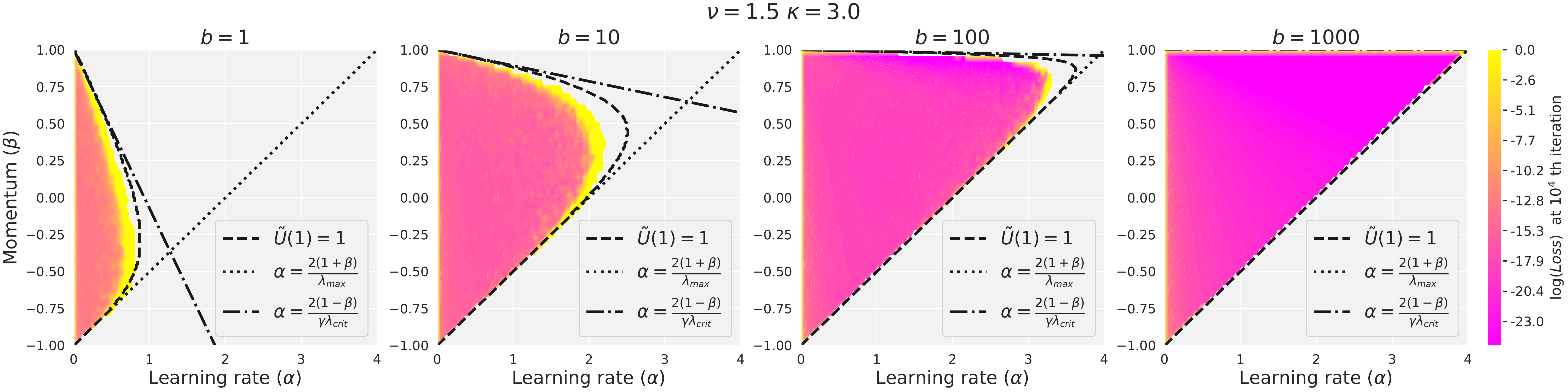}
\caption{$L(t=10^4)$ for different learning rates $\alpha$, momenta $\beta$ and batch sizes $b$ on synthetic dataset with $\nu=1.5$ and $\varkappa=3.0$. Legend and notations follow Fig. \ref{fig:losses_2d}}\label{fig:losses_2d_synth_kappa3_b=1_b=1000}
\end{figure}

\section{Mini-batch SGD vs. SGD with additive noise}\label{sec:additive_noise}
In the paper we considered SGD with the natural noise structure induced by random sampling of batches. However, it is instructive to consider SGD with an additive noise model and observe how this surrogate noise model qualitatively changes optimization.  
Specifically, we will assume that on each iteration, a Gaussian vector with covariance $\mathbf G$ is added to the true loss gradient: $\nabla^{(t)}_\mathbf{w} L(\mathbf{w})=\nabla_\mathbf{w} L(\mathbf{w})+\mathbf{g}_t, \; \mathbf{g}_t \sim \mathcal{N}(0, \mathbf{G})$. For quadratic problems the SGD iteration becomes
\begin{equation}\label{eq:additive_noise_HB}
    \begin{pmatrix}
    \Delta\mathbf{w}_{t+1} \\
    \mathbf{v}_{t+1}
    \end{pmatrix} = 
    \begin{pmatrix}
    \mathbf{I}-\alpha_t \mathbf{H} & \beta_t \mathbf{I} \\
    -\alpha_t \mathbf{H} & \beta_t \mathbf{I}
    \end{pmatrix}
    \begin{pmatrix}
    \Delta\mathbf{w}_{t} \\
    \mathbf{v}_{t}
    \end{pmatrix}
    - \alpha_t
    \begin{pmatrix}
    \mathbf{g}_t \\
    \mathbf{g}_t
    \end{pmatrix}
\end{equation}
Then, due to $\mathbb{E}[\mathbf{g}_t]=0$, the dynamics of first moments is identical to that of noiseless GD, like in the case of sampling noise. However, dynamics of second moments is different:
\begin{equation}\label{eq:additive_noise_HB_moments_dyn}
    \mathbf{M}_{t+1} = 
    \begin{pmatrix}
    \mathbf{I}-\alpha_t \mathbf{H} & \beta_t \mathbf{I} \\
    -\alpha_t \mathbf{H} & \beta_t \mathbf{I}
    \end{pmatrix}
    \mathbf{M}_{t}
        \begin{pmatrix}
    \mathbf{I}-\alpha_t \mathbf{H} & \beta_t \mathbf{I} \\
    -\alpha_t \mathbf{H} & \beta_t \mathbf{I}
    \end{pmatrix}^T
    +\alpha_t^2 \mathbf{G}
    \begin{pmatrix}
    1 & 1\\    
    1 & 1
    \end{pmatrix}
\end{equation}
Considering again constant learning rates $\alpha_t=\alpha, \beta_t=\beta$, we see that by making a linear shift of second moments we get the noiseless dynamics: 
\begin{equation}
    \mathbf{M}_{t+1}-\mathbf{M}^\infty = 
    \begin{pmatrix}
    \mathbf{I}-\alpha_t \mathbf{H} & \beta_t \mathbf{I} \\
    -\alpha_t \mathbf{H} & \beta_t \mathbf{I}
    \end{pmatrix}
    \big(\mathbf{M}_{t}-\mathbf{M}^\infty\big) 
        \begin{pmatrix}
    \mathbf{I}-\alpha_t \mathbf{H} & \beta_t \mathbf{I} \\
    -\alpha_t \mathbf{H} & \beta_t \mathbf{I}
    \end{pmatrix}^T,
\end{equation}
where $\mathbf{M}^\infty$ is the ``residual uncertainty'' determined from the linear equation 
\begin{equation}
    \mathbf{M}^\infty - \begin{pmatrix}
    \mathbf{I}-\alpha_t \mathbf{H} & \beta_t \mathbf{I} \\
    -\alpha_t \mathbf{H} & \beta_t \mathbf{I}
    \end{pmatrix}
    \mathbf{M}^\infty 
        \begin{pmatrix}
    \mathbf{I}-\alpha_t \mathbf{H} & \beta_t \mathbf{I} \\
    -\alpha_t \mathbf{H} & \beta_t \mathbf{I}\end{pmatrix}^T
    = \alpha^2 \mathbf{G}
    \begin{pmatrix}
    1 & 1\\    
    1 & 1
    \end{pmatrix}
\end{equation}

For simplicity, let's restrict ourselves to SGD without momentum ($\beta=0$). Then in the eigenbasis of $\mathbf{H}$ we have
\begin{equation}
    C_{kk, t+1} - C_{kk}^{\infty} = (1-\alpha\lambda_k)^2 (C_{kk, t} - C_{kk}^\infty), \quad C_{kk}^\infty = \frac{\alpha}{\lambda_k (2-\alpha \lambda_k)} G_{kk}
\end{equation}
Thus, optimization converges with the speed of noiseless GD, but the limiting loss is nonzero:
\begin{equation}
    \lim _{t\to\infty}L(\mathbf{w}_t)= L^\infty=\frac{1}{2}\sum_k \lambda_k C_{kk}^\infty>0.
\end{equation}

The key difference of SGD with sampling noise \eqref{eq:second_moments_dyn}, \eqref{eq:stochastic_noise_term} is the multiplicative character of the noise, in the sense that it vanishes as the optimization converges to the true solution and $\bm M_t$ becomes close to 0. To summarize, the two largest differences between SGD with additive and multiplicative noises are
\begin{enumerate}
    \item Loss converges to a value $L^\infty>0$ for additive noise and to $0$ for multiplicative noise.
    \item Convergence rate is independent of noise strength for additive noise, but significantly depends on it for multiplicative noise.   
\end{enumerate}

\section{Noiseless convergence condition}\label{sec:noiseless_convergence}
The convergence condition $\alpha<\frac{2(1+\beta)}{\lambda_{\text{max}}}$ for noiseless GD is known (see e.g. \citep{tugay1989properties}). For completeness, below we present its derivation in our setting.  

First observe that, in the noiseless GD, different eigenspaces of $\mathbf{H}$ evolve independently from each other. Thus we need to separately require convergence in each eigenspace of $\mathbf{H}$. Denoting the eigenvalue of the chosen subspace by $\lambda$, we write the GD iteration as
\begin{equation}
    \begin{pmatrix}
    \Delta \mathbf{w}_{t+1}\\
    \mathbf{v}_{t+1}
    \end{pmatrix}=
    \mathbf{S}\begin{pmatrix}
    \Delta \mathbf{w}_{t+1}\\
    \mathbf{v}_{t+1}
    \end{pmatrix},\quad
    \mathbf{S} = \begin{pmatrix}
        1-\alpha\lambda & \beta\\
        -\alpha\lambda & \beta
         \end{pmatrix}
\end{equation}

The characteristic equation for eigenvalues $s$ of matrix $\mathbf{S}$ is
\begin{equation}
    s^2 - (1-\alpha\lambda+\beta)s + \beta=0,
\end{equation}
yielding solutions
\begin{equation}
    s_{1,2} = \frac{1-\alpha\lambda+\beta \pm \sqrt{(1-\alpha\lambda+\beta)^2-4\beta}}{2}
\end{equation}
The GD dynamics converges iff $|s_1|<1$ and $|s_2|<1$. First note that $|s_1||s_2|=|\beta|$ and therefore for $|\beta|\geq 1$ at least one of $|s_{1,2}|$ will be $\geq1$ and GD will not converge. Therefore we only need to consider $\beta \in (-1,1)$. 

When $\beta \in \big((1-\sqrt{\alpha\lambda})^2,(1+\sqrt{\alpha\lambda})^2\big)$, the eigenvalues $s_{1,2}$ are complex and $|s_1|=|s_2|=\sqrt{|\beta|}$. The convergence condition in this region is $|\beta|<1$ which is automatically satisfied.

Now consider the region where $s_{1,2}$ are real. As $s_1>s_2$, the convergence conditions in this region are $s_1<1$ and $s_2>-1$. Note that for $\beta\leq0$ the eigenvalues $s_{1,2}$ are always real, while for $\beta>0$ they are real for $\lambda\in[0,\infty)\setminus(\lambda_1,\lambda_2)$ with $\lambda_1=(1-\sqrt{\beta})^2/\alpha$ and $\lambda_2=(1+\sqrt{\beta})^2/\alpha$.

Let us first check the condition $s_1<1$. The equation $s_1(\lambda)=1$ gives $\lambda=0$. For $\beta>0$ we have $s_1(\lambda=0)=1$, $s_1(\lambda=\lambda_1)=\sqrt{\beta}<1$ and $s_1(\lambda=\lambda_2)=-\sqrt{\beta}<1$ and therefore $s_1(\lambda)<1$ on $(0,\infty)\setminus(\lambda_1,\lambda_2)$ due to continuity of $s_1(\lambda)$. For $\beta<0$ the condition $s_1(\lambda)<1$ is always satisfied because $s_1(\lambda)$ is decreasing with $\lambda$.

Proceeding to condition $s_2>-1$ we find that equation  $s_2(\lambda^\star)=-1$ gives $\alpha\lambda^\star=2(\beta+1)$. Note that $\lambda^\star>\lambda_2$. For $\beta>0$ we have $s_2(\lambda=0)=\beta>-1$, $s_2(\lambda=\lambda_1)=\sqrt{\beta}>-1$, $s_2(\lambda=\lambda_2)=-\sqrt{\beta}>-1$ and $s_2(\lambda\rightarrow\infty)\rightarrow-\infty$. Therefore, due to continuity of $s_2(\lambda)$, we have $s_2(\lambda)>-1$ on $(0,\lambda^\star)/(\lambda_1,\lambda_2)$ and $s_2(\lambda)<-1$ on $(\lambda^\star,\infty)$. Similarly, for $\beta<0$ we again have $s_2(\lambda)>-1$ on $(0,\lambda^\star)$ and $s_2(\lambda)<-1$ on $(\lambda^\star,\infty)$

Combining all the observations above, we get a single convergence condition for non-stochastic GD:
\begin{equation}
    \alpha < \frac{2(1+\beta)}{\lambda_{\text{max}}}, \quad \beta \in (-1,1)
\end{equation}
where $\lambda_\text{max}$ is the largest eigenvalues of $\mathbf{H}$.

\end{document}